\definecolor{darkgreen}{RGB}{0,120,0}
\newcommand{\xmark}{\ding{55}}
\newlength{\NOTskip} 
\def\NOT#1{\settowidth{\NOTskip}{\ensuremath{#1}}%
            \hspace{0.5\NOTskip}\mathclap{\not}\hspace{-0.5\NOTskip}#1}
\newtheorem{assumption}{Assumption}
\newtheorem{prop}[theorem]{Proposition}
\newcommand{\ie}{i.e.}
\newcommand{\eg}{e.g.}
\newcommand{\T}{\theta}
\DeclareMathOperator\supp{supp}
\newcommand{\real}{\mathbb{R}}
\newcommand{\bT}{\boldsymbol{\theta}}
\newcommand{\bnu}{\boldsymbol{\nu}}
\newcommand{\bDelta}{\boldsymbol{\Delta}}
\newcommand{\bX}{\mathbf{X}}
\newcommand{\Yao}[1]{{\color{darkgreen}{\small\bf\sf [Yao: #1]}}}
\newcommand\sbullet[1][.5]{\mathbin{\vcenter{\hbox{\scalebox{#1}{$\bullet$}}}}}
\DeclareMathOperator*{\argmin}{arg\,min}
\let\originallesssim\lesssim
\let\originalgtrsim\gtrsim
\DeclareRobustCommand{\lesssim}{%
  \mathrel{\mathpalette\lowersim\originallesssim}%
}
\DeclareRobustCommand{\gtrsim}{%
  \mathrel{\mathpalette\lowersim\originalgtrsim}%
}
\newcommand{\lowersim}[2]{%
  \sbox\z@{$#1<$}%
  \raisebox{-\dimexpr\height-\ht\z@}{$\m@th#1#2$}%
}
\renewcommand*{\thanks}[1]{%
  \footnotemark
  \protected@xdef\@thanks{\@thanks
    \protect\footnotetext[\arabic{footnote}]{#1}}%
}
\title{Distributed Sparse Regression via Penalization}
\begin{document}

\author{\name Yao Ji \email jiyao@purdue.edu \\
       \addr School of Industrial Engineering\\
      Purdue University\\
      West Lafayette, IN 47906, USA
       \AND
      \name Gesualdo Scutari\thanks{Equal contribution.} \email gscutari@purdue.edu \\
       \addr School of Industrial Engineering\\
      Purdue University\\
      West Lafayette, IN 47906, USA
      \AND
       \name Ying Sun$^\ast$ \email ysun@psu.edu \\
       \addr School of Electrical Engineering and Computer Science\\
      The Pennsylvania State University\\
     State College, PA 16802, USA
       \AND
      \name Harsha Honnappa \email honnappa@purdue.edu \\
       \addr School of Industrial Engineering\\
      Purdue University\\
      West Lafayette, IN 47906, USA
      }

\editor{unknown}

\maketitle

\faketableofcontents 
\begin{abstract}
  We study sparse linear regression over a  network of agents, modeled as an undirected graph (with no centralized node). The estimation problem is formulated as the minimization of the sum of the  local LASSO loss functions plus a quadratic penalty of the consensus constraint—the latter being instrumental to obtain  distributed solution methods.   While  penalty-based consensus  methods have been extensively studied in the optimization literature,  their {\it statistical} and computational  guarantees in the   {\it high dimensional} setting remain unclear.  
  This work provides an answer to this open problem. 
 Our contribution is two-fold. First, we establish statistical consistency of the   estimator: under  a suitable choice of the penalty parameter, the optimal solution of the penalized problem achieves 
 {\it near   optimal minimax  {rate  $\mathcal{O}(s \log d/N)$  }} in  $\ell_2$-loss, where $s$ is the sparsity value, $d$ is the ambient dimension, and $N$ is the \emph{total} sample size in the network--this matches centralized sample rates. Second, 
 we show that the proximal-gradient algorithm  applied to the penalized problem, which naturally leads to distributed implementations, converges linearly up to a tolerance of the order of the  centralized statistical error--the rate scales  as  {$\mathcal{O}(d)$}, revealing an unavoidable  speed-accuracy dilemma.   
 Numerical results 
 demonstrate 
 the tightness of the derived sample rate and convergence rate scalings.  
\end{abstract}

\section{Introduction}\label{intro}
We study   high-dimensional   sparse estimation over a network of $m$ agents, modeled as an undirected graph. No centralized agent is assumed in the network; agents can communicate only with their immediate neighbors. Each agent $i$   owns a data set $(y_i, X_i)$, generated according to the linear model
\begin{equation}\label{eq:linear-model}
    y_i = X_i \theta^* + w_i,
\end{equation}
where $y_i\in \mathbb{R}^n$ is the vector of $n$ observations, $X_i \in \mathbb{R}^{n\times d}$ is the design matrix, $w_i \in \real^n$ is observation noise, and $\T^*\in \mathbb{R}^d$ is the unknown $s$-sparse parameter {\it common} to all local models. In the   high-dimensional setting, as postulated here,  
the ambient dimension $d$ is larger than the total sample size $N=n\cdot m$ and $s<<d$.  

A standard approach to estimate $\T^*$ from $\{(y_i, X_i)\}_{i = 1}^m$ is to solve the   LASSO problem, whose Lagrangian form reads  
       \begin{equation}\label{original problem}
    \hat{\theta}\in\argmin\limits_{\theta\in\mathbb{R}^d}\frac{1}{m}\sum\limits_{i=1}^m \frac{1}{2n}\lVert y_i-X_i\theta\rVert^2 +\lambda\|\theta\|_1,
       \end{equation} 
where  $\lambda > 0$  controls the sparsity of the solution $\hat{\theta}$.  Since the objective function involves the entire data  set $\{(y_i, X_i)\}_{i = 1}^m$ across  the network,  and  routing local data to other agents is infeasible (e.g., due to privacy issues) or highly inefficient, 
Problem~(\ref{original problem}) cannot be solved   by each agent $i$ independently. This calls for the design of distributed   algorithms whereby agents  alternate    computations, based on available local information, with communications  with  neighboring nodes.  
To this end,  a widely adopted  approach   is to decompose (\ref{original problem})   by  introducing local estimates $\theta_i$'s of the common variable $\theta$, each one controlled by one agent,  and forcing consensus among  the agents 
(\eg, \citep{Nedic_Olshevsky_Rabbat2018}):
\begin{equation}\label{p:dist-lasso}
        \min_{\bT\in \real^{md}}  \frac{1}{m}\sum\limits_{i=1}^m{\frac{1}{2n}\lVert y_i-X_i\theta_i\rVert^2} + \frac{\lambda}{m} \| \bT\|_1,\quad \textnormal{subject to}\quad   V  \bT = \bf{0}, 
\end{equation} 
where  
$\boldsymbol{\T} = [\theta_1^\top,\ldots,\theta_m^\top]^\top$  is the `stack vector' of all the local copies $\theta_i$'s, and $V$ is a positive semidefinite consensus-enforcing matrix, \ie,  $V  \bT = \bf{0}$ if and only if all $\T_i$'s are equal. 

The objective function in (\ref{p:dist-lasso}) is now  (additively) separable in the agents' variables; 
however, there is still a   coupling across the $\theta_i$'s,   due to the consensus constraint $V\bT=\bf{0}$. To resolve  this coupling, 
{a widely   adopted strategy in the literature of distributed optimization} is to employ an inexact penalization  of 
  the  constraint via a quadratic function. This leads to the following relaxed formulation:  %
\begin{equation}
  \label{Change problem}
   { \hat{\bT}}\in\argmin\limits_{\substack{\bT\in\mathbb{R}^{m d}}}\frac{1}{m}\sum\limits_{i=1}^m  {\frac{1}{2n}\lVert y_i-X_i\theta_i\rVert^2}  + \frac{1}{2m \gamma}\|\bT\|_{V}^2+ \frac{\lambda}{m } \| \bT\|_1, 
\end{equation}
where   $\|\bT\|_{V}^2\triangleq \bT^\top V \bT$, and   $\gamma>0$ is a free parameter 
controlling the violation of the consensus constraint $V\bT=\bf{0}$. Invoking standard results of penalty  methods (see, e.g., \cite{nesterov2018lectures}), it is not difficult to check that, when $\gamma\downarrow 0$,  every limit point of the resulting sequence $\hat{\bT}=\hat{\bT}(\gamma)$ is a solution of (\ref{p:dist-lasso}). This justifies the use of (\ref{Change problem})  as an approximation of (\ref{p:dist-lasso}) (for sufficiently small $\gamma$).

Problem~(\ref{Change problem}) unlocks distributed solution methods.  Here, we consider the  proximal gradient algorithm  \citep{nesterov2018lectures} that,  
based upon a suitable choice of the matrix $V$, is readily implementable over the network. 
This resembles 
the renowned Distributed Gradient Descent algorithms (DGD) (see~Sec.~\ref{sec:literature_review}), {which are among the most studied distributed schemes in the literature}. { Motivated by the popularity of the penalized formulation (\ref{Change problem}) and associated DGD algorithms,  the goal of this paper is to study the statistical properties of the estimator (\ref{Change problem}) as well as computational guarantees of the aforementioned    DGD algorithm.}

\begin{figure}[ht!]
 \centering
     \includegraphics[height = 7cm]{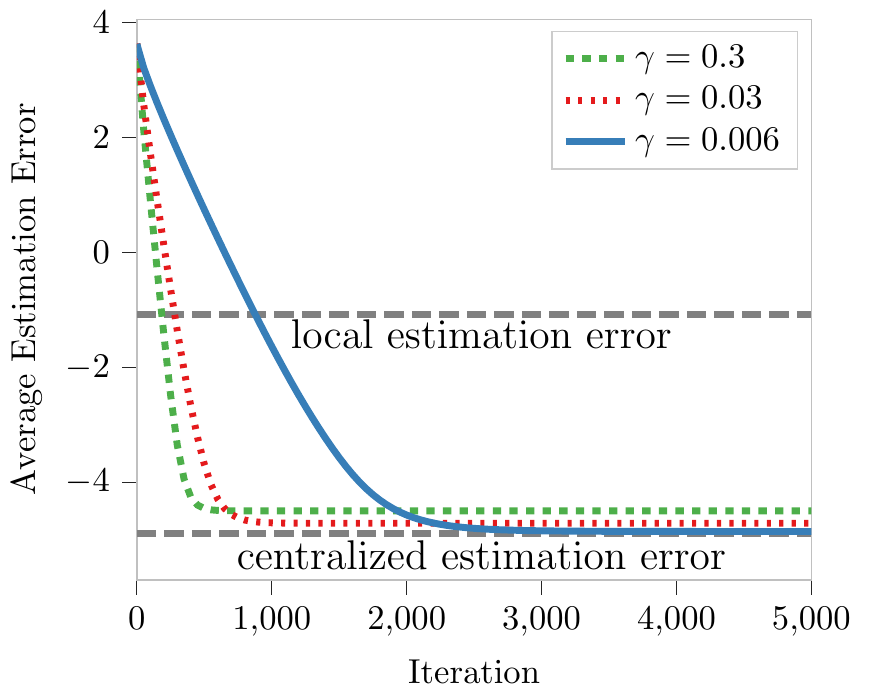}
     \caption{Proximal gradient in the high-dimensional setting (\ref{Change problem}): linear convergence up to some tolerance; different curves refer to different values of the penalty parameter $\gamma$.  Notice the speed-accuracy dilemma. \label{fig:DGD}}\vspace{-0.3cm}
\end{figure} 
\subsection{Challenges and open problems}  While penalty-based  formulations like~(\ref{Change problem}) and related solution methods 
have been extensively studied in the optimization literature,  
the statistical properties    of the solution $\hat{\bT}$  in the {\it high-dimensional} setting ($d>>N$) remain unknown, and so are the  convergence guarantees of the proximal gradient algorithm applied to~(\ref{Change problem}). Postponing   to  Sec.~\ref{sec:literature_review}   a detailed review of the literature, here we point out the following. {\bf  Statistics:}  classical 
sample complexity analysis  of LASSO error $\|\hat{\T}-\T^*\|^2$ for~(\ref{original problem}) (\eg,  \citep{Wainwright-book}) is not directly applicable to the penalized problem (\ref{Change problem})--for instance, it is unclear whether each agent's  error   $\|\hat{\T}_i-\T^*\|^2$  
can match centralized sample complexity. 
{\bf Distributed optimization:}  When it comes to algorithms for solving~(\ref{Change problem}),   existing studies are of pure optimization type, lacking of statistical guarantees. 
{If  nevertheless   invoked to predict   convergence of 
 the proximal  gradient algorithm applied to (\ref{Change problem}), they  
would  certify  {\it sublinear} convergence of the optimization error, since the objective function in (\ref{Change problem}) is not strongly convex on the entire space (recall $d>N$). }
This results in a pessimistic prediction,  as shown by the exploratory experiment in  Fig.~\ref{fig:DGD}:  
the average estimation error   $(1/m)\cdot \sum_{i=1}^m  \|\T_i^t-\T^* \|^2$      decreases {\it linearly} up to a tolerance (floor);  different curves refer to different values of the penalty parameter $\gamma$. The figure also plots the (square) estimation error achieved  {by} solving \eqref{original problem}--termed  {as} centralized estimation error--and the average of the (square) estimation  errors achieved by each agent solving the LASSO problem using only its local data--termed as local estimation error.  The experiment seems to suggest that statistical error comparable to centralized ones are still achievable where data are distributed over a network. However this requires a sufficiently small $\gamma$, and thus results in  slow convergence rates.   

To the best of our knowledge,   a theoretical understanding of these phenomena remains an open problem;  questions are abundant, such as: (i) Is  centralized statistical consistency  (quantified by sample complexity $N = o(s \log d )$) provably achievable when data are distributed across the network? What is the role/impact of the network? (ii) Is it feasible for the distributed proximal gradient method to yield statistically optimal solutions while maintaining linear convergence? (iii)  How do sample and convergence rates of the algorithm interact with model   parameters, specifically $\gamma$, $d$, $N$, and network configurations?     

 
\sloppy
\subsection{Major contributions}  \label{sec:contributions}  This work addresses the above questions--our contributions can be summarized as follows. \begin{itemize}
    \item[\bf 1)] { \bf Statistical analysis of the penalized LASSO problem  \eqref{Change problem}: } 
  We establish non-asymptotic error bounds on the estimation error   averaged over the agents, $(1/m) \sum_{i  = 1}^m \|\hat{\T}_i-\T^*\|^2$, under proper tuning  of $\lambda$ and $\gamma$. 
  Our results are of two types. {\bf (i)} A    deterministic    bound, 
  under   a strong convexity requirement on the objective in (\ref{Change problem}) restricted to certain directions containing    the augmented LASSO error $\hat{\bT}-1_m\otimes\T^*$ (cf.~Theorem~\ref{solution err bound})--this bound sheds light on the  role of the network and consensus errors (via $\gamma$) into the estimation process; and  
{\bf (ii)} 
  a (sample) convergence rate    $(1/m) \sum_{i  = 1}^m \|\hat{\T}_i-\T^*\|^2=\mathcal{O}(s\log d/N)$ 
  (cf.~Theorem~\ref{statistical optimization error result}), which holds with high probability (w.h.p.) under standard   Gaussian data generation models (cf.~Assumption~\ref{Random Gaussian model}). 
  This
   matches  the  statistical error   of the   LASSO estimator (\ref{original problem}), thereby unveiling for the first time that   statistical consistency over networks is feasible under a similar order of sample size $N$ as employed in the centralized setting, {\it even when the number of local samples $n$ is insufficient}. 
   \item[\bf 2)] {\bf Algorithmic  guarantees:}  To  compute such estimators in a distributed fashion,  we leverage the   proximal-gradient algorithm applied to  (\ref{Change problem}), and   study its convergence and statistical properties  
   (cf.~Theorems~\ref{deterministic optimization error result} and ~\ref{Th_stat_convergence}).   
    A major result is proving that, in the setting (ii) above,
    the algorithm  converges   {\it linearly} up to a fixed tolerance  which can be  driven below  the statistical precision of the centralized LASSO problem (\ref{original problem}). Specifically,  to enter an $\varepsilon$-neighborhood of a statistically optimal solution, it takes $$\mathcal{O} \left( \frac{1}{1-\rho}\cdot\frac{\lambda_{\max}(\Sigma)}{\lambda_{\min}(\Sigma)} \cdot d\,  {m}\,\log m \cdot \log \frac{1}{\varepsilon} \right)$$ number of communications (iterations), where $\rho\in [0,1)$ is a measure of the connectivity of the network (the smaller $\rho$ is, the more connected the graph); and  $\lambda_{\max}(\Sigma)/\lambda_{\min}(\Sigma)$ is  the restricted condition number of the LASSO loss function   [see \eqref{original problem}], with $\lambda_{\max}$ (resp. $\lambda_{\min}$) denoting the largest (resp. smallest) eigenvalue of the covariance matrix $\Sigma$ of the data (cf.~Assumption~\ref{Random Gaussian model}). This shows that centralized statistical accuracy  is achievable over a given network (without moving data) but at the price  of a linear rate (and thus communication cost) that scales as  $\mathcal{O}(d)$. This `speed-accuracy dilemma' is confirmed by our experiments (cf.~Sec.~\ref{Numerical Experiments}). A similar phenomenon has been observed previously in low-dimensional settings (strongly convex losses) {\citep[Theorem 3]{Yuan_2016}}.  However, our results demonstrating this dilemma in the high-dimensional setting as well, imply that this appears to be a ``scarlet letter'' of DGD-like algorithms.
\end{itemize}

   
\subsection{Related works}\label{sec:literature_review}
 
{\bf Statistical analysis:} Statistical properties of  the LASSO solution $\hat{\T}$ of   (\ref{original problem}) along with  several other regularized  M-estimators 
have been extensively studied in the literature (see, \eg, \citep{Tibshirani96,10.5555/2834535,Wainwright-book}  and references therein). 
Introducing  suitably   restricted notions of  strong convexity of the loss--\eg, \citep{10.1214/08-AOS620,4016283,Buhlmann09,Negahban_2012,Wainwright-book}--(nonasymptotic) error bounds and sample  complexity   for such estimators 
under high-dimensional scaling are established. For instance, for the LASSO estimator  (\ref{original problem}),   statistical errors  read  $\|\hat{\T}-\T^*\|^2=\mathcal{O}(s\log d/N)$. 

These conditions and results for (\ref{original problem}) 
do not transfer directly to the  {\it lifted, penalized} formulation (\ref{Change problem})--it is not even clear the relation between $\hat{\T}$ [cf.~(\ref{original problem})] and   $\hat{\T}_1,\ldots, \hat{\T}_m$ [cf.~(\ref{Change problem})]. A new solution and statistical analysis is needed for the ``augmented'' LASSO estimator $\hat{\bT}$ (\ref{Change problem}), possibly revealing the role of the network on the statistical properties of $\hat{\bT}$. \smallskip 

\noindent  {\bf Centralized optimization algorithms:} Referring to solution methods  
for {\it centralized} sparse linear regression problems, 
  several studies are available in the literature,   including \citep{BeckerCandesNESTA11,BeckTeboulleFISTA,Bredies08,Hale08,TsengYun09,ZhouSo17,Wen17,Bolte09}   and \citep{agarwal2012fast}. Since (\ref{original problem}) is not strongly convex in a global sense, classical (accelerated) first-order methods like  \citep{BeckerCandesNESTA11,BeckTeboulleFISTA}  are known to converge at sublinear rate; others \citep{Bredies08,Hale08}  are proved to achieve  linear convergence if initialized in a neighborhood of the solution of (\ref{original problem}); and  \citep{TsengYun09,ZhouSo17,Wen17,Bolte09}  showed  linear convergence (in particular) of the proximal-gradient algorithm, invoking global regularity conditions of the loss   (\ref{original problem}), such as  the  Luo-Tseng's bound \citep{LuoTseng93} or the  KL property \citep{Bolte09,Pan18}. These studies are of pure optimization type--
 \eg, convergence  focuses on   iteration complexity of the optimization error, no statistical analysis of the limit points is provided. Furthermore,  they are not suitable for the high-dimensional regime (\ie, ``$d,N$ growing''). 
 A closer related work is \citep{agarwal2012fast}, which establishes global linear convergence of the proximal-gradient algorithm  for  (\ref{original problem}) up to the statistical precision of the model, under a restricted strong convexity (RSC) and  restricted smoothness (RSM) assumption. {The method is not directly implementable over mesh networks, because of the lack of a centralized node. Furthermore, 
   it is unclear whether {RSC/RSM} conditions hold for the penalized sum-loss in (\ref{Change problem}). On the other hand, a naive application of the RSC/RSM to      {\it each} agent's loss  $f_i(\theta_i)=(1/2n) \|y_i-X_i \theta_i\|^2$ in (\ref{Change problem}) (without accounting for the penalty, coupling term $(1/{\gamma})\|\bT\|^2_V$),   would require a local sample scaling   $n=\mathcal{O}(s\log d)$ to hold.} 
  This conclusion is  unsatisfactory because it would state that the centralized minimax error bound $\|\hat{\T}-\T^*\|^2=\mathcal{O}(s\log d/N)$ is not achievable over networks--a fact that is confuted by our theoretical findings and  experiments.\smallskip 
  
 \noindent  {\bf Divide and Conquer (D\&C) methods: } { When it comes to decomposition methods for statistical estimation and inference, the statistical community   is best acquainted with   D\&C methods. D\&C algorithms postulate the existence of a node  in the network  (a.k.a. {\it master} node)  connected to all the others (termed {\it worker} nodes), which  combines  the  estimators produced by each worker using its local data set.    
   D\&C algorithms for   $M$-estimation in   {\it low-dimension}, covering  the asymptotics   $d,N\rightarrow\infty$ while $d/N\rightarrow c\in[0,1)$,    have been extensively studied in the literature; representative examples include    \cite{rosenblatt2016optimality,wang2018giant,chen2021first,Bao2021OneRoundCE,Fan2021}. More relevant to this work are the 
  D\&C methods applicable to sparse linear regression in  {\it high-dimension}, i.e.,  $d>N$  and $d/N\to \infty$,  which include \cite{lee2015communication,Fan2018,wang2017efficient,jordan2018communication}. \cite{lee2015communication,Fan2018} devised a one-shot approach averaging  at the master node ``debiased'' local LASSO estimators.  \cite{wang2017efficient,jordan2018communication} independently  improved the sample complexity of \cite{lee2015communication} hinging on ideas from    \cite{shamir2014communication}--Table~\ref{table: local sample condition1} provides the sample and communication complexity of these methods, which can be summarized as follows.
  By performing a single round of communication from the workers to the master node, resulting in a $\mathcal{O}(d)$  communication cost, these algorithms achieve the centralized statistical error $\mathcal{O}(s\log d/N)$  as long as the local sample size is sufficiently large, i.e.,      $n=\Omega(ms^2\log d)$  
  (see Table.~\ref{table: local sample condition1}). Alternatively, for fixed $n$, this imposes a constraint on the maximum number of workers, \ie, $m=\mathcal{O}(n/(s^2\log d)),$ which limits the range of applicability of these methods to small-size (star) networks. 
  The dependence of $n$ on $m$   can be removed    at the cost of multiple  communication rounds; to our knowledge, the state of the art is \cite{wang2017efficient} showing that $n=\Omega(s^2\log d)$ suffices under $\log m$ communication rounds, resulting in  a total communication cost of  $\mathcal{O}(d\log m)$. 
  None of these methods is directly implementable over mesh networks, because of the lack of a centralized node. Naive attempts of decentralizing D\&C  methods over mesh networks by replacing the exact average at the master node with local consensus updates fail to achieve centralized statistical consistency.}

    
  \begin{table}[h!]
  \begin{center}\resizebox{0.8\columnwidth}{!}{
 \begin{tabular}{c|c|c}
\hline
\multirow{2}{*}{D\&C Methods}          & $n\gtrsim m\,s^2\log d$  & $  m\,s^2\log d \gtrsim   n \gtrsim s^2\log d$ \\ \cline{2-3} 
                                       & Communication   Cost  (one round)      & Communication Cost (multiple rounds)                               \\ \hline
Avg-Debias \cite{lee2015communication} & $d$                    & \xmark                                       \\ \hline
\cite{Fan2018}                         & $d$                    & \xmark                                       \\ \hline

CSL  \cite{jordan2018communication}    & $d$                    &   \xmark$^{1}$                         \\ \hline
EDSL \cite{wang2017efficient}          & $d$                    & $d\log m $                                   \\ \hline
\end{tabular}}
\caption{D\&C algorithms   for sparse linear regression in the  high-dimensional, $d>N$  and $d/N\to \infty$: local sample size and communication cost to achieve  the centralized statistical error $\mathcal{O}(s\log d/N).$ For a single    communication round, all methods require a condition on the minimum  local sample size $n;$ multiple communication rounds can reduce the condition on local sample size $n.$  $^{1}$CSL \cite{jordan2018communication} can be extended to multiple rounds of communication to reduce the local sample size using the similar argument as in EDSL \cite{wang2017efficient}. }\label{table: local sample condition1}\vspace{-0.3cm}
\end{center}

\end{table}
  {In contrast to D\&C methods, 
  the DGD-like algorithm  studied in this paper  to solve (\ref{Change problem})   provably  achieves (near) optimal minimax rates with {\it no conditions on the local sample size},    at a total communication cost however of $\mathcal{O}(d^2)$.    This raises the question  whether communication costs of $\mathcal{O}(d)$ are achievable in high-dimension over mesh networks by other distributed optimization algorithms, yet with no conditions on the local sample size.   Motivated by this work, the study of other methods in high-dimension is the subject of current investigation; see, e.g., the companion work \cite{NetLASSO}. In fact, as discussed next, there is no study of any other existing distributed algorithm in high-dimension.}\smallskip 
  
 \noindent \textbf{Distributed optimization algorithms:}   Solving the LASSO problem~\eqref{original problem}  over  mesh networks falls under the umbrella of  distributed optimization.  The literature of distributed optimization methods is vast;   given the focus of the paper,  we comment next only relevant   works on decentralization of the (proximal) gradient method  over mesh networks modeled as undirected graphs.   
Distributed Gradient Descent (DGD) algorithms, including those   derived by penalizing consensus constraints as in (\ref{Change problem}), have been extensively studied in the literature; see, e.g.,   \citep{Nedic2009,Nedic2010,Chen-Sayed,Sayed,chen2012fast,Yuan_2016,WYin_ncvxDGD_SIAM2018,Daneshmand20,Nedic_Olshevsky_Rabbat2018}.  
Among all, the most relevant distributed scheme to this paper is \citep{WYin_ncvxDGD_SIAM2018}, a proximal gradient algorithm. When  applied to (\ref{Change problem}), under the additional assumption of bounded (sub)gradient of  the  agents' losses (a fact that is not guaranteed),  {\it sublinear} convergence (on the objective value) to the optimal solutions of (\ref{Change problem})   would be certified (recall that agents' losses  are not strongly convex globally). Furthermore, the connection between the solution of  the penalized problem  (\ref{Change problem}) and that of the LASSO formulation (\ref{original problem}) remains unclear.  

{While different and not derived   directly from (\ref{Change problem}), the other   DGD-like algorithms can be roughly commented as  follows:   
(i)  when the agents' loss functions are strongly convex (or the centralized loss satisfies  the KL property \citep{WYin_ncvxDGD_SIAM2018, Daneshmand20}), differentiable, and  there are no constraints,  DGD-like schemes   equipped  with a constant stepsize,  converge (only) to  a   neighborhood of the solution at linear rate  {\citep{Yuan_2016,WYin_ncvxDGD_SIAM2018,Yuan2020}}. Convergence (in objective value) to the exact solution   is achieved only using diminishing stepsize rules, thus  at the   slower   sublinear rate  (see, e.g.,   \citep{WYin_ncvxDGD_SIAM2018, Jakoveti2014FastDG}).} This speed-accuracy dilemma can be overcomed by correcting explicitly the local gradient direction so that a constant stepsize can be used still preserving convergence to the exact solution; examples include: gradient tracking methods \citep{qu2016harnessing,nedich2016achieving,Xu-TAC:hs,LorenzoScutari-J'16,sun2019distributed}  and primal-dual schemes  \citep{jakovetic2011cooperative,shi2014linear,Jakovetic:da,6926737,shi2015extra,shi2015proximal}, just to name a few.\smallskip  

The above review of the literature 
shows  that there is no  study of  
  statistical/computational guarantees  in the {\it high-dimensional} regime. Our comments   on  centralized optimization algorithms apply here for all the aforementioned distributed ones: 
  all the convergence results are of pure optimization type and    are confuted by our experiments (see Fig.~\ref{fig:DGD}). {A new analysis is needed to understand the behaviour of distributed algorithms in  the high-dimensional regime.  This paper represents the first study of a DGD-like algorithm   towards this direction. }
 

  \smallskip

\subsection{Notation and paper organization} \label{notation}
%
The rest of the paper is organized as follows. Sec.~\ref{Setup and Background}  introduces the   assumptions on the data model     
 and network  
  along with some   consequences. Solution analysis of the penalized LASSO (\ref{Change problem}) is addressed in Sec.~\ref{sec:sol-analysis}--a deterministic error bound,  based on a notion of restricted strong convexity, is first established; then  {near} optimal centralized  sample complexity is proved under standard data generation models (Sec.~\ref{Bounds On the l2 Error for the the Distributed Lasso}).  The  (distributed) proximal gradient algorithms applied to (\ref{Change problem}) is studied  in Sec.~\ref{Geometric Convergence of Distributed Gradient Descent Based Algorithms}.   
  Finally, Sec.~\ref{Numerical Experiments} provides some experiments validating our theoretical findings while Sec.~\ref{sec_conclusions} draws some conclusions. All the proofs of the presented results are relegated to the appendix.   \smallskip 

\noindent \textbf{Notation:} Let $[m]\triangleq\{1,\dots,m\}$, with $m\in \mathbb{N}_{++}$; $1$ is the 
    vector of all ones; $e_i$ is the $i$-th canonical vector;      $I_d$ is the $d\times d$ identity matrix (when unnecessary, we  omit the subscript); and $\otimes$ denotes the Kronecker product. Given $x_1,\ldots, x_m \in \mathbb{R}^d$,  the bold symbol       $\mathbf{x}=[x_1^{\top}, \dots, x_m^{\top}]^{\top} \in \mathbb{R}^{md}$ denotes the stack vector; for any   $\mathbf{x}=[x_1^{\top}, \dots, x_m^{\top}]^{\top}$, 
  we define its block-average as $x_{\textnormal{av}}\triangleq ({1}/{m})\sum_{i=1}^m x_i$, and the disagreement vector  {$\mathbf{x}_{\perp}\triangleq[x_{\perp 1}^{\top}, \dots, x_{\perp m}^{\top}]^{\top}$}, with each   $x_{\perp i}\triangleq x_i-x_{\textnormal{av}}$. Similarly, given the matrices $X_1, \ldots, X_m\in \mathbb{R}^{n\times d}$, we use bold notation for the stacked matrix  { $\mathbf{X}=[X_1^\top,\ldots, X^\top_m]^\top$}. We order the eigenvalues of any symmetric matrix $A\in \mathbb{R}^{m\times m}$ in nonincreasing fashion, \ie, $\lambda_{\max}(A)=\lambda_1(A)\geq \ldots \geq  \lambda_m(A)=\lambda_{\min}(A)$. 
  We use $\|\cdot\|$ to denote  the Euclidean norm; when other norms are used, \eg, $\ell_1$-norm  and $\ell_{\infty}$, we will append the associate subscript to $\|\cdot\|$, such as $\|\cdot\|_1,$ and $\|\cdot\|_{\infty}$. Consistently, when applied to matrices,   $\|\cdot\|$ denotes the operator norm  induced by $\|\cdot\|.$  Furthermore, we write $\|x\|_{A} \triangleq(x^{\top}Ax)^{1/2}$, for any symmetric positive semidefinite matrix.   Given   $\mathcal{S}\subseteq[d]$ and  $y\in\mathbb{R}^{d}$, we denote by     $|\mathcal{S}|$  the cardinality of $\mathcal {S}$ and by $y_{\mathcal {S}}$ the $|\mathcal{S}|$-dimensional vector containing the entries of  $y$ indexed by the elements of $\mathcal {S}$; $\mathcal{S}^c$ is the   complement of $\mathcal{S}$. All the $\log$ in the paper are intended    natural logarithms, unless otherwise stated. Given two univariate random variables $X$ and $Y$, we say that $Y$ has stochastic dominance over $X$ if  $X \preceq^{\text{st}} Y$, meaning    $\mathbb{P}(X\leq t)\geq\mathbb{P}(Y\leq t),$ for all $t\in \mathbb{R}$ \citep[p.~694]{marshall11}. 
  
 \section{Setup and Background}\label{Setup and Background} In this section we introduce the main assumptions on the data model     
 and network setting 
 underlying our analysis along with some related consequences. 
 
\subsection{Problem setting} 
The following quantities associated with  (\ref{eq:linear-model}) will be used throughout  the paper:  \begin{equation}\label{L max}
  \mathcal{S} \triangleq\supp\{\T^*\},\quad s=|\mathcal{S}|,\quad {L_{\max}}\triangleq\max\limits_{  i\in [m]}\lambda_{\max}\bigg(\frac{X_i^{\top} X_i}{n}\bigg). 
\end{equation} We collect all the local data $\{(y_i,X_i)\}_{i=1}^m$ into the stacked vector measures $\mathbf{y}=[y_1^\top,\ldots, y_m^\top]^\top\in \mathbb{R}^N$ and matrix  $\mathbf{X}=[X_1^\top,\ldots, X_m^\top]^\top\in \mathbb{R}^{N\times d}$. The quadratic losses of the centralized LASSO problem (\ref{original problem}) and of the penalized one (\ref{Change problem}) are denoted respectively by 
\begin{equation}\label{eq:losses}
    F(\T)\triangleq \frac{1}{2N} \|\mathbf{y}-\mathbf{X}\T\|^2\quad \text{and} \quad   {L_\gamma}(\bT)\triangleq \frac{1}{2N}\sum\limits_{i=1}^m  {\underbrace{\lVert y_i-X_i\theta_i\rVert^2}_{{\triangleq f_i(\T_i)}}}  + \frac{1}{2m \gamma}\|\bT\|_{V}^2.
\end{equation} 
We recall next the main path/assumptions used to bound the LASSO error $\|\hat{\T}-\T^*\|^2$ in the centralized setting  (\ref{original problem}) (\eg, \citep{Wainwright-book}). In the regime $d>N$, $F$ is not strongly convex--the $d\times d$ Hessian matrix   $\mathbf{X}^\top\mathbf{X}$ has at most rank $N$. 
Nevertheless,      $\|\hat{\T}-\T^*\|^2$ can be well-controlled requiring  strong convexity of $F$ to hold along a subset of directions. The {\it Restricted Eigenvalue } (RE) condition suffices \citep{10.1214/08-AOS620,4016283,Wainwright-book}
 \begin{equation}\label{RE}
     \frac{1}{N}\lVert \mathbf{X}\Delta\rVert^2\geq \delta_c\lVert\Delta\rVert^2, \quad \forall \Delta\in \mathbb{C}(\mathcal{S})\triangleq \{\Delta\in\mathbb{R}^d\text{ }|\text{ }\|\Delta_{\mathcal{S}^c}\|_1\leq 3\|\Delta_{\mathcal{S}}\|_1\},
 \end{equation}
 where $\delta_c>0$ is the curvature parameter, and $\mathbb{C}(\mathcal{S})$ captures the set of ``sparse'' directions of interests. The rationale  behind (\ref{RE}) is that,  since     $\hat{\T}-\T^*$ can be proved to belong to  $\mathbb{C}(\mathcal{S})$, if   $F$ is strongly convex  on $\mathbb{C}(\mathcal{S})$--as requested by (\ref{RE})--
 then small differences on the loss will translate into bounds on $\|\hat{\T}-\T^*\|^2$.
 
  The RE  (\ref{RE}) imposes conditions on the design matrix $\mathbf{X}$. The following RSC 
  implies (\ref{RE}). 
\begin{lemma}\label{RSC-centralized-deterministic}
Suppose that   $F$ satisfies the following RSC condition with curvature $\mu>0$  and tolerance $\tau>0$:  \begin{equation}\label{Arsc}
    \frac{1}{N}\lVert \mathbf{X}\Delta\rVert^2\geq \frac{\mu}{2}\lVert\Delta\rVert^2-\frac{\tau}{2}\lVert\Delta\rVert_1^2, \quad \forall \Delta\in \mathbb{R}^d.
\end{equation} 
Under $\mu/2-16s\tau>0,$ the RE (\ref{RE}) holds with $\delta_c=\mu/2-16s\tau$.
\end{lemma}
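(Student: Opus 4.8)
The plan is to deduce the cone-restricted RE bound \eqref{RE} directly from the global RSC bound \eqref{Arsc} by showing that, on the cone $\mathbb{C}(\mathcal{S})$, the $\ell_1$-tolerance term $\tfrac{\tau}{2}\|\Delta\|_1^2$ is dominated by a fraction of the curvature term $\tfrac{\mu}{2}\|\Delta\|^2$. The entire argument hinges on one elementary norm-comparison fact: for directions in the cone, the $\ell_1$ norm is controlled by the $\ell_2$ norm up to a $\sqrt{s}$ factor. Since \eqref{Arsc} is assumed to hold for \emph{all} $\Delta\in\mathbb{R}^d$, it holds in particular for every $\Delta\in\mathbb{C}(\mathcal{S})$, so no extra work is needed to invoke it on the cone.

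First I would fix an arbitrary $\Delta\in\mathbb{C}(\mathcal{S})$ and split its $\ell_1$ norm across the support and its complement, $\|\Delta\|_1=\|\Delta_{\mathcal{S}}\|_1+\|\Delta_{\mathcal{S}^c}\|_1$. The defining cone inequality $\|\Delta_{\mathcal{S}^c}\|_1\le 3\|\Delta_{\mathcal{S}}\|_1$ then gives $\|\Delta\|_1\le 4\|\Delta_{\mathcal{S}}\|_1$. Because $\Delta_{\mathcal{S}}$ is supported on a set of cardinality $s$, Cauchy--Schwarz yields $\|\Delta_{\mathcal{S}}\|_1\le\sqrt{s}\,\|\Delta_{\mathcal{S}}\|\le\sqrt{s}\,\|\Delta\|$, and squaring the composite bound gives $\|\Delta\|_1^2\le 16\,s\,\|\Delta\|^2$. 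This is the only place the sparsity level $s$ and the cone constant $3$ (hence the factor $16=4^2$) enter the argument, and it is exactly what turns the $\ell_1$ penalty into a multiple of the squared $\ell_2$ norm.

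Substituting this into \eqref{Arsc} then yields, for every $\Delta\in\mathbb{C}(\mathcal{S})$,
\[
\frac{1}{N}\|\mathbf{X}\Delta\|^2 \;\ge\; \frac{\mu}{2}\|\Delta\|^2-\frac{\tau}{2}\,\|\Delta\|_1^2 \;\ge\; \Big(\frac{\mu}{2}-8\,s\,\tau\Big)\|\Delta\|^2 \;\ge\; \Big(\frac{\mu}{2}-16\,s\,\tau\Big)\|\Delta\|^2,
\]
where the last step merely uses $8s\tau\le 16 s\tau$, so that \eqref{RE} holds with the (deliberately conservative) curvature $\delta_c=\mu/2-16s\tau$; the hypothesis $\mu/2-16s\tau>0$ is precisely what guarantees $\delta_c>0$. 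There is essentially no analytical obstacle here, since the claim is a purely deterministic norm-comparison statement rather than a probabilistic or optimization argument; the only point demanding care is the bookkeeping of the cone constant and ensuring that the tolerance term is genuinely \emph{dominated}, not merely bounded, which is exactly the content of the sign condition on $\mu/2-16s\tau$.
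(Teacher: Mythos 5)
Your proof is correct and follows essentially the same argument the paper uses implicitly (the identical computation appears in the proof of Lemma~\ref{global-ARE-determin}, where the cone inequality is substituted into \eqref{Arsc}): bound $\|\Delta\|_1\le 4\sqrt{s}\,\|\Delta\|$ on the cone and absorb the tolerance term into the curvature. Your observation that the direct computation yields the sharper constant $\mu/2-8s\tau$, which is then relaxed to the stated $\mu/2-16s\tau$, is also accurate — the paper's constant matches the lossier bound needed in the in-network variant, where the extra factor of $2$ comes from squaring a sum of two terms.
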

 The practical utility of the RSC condition (\ref{Arsc}) vs. the RE  is that  it can be certified with high probability by a variety of random design matrices $\bX$. Here we consider the following.  
 \begin{assumption}[Random Gaussian model]\label{Random Gaussian model} \label{ass_X_random} The design matrix $\mathbf{X}\in \mathbb{R}^{N\times d}$  satisfies the following: \textnormal{(i)}  the rows of $\mathbf{X}$ are i.i.d. $\mathcal{N}(0,\Sigma)$; and \textnormal{(ii)}   $\Sigma$ is positive definite, with minimum eigenvalue $\lambda_{\min}(\Sigma)>0$.
  \end{assumption}

\begin{lemma}[{\citep[Theorem 1]{Ras}}]\label{Global RE}
Let  $\mathbf{X}\in \mathbb{R}^{N\times d}$  be a design matrix satisfying Assumption~\ref{ass_X_random}. Then, 
there exist universal  constants $c_0, c_1>0$ such that,  with probability at least $1-\exp(-c_0N),$
 the RSC condition (\ref{Arsc}) holds   with   parameters \vspace{-.3cm}\begin{equation}\label{zeta}
     \mu= \lambda_{\min}(\Sigma) \quad \text{and}\quad \tau=2c_{1} \zeta_{\Sigma}\frac{\log d}{N}, \quad \text{with}\,\,\,\zeta_{\Sigma}\triangleq\max\limits_{i\in [d]}\Sigma_{ii}.
 \end{equation}
\end{lemma}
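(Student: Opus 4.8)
The plan is to reduce the RSC inequality \eqref{Arsc} to a one-sided, uniform lower deviation bound on the random map $\Delta \mapsto \|\mathbf{X}\Delta\|/\sqrt{N}$, and to establish that bound through a Gaussian comparison argument followed by concentration and a peeling step, in the spirit of \citep{Ras}. First I would whiten the design: since the rows of $\mathbf{X}$ are i.i.d.\ $\mathcal{N}(0,\Sigma)$ with $\Sigma\succ 0$, write $\mathbf{X}=W\Sigma^{1/2}$ with $W\in\mathbb{R}^{N\times d}$ having i.i.d.\ $\mathcal{N}(0,1)$ entries. The target becomes a bound of the form
\[
\frac{1}{\sqrt{N}}\|\mathbf{X}\Delta\|=\frac{1}{\sqrt{N}}\|W\Sigma^{1/2}\Delta\|\ge c\,\|\Sigma^{1/2}\Delta\|-c'\sqrt{\tfrac{\log d}{N}}\sqrt{\zeta_\Sigma}\,\|\Delta\|_1,\qquad\forall\,\Delta\in\mathbb{R}^d,
\]
for absolute constants $c,c'>0$. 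Given this, \eqref{Arsc} follows by squaring: using the elementary inequality $(p-q)_+^2\ge \tfrac12 p^2-q^2$ (which even covers the vacuous regime $p<q$, since there $q^2\ge \tfrac12 p^2$) together with $\|\Sigma^{1/2}\Delta\|^2\ge\lambda_{\min}(\Sigma)\|\Delta\|^2$, and relabelling the surviving numerical factors as the universal constants $c_0,c_1$; the precise normalization that yields $\mu=\lambda_{\min}(\Sigma)$ and $\tau=2c_1\zeta_\Sigma\log d/N$ is exactly the content of \citep[Theorem~1]{Ras}.

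For the uniform bound itself, positive homogeneity lets me fix the scale and control, for each radius $r>0$, the random variable $\min_{z\in T_r}\|Wz\|/\sqrt{N}$ over the sets $T_r\triangleq\{\Sigma^{1/2}\Delta:\ \|\Sigma^{1/2}\Delta\|=1,\ \|\Delta\|_1\le r\}\subseteq S^{d-1}$. The centerpiece is Gordon's Gaussian minimax comparison inequality, which (writing $\|Wz\|=\max_{\|u\|\le1}\langle u,Wz\rangle$ and comparing the process $(u,z)\mapsto\langle u,Wz\rangle$ against the decoupled process $\langle g,z\rangle+\langle h,u\rangle$) gives
\[
\mathbb{E}\Big[\min_{z\in T_r}\|Wz\|\Big]\ \ge\ \mathbb{E}\|g\|-\mathbb{E}\max_{z\in T_r}\langle h,z\rangle,\qquad g\sim\mathcal{N}(0,I_N),\ h\sim\mathcal{N}(0,I_d).
\]
Here $\mathbb{E}\|g\|\ge\sqrt{N}/\sqrt{2}$, while the residual is a Gaussian width that I would bound by exploiting the $\ell_1$-geometry: $\mathbb{E}\max_{z\in T_r}\langle h,z\rangle=\mathbb{E}\max_{\|\Delta\|_1\le r}\langle\Sigma^{1/2}h,\Delta\rangle=r\,\mathbb{E}\|\Sigma^{1/2}h\|_\infty\le c\,r\sqrt{\zeta_\Sigma\log d}$, since each coordinate of $\Sigma^{1/2}h$ is Gaussian with variance at most $\zeta_\Sigma$. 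Dividing by $\sqrt{N}$ yields $\tfrac{1}{\sqrt N}\mathbb{E}[\min_{z\in T_r}\|Wz\|]\ge \tfrac{1}{\sqrt2}-c'r\sqrt{\zeta_\Sigma\log d/N}$.

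Next I would upgrade this mean estimate to a high-probability one. The map $W\mapsto\min_{z\in T_r}\|Wz\|$ is $1$-Lipschitz in $W$ (Frobenius norm), because $\sup_{z\in T_r}\|z\|=1$; hence Gaussian concentration (Borell--TIS) gives deviation below the mean of probability $\exp(-c_0 N)$. Finally, to convert the family of fixed-$r$ bounds into a single inequality holding simultaneously for all $\Delta$, I would run a peeling argument over the ratio $\|\Delta\|_1/\|\Sigma^{1/2}\Delta\|$: partition it into geometric shells $r\in[2^k,2^{k+1})$, apply the fixed-$r$ bound on each, and union-bound; the geometric spacing makes the union cost summable and absorbable into $c_0,c_1$.

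The main obstacle I anticipate is coordinating the Gordon comparison, the width computation, and the peeling so that the $\sqrt{\log d}$ factor (rather than $\sqrt d$) survives in the tolerance—this $\ell_1$-driven scaling is precisely what produces the high-dimensional rate $\tau=\mathcal{O}(\zeta_\Sigma\log d/N)$—while keeping the curvature a fixed positive fraction of $\lambda_{\min}(\Sigma)$ uniformly across all shells. Since the statement is verbatim \citep[Theorem~1]{Ras}, the route actually taken here is simply to invoke that theorem and carry out only the squaring/eigenvalue reduction of the first paragraph.
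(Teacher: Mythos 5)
Your proposal is correct and ends up where the paper does: the paper offers no standalone proof of this lemma, treating it as a direct citation of \citep[Theorem 1]{Ras} (the bound $\frac{1}{N}\|\mathbf{X}\Delta\|^2\geq\frac{1}{2}\|\Sigma^{1/2}\Delta\|^2-\tilde{c}_1\zeta_\Sigma\frac{\log d}{N}\|\Delta\|_1^2$ is invoked verbatim in the proof of Lemma~\ref{Global ARE}), followed by exactly the eigenvalue relaxation $\|\Sigma^{1/2}\Delta\|^2\geq\lambda_{\min}(\Sigma)\|\Delta\|^2$ that produces the parameters in \eqref{zeta}. Your additional sketch of how \citep{Ras} establishes the underlying deviation bound (whitening, Gordon's comparison, the $\ell_1$-width giving $\sqrt{\zeta_\Sigma\log d}$, Borell--TIS concentration, and peeling over $\|\Delta\|_1/\|\Sigma^{1/2}\Delta\|$) is an accurate account of that reference's argument, but it is not reproduced in this paper.
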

 
\subsection{Network setting} 
 We model the network of $m$ agents  as an undirected graph $\mathcal{G}=(\mathcal{V},\mathcal{E}),$ where $\mathcal{V}=[m]$ is the set of agents, and $\mathcal{E}$ is the set of the edges; $\{i,j\}\in\mathcal{E}$ if and only if there is a communication link between agent $i$ and agent $j.$ We make the   blanket assumption that $\mathcal G$ is connected,  which is necessary for the   convergence of distributed algorithms to a consensual solution.  
 
To solve (\ref{Change problem}) 
over $\mathcal{G}$ via gradient descent, each agent should be able to compute the gradient of the objective (w.r.t. its own local variable $\T_i$) using only information from its immediate neighbours. This imposes some extra conditions on the sparsity pattern of the matrix  $V$. We will use the following  widely adopted structure for $V$.  
\begin{assumption}\label{W}
The matrix   $V=(I_m-W)\otimes I_d$, where $W\triangleq (w_{i j})_{i,j=1}^m$ satisfies the following: \begin{itemize}
    \item[(a)] 
       It is compliant with $\mathcal{G},$ that is, (i) $w_{ii}>0, \forall i\in [m];$ 
(ii) $w_{i j}>0,$ if $\{i,j\}\in \mathcal{E};$ and (iii)  $w_{i j}=0$ otherwise; and 
 \item[(b)]
  It is symmetric and   stochastic, that is,    $W1=1$ (and thus also  $1^{\top}W=1^{\top}$).\end{itemize} 
\end{assumption}

  It follows from the connectivity of $\mathcal{G}$ and   Assumption~\ref{W} that $$V \bT=\mathbf{0} \quad \text{iff} \quad \T_i=\T_j, \,\forall i\neq j\in [m],$$ and 
\begin{equation}\label{connectivity number}  \rho\triangleq\max\{\rvert\lambda_2(W)\rvert, \lvert\lambda_{\min}(W)\rvert\}<1. 
 \end{equation} 
 Roughly speaking, $\rho$  measures  how fast the network mixes information (the smaller, the faster). If $\mathcal{G}$ is   complete  graph or   a star, {one can choose} $W=1 1^\top/m$, 
 resulting in  $\rho=0$.

\section{Solution Analysis and Statistical Guarantees }\label{sec:sol-analysis}
 This section presents the solution analysis of the penalized LASSO problem  (\ref{Change problem}),  establishing   nonasymptotic   bounds of $(1/m)\sum_{i=1}^m\|\hat{\T}_i-\T^*\|^2$. Our study builds on the following  steps. \begin{itemize}
     \item[\bf 1)]  We first determine a suitable restricted set of directions $\mathbb{C}_{\gamma}(\mathcal S)$ [cf.~(\ref{nu ave sparsity})] which contains   the augmented LASSO error $\hat{\bT}-1_m\otimes\T^*$ under certain conditions on the sparsity-enhancing parameter $\lambda$ [cf.~Proposition~\ref{proximal nu average sparsity}]--the set $\mathbb{C}_{\gamma}(\mathcal S)$ plays similar role as  $\mathbb{C}(\mathcal S)$ [cf.~\eqref{RE}] for the centralized LASSO (\ref{original problem}), and sheds light on the role of the penalty parameter $\gamma$ (and thus the consensus errors) on the sparsity pattern of $\hat{\bT}$;
     \item[\bf 2)]  We then  determine a RSC-like condition [cf.~(\ref{ARE eq})] ensuring that,   under a suitable choice of   $\gamma$ controlling the consensus error, the subset $\mathbb{C}_{\gamma}(\mathcal S)$ is well-aligned with the curved directions of the  loss $L_\gamma$ of (\ref{Change problem});
     \item[\bf 3)]  Results in the previous steps  will translate into bounds on $(1/m)\sum_{i=1}^m\|\hat{\T}_i-\T^*\|^2$ [cf. Theorem~\ref{solution err bound}]. Quite interesting, our RSC condition  
 holds w.h.p. under the random model in Assumption~\ref{Random Gaussian model}  (cf.~Lemma~\ref{Global ARE}), which   yields    {\it centralized} sample complexity $(1/m)\sum_{i=1}^m\|\hat{\T}_i-\T^*\|^2=\mathcal{O}(s\log d/N)$   (cf.~Theorem~\ref{statistical optimization error result}).
 \end{itemize}

\subsection{The set of (almost) sparse average   directions}\label{sec:sparse_directions}

 For each given $ \gamma\in(0,1),$ define the set 
    \begin{equation}\label{nu ave sparsity}
        \mathbb{C}_{\gamma}(\mathcal{S})\triangleq\{\bDelta\in\mathbb{R}^{md}\text{ }|\text{ }\lVert (\Delta_{\textnormal{av}})_{\mathcal{S}^c}\rVert_1\leq3\lVert (\Delta_{\text{av}})_{\mathcal{S}}\rVert_1+h(\gamma,\lVert\bDelta_{\perp}\rVert)\},
    \end{equation}
    where 
    \begin{equation}\label{h}
       h(\gamma,\lVert\bDelta_{\perp}\rVert)
        \triangleq  
        -\frac{1-\rho}{m\gamma\lambda}\lVert\bDelta_{\perp }\rVert^2+\Big(2\max\limits_{i\in [m]}\lVert w_i^{\top} X_i\rVert_{\infty}/(\lambda n)+2\Big) \sqrt{d/m}\lVert\bDelta_{\perp}\rVert.
    \end{equation}
    The maximum of  $h(\gamma,\sbullet)$  is a decreasing function of $\gamma>0.$
   This    suggests that, the sparsity   of the average component $\Delta_{\text{av}}$  of directions  $\bDelta\in\mathbb{C}_{\gamma}(\mathcal{S})$ can be   controlled   by $\gamma$; in particular,  by decreasing $\gamma$ one can make   $\Delta_{\text{av}}$ arbitrary ``close'' to the cone $\mathbb{C}(\mathcal{S})$ of sparse directions of the centralized LASSO   (\ref{original problem}) [cf.~\eqref{RE}]. 
  The importance of $\mathbb{C}_{\gamma}(\mathcal{S})$  is captured by the following result. 

    \begin{prop}\label{proximal nu average sparsity}
    Under Assumption~\ref{W} and $\lambda$ satisfying
    \begin{equation}\label{distributed lambda}
    \frac{2}{N}\lVert \mathbf{X}^{\top} \mathbf{w}\rVert_{\infty}\leq \lambda,
    \end{equation}
    the augmented LASSO  error   $\hat{\bT}-1_m\otimes\T^*$ lies in  $\mathbb{C}_{\gamma}(\mathcal{S}).$
    \end{prop}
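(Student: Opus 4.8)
The plan is to mimic the classical LASSO ``basic inequality'' argument, but carried out on the lifted, penalized objective and then projected onto the block-average coordinate. Write $\bDelta \triangleq \hat{\bT} - 1_m\otimes\T^*$, with blocks $\Delta_i = \hat{\T}_i - \T^*$, block-average $\Delta_{\textnormal{av}} = (1/m)\sum_i\Delta_i$, and disagreement $\bDelta_\perp$. Since $\hat{\bT}$ minimizes $L_\gamma(\bT)+(\lambda/m)\|\bT\|_1$, comparing its value with that of the feasible consensual point $1_m\otimes\T^*$ gives the starting inequality $L_\gamma(\hat{\bT})+(\lambda/m)\|\hat{\bT}\|_1 \le L_\gamma(1_m\otimes\T^*)+\lambda\|\T^*\|_1$. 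The consensual point simplifies the right-hand side: because $W1_m=1_m$ (Assumption~\ref{W}), $V(1_m\otimes\T^*)=\mathbf 0$, so the quadratic penalty vanishes, and since $y_i-X_i\T^*=w_i$ one has $L_\gamma(1_m\otimes\T^*)=(1/2N)\|\mathbf{w}\|^2$.

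Next I would expand the left-hand loss around $1_m\otimes\T^*$. Using $y_i-X_i\hat{\T}_i=w_i-X_i\Delta_i$ and the identity $\|\hat{\bT}\|_V^2=\|\bDelta\|_V^2=\|\bDelta_\perp\|_V^2$ (the last equality because $1_m\otimes\Delta_{\textnormal{av}}$ lies in the kernel of $V$), the noise term $(1/2N)\|\mathbf{w}\|^2$ cancels and I am left with a cross term $-(1/N)\sum_i w_i^\top X_i\Delta_i$, a nonnegative design-quadratic $(1/2N)\sum_i\|X_i\Delta_i\|^2$, and the penalty $(1/2m\gamma)\|\bDelta_\perp\|_V^2$. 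The design-quadratic is dropped (it only helps the inequality), and the penalty is lower-bounded through the spectral gap of $W$: since $\bDelta_\perp$ is orthogonal to the consensus subspace, $\|\bDelta_\perp\|_V^2\ge(1-\rho)\|\bDelta_\perp\|^2$ by \eqref{connectivity number}. This produces the helpful negative term $-\frac{1-\rho}{m\gamma\lambda}\|\bDelta_\perp\|^2$ of $h$.

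The two remaining pieces must be converted into statements about $\Delta_{\textnormal{av}}$. For the $\ell_1$ term, the reverse triangle inequality gives $\|\T^*+\Delta_i\|_1\ge\|\T^*+\Delta_{\textnormal{av}}\|_1-\|\Delta_{\perp i}\|_1$; averaging over $i$, applying the usual support decomposition $\|\T^*+\Delta_{\textnormal{av}}\|_1\ge\|\T^*\|_1-\|(\Delta_{\textnormal{av}})_{\mathcal S}\|_1+\|(\Delta_{\textnormal{av}})_{\mathcal{S}^c}\|_1$, and bounding $(1/m)\sum_i\|\Delta_{\perp i}\|_1\le\sqrt{d/m}\,\|\bDelta_\perp\|$ (via $\|\cdot\|_1\le\sqrt d\,\|\cdot\|$ and Cauchy--Schwarz) turns $(\lambda/m)\|\hat{\bT}\|_1-\lambda\|\T^*\|_1$ into a lower bound $\lambda\|(\Delta_{\textnormal{av}})_{\mathcal{S}^c}\|_1-\lambda\|(\Delta_{\textnormal{av}})_{\mathcal S}\|_1-\lambda\sqrt{d/m}\,\|\bDelta_\perp\|$. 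For the noise cross term I split $\Delta_i=\Delta_{\textnormal{av}}+\Delta_{\perp i}$: the average part is $(1/N)\langle\mathbf{X}^\top\mathbf{w},\Delta_{\textnormal{av}}\rangle$, controlled by Hölder and the hypothesis \eqref{distributed lambda} as $\le\frac{\lambda}{2}\|\Delta_{\textnormal{av}}\|_1$; the disagreement part is bounded block-wise by Hölder and Cauchy--Schwarz as $\le\frac{\max_i\|w_i^\top X_i\|_\infty}{n}\sqrt{d/m}\,\|\bDelta_\perp\|$. Substituting both bounds into the starting inequality, using $\|\Delta_{\textnormal{av}}\|_1=\|(\Delta_{\textnormal{av}})_{\mathcal S}\|_1+\|(\Delta_{\textnormal{av}})_{\mathcal{S}^c}\|_1$, moving $\frac{\lambda}{2}\|(\Delta_{\textnormal{av}})_{\mathcal{S}^c}\|_1$ to the left and dividing by $\lambda/2$ yields exactly $\|(\Delta_{\textnormal{av}})_{\mathcal{S}^c}\|_1\le 3\|(\Delta_{\textnormal{av}})_{\mathcal S}\|_1+h(\gamma,\|\bDelta_\perp\|)$, i.e.\ $\bDelta\in\mathbb{C}_\gamma(\mathcal{S})$.

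The main obstacle is the bookkeeping of the disagreement component: unlike the centralized proof, the $\ell_1$ regularizer is separable over the local copies $\hat{\T}_i$, whereas the cone is phrased in the single average direction $\Delta_{\textnormal{av}}$. Bridging the two forces the introduction of $\bDelta_\perp$, and the crux is that every ``error'' accrued in this bridging---from the reverse triangle inequality on the $\ell_1$ term and from the non-averaged part of the noise inner product---aggregates into precisely the linear-in-$\|\bDelta_\perp\|$ coefficient $\big(2\max_i\|w_i^\top X_i\|_\infty/(\lambda n)+2\big)\sqrt{d/m}$, while the consensus penalty supplies the compensating quadratic $-\frac{1-\rho}{m\gamma\lambda}\|\bDelta_\perp\|^2$. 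Getting the constants (the $3$, the additive $2$, and the $\ell_1\!\to\!\ell_2$ dimension factor $\sqrt{d/m}$) to line up is the only delicate part; the structural steps are otherwise a direct lift of the classical argument.
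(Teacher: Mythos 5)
Your proposal is correct and follows essentially the same route as the paper's proof: the basic inequality against the consensual point $1_m\otimes\T^*$, the decomposition $\bDelta=1_m\otimes\Delta_{\textnormal{av}}+\bDelta_\perp$, H\"older on the two noise pieces with \eqref{distributed lambda} absorbing the averaged one, Cauchy--Schwarz giving the $\sqrt{d/m}\,\lVert\bDelta_\perp\rVert$ factor, and the spectral gap \eqref{connectivity number} supplying the $-\tfrac{1-\rho}{m\gamma\lambda}\lVert\bDelta_\perp\rVert^2$ term. The only cosmetic difference is that you drop the nonnegative prediction error $(1/2N)\sum_i\lVert X_i\Delta_i\rVert^2$ at the outset, whereas the paper carries it on the left-hand side and invokes its nonnegativity in the final step; the constants you track line up exactly with the definition of $h$ in \eqref{h}.
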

     \begin{proof}
  See Appendix~\ref{Proof of Proposition 1}.
  \end{proof}
Therefore, the  average component of the augmented LASSO  error   
is nearly sparse for sufficiently small $\gamma$ and large $\lambda$. This   will be used to pursue statistically optimal estimates. 

\subsection{In-network  RE condition}\label{In-network  RE condition} We   impose a positive curvature on the loss $  {L_\gamma}$  of  (\ref{Change problem}) [cf.~(\ref{eq:losses})] along   suitable chosen directions in  $\mathbb{C}_{\gamma}(\mathcal{S}).$ The first-order Taylor   expansion of $  {L_\gamma}$ at
 $\bT'$ along   $\bT-\bT'$, denoted by $\mathcal{T}_{  {L_\gamma}}({\bT};\bT')$, can be lower bounded as
\begin{align}\label{def:L-linearization}
    \mathcal{T}_{  {L_\gamma}}({\bT};\bT')&\triangleq     {L_\gamma}(\bT)-  {L_\gamma}(\bT')-\langle\nabla   {L_\gamma}(\bT'), \bT-\bT'\rangle\notag\\
    &\geq  \underbrace{\frac{1}{4}\frac{\|\mathbf{X}(\bT-\bT')_{\text{av}}\|^2}{N}}_{\text{curvature along average}}-\underbrace{\bigg(\frac{L_{\max}}{2m}-\frac{1-\rho}{2m\gamma}\bigg)\|(\bT-\bT')_{\perp}\|^2}_{\text{nonconsensual  component}}. 
\end{align}
The second term on the RHS of (\ref{def:L-linearization}) is due to the disagreement of the $\theta_i$'s, and can be controlled   choosing suitably small $\gamma$. In fact,  we will prove that a curvature condition on  the first term of the RHS of (\ref{def:L-linearization}) along the directions $\bT-\bT'\in  \mathbb{C}_{\gamma}(\mathcal{S})$ is enough to   establish the desired  error bounds on the LASSO error $(1/m)\sum_{i=1}^m\|\hat{\T}_i-\T^*\|^2$. This motivates the following definition of   RSC-like property of  ${L_\gamma}$. 
   
\begin{assumption}[In-network RE]\label{RSC along consensus space}
The loss function ${L_{\gamma}}$ satisfies the following RSC condition with curvature $\delta>0$ and tolerance $\xi>0$:
 \begin{equation}\label{ARE eq}
      {\mathcal{T}_{L}({\bT};\bT')
    \geq \delta\lVert(\bT-\bT')_{\textnormal{av}}\rVert^2-{\xi \,h^2(\gamma,\lVert(\bT-\bT')_{\perp}\rVert), \quad{\forall\bT-\bT'\in\mathbb{C}_{\gamma}(\mathcal{S})}.}}
        \end{equation}
\end{assumption}
The stipulated condition mandates a positive curvature for $L_\gamma$ along consensual directions in $\mathbb{C}{\gamma}(\mathcal S)$. Across the entire space, however, $L\gamma$ need not exhibit strong convexity, attributed largely to the tolerance term accounting for consensus errors.  

The following two results establish sufficient conditions for (\ref{ARE eq}) to hold, for deterministic and random design matrices $\bX$--which match those required for the centralized LASSO (see Lemma~\ref{RSC-centralized-deterministic} and Lemma~\ref{Global RE}).       
\begin{lemma}\label{global-ARE-determin}
Reinstate Lemma~\ref{RSC-centralized-deterministic}, under $\mu/2-16s\tau>0$. Then (\ref{ARE eq}) holds, with {$\delta=\mu/2-16s\tau$ and $\xi=\tau$},  for any   {given $\gamma\in(0,(1-\rho)/L_{\max}]$}. 
\end{lemma}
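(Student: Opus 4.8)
The plan is to read the in-network RE (\ref{ARE eq}) straight off the already-established lower bound (\ref{def:L-linearization}), in two moves: (i) use the prescribed range of $\gamma$ to discard the nonconsensual term, and then (ii) import the centralized deterministic RSC (\ref{Arsc}) along the \emph{average} direction and upgrade it over the cone $\mathbb{C}_{\gamma}(\mathcal{S})$, exactly mirroring Lemma~\ref{RSC-centralized-deterministic}. Writing $\bDelta=\bT-\bT'$, the whole argument reduces to controlling the $d$-dimensional vector $\Delta_{\textnormal{av}}$, since the heavy lifting (the split of $\sum_i\|X_i\Delta_i\|^2$ into an average part plus a disagreement part, and the spectral bound $\bDelta_\perp^\top V\bDelta_\perp\ge(1-\rho)\|\bDelta_\perp\|^2$) is already folded into (\ref{def:L-linearization}).

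First I would fix $\bDelta\in\mathbb{C}_{\gamma}(\mathcal{S})$ and inspect the coefficient of the nonconsensual term in (\ref{def:L-linearization}). For $\gamma\in(0,(1-\rho)/L_{\max}]$ we have $(1-\rho)/\gamma\ge L_{\max}$, hence $\tfrac{L_{\max}}{2m}-\tfrac{1-\rho}{2m\gamma}\le 0$; as this term is \emph{subtracted}, it contributes nonnegatively and may be dropped, leaving
\[
\mathcal{T}_{L_\gamma}(\bT;\bT')\ \ge\ \tfrac14\,\tfrac{1}{N}\,\|\mathbf{X}\Delta_{\textnormal{av}}\|^2 .
\]
This is the one genuinely load-bearing step: the threshold $(1-\rho)/L_{\max}$ is precisely the value at which the penalty curvature $\tfrac{1-\rho}{m\gamma}$ along disagreement directions dominates the worst-case loss curvature $L_{\max}/m$, so that connectivity $\rho$ and penalty $\gamma$ trade off exactly. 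Next I would apply the RSC (\ref{Arsc}) to $\Delta_{\textnormal{av}}\in\real^d$, giving $\tfrac1N\|\mathbf{X}\Delta_{\textnormal{av}}\|^2\ge\tfrac{\mu}{2}\|\Delta_{\textnormal{av}}\|^2-\tfrac{\tau}{2}\|\Delta_{\textnormal{av}}\|_1^2$, and then tame the $\ell_1$ term via cone membership. By the definition (\ref{nu ave sparsity}) the average obeys $\|(\Delta_{\textnormal{av}})_{\mathcal{S}^c}\|_1\le 3\|(\Delta_{\textnormal{av}})_{\mathcal{S}}\|_1+h(\gamma,\|\bDelta_\perp\|)$, whence $\|\Delta_{\textnormal{av}}\|_1\le 4\|(\Delta_{\textnormal{av}})_{\mathcal{S}}\|_1+h\le 4\sqrt{s}\,\|\Delta_{\textnormal{av}}\|+h$ by Cauchy--Schwarz on the $s$-element support. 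Squaring with $(a+b)^2\le 2a^2+2b^2$ gives $\|\Delta_{\textnormal{av}}\|_1^2\le 32s\,\|\Delta_{\textnormal{av}}\|^2+2h^2$.

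Substituting this back collects a curvature term proportional to $(\mu/2-16s\tau)\|\Delta_{\textnormal{av}}\|^2$ and a tolerance term proportional to $-\tau\,h^2(\gamma,\|\bDelta_\perp\|)$, which is exactly the shape of (\ref{ARE eq}); carrying the universal prefactor of (\ref{def:L-linearization}) through the algebra yields curvature $\delta=\mu/2-16s\tau$ and tolerance $\xi=\tau$, with nonnegativity $\mu/2-16s\tau>0$ inherited verbatim from Lemma~\ref{RSC-centralized-deterministic}. Given that (\ref{def:L-linearization}) is already in hand, the remaining work is essentially bookkeeping, and the detail I would check most carefully is that the RSC tolerance enters through $\|\Delta_{\textnormal{av}}\|_1^2$: after squaring the cone inequality it produces a term scaling as $h^2$ rather than $h$, which is precisely why (\ref{ARE eq}) must be stated against $h^2(\gamma,\|\bDelta_\perp\|)$ and not $h$.
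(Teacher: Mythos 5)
Your proposal is correct and follows essentially the same route as the paper's proof: drop the nonconsensual term in \eqref{def:L-linearization} using $\gamma\le(1-\rho)/L_{\max}$, apply the RSC \eqref{Arsc} to $\Delta_{\textnormal{av}}$, and tame $\|\Delta_{\textnormal{av}}\|_1^2\le 32s\,\|\Delta_{\textnormal{av}}\|^2+2h^2$ via cone membership, which produces exactly the $h^2$ tolerance. The only caveat---one the paper's own write-up shares---is that literally carrying the $1/4$ prefactor of \eqref{def:L-linearization} through the algebra would yield $\delta=(\mu/2-16s\tau)/4$ and $\xi=\tau/4$ rather than the stated constants.
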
 
\begin{proof}
See Appendix~\ref{Proof of Lemma 4}.
\end{proof}
\begin{lemma}\label{Global ARE} Let  $\mathbf{X}\in \mathbb{R}^{N\times d}$    satisfy Assumption~\ref{ass_X_random}. For any $N$ and $\gamma$ such that
  {
\begin{equation}
     N\geq c_{2} \frac{\zeta_{\Sigma}s \log d}{\lambda_{\min}(\Sigma)}, \ \ \text{and}\ \   {\gamma\in(0,(1-\rho)/L_{\max}]},
\end{equation}
}
it holds 
\begin{align}\label{network slack}
   {\mathcal{T}_{L_\gamma}({\bT};\bT')
    \geq\frac{\lambda_{\min}(\Sigma)}{4}\lVert(\bT-\bT')_{\textnormal{av}}\rVert^2-\frac{\lambda_{\min}(\Sigma)}{64s}  \,h^2\left(\gamma, \lVert(\bT-\bT')_{\perp}\rVert\right)},\,\forall\, \bT,\bT'\,:\,\bT-\bT'\in\mathbb{C}_{\gamma}(\mathcal{S}),
\end{align}
with probability at least $1-\exp(-c_0N)$. Here,    {$c_0,c_2>0$ are} universal constants.
\end{lemma}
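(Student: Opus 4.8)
The plan is to derive this stochastic in-network RE bound by composing the random-design RSC guarantee of Lemma~\ref{Global RE} with the deterministic reduction of Lemma~\ref{global-ARE-determin}, so that the only real work is constant bookkeeping. First I would invoke Lemma~\ref{Global RE}: under Assumption~\ref{ass_X_random}, there are universal constants $c_0, c_1 > 0$ such that, with probability at least $1 - \exp(-c_0 N)$, the centralized RSC condition \eqref{Arsc} holds for $F$ with curvature $\mu = \lambda_{\min}(\Sigma)$ and tolerance $\tau = 2 c_1 \zeta_\Sigma \log d / N$ [cf.~\eqref{zeta}]. On this high-probability event $F$ satisfies precisely the hypothesis of Lemma~\ref{RSC-centralized-deterministic}, and hence Lemma~\ref{global-ARE-determin} becomes applicable the moment the positive-curvature requirement $\mu/2 - 16 s \tau > 0$ is verified.

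The key step is to select the universal constant $c_2$ in the sample-size threshold so that a single scalar inequality controls every relevant quantity. Plugging in $\mu$ and $\tau$, the requirement $\tau \le \lambda_{\min}(\Sigma)/(64 s)$ is equivalent to $N \ge 128\, c_1\, \zeta_\Sigma s \log d / \lambda_{\min}(\Sigma)$, so I would take $c_2 = 128\, c_1$ (any larger value also works). This one inequality does triple duty: it forces $16 s \tau \le \lambda_{\min}(\Sigma)/4$, whence $\delta \triangleq \mu/2 - 16 s \tau \ge \lambda_{\min}(\Sigma)/2 - \lambda_{\min}(\Sigma)/4 = \lambda_{\min}(\Sigma)/4 > 0$ --- which simultaneously discharges the hypothesis of Lemma~\ref{global-ARE-determin} and yields the curvature lower bound $\delta \ge \lambda_{\min}(\Sigma)/4$; and it gives the tolerance upper bound $\xi \triangleq \tau \le \lambda_{\min}(\Sigma)/(64 s)$.

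With these in place, applying Lemma~\ref{global-ARE-determin} for any $\gamma \in (0, (1-\rho)/L_{\max}]$ certifies \eqref{ARE eq} with $\delta = \mu/2 - 16 s \tau$ and $\xi = \tau$. The last step relaxes these data-dependent constants to the clean ones in \eqref{network slack}: since $h^2(\gamma, \cdot) \ge 0$, decreasing the curvature coefficient to $\lambda_{\min}(\Sigma)/4$ and increasing the tolerance coefficient to $\lambda_{\min}(\Sigma)/(64 s)$ only weakens the right-hand side, so that
\begin{align*}
\mathcal{T}_{L_\gamma}(\bT; \bT') &\ge \delta \lVert (\bT - \bT')_{\textnormal{av}} \rVert^2 - \xi\, h^2\!\left(\gamma, \lVert (\bT - \bT')_\perp \rVert\right) \\
&\ge \frac{\lambda_{\min}(\Sigma)}{4} \lVert (\bT - \bT')_{\textnormal{av}} \rVert^2 - \frac{\lambda_{\min}(\Sigma)}{64 s}\, h^2\!\left(\gamma, \lVert (\bT - \bT')_\perp \rVert\right)
\end{align*}
for all $\bT, \bT'$ with $\bT - \bT' \in \mathbb{C}_\gamma(\mathcal{S})$. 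This is exactly \eqref{network slack}, holding on the event of probability at least $1 - \exp(-c_0 N)$ from the first step.

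I do not anticipate a genuine obstacle in this lemma itself, since the analytic heavy lifting is carried by Lemma~\ref{global-ARE-determin} (the deterministic passage from $L_\gamma$ to the average-component curvature of $F$, using the $\gamma$-window to absorb the nonconsensual term in \eqref{def:L-linearization}) and by the concentration behind Lemma~\ref{Global RE}. The only point demanding care is confirming that a single sample-size threshold enforces positivity of the curvature and pins down both target constants at once; the computation above shows this is automatic, because the requirements $16 s \tau \le \lambda_{\min}(\Sigma)/4$ and $\tau \le \lambda_{\min}(\Sigma)/(64 s)$ coincide.
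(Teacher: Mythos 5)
Your proposal is correct and follows essentially the same route as the paper: the paper's Appendix proof inlines the Raskutti et al.\ RSC bound, the cone inequality defining $\mathbb{C}_{\gamma}(\mathcal{S})$, and the lower bound \eqref{def:L-linearization} directly, which is precisely the content of composing Lemma~\ref{Global RE} with Lemma~\ref{global-ARE-determin} as you do, and it arrives at the same threshold $c_2 = 128\,c_1$ and the same constants $\lambda_{\min}(\Sigma)/4$ and $\lambda_{\min}(\Sigma)/(64s)$. Your modularization is a clean and faithful restatement of the paper's argument.
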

 \begin{proof}
See Appendix~\ref{Proof of Lemma 5}.
\end{proof}
\vspace{-0.5cm}
\subsection{Error bounds and statistical consistency of the  LASSO error of (\ref{Change problem})}\label{Bounds On the l2 Error for the the Distributed Lasso} 
We are ready to  establish consistency and convergence rates for the augmented LASSO estimator $\hat{\bT}$. Our first result is a deterministic upper bound on the average error under the In-network RE  condition \eqref{ARE eq}.   
\begin{theorem}\label{solution err bound}
 Consider the augmented LASSO problem (\ref{Change problem}) under Assumptions~\ref{W} and \ref{RSC along consensus space}. For any fixed   
 $\lambda$ and $\gamma$ satisfying respectively
 \begin{align}\label{eq:gamma_cond}
    \frac{2}{N}\lVert \mathbf{X}^{\top} \mathbf{w}\rVert_{\infty}\leq \lambda \quad \text{and} \quad \gamma\leq \frac{2(1-\rho)}{4L_{\max}+\delta},
 \end{align}
 any   solution  $\hat{\bT}=[\hat{\T}_1,\ldots, \hat{\T}_m]^\top$ satisfies %
\begin{align}\label{oracle}
     &\frac{1}{m}\sum\limits_{i=1}^m\lVert\hat{\T}_i-\T^\ast\rVert^2\notag\\
   \leq&\underbrace{\frac{9\lambda^2s}{\delta^2}}_{\text{centralized error}}+\underbrace{\frac{2\xi d^2\gamma^2({\max_{i\in[m]}\lVert w_i^{\top} X_i\rVert_{\infty}}+\lambda n)^4}{\delta\lambda^2n^4(1-\rho)^2}+\frac{4d\gamma({\max_{i\in[m]}\lVert w_i^{\top} X_i\rVert_{\infty}}+\lambda n)^2}{\delta n^2[2(1-\rho)-4L_{\max}\gamma-\delta\gamma]}}_{\text{cost of decentralization}}.
\end{align}
\end{theorem}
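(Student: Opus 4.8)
The plan is to split the averaged error through the identity $\frac1m\sum_{i=1}^m\|\hat{\T}_i-\T^*\|^2=\|\Delta_{\textnormal{av}}\|^2+\frac1m\|\bDelta_{\perp}\|^2$, where $\bDelta\triangleq\hat{\bT}-1_m\otimes\T^*$ has blocks $\Delta_i=\hat{\T}_i-\T^*$, and to bound the consensus part $\|\Delta_{\textnormal{av}}\|^2$ and the disagreement part $\frac1m\|\bDelta_{\perp}\|^2$ by separate arguments; the first produces the centralized error plus the first decentralization term, the second produces the second term. Since $\lambda$ obeys \eqref{distributed lambda} (the left condition in \eqref{eq:gamma_cond}), Proposition~\ref{proximal nu average sparsity} places $\bDelta\in\mathbb{C}_\gamma(\mathcal S)$, which is exactly what licenses applying the In-network RE \eqref{ARE eq} along $\bDelta$. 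The crux is that the two pieces are coupled — the disagreement noise appears when bounding the average, and the tolerance $h(\gamma,\|\bDelta_{\perp}\|)$ appears when bounding the consensus error — so I would use two different reference points to keep the coupling linear and controllable.

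First I would bound $\|\bDelta_{\perp}\|$ by comparing $\hat{\bT}$ to its own consensual version $\bar{\bT}\triangleq 1_m\otimes\hat{\T}_{\textnormal{av}}$. Using \eqref{def:L-linearization} with $\bT'=\bar{\bT}$ and $(\hat{\bT}-\bar{\bT})_{\textnormal{av}}=0$, $(\hat{\bT}-\bar{\bT})_{\perp}=\bDelta_{\perp}$, the average-curvature term vanishes and the disagreement curvature survives, giving $\mathcal{T}_{L_\gamma}(\hat{\bT};\bar{\bT})\ge\kappa\|\bDelta_{\perp}\|^2$ with $\kappa\triangleq\frac{1-\rho}{2m\gamma}-\frac{L_{\max}}{2m}>0$ under the $\gamma$-bound in \eqref{eq:gamma_cond}. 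For the matching upper bound, optimality of $\hat{\bT}$ for \eqref{Change problem} gives $L_\gamma(\hat{\bT})-L_\gamma(\bar{\bT})\le\frac\lambda m(\|\bar{\bT}\|_1-\|\hat{\bT}\|_1)\le 0$, the last step by Jensen ($\|\hat{\T}_{\textnormal{av}}\|_1\le\frac1m\sum_i\|\hat{\T}_i\|_1$); hence $\mathcal{T}_{L_\gamma}(\hat{\bT};\bar{\bT})\le-\langle\nabla L_\gamma(\bar{\bT}),\bDelta_{\perp}\rangle$. At the consensual $\bar{\bT}$ the penalty gradient vanishes and $\nabla_{\theta_i}L_\gamma(\bar{\bT})=\frac1N X_i^\top X_i\Delta_{\textnormal{av}}-\frac1N X_i^\top w_i$, so Cauchy--Schwarz (with $\|X_i^\top X_i\|\le nL_{\max}$ and $\sum_i\|\Delta_{\perp i}\|\le\sqrt m\|\bDelta_{\perp}\|$) yields $\mathcal{T}_{L_\gamma}(\hat{\bT};\bar{\bT})\le\frac{L_{\max}}{\sqrt m}\|\Delta_{\textnormal{av}}\|\,\|\bDelta_{\perp}\|+\frac{\max_i\|w_i^\top X_i\|_\infty}{n}\sqrt{d/m}\,\|\bDelta_{\perp}\|$. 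Dividing by $\|\bDelta_{\perp}\|$ gives the key linear estimate $\|\bDelta_{\perp}\|\le\frac{L_{\max}}{\kappa\sqrt m}\|\Delta_{\textnormal{av}}\|+\frac{\max_i\|w_i^\top X_i\|_\infty}{\kappa n}\sqrt{d/m}$.

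Next I would bound $\|\Delta_{\textnormal{av}}\|$ by comparing $\hat{\bT}$ to $1_m\otimes\T^*$. The same basic inequality, now reducing the $\ell_1$ difference to the average by Jensen and the sparsity of $\T^*$, together with the noise split $\frac1N\sum_i\langle X_i^\top w_i,\Delta_i\rangle=\frac1N\langle\mathbf X^\top\mathbf w,\Delta_{\textnormal{av}}\rangle+\frac1N\sum_i\langle X_i^\top w_i,\Delta_{\perp i}\rangle$ and H\"older with \eqref{distributed lambda}, gives $\mathcal{T}_{L_\gamma}(\hat{\bT};1_m\otimes\T^*)\le\frac{3\lambda\sqrt s}{2}\|\Delta_{\textnormal{av}}\|+\frac{\max_i\|w_i^\top X_i\|_\infty}{n}\sqrt{d/m}\,\|\bDelta_{\perp}\|$. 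The In-network RE \eqref{ARE eq} supplies the lower bound $\mathcal{T}_{L_\gamma}(\hat{\bT};1_m\otimes\T^*)\ge\delta\|\Delta_{\textnormal{av}}\|^2-\xi h^2(\gamma,\|\bDelta_{\perp}\|)$. Substituting the linear estimate for $\|\bDelta_{\perp}\|$ into both the cross term and into $h$ turns this into a single scalar quadratic inequality in $\|\Delta_{\textnormal{av}}\|$; solving it via the elementary implication $ax^2\le bx+c\Rightarrow x^2\le b^2/a^2+2c/a$ delivers $\|\Delta_{\textnormal{av}}\|^2\le\frac{9\lambda^2 s}{\delta^2}+\frac{2\xi}{\delta}\max_{t\ge0}h^2(\gamma,t)+(\text{lower order})$. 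Feeding this back into the linear estimate bounds $\frac1m\|\bDelta_{\perp}\|^2$, and the decomposition identity assembles \eqref{oracle}, with $\frac{2\xi}{\delta}\max_t h^2$ collapsing (via $\max_t h=\frac{b^2 m\gamma\lambda}{4(1-\rho)}$, $b$ the linear coefficient of $h$) to the first decentralization term and the $\kappa$-controlled disagreement giving the second, where $\gamma\le\frac{2(1-\rho)}{4L_{\max}+\delta}$ keeps the effective denominator $2(1-\rho)-(4L_{\max}+\delta)\gamma$ positive.

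I expect the main obstacle to be exactly this coupling. The naive route — comparing everything to $1_m\otimes\T^*$ and bounding the tolerance by $\max_t h^2$ — fails, because a crude disagreement bound allows $\|\bDelta_{\perp}\|$ to be as large as $\mathcal{O}(\sqrt{\gamma})$, at which point $h$ (hence $\xi h^2$) does not vanish with $\gamma$ and the decentralization cost would not go to zero. The two-reference-point device is what forces $\|\bDelta_{\perp}\|$ to be linear in $\|\Delta_{\textnormal{av}}\|$ with an $\mathcal{O}(\gamma)$ coefficient, so that $h=\mathcal{O}(\gamma)$ and the tolerance contributes only the advertised $\mathcal{O}(d^2\gamma^2)$ and $\mathcal{O}(d\gamma)$ terms. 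The remaining difficulty is bookkeeping: choosing the Young-type weights when solving the quadratic so that the cross contributions are absorbed into $\delta\|\Delta_{\textnormal{av}}\|^2$ and $\kappa\|\bDelta_{\perp}\|^2$ without inflating the constants beyond those in \eqref{oracle}.
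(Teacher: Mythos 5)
Your strategy is sound and would deliver a bound of the form \eqref{oracle} (with different constants), but it is a genuinely different route from the paper's. The paper never introduces the consensual reference point $1_m\otimes\hat{\T}_{\textnormal{av}}$ and never derives a separate linear estimate for $\|\hat{\bnu}_{\perp}\|$. Instead it compares $\hat{\bT}$ only to $1_m\otimes\T^*$, chains an upper and a lower bound on $\frac{1}{N}\sum_i\|X_i\hat{\nu}_i\|^2$, adds $\frac{\delta}{2m}\|\hat{\bnu}_{\perp}\|^2$ to both sides, and observes that the entire $\|\hat{\bnu}_{\perp}\|$-dependence of the resulting inequality sits inside two downward-opening parabolas --- $h(\gamma,\cdot)$ itself and the function $h_1(\gamma,\cdot)$ appearing in \eqref{average ell 2 error} --- both of which inherit the $-\frac{1-\rho}{m\gamma}\|\cdot\|^2$ term from the consensus penalty and therefore have suprema over $\mathbb{R}_+$ of order $\mathcal{O}(d\gamma)$. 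So your diagnosis that the ``naive route fails because $h$ does not vanish with $\gamma$'' mischaracterizes the paper's argument: the supremum $h_{\max}$ in \eqref{def:h-max} is $\mathcal{O}(d\gamma)$ regardless of how large $\|\hat{\bnu}_{\perp}\|$ is, precisely because $h$ is a concave quadratic with a $1/\gamma$ leading coefficient. What your two-reference-point device buys is an explicit $\mathcal{O}(\gamma)$ bound on $\|\hat{\bnu}_{\perp}\|$ itself, which yields a disagreement contribution of order $d\gamma^2$ rather than the paper's $d\gamma$, and which cleanly justifies replacing $h^2(\gamma,\|\hat{\bnu}_{\perp}\|)$ by a small quantity --- a step the paper performs by writing $h^2\le h_{\max}^2$, which on its own requires $h\ge -h_{\max}$ and hence an a priori bound on $\|\hat{\bnu}_{\perp}\|$ of exactly the kind you construct.

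The one place where your outline is thinner than it should be is the substitution of the linear estimate into the tolerance $-\xi h^2(\gamma,\|\hat{\bnu}_{\perp}\|)$. Because $h$ is not monotone, bounding $h^2$ on the interval $[0,\,A\|\Delta_{\textnormal{av}}\|+B]$ gives $\max\{h_{\max}^2,\ h(A\|\Delta_{\textnormal{av}}\|+B)^2\}$, and the second branch contains $\|\Delta_{\textnormal{av}}\|^4$ through the $-\frac{1-\rho}{m\gamma\lambda}t^2$ part of $h$; the resulting scalar inequality in $\|\Delta_{\textnormal{av}}\|$ is therefore quartic, not quadratic, and the implication $ax^2\le bx+c\Rightarrow x^2\le b^2/a^2+2c/a$ does not apply directly. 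This is fixable --- the offending coefficient is $\mathcal{O}(\gamma)$ and can be absorbed into $\delta\|\Delta_{\textnormal{av}}\|^2$ for $\gamma$ small enough --- but it needs to be argued. Two smaller bookkeeping points: in the upper bound for $\mathcal{T}_{L_\gamma}(\hat{\bT};1_m\otimes\T^*)$ the cross term should carry the coefficient $\bigl(\lambda+\max_{i}\|w_i^{\top}X_i\|_{\infty}/n\bigr)\sqrt{d/m}$, not just the noise part, since the $\ell_1$-difference $\|1_m\otimes\T^*\|_1-\|\hat{\bT}\|_1$ leaves behind a $\frac{\lambda}{m}\sum_i\|\hat{\nu}_{\perp i}\|_1$ term (this is where the $+\lambda n$ inside the parentheses of \eqref{oracle} originates); and your second decentralization term will not match the displayed constants of \eqref{oracle} exactly, though it is of the same or smaller order under \eqref{eq:gamma_cond}.
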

\begin{proof}
See Appendix~\ref{Proof of Theorem 2}.
\end{proof}
Theorem~\ref{solution err bound} shows the bound on the LASSO error over the network can be decoupled in two terms--the first one   matches that of the centralized  LASSO error (see, \eg, \citep[Th. 7.13]{Wainwright-book})--while the second one quantifies the price to pay due to the decentralization of the optimization and consequent lack of   consensus. The explicit dependence on $\gamma$   shows that the detriment effect of the consensus errors can be controlled by    $\gamma$: as $\gamma\rightarrow 0$, the error bound above   approaches that of the centralized LASSO solution. There is however no free lunch;  we anticipate that   $\gamma\rightarrow 0$ affects adversarially the convergence rate of the proximal gradient algorithm   applied to  {problem}~\eqref{Change problem}, determining thus a speed-accuracy dilemma. 

The next result provides nonasymptotic  rates for the LASSO error above, under the random Gaussian  model for   $\bX$ and  the noise $\mathbf{w}$ in (\ref{eq:linear-model})--optimal centralized convergence rates are achievable by a proper choice of $\gamma$. 
\begin{theorem}\label{statistical optimization error result}
Consider the augmented LASSO problem (\ref{Change problem}) with $d \geq 2$ under Assumptions~\ref{W}. Suppose that   $\bX$ satisfies   Assumption~\ref{ass_X_random} and  $\mathbf{w}\sim \mathcal{N}(\mathbf{0},\sigma^2I_{N})$;   the sample size satisfies \vspace{-0.1cm}
\begin{equation}\label{stat N condition}
     N\geq c_3\frac{\zeta_{\Sigma}s\log d}{\lambda_{\min}(\Sigma)};
\end{equation}
and the parameters  $\lambda$ and $\gamma$ are chosen according to the following  
 \begin{align}
  &   {\lambda=c_4\sigma\sqrt{\frac{\zeta_{\Sigma}t_0\log d}{N}}},\label{stat_lambda_final}\\
    &{
    \gamma\leq {c_5} \frac{(1-\rho)}{\lambda_{\max}(\Sigma)(d + \log m) +\lambda_{\min}(\Sigma )d{m(\log m+1)}}},\label{stat gamma condition}
 \end{align}  
 for some  $t_0 > 2$. Then,  
any  solution $\hat{\bT} = [\hat{\T}_1,\ldots, \hat{\T}_m]^\top$ of problem (\ref{Change problem}) satisfies
      \begin{equation}\label{er bound 2}
      \frac{1}{m}\sum\limits_{i=1}^m\lVert\hat{\T}_i-\T^*\rVert^2
      \leq {c_6\frac{\sigma^2\zeta_{\Sigma}t_0}{\lambda_{\min}(\Sigma)^2}}\frac{s\log d}{N},
    \end{equation}
with probability at least      \begin{equation}\label{probability}
 {1- c_{7}\exp(-c_{8}\log d)}.
\end{equation}
Here, {$c_3,\dots,c_8$ are universal constants.}
\end{theorem}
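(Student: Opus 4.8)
The plan is to derive (\ref{er bound 2}) by instantiating the deterministic bound (\ref{oracle}) of Theorem~\ref{solution err bound} and verifying that \emph{all} of its hypotheses hold with high probability under the Gaussian design of Assumption~\ref{ass_X_random} and the Gaussian noise $\mathbf{w}\sim\mathcal{N}(\mathbf{0},\sigma^2 I_N)$. Concretely, I would control four stochastic quantities on a single high-probability event: (i) the score $\tfrac{2}{N}\|\mathbf{X}^\top\mathbf{w}\|_\infty$ that dictates the admissible $\lambda$ in (\ref{eq:gamma_cond}); (ii) the In-network RE constants $(\delta,\xi)$ of Assumption~\ref{RSC along consensus space}; (iii) the local curvature $L_{\max}$; and (iv) the local scores $\max_{i\in[m]}\|w_i^\top X_i\|_\infty$ appearing in the cost-of-decentralization terms of (\ref{oracle}). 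Once these are pinned down, the bound follows by substituting the prescribed $\lambda$ and $\gamma$ and checking that the two decentralization terms are dominated by the centralized one.

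For the score in (i), conditioning on $\mathbf{X}$ gives $\mathbf{X}^\top\mathbf{w}\sim\mathcal{N}(\mathbf{0},\sigma^2\mathbf{X}^\top\mathbf{X})$, whose $j$-th coordinate has variance $\sigma^2(\mathbf{X}^\top\mathbf{X})_{jj}$ concentrating around $\sigma^2 N\Sigma_{jj}\le\sigma^2 N\zeta_{\Sigma}$; a sub-Gaussian tail together with a union bound over the $d$ coordinates yields $\tfrac{2}{N}\|\mathbf{X}^\top\mathbf{w}\|_\infty\lesssim\sigma\sqrt{\zeta_{\Sigma}\log d/N}$ with failure probability decaying polynomially in $1/d$ (at a rate controlled by $t_0>2$), which is exactly the form matched by (\ref{stat_lambda_final}). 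For (ii), I invoke Lemma~\ref{Global ARE} directly: under (\ref{stat N condition}) and $\gamma\le(1-\rho)/L_{\max}$ it certifies Assumption~\ref{RSC along consensus space} with $\delta=\lambda_{\min}(\Sigma)/4$ and $\xi=\lambda_{\min}(\Sigma)/(64s)$ on an event of probability at least $1-\exp(-c_0 N)$. For (iii)--(iv) I would use standard concentration for wide Gaussian matrices with a union bound over the $m$ agents: writing $X_i=Z_i\Sigma^{1/2}$ with $Z_i$ standard Gaussian, $L_{\max}=\max_i\lambda_{\max}(X_i^\top X_i/n)\lesssim\lambda_{\max}(\Sigma)(d+\log m)/n$, while, conditionally on $X_i$, $X_i^\top w_i\sim\mathcal{N}(\mathbf{0},\sigma^2 X_i^\top X_i)$ gives $\max_i\|w_i^\top X_i\|_\infty\lesssim\sigma\sqrt{n\zeta_{\Sigma}\log(md)}$.

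With these estimates in hand, the centralized term of (\ref{oracle}) becomes $9\lambda^2 s/\delta^2=\mathcal{O}\big(\tfrac{\sigma^2\zeta_{\Sigma}t_0}{\lambda_{\min}(\Sigma)^2}\cdot\tfrac{s\log d}{N}\big)$, which is precisely the target rate (\ref{er bound 2}); it remains to force the two decentralization terms below this level. Substituting $\delta,\xi$ (note the convenient cancellation $\xi/\delta=1/(16s)$, which is exactly what restores the correct $s$-dependence), the surrogate $\max_i\|w_i^\top X_i\|_\infty+\lambda n\lesssim\sigma\sqrt{n\zeta_{\Sigma}\log(md)}$, and $N=nm$, a direct computation shows the two terms scale, up to $\sigma^2\zeta_{\Sigma}$ and logarithmic factors, like $\gamma^2\cdot\tfrac{d^2 m}{s\,n(1-\rho)^2}$ and $\gamma\cdot\tfrac{d}{\lambda_{\min}(\Sigma)\,n(1-\rho)}$, respectively. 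Requiring each to be $\mathcal{O}(s\log d/N)$ forces $\gamma$ no larger than the right-hand side of (\ref{stat gamma condition}); the $\lambda_{\min}(\Sigma)\,dm(\log m+1)$ term in its denominator comes from the accuracy requirement above, while the $\lambda_{\max}(\Sigma)(d+\log m)$ term is inherited from the smoothness constraint $\gamma\le 2(1-\rho)/(4L_{\max}+\delta)$ of (\ref{eq:gamma_cond}) via the bound on $L_{\max}$.

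Finally, the claimed probability (\ref{probability}) follows by a union bound over the four events: since (\ref{stat N condition}) makes $\exp(-c_0 N)$ negligible relative to the polynomial-in-$d$ failure probabilities, the binding term is of order $\exp(-c_8\log d)$. The main obstacle I anticipate is precisely steps (iii)--(iv) together with the calibration of $\gamma$: obtaining sharp, \emph{simultaneous} (over all $m$ agents) high-probability control of $L_{\max}$ and of the local scores in the rank-deficient regime $n<d$, and then choosing $\gamma$ so that every decentralization term collapses to the centralized rate. The difficulty is that the admissible $\gamma$ must at once respect the smoothness constraint through $L_{\max}$, the In-network RE prerequisite $\gamma\le(1-\rho)/L_{\max}$ of Lemma~\ref{Global ARE}, and the accuracy requirement driving (\ref{stat gamma condition})---and it is exactly this tension that surfaces as the speed--accuracy dilemma.
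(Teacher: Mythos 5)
Your proposal follows essentially the same route as the paper's proof: control $L_{\max}$, the in-network RE constants, and the global and local scores on a single high-probability event, feed these into the deterministic bound of Theorem~\ref{solution err bound}, and back out the $\gamma$ condition by forcing the two decentralization terms below the centralized rate (you even correctly attribute the two terms in the denominator of~\eqref{stat gamma condition} to the $L_{\max}$-smoothness constraint and the accuracy requirement, respectively). The only substantive deviation is in bounding $\max_i\|w_i^\top X_i\|_\infty$: the paper uses an unconditional sub-exponential (Bernstein-type) bound valid in both regimes of $n$ versus $\log(md)$, whereas your conditional-Gaussian argument additionally requires uniform control of $\max_{i,j}\|X_ie_j\|^2/n$ over all $md$ pairs and, as stated, only yields the $\sqrt{n\log(md)}$ branch---a fixable but real gap in the small-$n$ regime.
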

\begin{proof}
See Appendix~\ref{Proof of Theorem 3}.
\end{proof}
The    bound   (\ref{er bound 2}) matches  the  statistical error of the centralized  LASSO estimator in (\ref{original problem})--proving   that statistical consistency over networks is achievable   under the same order of the sample size $N$ used in the centralized setting, even  when the local number  $n$ of samples does not suffice.  This is possible because agents   communicate over the network--the computation of such a solution and associated communication overhead is studied in the next section. \vspace{-0.2cm}
\section{Distributed Gradient Descent  Algorithm  }\label{Geometric Convergence of Distributed Gradient Descent Based Algorithms}
To compute the statistically optimal   estimator $\hat{\bT}$ over networks, we employ 
the proximal gradient algorithm applied to the penalized formulation~(\ref{Change problem}), which naturally decomposes across the agents. 
Specifically, at iteration $t$,   $\bT$ is updated by minimizing the first order approximation of the objective function $L_{\gamma}$, which reads 
\begin{align}\label{regularized algorith}
  \bT^{t+1}
  =&\underset{\lVert\theta_i\rVert_1\leq R\text{ }\forall i\in[m]}{\text{argmin}}\text{ }
  L_{\gamma}(\bT^t)+\langle\nabla L_{\gamma}(\bT^t),\bT-\bT^t\rangle+\frac{1}{2  {\beta m}}\lVert\bT-\bT^t\rVert^2+\frac{\lambda}{m}\lVert\bT\rVert_1,
\end{align}
where 
we included an extra constraint $\lVert\theta_i\rVert_1\leq R$ to regularize the iterates, and $\beta>0$ plays the role of the stepsize.   
The following lemma shows that one can find a sufficiently large $R$ so that the solution of  \eqref{Change problem} does not change if we add therein the norm ball constraint $\|\T_i\|_1\leq R$, $i\in [m]$.   
\begin{lemma}\label{solution eq}
Consider Problem~(\ref{Change problem}) under   Assumption~\ref{W}. Further assume that (i)   $\bX$ satisfies the RSC condition~(\ref{Arsc}) with  $\delta=\mu/2-16\,s\tau> 0$; (ii)  $\lambda$ satisfies~(\ref{distributed lambda}); and (iii) $\gamma$ satisfies 
\begin{align}\label{equ-sol-gamma}
\gamma\leq\frac{(1-\rho)}{2L_{\max}+\delta+{128}(d/s)\delta(\max_{  i\in [m]}\lVert w_i^{\top} X_i\rVert_{\infty}/(\lambda n)+2\sqrt{m})^2}.
\end{align}
Then, $\|\hat{\T}_i\|_1 \leq R,$    $i \in [m]$, whenever  $R$ is such that 
\begin{align}\label{c}
    R \geq \max\bigg\{\frac{\lambda s}{\delta(1-r)}{\bigg(13+\frac{1}{32}\sqrt{\frac{2\tau s}{\delta}}\bigg)}, \frac{1}{r}\|\T^*\|_1\bigg\},
\end{align}
  with $r\in (0,1)$.
\end{lemma}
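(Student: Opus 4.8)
The plan is to control each local norm $\lVert\hat{\T}_i\rVert_1$ through the augmented error $\bDelta\triangleq\hat{\bT}-1_m\otimes\T^*$ and then split the radius $R$ into one part that absorbs $\T^*$ and one that absorbs the error. Writing $\Delta_i=\hat{\T}_i-\T^*=\Delta_{\textnormal{av}}+\Delta_{\perp i}$, the triangle inequality gives $\lVert\hat{\T}_i\rVert_1\leq\lVert\Delta_i\rVert_1+\lVert\T^*\rVert_1$. Thus it suffices to prove the uniform estimate $\lVert\Delta_i\rVert_1\leq(\lambda s/\delta)\bigl(13+\tfrac{1}{32}\sqrt{2\tau s/\delta}\bigr)$ for every $i\in[m]$: then, for any $R$ obeying \eqref{c}, the first branch of the maximum forces $\lVert\Delta_i\rVert_1\leq(1-r)R$ and the second forces $\lVert\T^*\rVert_1\leq rR$, so their sum is at most $R$. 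This is exactly where the free parameter $r\in(0,1)$ enters, and it reduces the whole lemma to the uniform $\ell_1$ bound on $\Delta_i$.

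To obtain that bound I would first collect the two structural facts already in hand. Because $\lambda$ satisfies \eqref{distributed lambda}, Proposition~\ref{proximal nu average sparsity} places $\bDelta\in\mathbb{C}_{\gamma}(\mathcal{S})$; and because the RSC condition \eqref{Arsc} holds with $\delta=\mu/2-16s\tau>0$, Lemma~\ref{global-ARE-determin} gives the In-network RE \eqref{ARE eq} with curvature $\delta$ and tolerance $\xi=\tau$ — note that \eqref{equ-sol-gamma} is strictly more stringent than $\gamma\in(0,(1-\rho)/L_{\max}]$, so this applies. Feeding cone membership into the definition of $\mathbb{C}_{\gamma}(\mathcal{S})$ and using $\lVert(\Delta_{\textnormal{av}})_{\mathcal{S}}\rVert_1\leq\sqrt{s}\,\lVert\Delta_{\textnormal{av}}\rVert$ yields $\lVert\Delta_{\textnormal{av}}\rVert_1\leq 4\sqrt{s}\,\lVert\Delta_{\textnormal{av}}\rVert+h(\gamma,\lVert\bDelta_{\perp}\rVert)$, while $\lVert\Delta_{\perp i}\rVert_1\leq\sqrt{d}\,\lVert\bDelta_{\perp}\rVert$. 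Combining, $\lVert\Delta_i\rVert_1\leq 4\sqrt{s}\,\lVert\Delta_{\textnormal{av}}\rVert+h(\gamma,\lVert\bDelta_{\perp}\rVert)+\sqrt{d}\,\lVert\bDelta_{\perp}\rVert$, so everything reduces to $\ell_2$-control of $\lVert\Delta_{\textnormal{av}}\rVert$ and of the consensus error $\lVert\bDelta_{\perp}\rVert$.

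For those $\ell_2$ bounds I would run the basic inequality: optimality of $\hat{\bT}$ tested against the feasible point $1_m\otimes\T^*$ gives $\mathcal{T}_{L_\gamma}(\hat{\bT};1_m\otimes\T^*)\leq-\langle\nabla L_\gamma(1_m\otimes\T^*),\bDelta\rangle+\tfrac{\lambda}{m}\bigl(\lVert1_m\otimes\T^*\rVert_1-\lVert\hat{\bT}\rVert_1\bigr)$. The $i$-th block of $\nabla L_\gamma(1_m\otimes\T^*)$ is $-\tfrac{1}{N}X_i^\top w_i$ (the penalty gradient vanishes by consensus), so the inner product splits into an average part bounded by $\tfrac{1}{N}\lVert\mathbf{X}^\top\mathbf{w}\rVert_\infty\lVert\Delta_{\textnormal{av}}\rVert_1\leq\tfrac{\lambda}{2}\lVert\Delta_{\textnormal{av}}\rVert_1$ through \eqref{distributed lambda}, and a disagreement part controlled by $\max_{i\in[m]}\lVert w_i^\top X_i\rVert_\infty$ and $\lVert\bDelta_{\perp}\rVert$. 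Lower-bounding the left side by \eqref{def:L-linearization} — which simultaneously exposes the restricted curvature along the average (handled by the In-network RE/RSC) and the positive consensus curvature $\bigl(\tfrac{1-\rho}{2m\gamma}-\tfrac{L_{\max}}{2m}\bigr)\lVert\bDelta_{\perp}\rVert^2$ when $\gamma<(1-\rho)/L_{\max}$ — and folding in the cone inequality produces a quadratic inequality in $\lVert\Delta_{\textnormal{av}}\rVert$ and $\lVert\bDelta_{\perp}\rVert$ that yields $\lVert\Delta_{\textnormal{av}}\rVert\lesssim\lambda\sqrt{s}/\delta$ together with a $\gamma$-dependent bound on $\lVert\bDelta_{\perp}\rVert$. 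Equivalently, one may simply invoke Theorem~\ref{solution err bound}, whose In-network RE hypothesis is supplied by Lemma~\ref{global-ARE-determin} and whose parameter conditions \eqref{eq:gamma_cond} are implied by \eqref{distributed lambda}--\eqref{equ-sol-gamma}, and read $\lVert\Delta_{\textnormal{av}}\rVert^2$ and $\tfrac{1}{m}\lVert\bDelta_{\perp}\rVert^2$ off the two summands of \eqref{oracle}.

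The main obstacle is the final calibration. The two consensus-driven contributions $h(\gamma,\lVert\bDelta_{\perp}\rVert)$ and $\sqrt{d}\,\lVert\bDelta_{\perp}\rVert$ must be squeezed below a correction of order $(\lambda s/\delta)\cdot\tfrac{1}{32}\sqrt{2\tau s/\delta}$, with the leading constant settling to $13=12+1$ (the $12$ from $4\sqrt{s}\cdot 3\lambda\sqrt{s}/\delta$). This is precisely what the stringent choice \eqref{equ-sol-gamma} buys: the extra summand $128(d/s)\delta(\max_{i\in[m]}\lVert w_i^\top X_i\rVert_\infty/(\lambda n)+2\sqrt{m})^2$ in its denominator is exactly the factor that drives the $d$-weighted and $h$-type consensus errors down to the required $\tau$-scaled residual. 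Getting the stated numerical constants therefore hinges on carefully tracking these factors — in particular using, rather than discarding, the negative term $-\tfrac{1-\rho}{m\gamma\lambda}\lVert\bDelta_{\perp}\rVert^2$ inside $h$ to cancel the positive consensus part — whereas the conceptual content (cone membership, restricted curvature on the average, and a penalty-controlled consensus error) is routine once the reduction in the first paragraph is fixed.
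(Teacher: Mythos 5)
Your proposal is correct and follows essentially the same route as the paper's proof: decompose $\hat{\T}_i-\T^*$ into average and orthogonal parts, invoke Proposition~\ref{proximal nu average sparsity} for cone membership, control $\lVert\hat{\nu}_{\mathrm{av}}\rVert$ via Theorem~\ref{solution err bound} (whose hypotheses are supplied by Lemma~\ref{global-ARE-determin} and the fact that \eqref{equ-sol-gamma} implies \eqref{eq:gamma_cond}), absorb the consensus contribution $h(\gamma,\lVert\hat{\bnu}_{\perp}\rVert)+\sqrt{d}\,\lVert\hat{\bnu}_{\perp}\rVert$ by maximizing the downward-opening quadratic, and let \eqref{equ-sol-gamma} calibrate the constants before splitting $R$ into $(1-r)R+rR$. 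The only cosmetic deviation is that the paper bounds $\lVert(\hat{\nu}_{\mathrm{av}})_{\mathcal S}\rVert_1\leq\sqrt{s}\,\lVert\hat{\bnu}\rVert/\sqrt{m}$ and applies the full oracle bound \eqref{oracle} to $\lVert\hat{\bnu}\rVert^2/m$, rather than attributing the two summands of \eqref{oracle} separately to $\lVert\Delta_{\mathrm{av}}\rVert^2$ and $\tfrac{1}{m}\lVert\bDelta_{\perp}\rVert^2$ as you suggest.
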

\begin{proof}
See Appendix~\ref{sec:R-bound}. 
\end{proof} 

Therefore, we can focus on  Problem~(\ref{regularized algorith}) without loss of generality. Notice that the problem is separable in $\{\T_i\}_{i \in [m]}$; hence,  it 
can be solved distributively from each agent $i$. Furthermore, the solution can be computed in an explicit  form,  as determined next. 
\begin{lemma}\label{lm-closed _form}
The solution to (\ref{regularized algorith}) 
  reads \begin{equation}\label{composite dgd variant}
       \T_i^{t+1}
        =\left\{
      \begin{array}{ll}
       \textnormal{prox}_{{\beta}   {\lambda\lVert\cdot\rVert_1}}(\psi_i^t), &\quad\text{if }\quad  \left\| \textnormal{prox}_{{\beta}   {\lambda\lVert\cdot\rVert_1}}(\psi_i^t) \right\|_1 \leq  R, \medskip \\
         \Pi_{\mathcal{B}_{\lVert\cdot\rVert_1}(R)}( \psi_i^t), &\quad \text{otherwise}; \\
      \end{array}
      \right.
\end{equation} where  $\mathbb{R}\ni x\mapsto \text{prox}_{h}(x)\in \mathbb{R}$ is the proximal operator (applied to $\psi_i^t$ component-wise), and  $$ \psi_i^t=\left( 1 -   {\frac{\beta}{\gamma}}\right)\T^t_i +   {\frac{\beta}{\gamma}} \left(\sum_{j = 1}^m w_{ij} \T_j^t - \gamma \nabla f_i (\T_i^t)\right).$$\end{lemma}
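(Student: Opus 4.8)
The plan is to exploit the separability of (\ref{regularized algorith}) across agents and reduce each agent's subproblem to a single proximal step over an $\ell_1$ ball, which I then solve in closed form by a case analysis on whether the ball constraint is active. First I would observe that the objective in (\ref{regularized algorith}) decouples: dropping the additive constant $L_\gamma(\bT^t)-\langle\nabla L_\gamma(\bT^t),\bT^t\rangle$, the problem splits into $m$ independent subproblems, one per agent,
\begin{equation*}
\theta_i^{t+1}=\argmin_{\|\theta_i\|_1\le R}\ \langle\nabla_{\theta_i}L_\gamma(\bT^t),\theta_i\rangle+\frac{1}{2\beta m}\|\theta_i-\theta_i^t\|^2+\frac{\lambda}{m}\|\theta_i\|_1.
\end{equation*}
Completing the square in the first two terms rewrites the quadratic-plus-linear part as $\frac{1}{2\beta m}\|\theta_i-\psi_i^t\|^2$ up to a constant, with $\psi_i^t\triangleq\theta_i^t-\beta m\,\nabla_{\theta_i}L_\gamma(\bT^t)$. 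Scaling the resulting objective by $\beta m>0$ (which does not change the minimizer) leaves each agent solving
\begin{equation*}
\theta_i^{t+1}=\argmin_{\|\theta_i\|_1\le R}\ \tfrac12\|\theta_i-\psi_i^t\|^2+\beta\lambda\|\theta_i\|_1.
\end{equation*}

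Next I would verify the stated form of $\psi_i^t$ by computing $\nabla_{\theta_i}L_\gamma$. Writing $L_\gamma(\bT)=\frac{1}{m}\sum_{j}f_j(\theta_j)+\frac{1}{2m\gamma}\|\bT\|_V^2$ with the per-agent normalized loss $f_i(\theta_i)=\tfrac{1}{2n}\lVert y_i-X_i\theta_i\rVert^2$ and $V=(I_m-W)\otimes I_d$, the $i$-th block of the penalty gradient is $\frac{1}{m\gamma}(\theta_i-\sum_j w_{ij}\theta_j)$, so that $m\,\nabla_{\theta_i}L_\gamma(\bT^t)=\nabla f_i(\theta_i^t)+\frac{1}{\gamma}(\theta_i^t-\sum_j w_{ij}\theta_j^t)$. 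Substituting into $\psi_i^t=\theta_i^t-\beta m\,\nabla_{\theta_i}L_\gamma(\bT^t)$ and regrouping the consensus terms gives exactly $\psi_i^t=(1-\beta/\gamma)\theta_i^t+(\beta/\gamma)(\sum_j w_{ij}\theta_j^t-\gamma\nabla f_i(\theta_i^t))$, as claimed.

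It remains to solve the $\ell_1$-regularized, $\ell_1$-constrained prox. Let $h(\theta)=\tfrac12\|\theta-\psi_i^t\|^2+\beta\lambda\|\theta\|_1$; since $h$ is strictly convex its minimizer over the ball $C=\mathcal{B}_{\|\cdot\|_1}(R)$ is unique, and its unconstrained minimizer is $\textnormal{prox}_{\beta\lambda\|\cdot\|_1}(\psi_i^t)$, the componentwise soft-threshold. In the first case $\|\textnormal{prox}_{\beta\lambda\|\cdot\|_1}(\psi_i^t)\|_1\le R$, the unconstrained minimizer is feasible and hence optimal for the constrained problem. In the second case it is infeasible; by strict convexity and the fact that the unconstrained minimizer lies outside $C$, the constrained minimizer must lie on the boundary $\{\|\theta\|_1=R\}$ (otherwise a short move toward the unconstrained minimizer would strictly decrease $h$ while staying in $C$). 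On that boundary the $\ell_1$ term equals the constant $\beta\lambda R$, so minimizing $h$ there is equivalent to minimizing $\tfrac12\|\theta-\psi_i^t\|^2$; since soft-thresholding only shrinks the $\ell_1$ norm we have $\|\psi_i^t\|_1\ge\|\textnormal{prox}_{\beta\lambda\|\cdot\|_1}(\psi_i^t)\|_1>R$, hence $\psi_i^t\notin C$ and the Euclidean projection $\Pi_{C}(\psi_i^t)$ lands on the boundary, where it is the unique minimizer of $\tfrac12\|\theta-\psi_i^t\|^2$. Therefore the constrained minimizer coincides with $\Pi_{\mathcal{B}_{\|\cdot\|_1}(R)}(\psi_i^t)$, and combining the two cases yields (\ref{composite dgd variant}).

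The main obstacle is exactly this second case: arguing rigorously that the combined prox collapses to the plain $\ell_1$-ball projection. The clean route is the boundary argument above (constraint active $\Rightarrow$ the $\ell_1$ penalty is constant on the boundary), which sidesteps all Lagrangian bookkeeping; the only points needing care are certifying that the constrained minimizer genuinely sits on the boundary and that $\psi_i^t$ is outside the ball, both of which follow from strict convexity and the norm-shrinking property of soft-thresholding. An alternative is to attach a KKT multiplier $\eta\ge0$ to $\|\theta\|_1\le R$, observe that the inner minimizer is $\textnormal{prox}_{(\beta\lambda+\eta)\|\cdot\|_1}(\psi_i^t)$, and match $\beta\lambda+\eta^\star$ to the projection threshold via the monotonicity of $t\mapsto\|\textnormal{prox}_{t\|\cdot\|_1}(\psi_i^t)\|_1$; since this requires the characterization of $\ell_1$-ball projection as an adaptive soft-threshold, I would prefer the boundary argument as the primary proof.
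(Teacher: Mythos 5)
Your proposal is correct and follows essentially the same route as the paper's proof: decouple across agents, complete the square to identify $\psi_i^t$ as the gradient step, and then resolve the constrained prox by the two-case boundary argument (unconstrained soft-threshold if feasible; otherwise the constraint is active, the $\ell_1$ penalty is constant on the sphere $\|\theta\|_1=R$, and $\|\psi_i^t\|_1\geq\|\textnormal{prox}_{\beta\lambda\|\cdot\|_1}(\psi_i^t)\|_1>R$ reduces the problem to the Euclidean projection onto the ball). No gaps.
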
 \begin{proof}
See Appendix~\ref{proof_lm_closed_form}.
\end{proof}
Notice that the proximal operation in (\ref{composite dgd variant}) has a closed-form expression via soft-thresholding \citep{382009}  while the projection onto the $\ell_1$-ball can be efficiently computed using the procedure in \citep{duchi2008efficient}.  
To perform the update (\ref{composite dgd variant}), each agent $i$ only needs to receive the local estimates $\theta_j^t$ from its immediate neighbors. 
\subsection{Linear convergence to statistical precision}\label{sub_stat_linear_convergence}
Convergence rate of the optimization error $\bT^t-\hat{\bT}$ is stated under the RSC condition~\eqref{Arsc}, 
in terms of the contraction coefficient $\kappa$ and initial optimality gap ${\eta^0_G}$:
\begin{equation}\label{uav deterministic}
  \mu_{\textnormal{av}}\triangleq\frac{\mu}{8}-8s\tau,\quad  \kappa\triangleq1-  {\frac{\beta\mu_{\textnormal{av}}}{4}},\quad\text{and}\quad \eta^0_G \triangleq \bigg(L_\gamma(\bT^0)+\frac{\lambda}{m}\lVert\bT^0\rVert_1\bigg)-\bigg(L_\gamma(\hat{\bT}){+}\frac{\lambda}{m}\lVert\hat{\bT}\rVert_1\bigg),
\end{equation}  {where $\boldsymbol{\theta}^0$ is a fixed  initialization.}
Further, denote
\begin{align}\label{eq-error-stat}
    \varepsilon_{\textnormal{stat}}^2\triangleq \frac{36}{m}\sum\limits_{i=1}^m\lVert\hat{\T}_i-\T^*\rVert^2+{\frac{\lambda^2s}{1976\mu^2}}.
\end{align}
We can now state our convergence result. 
\begin{theorem}\label{deterministic optimization error result}
Consider the augmented LASSO problem (\ref{Change problem}) under Assumptions~\ref{W}. Suppose the  design matrix $\bX$ satisfies the RSC condition~\eqref{Arsc} with {$\mu \geq {{c}_{9}} s \tau$} for some sufficiently large   constant ${{c}_{9}}>0$; 
and the penalty parameters $\lambda$ and  $\gamma$ satisfies   
\begin{align}
\lambda  &\geq \max\left\{\frac{2\| \bX^\top \mathbf{w}\|_\infty}{N},{64} \tau \| \T^*\|_1 \right\},\label{lambda condition}\\
    \gamma &\leq\frac{1-\rho}{2L_{\max}+(\mu/2-16s\tau)\left(1+128(d/s) (\max_{  i\in [m]}\lVert w_i^{\top} X_i\rVert_{\infty}/(\lambda n)+2\sqrt{m})^2\right)},
    \label{constrained gamma condition2}
\end{align} 
 respectively. Let $\{\T_i^t\}_{i \in [m]}$ be the sequence generated by Algorithm~(\ref{composite dgd variant}) under the following choices of tuning parameters $\beta$ and ${R}$
\begin{equation}\label{beta condidion}
\beta =   {\frac{\gamma}{\gamma L_{\max}+1-\lambda_{\min}(W)}}\end{equation}and \begin{align}\label{R condidion}
\max\bigg\{\frac{56\lambda s}{\mu-32s\tau}, 2\|\T^*\|_1\bigg\}\leq R\leq \frac{\lambda}{32 \tau};
\end{align}
{and  initialization $\theta_i^0\in \mathbb{R}^d$, $i\in [m]$, such that  } \begin{equation}\label{eq:cond_eta_G}\eta_G^0\geq 4s\tau \cdot\varepsilon_{\textnormal{stat}}^2.\end{equation} 
 Then, 
\begin{equation}\label{final bound lagrangian}
   \frac{1}{m}\sum\limits_{i=1}^m\lVert\T_i^t-\hat{\T}_i\rVert^2\leq \frac{\tau s}{\mu_{\textnormal{av}}} \cdot \varepsilon_{\textnormal{stat}}^2  +  \bigg(\frac{\tau s}{ \mu_{\textnormal{av}}} \frac{4 \alpha^4}{\lambda^2s} + \frac{\alpha^2}{\mu_{\textnormal{av}}} \bigg),
\end{equation}
for any tolerance parameter $\alpha^2$ such that 
\begin{equation}\label{determinstic alpha}
   {\min\bigg\{\frac{R\lambda}{4},\eta_G^0\bigg\}}\geq\alpha^2\geq 4s\tau \cdot\varepsilon_{\textnormal{stat}}^2,
\end{equation}
and for all  
{\begin{equation}\label{T1 expression}
   t\geq
  \bigg\lceil\log_2\log_2\left(\frac{R\lambda}{\alpha^2}\right)\bigg\rceil\bigg(1+\frac{L_{\max}\log 2}{\mu_{\textnormal{av}}}+\frac{(1+\rho)\log 2}{\gamma\mu_{\textnormal{av}}} \bigg) +  \left(\frac{ L_{\max}}{\mu_{\textnormal{av}}} + \frac{1+\rho}{\gamma\mu_{\textnormal{av}}}\right) \log\left(\frac{\eta^0_G}{ \alpha^2}\right).  
\end{equation}}
The intervals in \eqref{R condidion} and \eqref{determinstic alpha} are nonempty. 
\end{theorem}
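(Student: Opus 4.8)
The plan is to adapt the composite projected proximal-gradient analysis of Agarwal--Negahban--Wainwright to the penalized, lifted objective
\[
\phi(\bT)\triangleq L_\gamma(\bT)+\frac{\lambda}{m}\|\bT\|_1,
\]
tracking the optimality gap $\eta_G^t\triangleq\phi(\bT^t)-\phi(\hat\bT)$ along the iterates produced by the closed-form update of Lemma~\ref{lm-closed _form}. First I would establish a one-step sufficient-descent inequality: since the stepsize $\beta$ in \eqref{beta condidion} equals the reciprocal of the smoothness modulus of $L_\gamma$ (the data term contributing $L_{\max}$ and the quadratic penalty contributing $(1-\lambda_{\min}(W))/\gamma$), the proximal step forces a guaranteed decrease of $\phi$. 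Lower bounding this decrease requires curvature of $L_\gamma$, which I would supply not through global strong convexity (unavailable for $d>N$) but through the in-network RSC of Lemma~\ref{global-ARE-determin}, splitting the error $\bT^t-\hat\bT$ into its average component (curved via the RSC of the data loss) and its perpendicular component (curved via the consensus penalty, whose contribution is rendered nonnegative by the $\gamma$ condition \eqref{constrained gamma condition2}).

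The second step converts the descent inequality into a contraction recursion. Combining descent with the RSC lower bound yields
\[
\eta_G^{t+1}\le \kappa\,\eta_G^t + (\textnormal{tolerance}),
\]
with $\kappa=1-\beta\mu_{\textnormal{av}}/4$ as in \eqref{uav deterministic}, where the tolerance collects the RSC slack $\tau\|\bT^t-\hat\bT\|_1^2$ together with the statistical error transported by $\varepsilon_{\textnormal{stat}}^2$ (whose $\tfrac1m\sum_i\|\hat\T_i-\T^*\|^2$ piece is controlled by Theorem~\ref{solution err bound}). The crucial difficulty is that this recursion is contaminated by $\|\bT^t-\hat\bT\|_1^2$, and the quality of the final bound rests entirely on how tightly this $\ell_1$ quantity is controlled along the trajectory.

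The third---and hardest---step is the control of $\|\bT^t-\hat\bT\|_1$ via a two-phase bootstrapping argument, which is precisely what produces the doubly logarithmic term in \eqref{T1 expression}. Initially I would use only the crude $\ell_1$-ball bound $\|\bT^t-\hat\bT\|_1\le 2Rm$ (both $\bT^t$ and, by Lemma~\ref{solution eq}, $\hat\bT$ obey $\|\T_i\|_1\le R$), which drives $\eta_G^t$ geometrically down to a coarse tolerance. Once $\eta_G^t$ is small, RSC converts it into a bound on $\|\bT^t-\hat\bT\|$ and hence---because the error's average component lies in the almost-sparse cone $\mathbb{C}_\gamma(\mathcal S)$ of Proposition~\ref{proximal nu average sparsity}---into a strictly sharper bound on $\|\bT^t-\hat\bT\|_1$. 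Re-inserting the sharper $\ell_1$ estimate tightens the tolerance; iterating this refinement, the attainable $\ell_1$ slack contracts roughly quadratically per epoch, so $\lceil\log_2\log_2(R\lambda/\alpha^2)\rceil$ epochs suffice to bring it below the target. The initialization condition \eqref{eq:cond_eta_G} and the interval \eqref{determinstic alpha} guarantee both that the argument can be launched and that $\alpha^2$ never falls below the irreducible floor $4s\tau\,\varepsilon_{\textnormal{stat}}^2$.

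Finally I would assemble the two phases: after the $\lceil\log_2\log_2(R\lambda/\alpha^2)\rceil$ refinement epochs (each costing $\mathcal{O}\!\left(1+(L_{\max}+(1+\rho)/\gamma)/\mu_{\textnormal{av}}\right)$ iterations) and the ensuing geometric phase of $\mathcal{O}\!\left((L_{\max}+(1+\rho)/\gamma)/\mu_{\textnormal{av}}\cdot\log(\eta_G^0/\alpha^2)\right)$ iterations, one obtains $\eta_G^t\le\alpha^2$; a final application of RSC turns $\eta_G^t$ into the parameter error $\tfrac1m\sum_i\|\T_i^t-\hat\T_i\|^2$, yielding \eqref{final bound lagrangian}. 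Nonemptiness of the intervals in \eqref{R condidion} and \eqref{determinstic alpha} follows from the separation $\mu\ge c_9 s\tau$, which keeps $\mu_{\textnormal{av}}>0$ and pushes each lower endpoint below its upper endpoint. I expect the bootstrapping step to be the main obstacle: it requires showing that \emph{every} iterate---not merely $\hat\bT$---stays (almost) inside $\mathbb{C}_\gamma(\mathcal S)$, and that the average/perpendicular decomposition is preserved consistently through each refinement, while the consensus-penalty contribution is tracked quantitatively in $\gamma$.
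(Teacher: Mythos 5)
Your proposal is correct and follows essentially the same route as the paper: a descent inequality with stepsize matched to the smoothness of $L_\gamma$, restricted curvature applied to the average component of $\bT^t-\hat\bT$ (with the perpendicular part absorbed by the consensus penalty under the $\gamma$ condition), control of the iterates' $\ell_1$ error through an almost-sparse cone whose slack shrinks with the current optimality gap, the Agarwal--Negahban--Wainwright epoch recursion yielding the $\lceil\log_2\log_2(R\lambda/\alpha^2)\rceil$ term, and a final conversion of the objective gap into the parameter error via the same curvature bound. The obstacle you flag at the end is exactly what the paper's Lemma~\ref{proximal iterate norm cone} resolves, by showing that the cone membership of $\Delta^t_{\textnormal{av}}$ holds for every iterate once the gap is below the current epoch's threshold.
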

 \begin{proof}
 See Appendix~\ref{Guarantees for distributed regularized Lasso}.
 \end{proof}

The theorem shows that Algorithm~\eqref{composite dgd variant}  converges at a linear rate to an optimal solution $\hat{\bT}$, up to a tolerance term  as specified on  the right hand side of~\eqref{final bound lagrangian}--the first term therein depends on the model parameters, while the second one is controlled by $\alpha^2$.  
Theorem~\ref{Th_stat_convergence} and (see also Corollary~\ref{Th_stat_convergence-alter}) below proves that for the random Gaussian data generation model, the tolerance can be driven below  the statistical precision for sufficiently large $N$.

\begin{remark}
Observe that the condition   \eqref{eq:cond_eta_G} pertaining to the initialization does not  truly impose a substantial constraint. Indeed, any initial point that contravenes \eqref{eq:cond_eta_G} would inherently be situated within the   centralized statistical error ball, i.e., 
$$ 
\eta_G^0  \leq 4s\tau \cdot\varepsilon_{\textnormal{stat}}^2
 \overset{\eqref{constrained gamma condition2},\autoref{solution err bound}}{=} \mathcal{O}\left(s\tau\cdot \lambda^2s\right).
$$ 
\end{remark}
\begin{remark}
Algorithm~\eqref{composite dgd variant} is closely related to the DGD algorithm studied in the literature of distributed optimization (\eg, \citep{WYin_ncvxDGD_SIAM2018,Nedic_Olshevsky_Rabbat2018}). 
In fact, if in (\ref{composite dgd variant}) one  choose   $\beta =   {\gamma/2}$, with $\gamma$ satisfying  \eqref{constrained gamma condition2} [note that this choice of $\beta$ is compatible with {\eqref{beta condidion}}], the gradient step therein reduces to  
\begin{align}\label{eq:DGD-simplified}
    \psi_i^t = \frac{1}{2} \left( \T_i + \sum_{i = 1}^m w_{ij} \T_j^t \right) - \frac{\gamma}{2} \nabla f_i (\T_i^t),
\end{align}
which can be viewed as DGD with weight matrix $\frac{1}{2} (I + W)$ and step size $\gamma/2$.
\end{remark}

\begin{theorem}\label{Th_stat_convergence} 
Consider the augmented LASSO problem (\ref{Change problem}) with $d \geq 2$ under Assumption~\ref{W}. Suppose the design matrix $\bX$ satisfies Assumption~\ref{Random Gaussian model}, $\mathbf{w} \sim \mathcal{N} (\mathbf{0}, \sigma^2 I_{N})$, and  
\begin{align}\label{eq:cond_on_N}
    N\geq c_{10} \frac{\zeta_\Sigma}{\lambda_{\min}(\Sigma)} s \log d.
\end{align}
Choose the penalty parameters  $\lambda$ and $\gamma$  satisfying respectively
\begin{equation}\label{global lambda condition}
         {\lambda\geq c_{11}\max\bigg\{ \sigma\sqrt{\frac{\zeta_{\Sigma}t_0\log d}{N}}, \zeta_{\Sigma}\cdot\frac{s\log d}{N}\bigg\},}  
\end{equation}
and  
 \begin{equation}\label{gamma-final-algo}
    {\gamma\leq c_{12}\frac{(1-\rho)}{\lambda_{\max}(\Sigma)\left({d+\log m}\right)+\lambda_{\min}(\Sigma){dm}\cdot \left({{\log m}}+1\right)}}.
 \end{equation}
for some fixed ${t_0>2}.$
 Let $\{\T_i^t\}_{i \in [m]}$ be the sequence generated by Algorithm~\eqref{composite dgd variant} under the following choices of tuning parameters $\beta$ and $R$
 \begin{align}\label{eq:alg-conditions-random}
     \begin{split}
         \beta=    {\frac{\gamma}{\gamma  c_{13}d/n+1-\lambda_{m}(W)}}, \quad    
              \max\bigg\{\frac{56\lambda s}{\lambda_{\min}(\Sigma)- c_{14}s\zeta_{\Sigma}\log d/N}, 2s\bigg\}\leq R\leq \frac{\lambda N}{ c_{14}\zeta_{\Sigma}\log d},
     \end{split}
 \end{align}
{and initialization $\theta_i^0\in \mathbb{R}^d$, $i\in [m]$, such that  } 
{    
 \begin{equation}\label{initial_gap_Thm12}
     \eta_G^0\geq   c_{15}\frac{ \zeta_{\Sigma}}{\lambda_{\min}(\Sigma)^2}\cdot\frac{s\log d}{N}\cdot\lambda^2s.
 \end{equation}}
   Then, with probability at least \eqref{probability},  
\begin{align}\label{stat_algo_bound_general}
    &\frac{1}{m}\sum\limits_{i=1}^m\lVert\T_i^t-\hat{\T}_i\rVert^2 \leq  
    \frac{ c_{16}}{\lambda_{\min}(\Sigma)}\left[\alpha^2+\zeta_{\Sigma}\cdot\frac{s\log d}{N}\left(\frac{1}{m}\sum\limits_{i=1}^m\|\hat{\T}_i-\T^*\|^2+\frac{\lambda^2s}{\lambda_{\min}(\Sigma)^2}+\frac{\alpha^4}{\lambda^2s}\right)\right],
\end{align}
for any tolerance parameter $\alpha^2$ such that  
\begin{equation}\label{alpha stat range}
{\min\bigg\{\frac{R\lambda}{4},\eta_G^0\bigg\}}\geq\alpha^2\geq { c_{17}\zeta_{\Sigma}}\left(\frac{1}{m}\sum\limits_{i=1}^m\lVert\hat{\T}_i-\T^*\rVert^2+{\frac{\lambda^2s}{\lambda_{\min}(\Sigma)^2}}\right)\cdot\frac{s\log d}{N},
\end{equation}
and for all
     \begin{align}\label{T2 expression}
         t\geq  c_{18}\left[\left\lceil\log_2\log_2\left(\frac{R\lambda}{\alpha^2}\right)\right\rceil+ \log\left(\frac{\eta^0_G}{ \alpha^2}\right)\right]\bigg({\kappa_{\Sigma}}(d + \log m)+\frac{(1+\rho)}{\lambda_{\min}(\Sigma)\gamma} \bigg). 
     \end{align}
 Further, the range of values of $R$ in  \eqref{eq:alg-conditions-random} is nonempty; and the interval in~\eqref{alpha stat range} is nonempty as well, with probability at least \eqref{probability}.
 Here,   {$ c_{10}, \dots,  c_{18}$}  are universal constants.
\end{theorem}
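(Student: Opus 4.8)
The plan is to obtain Theorem~\ref{Th_stat_convergence} as a high-probability specialization of the deterministic Theorem~\ref{deterministic optimization error result}. First I would fix a ``good event'' $\mathcal{E}$ on which every design- and noise-dependent quantity appearing in the deterministic hypotheses concentrates at its population value, namely the intersection of: (i) the RSC condition~\eqref{Arsc} with $\mu=\lambda_{\min}(\Sigma)$ and $\tau=2c_1\zeta_\Sigma\log d/N$ (Lemma~\ref{Global RE}); (ii) $\|\bX^\top\mathbf{w}\|_\infty/N\lesssim\sigma\sqrt{\zeta_\Sigma t_0\log d/N}$, from the sub-Gaussian tail of each coordinate of $\bX^\top\mathbf{w}$ together with a union bound over the $d$ coordinates; (iii) $L_{\max}\lesssim\lambda_{\max}(\Sigma)(d+\log m)/n$, from operator-norm concentration of the Gaussian sample covariances $X_i^\top X_i/n$ and a union bound over the $m$ agents; and (iv) $\max_{i\in[m]}\|w_i^\top X_i\|_\infty\lesssim\sigma\sqrt{n\,\zeta_\Sigma\log(dm)}$, from sub-exponential concentration of the products with a union bound over agents and coordinates. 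A union bound then shows $\mathcal{E}$ holds with probability at least~\eqref{probability}; the binding (weakest) contribution is (ii), since $N\gtrsim s\log d$ renders (i), (iii), (iv) exponentially more likely.

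On $\mathcal{E}$, I would verify that the random hypotheses~\eqref{eq:cond_on_N}--\eqref{initial_gap_Thm12} imply the deterministic ones~\eqref{lambda condition}--\eqref{eq:cond_eta_G}. The sample-size bound~\eqref{eq:cond_on_N} is exactly what makes $\mu=\lambda_{\min}(\Sigma)\geq c_9\,s\tau$, so the curvature hypothesis holds. The lower bound~\eqref{global lambda condition} forces~\eqref{lambda condition}: its first branch dominates $2\|\bX^\top\mathbf{w}\|_\infty/N$ by (ii), while its second branch $\zeta_\Sigma s\log d/N$ dominates $64\tau\|\T^*\|_1$ after using $\|\T^*\|_1\lesssim s$ for the (normalized) $s$-sparse signal. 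For the penalty $\gamma$, I would substitute the bounds of (iii)--(iv) into the deterministic denominator of~\eqref{constrained gamma condition2}: with $\max_i\|w_i^\top X_i\|_\infty/(\lambda n)=\mathcal{O}(\sqrt{m\log m})$ (the $\sqrt{N/n}=\sqrt m$ scaling, inflated by the union-bound $\log$) one gets $\delta\cdot 128(d/s)(\max_i\|w_i^\top X_i\|_\infty/(\lambda n)+2\sqrt m)^2\lesssim\lambda_{\min}(\Sigma)dm(\log m+1)$ and $2L_{\max}\lesssim\lambda_{\max}(\Sigma)(d+\log m)$, so for $c_{12}$ small enough the clean condition~\eqref{gamma-final-algo} conservatively implies~\eqref{constrained gamma condition2}. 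The remaining identifications are immediate: the stepsize $\beta$ in~\eqref{eq:alg-conditions-random} is~\eqref{beta condidion} with $L_{\max}$ replaced by its bound of order $d/n$; the $R$-interval in~\eqref{eq:alg-conditions-random} is~\eqref{R condidion} after inserting $\mu,\tau$ and $\|\T^*\|_1\lesssim s$; and~\eqref{initial_gap_Thm12} implies~\eqref{eq:cond_eta_G} once $4s\tau\,\varepsilon_{\textnormal{stat}}^2$ is expanded via~\eqref{eq-error-stat}.

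Having matched the hypotheses, I would invoke Theorem~\ref{deterministic optimization error result} and translate its conclusion. Substituting $\mu=\lambda_{\min}(\Sigma)$, $\mu_{\textnormal{av}}=\mu/8-8s\tau\gtrsim\lambda_{\min}(\Sigma)$ and $\tau s=2c_1\zeta_\Sigma s\log d/N$ into~\eqref{final bound lagrangian}, and expanding $\varepsilon_{\textnormal{stat}}^2$ through~\eqref{eq-error-stat}, converts the prefactor $\tau s/\mu_{\textnormal{av}}$ into $\mathcal{O}\!\big(\zeta_\Sigma s\log d/(N\lambda_{\min}(\Sigma))\big)$ and collects all terms into~\eqref{stat_algo_bound_general}; the same substitution carries the deterministic $\alpha^2$-range~\eqref{determinstic alpha} onto~\eqref{alpha stat range}. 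For the iteration count I would plug $\mu_{\textnormal{av}}\gtrsim\lambda_{\min}(\Sigma)$ and the bound on $L_{\max}$ into~\eqref{T1 expression}: the term $L_{\max}/\mu_{\textnormal{av}}\lesssim\kappa_\Sigma(d+\log m)/n\leq\kappa_\Sigma(d+\log m)$ and $\tfrac{1+\rho}{\gamma\mu_{\textnormal{av}}}\lesssim\tfrac{1+\rho}{\lambda_{\min}(\Sigma)\gamma}$, which together give~\eqref{T2 expression}. Nonemptiness of the $R$-interval follows deterministically from~\eqref{eq:cond_on_N}; nonemptiness of the $\alpha^2$-interval~\eqref{alpha stat range} holds on $\mathcal{E}$ because Theorem~\ref{statistical optimization error result} controls $\tfrac1m\sum_i\|\hat\T_i-\T^*\|^2$ there, so all assertions hold with probability at least~\eqref{probability}.

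The main obstacle lies in sub-events (iii)--(iv): establishing the sharp $\lambda_{\max}(\Sigma)(d+\log m)/n$ scaling of $L_{\max}$ and the $\sqrt{m\log m}$ scaling of $\max_i\|w_i^\top X_i\|_\infty/(\lambda n)$ with the correct union-bound factors over the $m$ agents, since these are precisely what produce the $\lambda_{\max}(\Sigma)(d+\log m)+\lambda_{\min}(\Sigma)dm(\log m+1)$ structure of the admissible $\gamma$ in~\eqref{gamma-final-algo}, and hence the $\mathcal{O}(dm\log m)$ communication cost. Everything else is deterministic bookkeeping, but keeping the two branches of the $\lambda$-bound and the several summands of the $\gamma$-denominator consistent through all substitutions (and tracking which bounds may be relaxed conservatively) requires care.
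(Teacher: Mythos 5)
Your proposal follows essentially the same route as the paper's proof: define a high-probability good event combining the RSC certification (Lemma~\ref{Global ARE}), the Gaussian/sub-exponential concentration bounds for $\|\bX^\top\mathbf{w}\|_\infty$, $L_{\max}$, and $\max_i\|w_i^\top X_i\|_\infty$, verify on that event that the random hypotheses imply the deterministic ones of Theorem~\ref{deterministic optimization error result}, and then translate its conclusion by substituting $\mu=\lambda_{\min}(\Sigma)$ and $\tau=\mathcal{O}(\zeta_\Sigma\log d/N)$. The only minor imprecision is attributing nonemptiness of the $R$-interval to~\eqref{eq:cond_on_N} alone, whereas the upper endpoint $2s\leq\lambda N/(c_{14}\zeta_\Sigma\log d)$ also needs the second branch of the $\lambda$ lower bound~\eqref{global lambda condition}, which you do assume.
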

\begin{proof}
 See Appendix~\ref{Statistical results for DGD-CTA first order methods}.
 \end{proof}
A suitable choice of the free parameters above leads to the following simplified result, showing linear convergence up to a tolerance  of a higher order than the statistical error. 

\begin{corollary}\label{Th_stat_convergence-alter} 
Consider the augmented LASSO problem (\ref{Change problem}) with $d \geq 2$ under Assumptions~\ref{W}. Suppose the   $\bX$ satisfies Assumption~\ref{Random Gaussian model}, $\mathbf{w} \sim \mathcal{N} (\mathbf{0}, \sigma^2 I_{N})$ and the sample size satisfies
\begin{equation}\label{global sample size condition-alter}
    N\geq  c_{19}\max\bigg\{\frac{\zeta_{\Sigma}s\log d}{\lambda_{\min}(\Sigma)}, \frac{s^2\zeta_{\Sigma}\log d}{\sigma^2}\bigg\} \quad \text{and} \quad \frac{d + \log m}{n} \geq 1.
\end{equation}
Choose the penalty parameters  $\lambda$ and $\gamma$  satisfying respectively
\begin{equation}\label{lambda}
         \lambda= c_{20}\sigma\sqrt{\zeta_{\Sigma}t_0\cdot\frac{\log d}{N}}
\end{equation}
and 
  \begin{equation}\label{gamma-final-algo-alter}
   { \gamma\leq c_{21}\frac{1-\rho}{\lambda_{\max}(\Sigma)(d+\log m)+\lambda_{\min}(\Sigma){dm}\cdot \left({{\log m}}+1\right)}},
 \end{equation}
 for some fixed ${t_0 \geq 2}$.
 Let $\{\T_i^t\}_{i \in [m]}$ be the sequence generated by Algorithm~\eqref{composite dgd variant} under the following choices of $\beta$ and $R$:
{ \begin{align}\label{eq:alg-conditions-random-alter}
     \begin{split}
         \beta=    {\frac{n\gamma}{\gamma  c_{13}d+n(1-\lambda_{m}(W))}},\quad 
              \max \bigg\{ c_{22}\frac{s\sigma}{\lambda_{\min}(\Sigma)}\sqrt{\frac{t_0\zeta_{\Sigma}\log d}{N}}, 2s\bigg\}\leq R\leq  c_{23}\sigma\sqrt{\frac{t_0N }{\zeta_{\Sigma}\log d}}
     \end{split}
 \end{align}}
{{and initialization $\theta_i^0\in \mathbb{R}^d$, $i\in [m]$, such that}
} 
{
\begin{equation}
    \eta_G^0\geq  c_{24}t_0\left(\frac{\sigma\zeta_{\Sigma}}{\lambda_{\min}(\Sigma)}\right)^2\cdot\left(\frac{s\log d}{N}\right)^2.
\end{equation}}
Then, with probability at least \eqref{probability}, 
{
\begin{align}\label{eq:sat_ball}
 &\frac{1}{m}\sum\limits_{i=1}^m\lVert\T_i^t-\hat{\T}_i\rVert^2\notag\\
  &\leq  \frac{ c_{23}}{\lambda_{\min}(\Sigma)}\left(\alpha^2+\zeta_{\Sigma}\frac{s\log d}{N}\cdot\frac{1}{m}\sum\limits_{i=1}^m\|\hat{\T}_i-\T^*\|^2+\zeta_{\Sigma}\frac{s\log d}{N}\cdot\frac{\zeta_{\Sigma}\sigma^2t_0}{\lambda^2_{\min}(\Sigma)}\frac{s\log d}{N}+\frac{1}{\sigma^2t_0}\alpha^4\right),
\end{align}
}
for any tolerance parameter $\alpha^2$ such that 
{
\begin{equation}\label{alpha stat range-alter}
  {\min\bigg\{ c_{24}{R\sigma}\sqrt{\frac{\zeta_{\Sigma}t_0\log d}{N}},\eta_G^0\bigg\}}\geq\alpha^2\geq  c_{25}\zeta_{\Sigma}\cdot\frac{s\log d}{N}\left(\frac{1}{m}\sum\limits_{i=1}^m\lVert\hat{\T}_i-\T^*\rVert^2+\frac{\sigma^2\zeta_{\Sigma}t_0}{\lambda_{\min}(\Sigma)^2}\cdot\frac{s\log d}{N}\right),
\end{equation}
}
and for all 
  {
\begin{align}\label{T2 expression-alter}
   t\geq&\, c_{26} \cdot    { \kappa_{\Sigma}\cdot \frac{dm\left({\log m+1}\right)}{1 - \rho}}\cdot \left\{\bigg\lceil\log_2\log_2\left(\frac{R\sigma }{\alpha^2}\sqrt{\frac{\zeta_{\Sigma}t_0\log d}{N}}\right)\bigg\rceil+\log\left(\frac{\eta^0_G}{ \alpha^2}\right)\right\}.
     \end{align}
}
{The range of value of $R$ in \eqref{eq:alg-conditions-random-alter} is nonempty; and the   interval in~\eqref{alpha stat range-alter} is nonempty as well, with probability at least \eqref{probability}. } 
\end{corollary}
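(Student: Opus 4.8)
The plan is to obtain Corollary~\ref{Th_stat_convergence-alter} as a direct specialization of Theorem~\ref{Th_stat_convergence}: the algorithmic and probabilistic machinery is already established, so the remaining work is to (i) check that the explicit parameter choices of the corollary satisfy the hypotheses of the theorem, and (ii) substitute the chosen $\lambda$ and $\gamma$ into the theorem's error bound, tolerance interval, and iteration count, then simplify. No new estimate is needed; the content is bookkeeping plus one informative algebraic reduction.

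First I would verify the hypotheses. The equality choice (\ref{lambda}), $\lambda = c_{20}\sigma\sqrt{\zeta_\Sigma t_0\log d/N}$, matches the first branch of (\ref{global lambda condition}) by construction; the role of the \emph{additional} sample-size requirement $N\geq c_{19}\,s^2\zeta_\Sigma\log d/\sigma^2$ in (\ref{global sample size condition-alter}) is precisely to force this $\lambda$ to also dominate the second branch $\zeta_\Sigma\,s\log d/N$, since $\sigma\sqrt{\zeta_\Sigma t_0\log d/N}\gtrsim \zeta_\Sigma s\log d/N$ is equivalent to $N\gtrsim s^2\zeta_\Sigma\log d/(\sigma^2 t_0)$. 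The $\gamma$-condition (\ref{gamma-final-algo-alter}) is (\ref{gamma-final-algo}) verbatim, and the stepsize in (\ref{eq:alg-conditions-random-alter}) is the theorem's $\beta$ of (\ref{eq:alg-conditions-random}) rescaled by $n/n$, hence admissible; the second regime condition $\tfrac{d+\log m}{n}\geq 1$ keeps us in the high-dimensional range where $L_{\max}\lesssim (d+\log m)/n$ concentrates, justifying the data-independent form of $\beta$ and $\gamma$. It remains to check the $R$-interval: substituting (\ref{lambda}) into the lower endpoint $56\lambda s/(\lambda_{\min}(\Sigma)-c_{14}s\zeta_\Sigma\log d/N)$ and using (\ref{stat N condition})-type scaling to bound the denominator below by a constant multiple of $\lambda_{\min}(\Sigma)$ yields the $c_{22}(s\sigma/\lambda_{\min}(\Sigma))\sqrt{t_0\zeta_\Sigma\log d/N}$ endpoint, while substituting (\ref{lambda}) into the upper endpoint $\lambda N/(c_{14}\zeta_\Sigma\log d)$ yields $c_{23}\sigma\sqrt{t_0 N/(\zeta_\Sigma\log d)}$; nonemptiness of this interval and of (\ref{alpha stat range-alter}) follows after invoking the high-probability statistical rate $(1/m)\sum_i\|\hat\theta_i-\theta^*\|^2=\mathcal{O}(s\log d/N)$ from Theorem~\ref{statistical optimization error result}.

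Next I would propagate the choice (\ref{lambda}) through the three outputs of the theorem. In (\ref{stat_algo_bound_general}) the summand $\lambda^2 s/\lambda_{\min}(\Sigma)^2$ becomes (up to $c_{20}^2$) the quantity $(\sigma^2\zeta_\Sigma t_0/\lambda_{\min}(\Sigma)^2)(s\log d/N)$, producing the third term of (\ref{eq:sat_ball}); and the summand $\zeta_\Sigma(s\log d/N)\cdot\alpha^4/(\lambda^2 s)$ collapses to $(1/(c_{20}^2\sigma^2 t_0))\alpha^4$, producing the last term, since the two factors of $\zeta_\Sigma\log d/N$ cancel against $\lambda^2$. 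The identical substitution turns the interval (\ref{alpha stat range}) into (\ref{alpha stat range-alter}), with $R\lambda/4$ becoming $c_{24}R\sigma\sqrt{\zeta_\Sigma t_0\log d/N}$.

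The one genuinely informative step is the iteration count. Starting from the factor $\kappa_\Sigma(d+\log m)+(1+\rho)/(\lambda_{\min}(\Sigma)\gamma)$ in (\ref{T2 expression}), I would insert the $\gamma$ of (\ref{gamma-final-algo-alter}); the $\lambda_{\min}(\Sigma)$ in the denominator cancels the $\lambda_{\min}(\Sigma)\,dm(\log m+1)$ contribution of $1/\gamma$, leaving $(1+\rho)dm(\log m+1)/(1-\rho)$ together with a $\kappa_\Sigma(d+\log m)(1+\rho)/(1-\rho)$ term from the $\lambda_{\max}(\Sigma)$ part. Since $\kappa_\Sigma\geq 1$, $1+\rho\leq 2$, and $dm(\log m+1)$ dominates $d+\log m$, all these absorb into the single factor $\kappa_\Sigma\,dm(\log m+1)/(1-\rho)$ of (\ref{T2 expression-alter}). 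The main obstacle is purely organizational: tracking how the absolute constants $c_{10},\dots$ combine and, in particular, ensuring that the single requirement (\ref{global sample size condition-alter}) simultaneously (a) validates $\lambda$ against both branches of (\ref{global lambda condition}), (b) keeps the denominator of the $R$-lower-endpoint bounded away from zero, and (c) renders both the $R$- and $\alpha^2$-intervals nonempty with the stated probability (\ref{probability})---each of which rests on the statistical rate of Theorem~\ref{statistical optimization error result}.
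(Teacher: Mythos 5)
Your proposal is correct and follows essentially the same route as the paper's proof: verify that the explicit choices of $\lambda$, $\gamma$, $\beta$, $R$ satisfy the hypotheses of Theorem~\ref{Th_stat_convergence} (using the extra sample-size branch $N\gtrsim s^2\zeta_\Sigma\log d/\sigma^2$ to make the first branch of \eqref{global lambda condition} dominate, and $\frac{d+\log m}{n}\geq 1$ for the $\gamma$ and rate simplifications), then substitute \eqref{lambda} into the error bound, tolerance interval, and iteration count. The only slight imprecision is attributing the nonemptiness of the $R$-interval to the statistical rate of Theorem~\ref{statistical optimization error result}; in the paper this is deterministic given \eqref{global sample size condition-alter}, with the statistical rate needed only for the $\alpha^2$-interval.
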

\begin{proof}
 See Appendix~\ref{sec:pf-cor-13}.
 \end{proof}
 It is not difficult to check that, in the above setting, Theorem~\ref{statistical optimization error result} holds (in particular, \eqref{gamma-final-algo-alter} implies \eqref{stat gamma condition}; hence, by \eqref{er bound 2}, we have $\frac{1}{m}\sum_{i=1}^m\|\hat{\T}_i-\T^*\|^2=\mathcal{O}(\frac{s\log d}{N})$. Therefore, whenever the sample size $N=o(s\log d)$--a condition that is
required for statistical consistency of any centralized method by minimax results  (see, e.g., \citep{Yu2011}),  the (lower bound of the) tolerance $\alpha^2$ in (\ref{alpha stat range-alter}) and thus the overall residual error in (\ref{eq:sat_ball}) is of smaller  order than the   statistical error $\mathcal{O}(\frac{s\log d}{N})$.  Therefore, in this setting, 
a total number of communications (iterations)   of  
\begin{equation}\label{eq:numb_comm}
    \mathcal{O}\left( \kappa_\Sigma \cdot \frac{d\,   {m \, \log m} }{1-\rho}  \cdot  \log \frac{1}{\alpha^2} \right)\end{equation} 
is sufficient to drive the iterates generated by Algorithm~\eqref{composite dgd variant}  within ${\mathcal{O}(\alpha^2)}$ of an optimal  solution $\widehat{\bT}$ (in the sense of (\ref{eq:sat_ball})), and thus  to an estimate of   $\theta^*$ within the statistical error.  This matches centralized statistical accuracy achievable by the LASSO estimator $\hat{\T}$ in (\ref{original problem}). Notice that no conditions on the local sample size $n$ are required. 

 The expression \eqref{eq:numb_comm} sheds light on the impact of the problem and network parameters on the convergence. Specifically, the following comments are in order. 
 
 \begin{itemize}
     \item[{\bf (i)}] \textbf{Network dependence/scaling:} The term  $1/(1-\rho)>1$ captures the effect of the network; as expected,  weakly connected networks (i.e., as $\rho \to 1$) call for more rounds of communication to achieve the prescribed accuracy.   Recall that $\rho=\rho(m)$ is a function of the number of agents $m$ (and the specific topology under consideration). Hence, the term \begin{equation}\label{eq:comm_scaling}
         \frac{  {m \,\log m}}{1-\rho(m)} \end{equation} shows how the number of rounds of communications on the network scales with    $m$, for a given graph topology (determining $\rho(m)$). 
         Our experiments in Sec.~\ref{Numerical Experiments} seem to suggest that the dependence of the communication rounds on $m$ as predicted by (\ref{eq:comm_scaling}) is fairly  tight, for different graph topologies.

         Table~\ref{table:rho_scaling} provides some estimates of the scaling of $1/(1-\rho(m))$ with $m$ for some representative   graphs, when the  Metropolis-Hastings rule is used for the gossip matrix $W$   \citep{Nedic_Olshevsky_Rabbat2018}.  Some graphs, for instance the Erd\H{o}s-R\'{e}nyi, exhibit a more favorable scaling than others, such as line graphs.  Note that equation (\ref{eq:comm_scaling}) does not encapsulate the total communication cost,  which is also contingent on the density of the graph. We defining one channel use as the communication occurring per edge connecting two nodes.   For example, in the case of an Erd\H{o}s-R'{e}nyi graph (with $p=\log m/m$), the total channel uses (across all nodes) per communication round is ${\mathcal{O}}(m\log m)$. 
          In contrast, the complete graph necessitates  ${\mathcal{O}}(m^2)$ channel uses per  communication round, even though both graphs display a scaling of (\ref{eq:comm_scaling}) with $m$, and thus a total number of communication rounds     of the same order.

     \item[({\bf ii})] \textbf{Population condition number:} The ratio $\frac{\lambda_{\max}(\Sigma)}{\lambda_{\min}(\Sigma)}$ is the condition number of the covariance matrix of the data; it  can be interpreted as the restricted condition number of the LASSO loss function $F(\T)$ [see \eqref{eq:losses}]. Therefore, as expected, ill-conditioned problems call for more iterations (communication) to achieve the prescribed accuracy.    
      \item[({\bf iii})] \textbf{Speed-accuracy dilemma:} We proved that centralized statistical accuracy, as for the LASSO estimator   $\hat{\T}$ in (\ref{original problem}),  is achievable over networks via the distributed algorithm~\eqref{composite dgd variant}. This can be accomplished even when individual agents do not possess sufficient data to ensure statistical consistency locally. The crucial factor making this possible is the ``assistive'' role of the network via information mixing, However, equation (\ref{eq:numb_comm}) reveals that regardless of the speed of information propagation within the network (irrespective of how small $\rho$ is), the total number of communication rounds necessary to achieve a predetermined accuracy scales as $\mathcal{O}(d)$. Our forthcoming numerical results will validate the precision of such scaling.  Therefore, we discern that the DGD-like scheme encounters similar speed-accuracy dilemmas in high-dimensional regimes as observed when applied to strongly convex, smooth losses in lower-dimensional cases (e.g., see \citep{Nedic_Olshevsky_Rabbat2018}). This issue seems to be inevitable and a direct consequence of the structural updates implemented by the algorithm. 
      
 \end{itemize}
     
   \begin{table}[t]
 \begin{center}\resizebox{0.9\columnwidth}{!}{
 \begin{tabular}{ccccccc}
                   & path  & $2$-d grid       & complete  &   $p$-Erd\H{o}s-R\'{e}nyi & $p$-Erd\H{o}s-R\'{e}nyi \\[2ex]\hline\\[1ex]
$(1 - \rho)^{-1}$    & $\mathcal O(m^2)$   & $\mathcal O(m \log m)$        & $\mathcal O(1)$         & $\mathcal O(1)$  [$p = \log m/m$]                           & $\mathcal O(1)$  [$p = \mathcal O(1)$]             \\
\\[1ex]\hline                          \end{tabular} } \caption{Scaling of $(1-\rho(m))^{-1}$ with agents' number $m$, for different graph topologies.}\label{table:rho_scaling}  \end{center} \vspace{-.3cm}
\end{table}      
  
  \vspace{-.6cm}

\section{Numerical Results}\label{Numerical Experiments}
In this section, we provide some experiments on synthetic and real data. The former are instrumental to validate our theoretical findings. More specifically,  we validate the following theoretical results.  {\bf (i)}  On the statistical error front, we show that with a proper choice of $\gamma$, the solution of the distributed formulation~\eqref{Change problem}  achieves the statistical accuracy of the centralized LASSO estimator (Theorem~\ref{statistical optimization error result}). We also validate the dependency of $\gamma$ with the dimension $d$ [cf.~\eqref{stat gamma condition}].   On the computational front, {\bf (ii)}  we demonstrate that   DGD-like algorithm \eqref{composite dgd variant} displays linear convergence up to   statistical precision; {\bf (iii)} we also validate the scaling of the communication rounds with the network size $m$, as predicted by (\ref{eq:comm_scaling}).  {\bf (iv)} Finally,   we illustrate the speed-accuracy dilemma, as shown in Theorem~\ref{Th_stat_convergence}. We then proceed to  experiment on real data, showing that statistical accuracy of the centralized LASSO estimator (Theorem~\ref{statistical optimization error result}) is achievable by  the distributed method~\eqref{Change problem}, still at the cost of a convergence rate scaling with $\mathcal{O}(d)$.  All the experiments were run on a server equipped with  Intel(R) Xeon(R) CPU E5-2699A v4 @ 2.40GHz.

  {\bf Experimental setup (synthetic data):}  The ground truth   $\T^*$ is set by randomly sampling a multivariate Gaussian  $\mathcal{N}(0, I_{d})$ and  {thresholding the smallest $d-s$ elements to zero}. The noise vector $\mathbf{w}$ is assumed to be multivariate Gaussian $\mathcal{N}(\mathbf{0},0.25I_{N})$. We construct $\mathbf{X}\in\mathbb{R}^{N\times d}$ by independently generating each row $x_i\in\mathbb{R}^d$,  {adopting the following procedure \citep{agarwal2012fast}: let $z_1,\dots,z_{d-1}$ be i.i.d. standard normal random variables, set $ x_{i,1}=z_1/\sqrt{1-0.25^2}$ and  $x_{i,t+1}=0.25x_{i,t}+z_t$, for $t=1,2,\dots,d-1.$  It can be verified that all the eigenvalues of $\Sigma=\text{cov}(x_i)$ lie within the interval $[0.64,2.85]$.}
 We partition $(\mathbf{X},\mathbf{y})$ as $\mathbf{X}=[X_1^{\top}, X_2^{\top},\dots,X_m^{\top}]^{\top}$ and  $\mathbf{y}=[y_1^{\top},\dots,y_m^{\top}]^{\top},$ and agent $i$ owns the dataset portion $(X_i, y_i)$ and we have $m$ agents in total.  We simulate an undirected graph $\mathcal{G}$ following the Erd\"os-R\'enyi model $G(m,p)$, where $m$ is the number of agents and  $p$ is the probability that an edge is independently included in the graph. The coefficient of the  matrix $W$ are chosen according to the {Lazy Metropolis} rule~\citep{olshevsky2017linear}.  All results are using Monte Carlo with $30$ repetitions. 


  {\bf 1) Statistical accuracy verification (Theorem~\ref{statistical optimization error result}).}  
We set $N=220,m=20, d=400,$ and consider two types of graphs, namely a fully connected and a  weakly connected  graph, the latter generated as Erd\"os-R\'enyi graph with edge probability $p=0.1$, resulting in $ \rho\approx0.973.$   

Fig. \ref{fig:fig4} plots the log-average statistical error $\log\big(\sum_{i=1}^m\|\hat{\T}_i-\T^*\|^2/m\big)$ versus $\lambda$ for the fully- and weakly-connected graphs (resp.), and contrasts it with  the centralized LASSO log-error  $\log\big(\lVert\hat{\T}-\T^*\rVert^2\big)$. The following comments are in order.   {\bf (i)}   A careful choice of $\gamma$ ($=5\times 10^{-4}$) is required to ensure that the distributed penalty LASSO recovers the centralized  $\ell_2$-error; however, with the same choice of $\gamma,$ the solution achieved by the  distributed method~\eqref{Change problem} over weakly connected graph can not recover the the statistical accuracy of the centralized LASSO estimator. {\bf (ii)} The weakly connected graph requires a smaller $\gamma$ ($=1\times 10^{-4}$)  to recover the centralized statistical error; which is consistent with the dependence of $\gamma$ on $\rho$ as  in \eqref{stat gamma condition}. {\bf (iii)} The range of $\lambda$ guaranteeing the minimal $\ell_2$- error in both the centralized and distributed penalty LASSO is {comparable}, as predicted by  condition  \eqref{stat N condition} {on $\lambda$}.
\begin{figure}[H]
\begin{minipage}[t]{0.5\textwidth}
\centering
\includegraphics[width=6.5cm, height = 5cm]{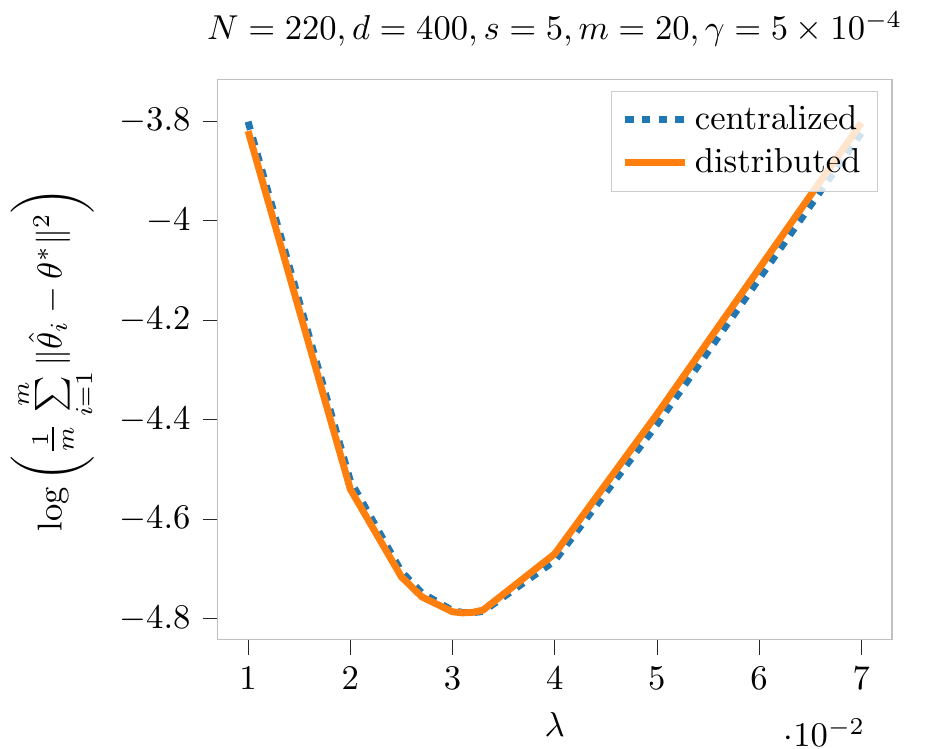}
\end{minipage}
\begin{minipage}[t]{0.5\textwidth}
\centering
\includegraphics[width=6.5cm, height = 5cm]{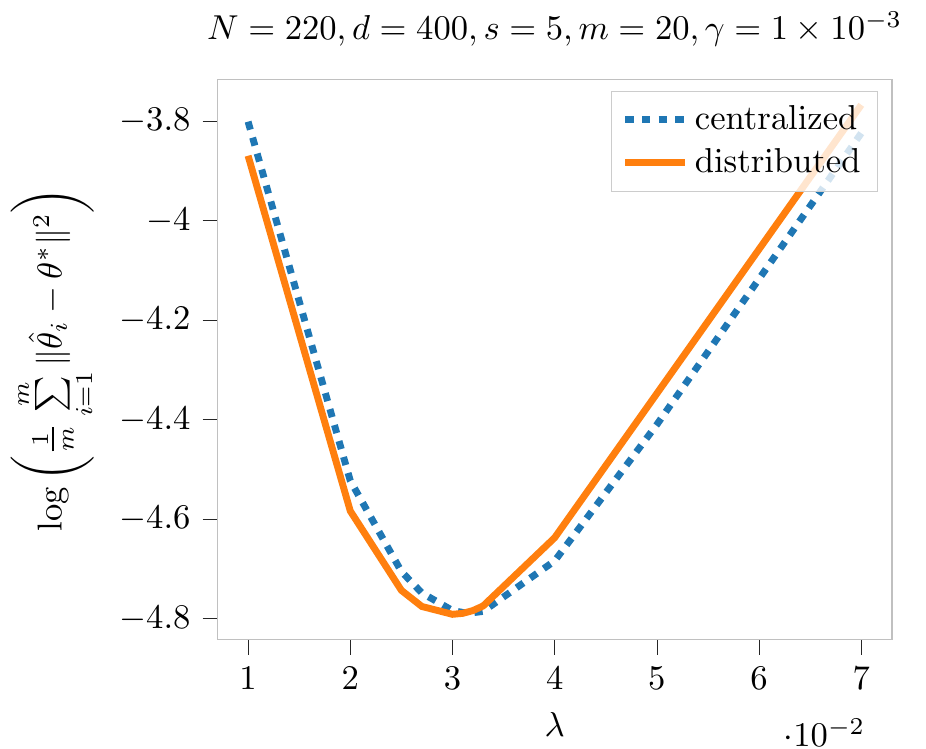}
\captionsetup{justification=centering,margin=2cm}
\end{minipage}
\begin{minipage}[t]{0.5\textwidth}
\centering
\includegraphics[width=6.5cm, height = 5cm]{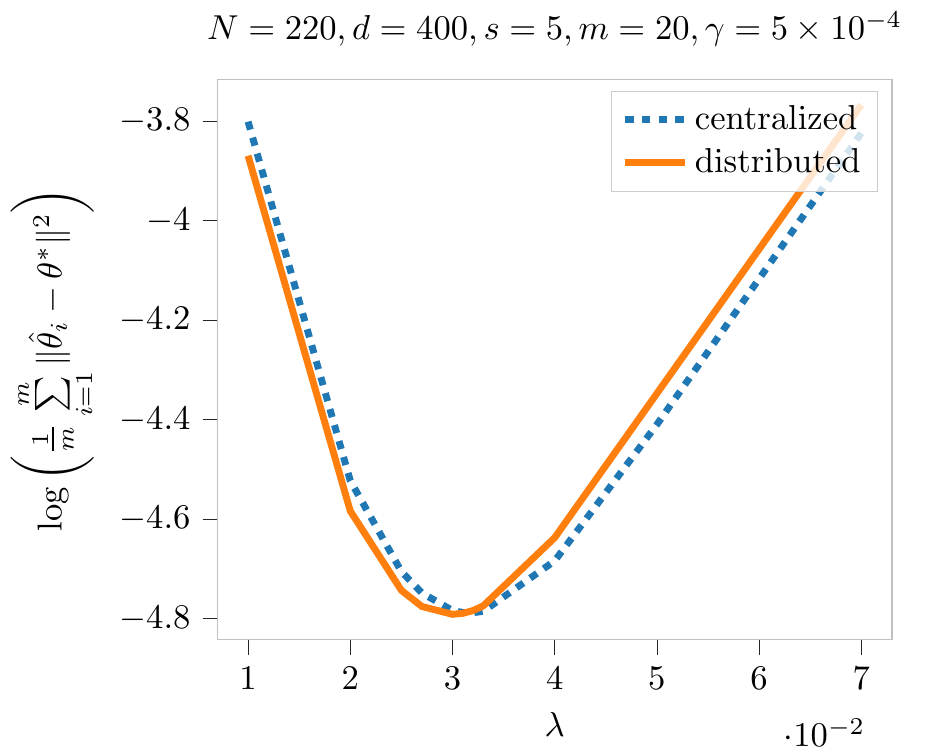}
\captionsetup{justification=centering,margin=2cm}
\end{minipage}
\begin{minipage}[t]{0.5\textwidth}
\centering
\includegraphics[width=6.5cm, height = 5cm]{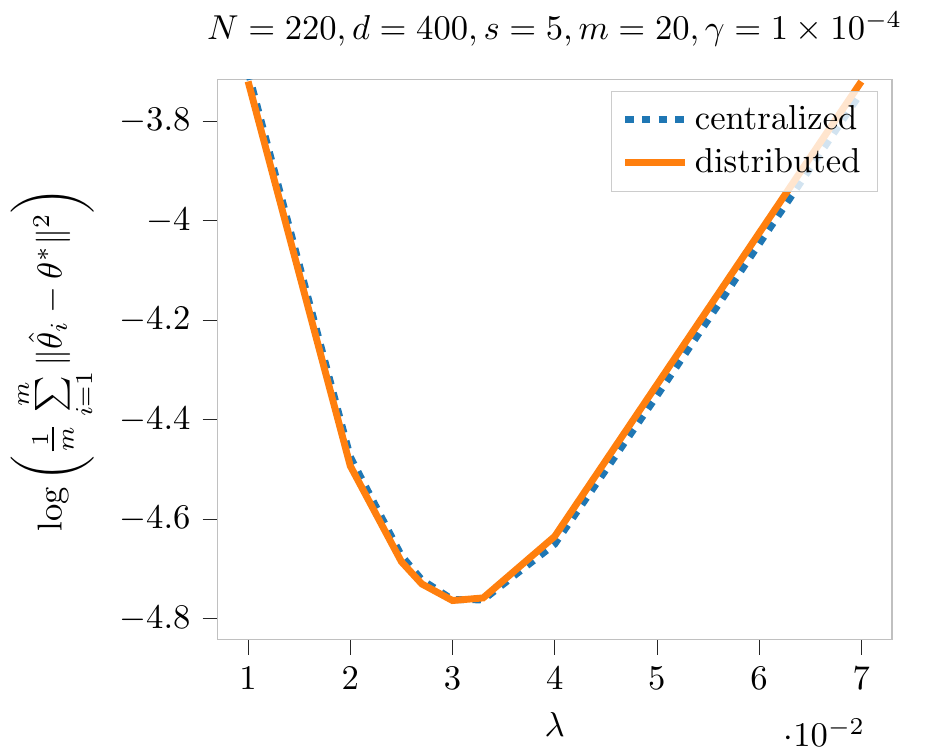}
\captionsetup{justification=centering,margin=2cm}
\end{minipage}
\caption{Statistical error of the estimator $\widehat{\bT}$  [see \eqref{Change problem}] and the centralized LASSO estimator $\hat{\T}$ [see \eqref{original problem}] versus $\lambda$, using synthetic data; \textbf{First row:} fully connected graph (${\rho=0.4897}$);  \textbf{Second row:}.  Erd\"os-R\'enyi graph with $p=0.1,$ ($\rho\approx0.973$).   Notice that our theory explains the behaviour of the curves only for  values of $\lambda\geq 0.033$ (as required by (\ref{stat gamma condition})).  }
\label{fig:fig4}
\end{figure}
\begin{figure}[t]
\begin{minipage}[t]{0.48\linewidth}
\centering
     \includegraphics[height = 5cm]{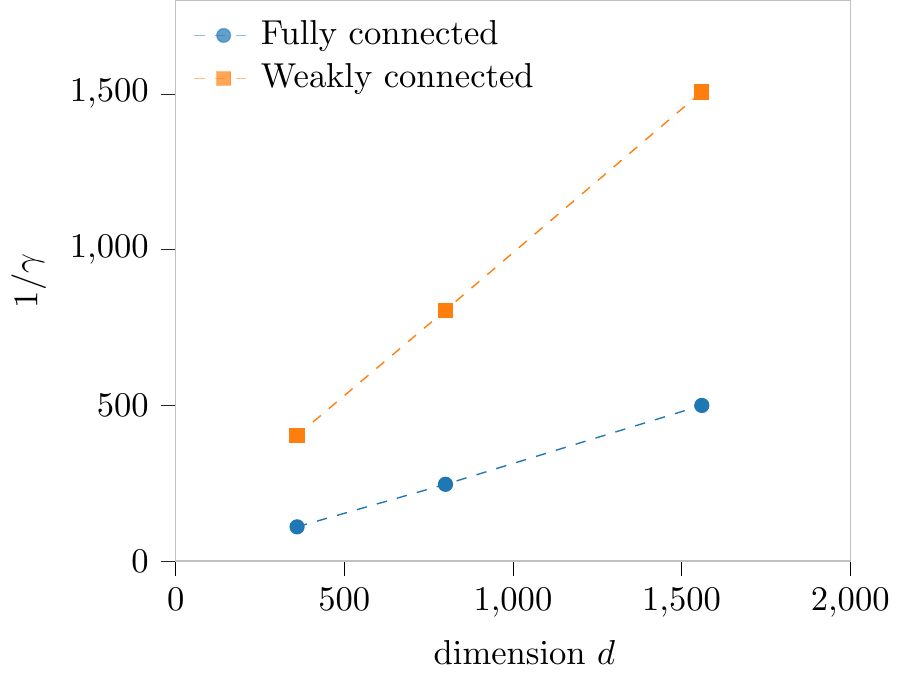}
 \caption{  {Validating  $\gamma=\mathcal{O}((1-\rho)/d)$ as predicted in \eqref{stat gamma condition}: Inverse of critical $\gamma$ (grid-searched) to retain  centralized statistical consistency versus dimension $d$;  $1/\gamma$ scales  linearly on $d.$ }}
 \label{fig:gamma}
\end{minipage}\hfill
\begin{minipage}[t]{0.48\linewidth}
\centering
\includegraphics[height = 5.35cm]{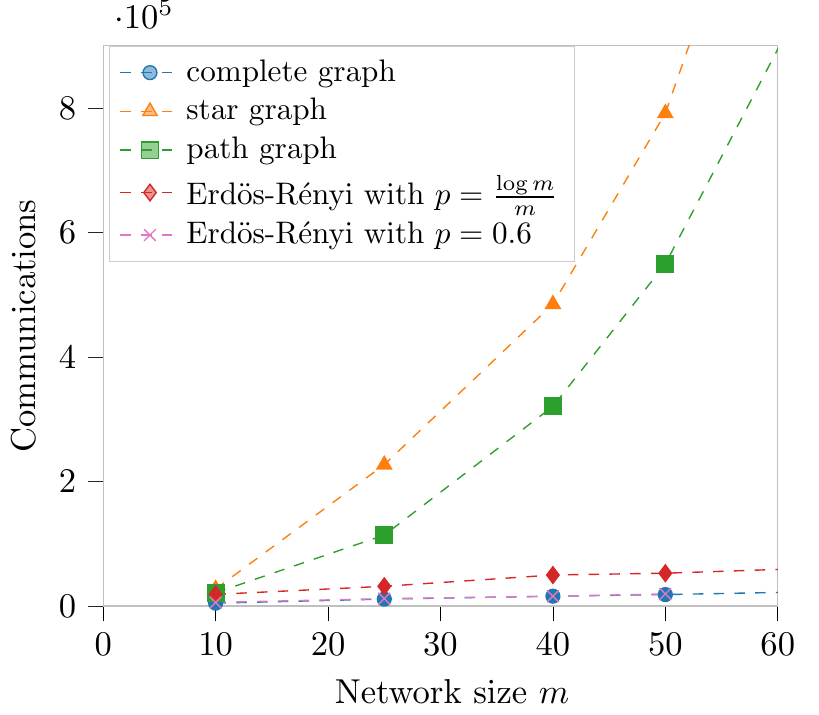} 
     \caption{  {Validating the      scaling $\tilde{\mathcal{O}}(m)$ of the communication complexity as predicted in \eqref{eq:numb_comm}: Communication rounds versus network size $m$ to achieve centralized statistical accuracy. }}
     \label{fig:scaling_m}
\end{minipage}
 
\end{figure}
  {\bf 2) Validating  $\gamma=\mathcal{O}((1-\rho)/d)$ in \eqref{stat gamma condition} (Theorem~\ref{statistical optimization error result}).}   { Fig.~\ref{fig:gamma} plots the inverse of largest $\gamma$ (grid-searched) that guarantees
centralized statistical accuracy versus the dimension, for three choices of  $(N,d,s)$,  corresponding to increasing values of $d,$ $s=\lceil\log d\rceil, m=5$ and adjust $N$ such that roughly  constant  $s\log d/N$ (and so the centralized statistical error).  
The figure shows that, as  $d$ increases, a smaller $\gamma$ is required to preserve   centralized statistical errors. The scaling of such a $\gamma$ is roughly $\mathcal{O}(1/d)$, validating  the dimension-dependence of recovery predicted in  \eqref{stat gamma condition}. Notice also that a weaker  connected graph requires smaller $\gamma$ to recover the  centralized statistical error, and the slope of weakly connected graph (yellow, $\rho=0.9045$) is larger than that of the fully connected (blue, $\rho=0.4897$), 
which is consistent with the dependence of $1/\gamma=\mathcal{O}(d/(1-\rho))$ as proved  in \eqref{stat gamma condition}. }


{\bf 3) Linear convergence and the speed-accuracy dilemma (Theorem \ref{Th_stat_convergence}).} Fig.~\ref{fig:speed accuracy} plots the log average optimization error versus  the number of iterations generated by the distributed proximal-gradient Algorithm~\eqref{composite dgd variant}, in the same setting of Fig.~\ref{fig:gamma}. 
As predicted by Theorem \ref{Th_stat_convergence}, linear convergence within the centralized statistical error is achievable when $d,N\to \infty$ and $s\log d/N=o(1)$, but at a rate scaling with $\mathcal{O}(d)$, revealing the the speed-accuracy dilemma. 
 \begin{figure}[H]
\begin{minipage}[t]{0.5\textwidth}
\centering
\includegraphics[height = 5cm]{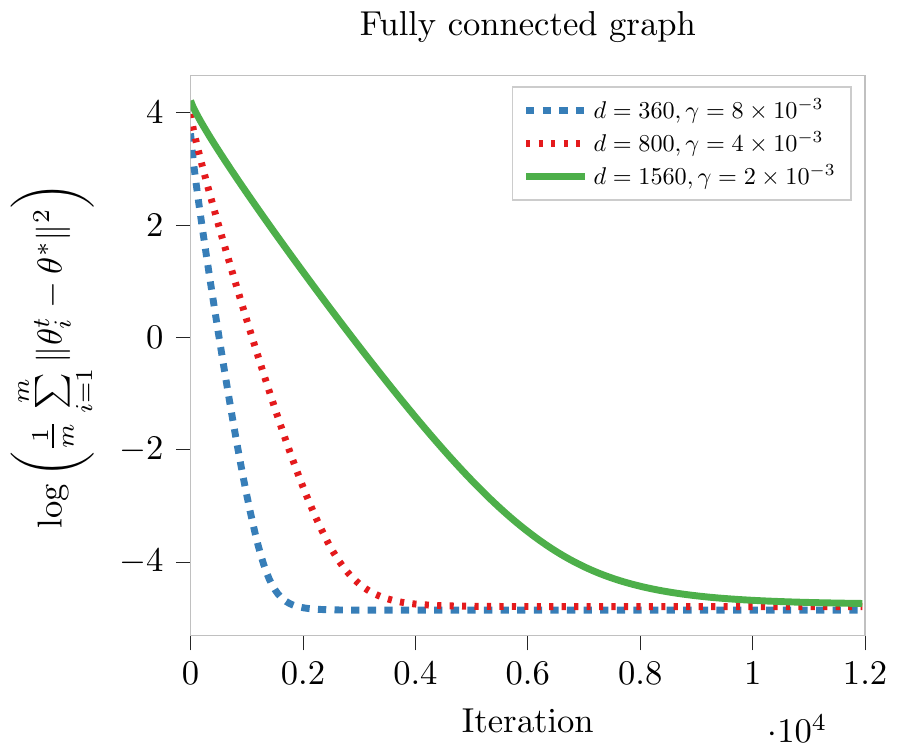} 
\end{minipage} 
\begin{minipage}[t]{0.5\textwidth}
\centering
\includegraphics[height = 5cm]{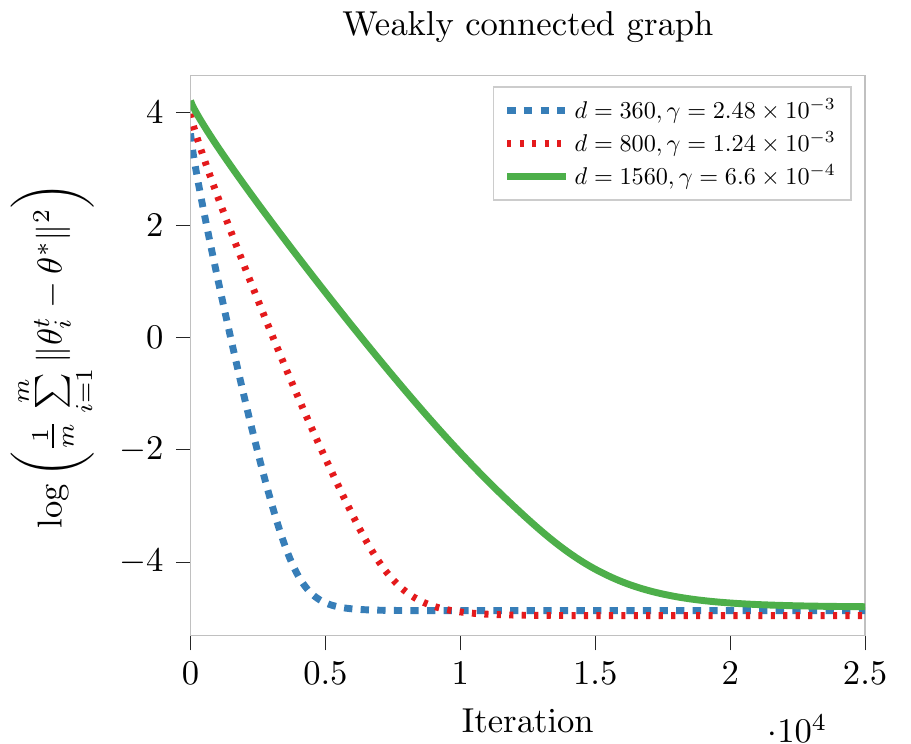} 
\end{minipage}
\caption{ Linear convergence of Algorithm~\eqref{composite dgd variant} up to the centralized statistical error: estimation error generated  by Algorithm~\eqref{regularized algorith} versus iterations (communications), using synthetic data. \textbf{Left panel:} fully connected graph (${\rho=0.4897}$);  \textbf{Right panel:}.  Erd\"os-R\'enyi  graph with $p=0.1,$ ($\rho\approx0.9045$). As predicted by our theory,    the scaling of $\gamma$   to recover centralized statistical consistency is {$\gamma= \Theta(1/d)$}: As  $d$ roughly doubles, going  from $360$ to $800$,  $\gamma$ decreases by half. The same scaling is observed when  $d$ goes  from $800$ to $ 1560$, revealing the the speed-accuracy dilemma.
    \label{fig:speed accuracy}}\vspace{-.3cm}
\end{figure}
{{\bf 4) On the dependence of communication rounds  on network size $m.$}   To   underscore the    aforementioned dependence, we carried out experiments across an array of network topologies. This comprised of three deterministic graphs--the complete graph, path graph, and star graph--and two random graphs--the Erd\"os-R\'enyi graph with $p=\mathcal{O}(1)$ (specifically $p=0.6$) and $p=\mathcal{O}(\log m/m)$ (specifically $p=\log m/m$), resulting in both cases a connectivity $\rho$ roughly constant with $m$ with high-probability \citep[Proposition 5]{Nedic_Olshevsky_Rabbat2018}. In each topology, we progressively augmented the number of nodes, $m$, in increments of $10, 25, 40$, and $50$, while maintaining a consistent total sample size of $200$ and dimension of $400$.  
We sought the largest   $\gamma$ (grid-searched) and the least  number of communications for each pairing of $m$ and graph type  that attain   centralized statistical errors (within $3\%$ accuracy). 
 Results are presented in Fig.~\ref{fig:scaling_m}, where we plotted such a number of communications versus $m$ for the aforementioned topologies.     {   Notice that the communications' scaling is linear with $m$ for the complete graph and  Erd\"os-R\'enyi with $p=\mathcal{O}({\log m}/{m})$ and $p=\mathcal{O}(1)$. Given that we approximately achieve $1/(1-\rho(m))=\mathcal{O}(1)$ for these three topologies \cite{Nedic_Olshevsky_Rabbat2018}, this result confirms the validity of \eqref{eq:numb_comm}, which anticipates $\widetilde{\mathcal{O}}(m)$ under such settings.}
}
 \begin{figure}[H]
 \begin{minipage}[t]{0.5\textwidth}
\centering
\includegraphics[width=6.5cm, height = 5cm]{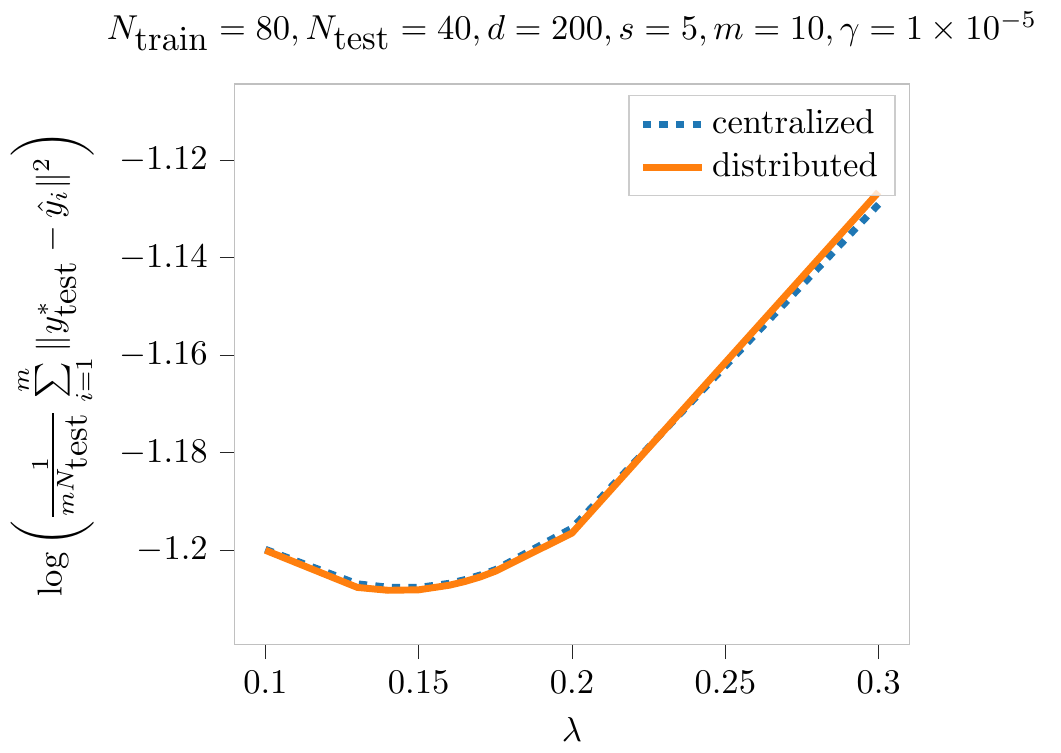}
\end{minipage}
 \begin{minipage}[t]{0.5\textwidth}
\centering
\includegraphics[width=6.5cm, height = 5cm]{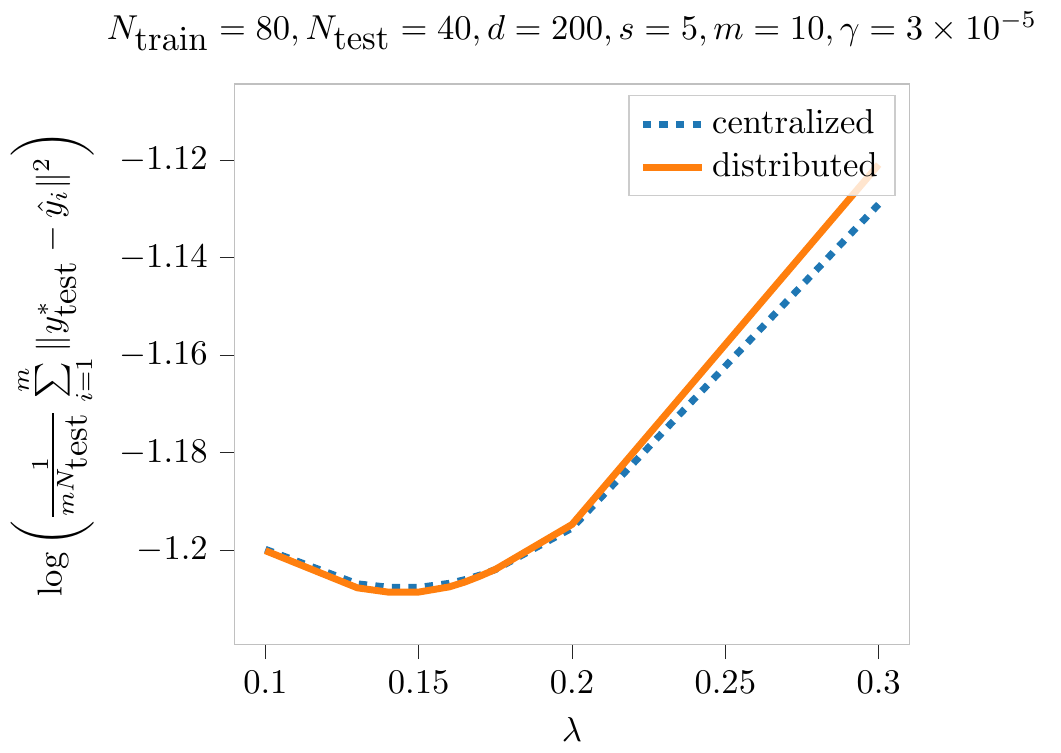}
\end{minipage}\vspace{0.4cm}
\begin{minipage}[t]{0.5\textwidth}
\centering
\includegraphics[width=6.5cm, height = 5cm]{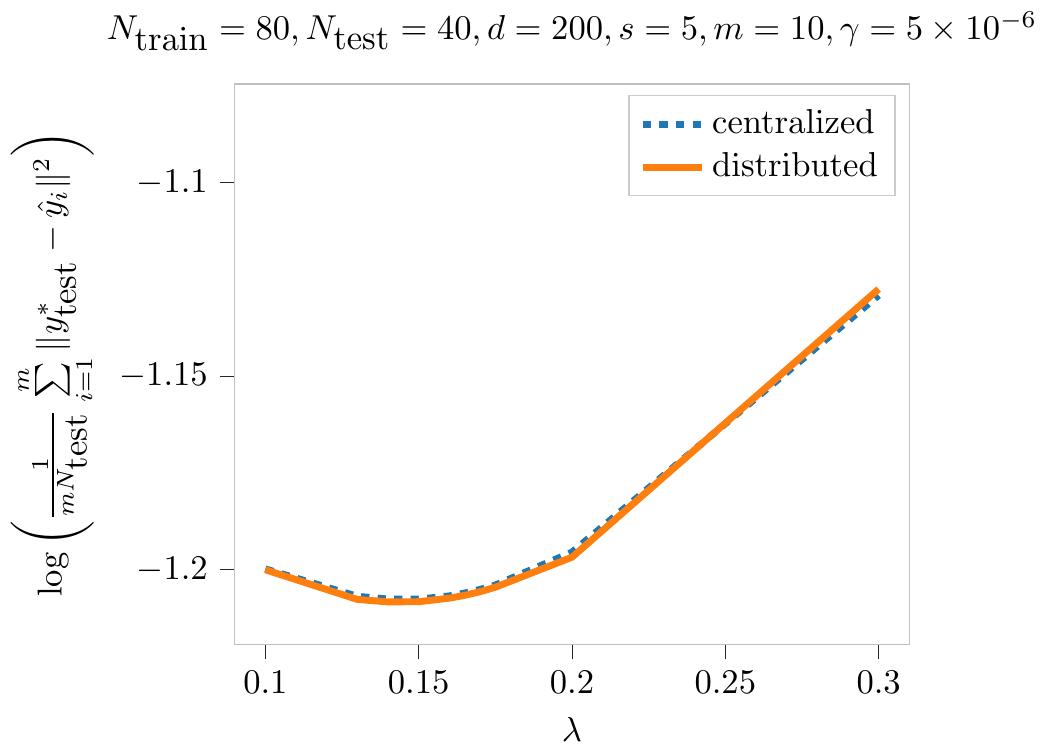}
\end{minipage}
 \begin{minipage}[t]{0.5\textwidth}
\centering
\includegraphics[width=6.5cm, height = 5cm]{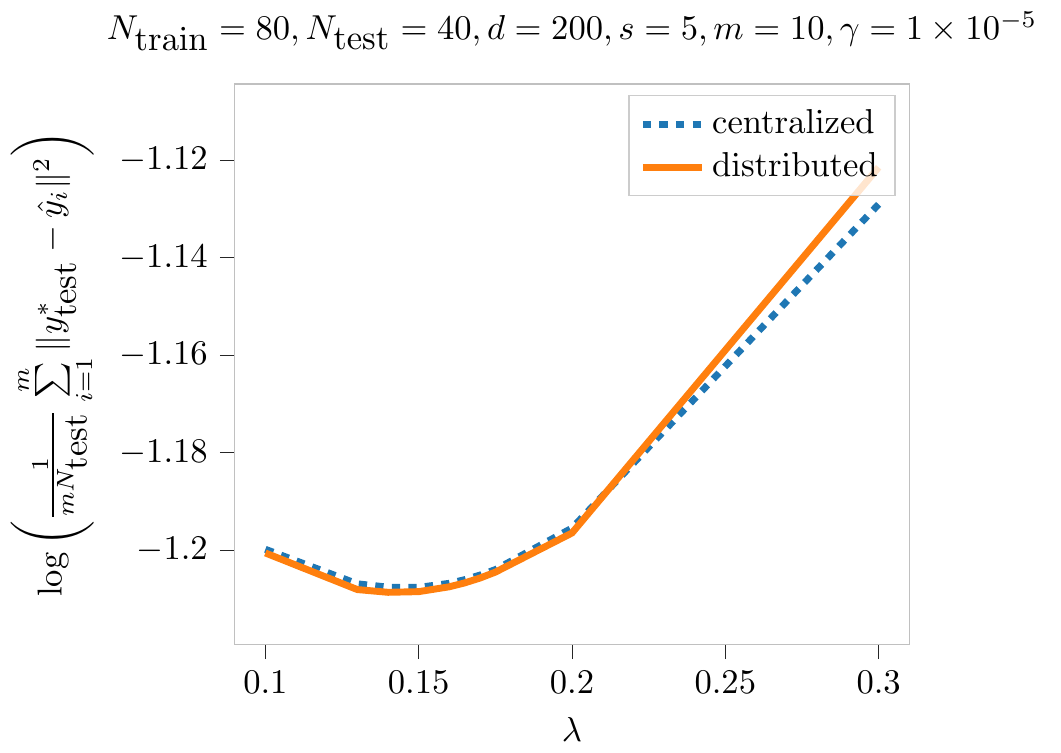}
\end{minipage}
\caption{$\texttt{MSE}^\infty$   defined in \eqref{MSE} associated with  the estimator $\widehat{\bT}$  [see \eqref{Change problem}] and the centralized LASSO estimator $\hat{\T}$ [see \eqref{original problem}] versus $\lambda$ using  the dataset \texttt{eyedata} in the \texttt{NormalBeta}-\texttt{Prime} package. \textbf{First row:} fully connected graph (${\rho=0.4897}$);    \textbf{Second row:}  Erd\"os-R\'enyi graph with $p=0.1,$ ($\rho\approx0.971$).  Notice that our theory explains the behaviour of the curves only for  values of $\lambda\geq 0.15$ (as required by (\ref{stat gamma condition})).}
\label{fig:real_fully}
\end{figure}
\noindent {\bf Experiment on real data.}  We test our findings on  the dataset \texttt{eyedata} in the \texttt{NormalBeta}-\texttt{Prime} package \citep{eyedata}. This dataset contains gene expression data of $d=200$ genes, and  $N=120$ samples. Data originate from microarray experiments of mammalian eye tissue samples. We randomly divide the dataset into training sample set with size $N_{\text{train}}=80$ and test dataset with size $N_{\text{test}}=40.$ We partition the training data into $m=10$ subsets. Each agent $i$ owns the data set portion with size $8.$ We run Monte Carlo simulations, with $30$ repetitions.  Since we  do not have access of the ground truth $\T^*,$   we replace the $\ell_2$ statistical error and the  $\ell_2$ optimization error  with the MSE errors 
 \begin{equation}\label{MSE}
\texttt{MSE}^\infty\triangleq      \frac{1}{mN_{\text{test}}}\sum\limits_{i=1}^m\|y^*_{\text{test}}-\hat{y}_i\|^2\quad \text{and}\quad     \texttt{MSE}^t\triangleq\frac{1}{mN_{\textnormal{test}}}\sum\limits_{i=1}^m\|y^*_{\textnormal{test}}-y_i^t\|^2,
 \end{equation} respectively,   where  $y^*_{\text{test}}$ is the output of the test set, and $\hat{y}_i=X_i\hat{\theta}_i$,  $i\in [m],$ are the model forecasts;  $\hat{y}_i^t=X_i\theta^t_i$,  $i\in [m],$ are output at iteration $t.$  $m=1$ corresponds to the centralized case, with $\mathbf{\hat{y}}=\mathbf{X}\hat{\theta}.$ 

  Our {first} experiment is meant to check whether the solution of the penalized problem \eqref{Change problem} matches  the  solution of the centralized LASSO  via a proper choice of $\gamma$. Fig.~\ref{fig:real_fully} plots the MSE (log scale) vs. $\gamma$ achieved by   Algorithm~\eqref{composite dgd variant} over a fully connected graph (\textbf{left panel}) and a weakly Erd\"os-R\'enyi graph with $p=0.1,$ resulting in $\rho\approx0.71$ (\textbf{right panel}).  The results confirm what we have already observed on synthetic data. 
  
\begin{figure}[ht]
\begin{minipage}[t]{0.5\textwidth}
\centering
\includegraphics[height = 6cm]{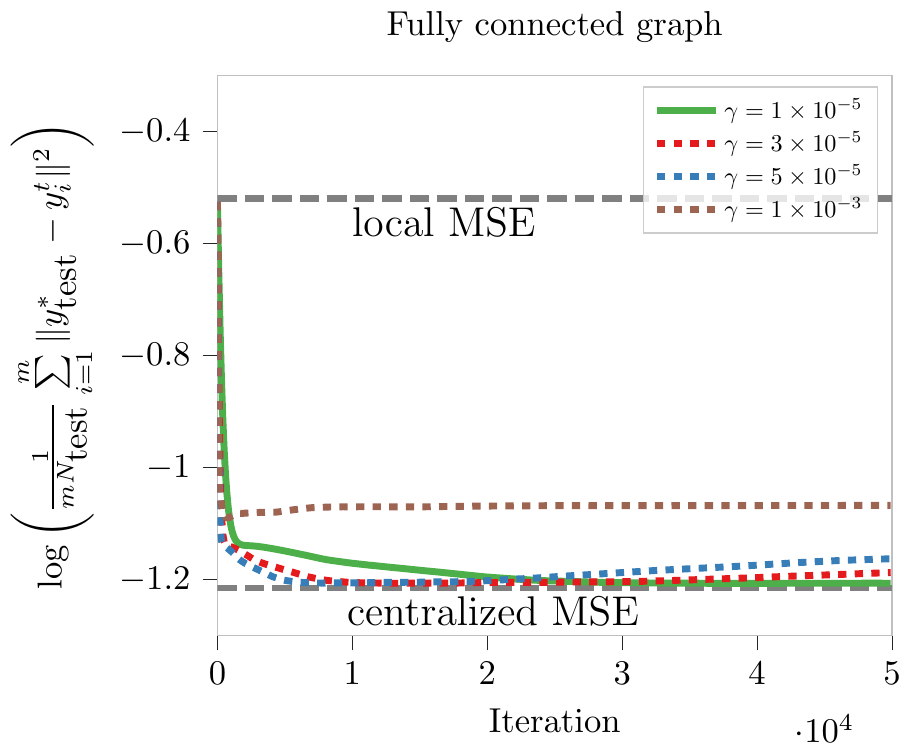} 
\end{minipage} 
\begin{minipage}[t]{0.5\textwidth}
\centering
\includegraphics[height = 6cm]{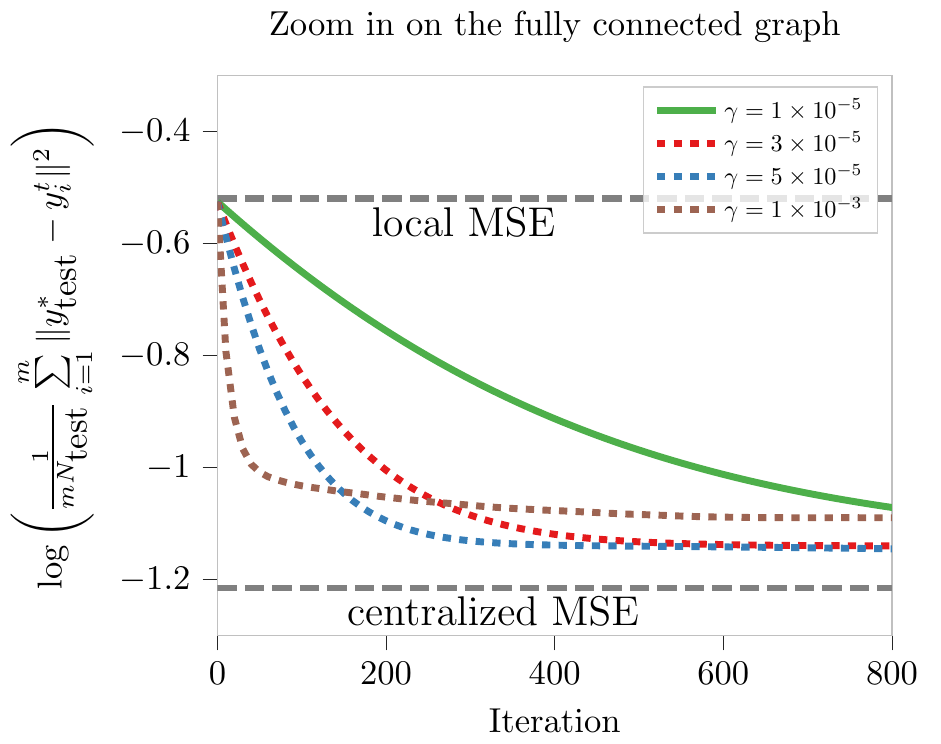} 
\end{minipage}\vspace{+0.7cm}
\begin{minipage}[t]{0.5\textwidth}
\centering
\includegraphics[height = 6cm]{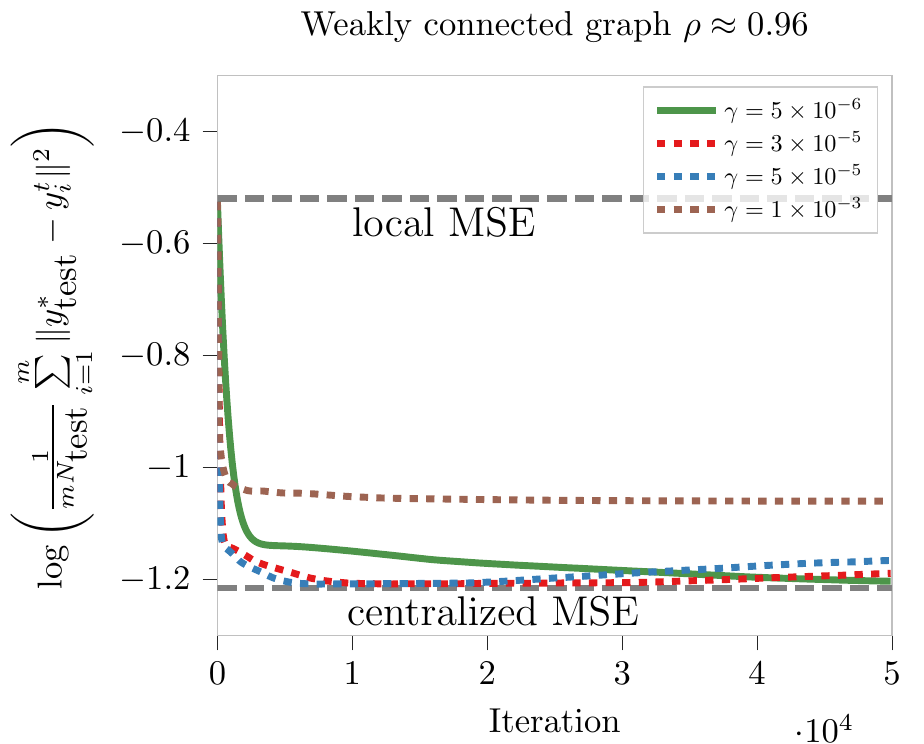} 
\end{minipage}\vspace{0.2cm}
\begin{minipage}[t]{0.5\textwidth}
\centering
\includegraphics[height = 6cm]{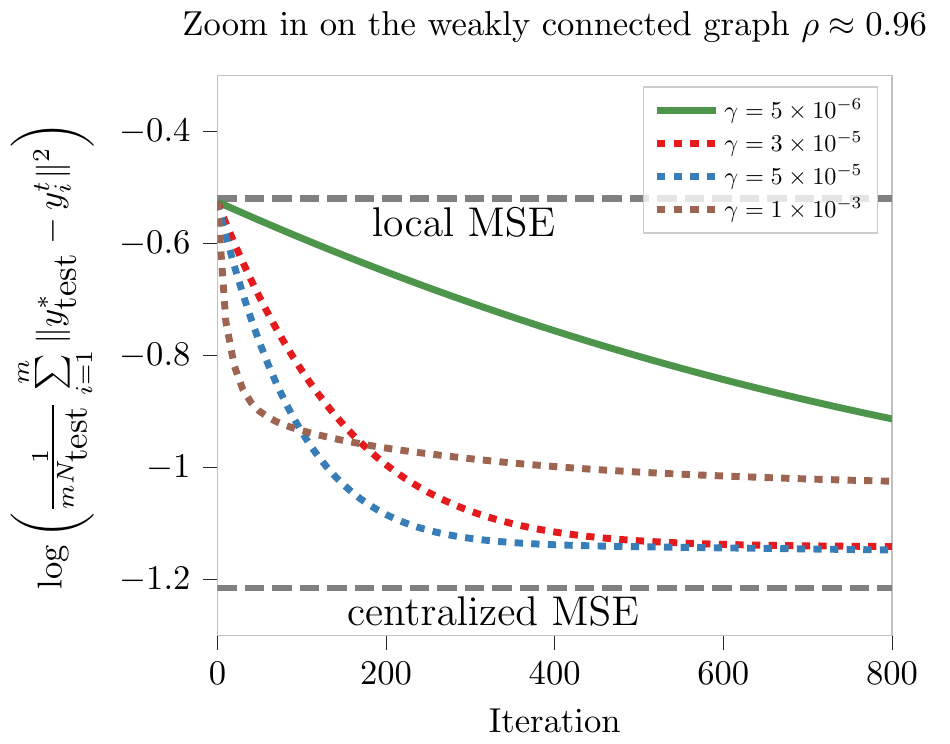} 
\end{minipage}\smallskip 
\caption{  Linear convergence of Algorithm~\eqref{composite dgd variant} up to the centralized statistical error, for different values of $\gamma$,  using    the dataset \texttt{eyedata} in the \texttt{NormalBeta}-\texttt{Prime} package: $\texttt{MSE}^t$ defined in  \eqref{MSE}  versus  the number of iterations (communications).   \textbf{First row:} fully connected graph (${\rho=0.4897}$);  \textbf{Second row:}  Erd\"os-R\'enyi graph with $p=0.1,$ ($\rho\approx0.96$).  \textbf{Left panel:} iterations up to $5\times 10^4.$ \textbf{Right panel:} zoom in on the iterations up to $8\times 10^2.$ 
    \label{fig:speed accuracy-real_data}}\label{fig:speed-accuracy-dilemma}\vspace{-.3cm}
\end{figure}
  Our second experiment on real data is to validate the speed-accuracy dilemma, postulated by our theory and already validated on synthetic data (cf.~Fig.\ref{fig:speed accuracy}).    Fig.~\ref{fig:speed accuracy-real_data} plots the log average optimization error versus  the number of iterations generated by   Algorithm~\eqref{composite dgd variant}, in the same network setting as for  Fig.~\ref{fig:real_fully}; different curves refer to different values of the penalty parameter $\gamma$.   Since $\theta^*$ is no longer available when using real data, we heuristically set $R$ in the projection \eqref{composite dgd variant}  as $R=\max_{1\leq i\leq m}\|\hat{\T}_i\|_1$.  This max-quantity can be obtained locally by each agent by running a min-consensus algorithms, requiring a number of communications of the order of the diameter of the network.  The figure still shows   linear convergence up to some tolerance, which is of the order of the MSE error in (\ref{MSE}). Even on real data the speed-accuracy dilemma is evident.

\section{Concluding Remarks}\label{sec_conclusions}
We studied sparse linear regression over mesh networks. 
We established statistical and computational guarantees in the high-dimensional regime of a penalty-based consensus formulation and associated distributed proximal gradient method. This is the first attempt of studying the behaviour of a distributed method in the high-dimensional regime; our interest in penalty-based   formulations  to decentralize the optimization/estimation was motivated by their popularity and early adoption  in the literature of distributed optimization (low-dimensional regime). We proved that optimal sample complexity $\mathcal{O}(s\log d/N)$  for the distributed estimator is achievable over  networks, {even when   {\it local sample size is not sufficient for statistical consistency}. This contrasts with D\&C methods which impose a condition on the local samples size (let alone they are readily    implementable over mesh networks). } On the computational side, such statistically optimal estimates can be achieved by the distributed proximal-gradient algorithm applied to the penalized problem, which converges at linear rate--such a rate however scales  as $\mathcal{O}(1/d)$, no matter how ``good''  the network connectivity is, {resulting in a total communication cost of $\mathcal{O}(d^2)$}.   

  { We claim that this unfavorable communication cost   is unavoidable for such penalty-based methods, because they  lack of any mechanism mixing directly local gradients    (they only  average  iterates).   This raises the question  whether communication costs of $\mathcal{O}(d)$ are achievable in high-dimension over mesh networks by other distributed, iterative   algorithms, yet with no conditions on the local sample size.  }
A first study towards this direction is  the companion work \citep{NetLASSO}, where the  projected  gradient algorithm \citep{sun2019distributed} based on gradient tracking  is studied in the high-dimensional setting. {The analysis  of other distributed methods employing other forms of gradient correction, such as primal-dual method as in  \citep{jakovetic2011cooperative,shi2014linear,Jakovetic:da,6926737,shi2015extra,shi2015proximal} remains an interesting topic for future investigation. }

\section{Acknowledge}
The work of Ji, Scutari, and Sun has been partially supported by the Office of Naval Research, under the Grant \# N00014-21-1-2673.

\section*{Appendix}
\appendix\markboth{Appendix}{Appendix}
\renewcommand{\thesection}{\Alph{section}}

  In this appendix we present the proofs of the results in the paper.
  We will use the same notation as in the paper along with the following additional definitions. 
  
Recall the statistical error $\hat{\bnu} \triangleq \hat{\bT} - 1_m \otimes \T^*.$ For any $\bT\in \mathbb{R}^{md}$  partitioned as $\bT=[\T_1,\ldots ,\T_m]$, with each $\T_i\in \mathbb{R}^d$,  we define  
\begin{equation}
    \bnu \triangleq \bT - 1_m \otimes \T^*.
\end{equation}  When needed, we decompose $\bT$, and accordingly ${\bnu}$, in its average and orthogonal component 
\begin{align}\label{def:nu-prop1}
     {\bnu}=1_m\otimes {\nu}_{\text{av}}+ {\bnu}_{\perp}, \quad \text{with} \quad  {\nu}_{\text{av}}=\frac{1}{m}\sum_{i=1}^m{\nu}_{i}.
\end{align}
In particular, when $\bT=\hat{\bT}$, we will write for the   augmented LASSO error \begin{equation}\label{eq:nu_hat_def}
    \hat{\bnu}\triangleq\hat{\bT}-1_m\otimes\T^*\quad\text{and}\quad \hat{\bnu}= 1_m\otimes \hat{\nu}_{\text{av}}+ \hat{\bnu}_{\perp},
\end{equation} 
 whereas when $\bT=\bT^t$, with $\bT^t$ being the iterates generated by Algorithm~\eqref{regularized algorith}  , we will write
 \begin{equation}\label{eq_nu_t_def}
     {\bnu}^t\triangleq{\bT}^t-1_m\otimes\T^*\quad\text{and}\quad  {\bnu}^t= 1_m\otimes  {\nu}^t_{\text{av}}+  {\bnu}^t_{\perp}.
 \end{equation}
 Finally, the optimization error (along with its decomposition in   average and orthogonal component) is denoted by 
  \begin{equation}\label{eq:Delta_def}
      {\bDelta}^t\triangleq \bT^t-\hat{\bT}\quad \text{and}\quad \boldsymbol{\Delta}^t=1_m \otimes \Delta_{\text{av}}^t+\boldsymbol{\Delta}_{\perp}^t.
 \end{equation}

Table~\ref{table:numerical constants} below summarizes all the universal constants used in the paper along with their range of values and associated constraints. 
   \begin{table}[h!]
 \begin{center}\resizebox{0.9\columnwidth}{!}{
 \begin{tabular}{ccccccccc}
                    universal constant  & $\tilde{c}_0$     & $\tilde{c}_1$ & $\tilde{c}_2$ & $\tilde{c}_3$ & $\;\tilde{c}_4\;$ & $\tilde{c}_5$  & $\tilde{c}_6$   &  $\tilde{c}_7$ \\[2ex]\hline\\[1ex]
value   &$>0$& $>0$    & $>0$    & $>0$ &$\;\;\;\max\{2, 4 \tilde{c}_2^2 , 4 \tilde{c}_2^2 \tilde{c}_3^{-1}\}\;\;\;$&$> 32$ &$\;\;\;\tilde{c}_5/32-1\;\;\;$ &free \\
\\[1ex]\hline                          \end{tabular}  }\end{center} 

\end{table}

\begin{table}[h!]
  \begin{center}\resizebox{0.9\columnwidth}{!}{
 \begin{tabular}{cccccccccc}
 universal constant  & $\tilde{c}_8$ &     $\tilde{c}_9$ & $\tilde{c}_{10}$ & $\tilde{c}_{11}$ & $\tilde{c}_{12}$ &  $\tilde{c}_{13}$  & $\tilde{c}_{14}$&   $\tilde{c}_{15}$  & $\tilde{c}_{16}$\\[2ex]\hline\\[1ex]
value  &$\geq\sqrt{6}$ & $\max\{128\tilde{c}_1,\tilde{c}_5\}$& $1824$ &$3648\tilde{c}_1$ & $\max\{3648\tilde{c}_1, \tilde{c}_5\}$ & $2731\tilde{c}_1^2/t_0$  &  $1152$  & $57\sqrt{6}\tilde{c}_8$  &         $\sqrt{6}\tilde{c}_8/64\tilde{c}_1$  \\
\\[1ex]\hline                          \end{tabular} }\end{center} 
\end{table}

\begin{table}[h!]
\begin{center}\resizebox{0.9\columnwidth}{!}{
 \begin{tabular}{ccccccccccc}
                  universal constant      & $\tilde{c}_{17}$   & $\tilde{c}_{18}$  & $\tilde{c}_{19}$& $\tilde{c}_{20}$ & $\tilde{c}_{21}$& $\tilde{c}_{22}$ &$\tilde{c}_{23}$   & $   {\tilde{c}_{24}}$\\[2ex]\hline\\[1ex]
value              &     $9$ &      $72\tilde{c}_1\tilde{c}_{17}$   & $\tilde{c}_1\tilde{c}_8^2\tilde{c}_{17}/988$ & $8\tilde{c}_1\tilde{c}_{17}/\tilde{c}_8^2$ &$\tilde{c}_8^2/1976$& $11339\tilde{c}_1\tilde{c}_8^2$& $21130\tilde{c}_4$  &         $  {>0}$ \\
\\[1ex]\hline                          \end{tabular}} \caption{Universal constants used in the Appendix.} \label{table:numerical constants}
\end{center} 
\end{table}      


 \newpage
\section{Proof of Proposition \ref{proximal nu average sparsity}}\label{Proof of Proposition 1}

For the sake of convenience, let us rewrite the objective function in \eqref{Change problem} as 
\begin{equation}\label{eq:def_G}
    G(\bT)=L_\gamma(\bT)+\frac{\lambda}{m}\lVert\bT \rVert_1,\quad \text{with}\quad L_\gamma(\bT)= \frac{1}{2N}\sum\limits_{i=1}^m  {\lVert y_i-X_i\theta_i\rVert^2} + \frac{1}{2 m \gamma} \| \bT\|_{V}^2.\end{equation} 
By the optimality of  $\hat{\bT}$, it follows 
\begin{align*}
   & G(\hat{\bT})\leq G(1_m\otimes \T^*)\notag\\
   \Leftrightarrow&\frac{1}{2N}\sum\limits_{i=1}^m  {\lVert y_i-X_i\hat{\theta}_i\rVert^2} + \frac{1}{2 m \gamma} \| \hat{\bT}\|_{V}^2+\frac{\lambda}{m}\|\hat{\bT}\|_1 \notag\\
  & \leq \frac{1}{2N}\sum\limits_{i=1}^m  {\lVert y_i-X_i\theta^*\rVert^2} + \underbrace{\frac{1}{2 m \gamma} \| 1_m\otimes \theta^*\|_{V}^2}_{=0\, \text{ (Assumption~\ref{W})}}+\frac{\lambda}{m}\| 1_m\otimes \theta^*\|_1.
\end{align*}
Using  $y_i=X_i\theta^*+w_i$ and the fact that $\T^\ast$ is $\mathcal{S}$-sparse,  we can write
\begin{align*}
     &\frac{1}{N}\sum\limits_{i=1}^m\lVert X_i\hat{\theta}_i-X_i\T^*\rVert^2
     {\leq} \frac{2}{N}\sum\limits_{i=1}^m w_i^{\top}X_i\hat{\nu}_i+\frac{2\lambda}{m}\sum\limits_{i=1}^m(\lVert(\hat{\nu}_i)_{\mathcal{S}}\rVert_1-\lVert(\hat{\nu}_i)_{ \mathcal{S}^c}\rVert_1)-\frac{1}{m\gamma} \lVert\hat{\bT}\rVert_{V}^2. 
\end{align*}
Using   the factorization  $\hat{\bnu}=1_m\otimes\hat{\nu}_{\text{av}}+\hat{\bnu}_{\perp},$  the above bounds reads
\begin{align*}
   &\frac{1}{N}\sum\limits_{i=1}^m\lVert X_i\hat{\theta}_i-X_i\T^*\rVert^2\notag\notag\\
    \leq\ &\frac{2}{N}\sum\limits_{i=1}^m w_i^{\top}X_i(\hat{\nu}_{\text{av}}+\hat{\nu}_{\perp i}) +\frac{2\lambda}{m}\sum\limits_{i=1}^m(\lVert(\hat{\nu}_{\text{av}})_{\mathcal{S}}+(\hat{\nu}_{\perp i })_\mathcal{S}\rVert_1 -\lVert(\hat{\nu}_{\text{av}})_{\mathcal{S}^c}+(\hat{\nu}_{\perp i })_{\mathcal{S}^c}\rVert_1)-\frac{1}{m\gamma} \lVert\hat{\bT}\rVert_{V}^2\notag\\
     \overset{{(a)}}{\leq}
    &\frac{2}{N}w^{\top}\mathbf{X}\hat{\nu}_{\text{av}} +\frac{2}{N}\sum\limits_{i=1}^m w_i^{\top}X_i\hat{\nu}_{\perp i}+\frac{2\lambda}{m}\sum\limits_{i=1}^m(\lVert(\hat{\nu}_{\text{av}})_{\mathcal{S}}\rVert_1-\lVert(\hat{\nu}_{\text{av}})_{\mathcal{S}^c}\rVert_1) +\frac{2\lambda}{m}\sum\limits_{i=1}^m\lVert\hat{\nu}_{\perp i}\rVert_1-\frac{1}{m\gamma}\lVert\hat{\bnu}_{\perp}\rVert_{V}^2\notag,
\end{align*}
where in (a) we used    $\sum\limits_{i=1}^m w_i^{\top}X_i\hat{\nu}_{\text{av}}=\mathbf{w}^{\top}\mathbf{X}\,\hat{\nu}_{\text{av}}$ and $\lVert\hat{\bT}\rVert_{V}^2 =\lVert\hat{\bT}-1_m\otimes\T^*\rVert_{V}^2=\lVert\hat{\bnu}_{\perp}\rVert_{V}^2$.
 

We bound now 
the two  terms $\mathbf{w}^{\top}\mathbf{X}\hat{\nu}_{\text{av}}$ and $\sum\limits_{i=1}^m w_i^{\top}X_i\hat{\nu}_{\perp i}$. We have  
    \begin{align}\label{intermediate_step}
     &\frac{1}{N}\sum\limits_{i=1}^m\lVert X_i\hat{\theta}_i-X_i\T^*\rVert^2\notag \\
  \overset{\text{H\"older's}}{\leq}\ & {\frac{2}{N}\lVert \mathbf{w}^{\top}\mathbf{X}\rVert_{\infty}\lVert\hat{\nu}_{\text{av}}\rVert_1+\max\limits_{i\in [m]}\lVert w_i^{\top} X_i\rVert_{\infty}\,\frac{2}{N}\sum\limits_{i=1}^m\lVert\hat{\nu}_{\perp i}\rVert_1}\notag\\
    &+\frac{2\lambda}{m}\sum\limits_{i=1}^m(\lVert(\hat{\nu}_{\text{av}})_{\mathcal{S}}\rVert_1-\lVert(\hat{\nu}_{\text{av}})_{\mathcal{S}^c}\rVert_1) +\frac{2\lambda}{m}\sum\limits_{i=1}^m\lVert\hat{\nu}_{\perp i}\rVert_1-\frac{1}{m\gamma}\lVert\hat{\bnu}_{\perp}\rVert_{V}^2\notag\\
    \overset{\eqref{connectivity number},\eqref{distributed lambda}}{\leq} & {3\lambda\lVert(\hat{\nu}_{\text{av}})_{\mathcal{S}}\rVert_1-\lambda\lVert(\hat{\nu}_{\text{av}})_{\mathcal{S}^c}\rVert_1}
    \underbrace{{-\frac{1-\rho}{m\gamma} \lVert\hat{\bnu}_{\perp}\rVert^2
    +\bigg(\max\limits_{i\in[m]}\lVert w_i^{\top} X_i\rVert_{\infty}\frac{2}{N}+\frac{2\lambda}{m}\bigg)\sum\limits_{i=1}^m\lVert\hat{\nu}_{\perp i}\rVert_1}}_{\text{Term II}}. 
\end{align}
Finally, we bound Term II. Since we have no sparsity information on $\hat{\bnu}_{\perp}$, we can only assert that
$\lVert\hat{\nu}_{\perp i}\rVert_1\leq \sqrt{d}\lVert\hat{\nu}_{\perp i}\rVert$, for all  $i\in[m].$ Hence,
\begin{align}\label{median step for next lemma}
    \text{Term II}
    \leq-\frac{1-\rho}{m\gamma} \lVert\hat{\bnu}_{\perp}\rVert^2+\bigg(\max\limits_{i\in[m]}\lVert w_i^{\top} X_i\rVert_{\infty}\frac{2}{N}+\frac{2\lambda}{m}\bigg)\sum\limits_{i=1}^m\sqrt{d}\lVert\hat{\nu}_{\perp i}\rVert 
    \overset{\eqref{h}}{=}&\lambda\, h(\gamma,\lVert\hat{\bnu}_{\perp}\rVert). 
\end{align}
 Using (\ref{median step for next lemma}) in (\ref{intermediate_step}), we finally obtain
  \begin{align*} 
     &\frac{1}{N\lambda}\sum\limits_{i=1}^m\lVert X_i\hat{\theta}_i-X_i\T^*\rVert^2\leq  {3 \lVert(\hat{\nu}_{\text{av}})_{\mathcal{S}}\rVert_1- \lVert(\hat{\nu}_{\text{av}})_{\mathcal{S}^c}\rVert_1} + h(\gamma,\lVert\hat{\bnu}_{\perp}\rVert),
\end{align*}
implying ${3 \lVert(\hat{\nu}_{\text{av}})_{\mathcal{S}}\rVert_1- \lVert(\hat{\nu}_{\text{av}})_{\mathcal{S}^c}\rVert_1} + h(\gamma,\lVert\hat{\bnu}_{\perp}\rVert)\geq 0$, which concludes the proof. $\hfill \square$

\section{Proof of Lemma \ref{global-ARE-determin} and Lemma~\ref{Global ARE}}\label{Proof of Lemma 1}
\subsection{Proof of Lemma~\ref{global-ARE-determin}} \label{Proof of Lemma 4}
 
Fix  $\gamma\in(0,(1-\rho)/L_{\max}]$  and let $\bDelta\in\mathbb{C}_{\gamma}(\mathcal{S})$. Then,  we have    
\begin{equation*}
\lVert(\Delta_{\text{av}})_{\mathcal{S}^c}\rVert_1\leq 3\lVert(\Delta_{\text{av}})_{\mathcal{S}}\rVert_1+h(\gamma,\lVert\bDelta_{\perp}\rVert),
\end{equation*}
where  $h(\gamma,\lVert\bDelta_{\perp}\rVert)$  is  defined in (\ref{h}). Substituting the above inequality into the RSC   condition~\eqref{Arsc},    yields
\begin{equation}
    \frac{1}{N}\lVert \mathbf{X}\Delta_{\text{av}}\rVert^2\geq \bigg(\frac{\mu}{2}-16s\tau\bigg)\lVert\Delta_{\text{av}}\rVert^2-\tau h^2(\gamma,\lVert\bDelta_{\perp}\rVert).
\end{equation}
Therefore,   for all $\bT$ and  $\bT'$, with  $\bT-\bT'\in\mathbb{C}_{\gamma}(\mathcal{S}),$ it holds
  {
\begin{align}
    \mathcal{T}_{L}({\bT};\bT')
    &\geq\bigg(\frac{\mu}{2}-16s\tau\bigg)\lVert(\bT-\bT')_{\text{av}}\rVert^2-\tau \,h^2(\gamma,\lVert(\bT-\bT')_{\perp}\rVert)-{\bigg(\frac{L_{\max}}{2m}-\frac{1-\rho}{2m\gamma}\bigg)\|(\bT-\bT')_{\perp}\|^2}\notag\\
    & {\geq} \delta\lVert(\bT-\bT')_{\text{av}}\rVert^2-\xi \,h^2(\gamma,\lVert(\bT-\bT')_{\perp}\rVert),
\end{align}
where   we used   $\gamma\in(0,(1-\rho)/L_{\max}],$
}
 and set   {$\delta=\mu/2-16s\tau,$ $\xi=\tau$}. This proves   (\ref{ARE eq}).
 $\hfill \square$
\subsection{Proof of Lemma~\ref{Global ARE}}\label{Proof of Lemma 5}
Let  $\mathbf{X}\in \mathbb{R}^{N\times d}$  be a design matrix satisfying Assumption~\ref{ass_X_random}. The RSC condition {\citep[Theorem 1]{Ras}} implies that there exist $\tilde{c}_0,\tilde{c}_{1}>0,$ such that for all $\Delta_{\text{av}}\in \mathbb{R}^{d},$
\begin{equation}\label{Ras inequality}
    \frac{1}{N}\lVert \mathbf{X}\Delta_{\text{av}}\rVert^2\geq\frac{1}{2}\lVert\Sigma^{\frac{1}{2}}\Delta_{\text{av}}\rVert^2-\frac{\tilde{c}_{1}\zeta_{\Sigma}\log d}{N}\lVert\Delta_{\text{av}}\rVert_1^2
\end{equation}
holds with probability at least $1-\exp(-\tilde{c}_0N).$
Furthermore, by condition (c) of $\mathbf{X},$ we have
\begin{equation}\label{eq:sigma_min}
    \lVert\Sigma^{\frac{1}{2}}\Delta_{\text{av}}\rVert^2\geq \lambda_{\min}(\Sigma)  \lVert\Delta_{\text{av}}\rVert^2.
\end{equation}
Let $\bDelta\in\mathbb{C}_{\gamma}(\mathcal{S}),$ that is,
\begin{equation}\label{eq:sparsity_cone}
    \lVert(\Delta_{\text{av}})_{\mathcal{S}^c}\rVert_1\leq 3\lVert(\Delta_{\text{av}})_{\mathcal{S}}\rVert_1+h(\gamma,\lVert\Delta_{\perp}\rVert).
\end{equation}
Substituting (\ref{eq:sigma_min}) and (\ref{eq:sparsity_cone})   into   (\ref{Ras inequality}), yields
\begin{align*}
     \frac{\lVert \mathbf{X}\Delta_{\text{av}}\rVert^2}{N}&\geq 
    \bigg(\frac{\lambda_{\min}(\Sigma)}{2}-\frac{32sc_{1}\zeta_{\Sigma}\log d}{N}\bigg)\lVert\Delta_{\text{av}}\rVert^2-\frac{2c_{1}\zeta_{\Sigma}\log d}{N}h^2(\gamma,\lVert\bDelta_{\perp}\rVert)\notag\\
     &\geq \frac{\lambda_{\min}(\Sigma)}{4}\lVert\Delta_{\text{av}}\rVert^2-\frac{\lambda_{\min}(\Sigma)}{64s}h^2(\gamma,\lVert\bDelta_{\perp}\rVert),\quad \text{for}\quad N\geq\frac{128s \tilde{c}_1\zeta_{\Sigma}\log d}{\lambda_{\min}(\Sigma)}.
\end{align*}
The proof follows using the above  bound in  (\ref{def:L-linearization}) along with    $\gamma\in(0,(1-\rho)/L_{\max}]. \hfill \square$
 
\section{Proof of Theorem \ref{solution err bound}}\label{Proof of Theorem 2}

Our starting point toward the upper bound on the average LASSO error $({1}/{m})\sum_{i=1}^m\lVert\hat{\nu}_i\rVert^2$ is    lower- and upper-bounding the average of local  errors $({1}/{N})\sum_{i=1}^m\lVert X_i\hat{\nu}_i\rVert^2$ while decomposing $\hat{\bT}$  in its average component and orthogonal one. This decomposition is instrumental to separate in the desired final bound a term of the same order of the centralized LASSO error from the (additive) perturbation due to the lack of exact consensus.  

\noindent {\bf $\bullet$ Step 1: Establishing the upper bound of $({1}/{N})\sum\limits_{i=1}^m\lVert X_i\hat{\nu}_i\rVert^2.$}

We start with the optimality condition of Problem (\ref{Change problem}). By optimality of   $\hat{\mathbf{\bT}}$, it follows that 
 
 \begin{equation}\label{eq:opt-cond-p4}
     \frac{1}{N}\sum_{i=1}^m \left(X_i \hat{\T}_i-y_i\right)^\top X_i\hat{\nu}_i \leq \frac{\lambda}{m}(\lVert1_m\otimes\theta^*\rVert_1-\lVert\hat{\bT}\rVert_1)+\frac{1}{2m\gamma}(\underbrace{\lVert1_m\otimes\T^*\rVert_{V}^2}_{=0\, \text{ (Assumption~\ref{W})}}-\lVert\hat{\bT}\rVert_{V}^2).
 \end{equation}
 We can then write
\begin{equation}\label{noparametric oracle}
\begin{aligned}
     \frac{1}{N}\sum\limits_{i=1}^m\lVert X_i\hat{\nu}_i\rVert^2 
   \overset{\eqref{eq:opt-cond-p4}}{\leq}  \frac{2}{N}\sum\limits_{i=1}^m  (y_i-X_i\T^*)^\top X_i \hat{\nu}_i+\frac{2\lambda}{m}(\lVert1_m\otimes\theta^*\rVert_1-\lVert\hat{\bT}\rVert_1)-\frac{1}{m\gamma}\lVert\hat{\bT}\rVert_{V}^2-\frac{1}{N}\sum\limits_{i=1}^m\lVert X_i\hat{\nu}_i\rVert^2\notag. 
\end{aligned}
\end{equation}
Using  $y_i=X_i\theta^*+w_i$ and the fact that $\T^\ast$ is $\mathcal{S}$-sparse,  we can write
\begin{align*}
     \frac{1}{N}\sum\limits_{i=1}^m\lVert X_i\hat{\nu}_i\rVert^2 
    \leq \frac{2}{N}\sum\limits_{i=1}^m w_i^{\top}X_i\hat{\nu}_i+\frac{2\lambda}{m}\sum\limits_{i=1}^m(\lVert(\hat{\nu}_i)_{\mathcal{S}}\rVert_1-\lVert(\hat{\nu}_i)_{\mathcal{S}^c}\rVert_1)-\frac{1}{m\gamma}\lVert\hat{\bT}\rVert_{V}^2-\frac{1}{N}\sum\limits_{i=1}^m\lVert X_i\hat{\nu}_i\rVert^2. 
\end{align*}
Introducing  the decomposition $\hat{\bnu}=1_m\otimes\hat{\nu}_{\text{av}}+\hat{\bnu}_{\perp},$  the above bound reads
\begin{align*}
   &\frac{2}{N}\sum\limits_{i=1}^m\lVert X_i\hat{\nu}_i\rVert^2\notag\notag\\
    \leq&\frac{2}{N}\sum\limits_{i=1}^m w_i^{\top}X_i(\hat{\nu}_{\text{av}}+\hat{\nu}_{\perp i}) +\frac{2\lambda}{m}\sum\limits_{i=1}^m(\lVert(\hat{\nu}_{\text{av}})_{\mathcal{S}}+(\hat{\nu}_{\perp i })_\mathcal{S}\rVert_1 -\lVert(\hat{\nu}_{\text{av}})_{\mathcal{S}^c}+(\hat{\nu}_{\perp i })_{\mathcal{S}^c}\rVert_1)-\frac{1}{m\gamma} \lVert\hat{\bT}\rVert_{V}^2\notag\\
     \overset{{(a)}}{\leq}
    &\frac{2}{N}\mathbf{w}^{\top}\mathbf{X}\hat{\nu}_{\text{av}} +\frac{2}{N}\sum\limits_{i=1}^m w_i^{\top}X_i\hat{\nu}_{\perp i}+\frac{2\lambda}{m}\sum\limits_{i=1}^m(\lVert(\hat{\nu}_{\text{av}})_{\mathcal{S}}\rVert_1-\lVert(\hat{\nu}_{\text{av}})_{\mathcal{S}^c}\rVert_1) +\frac{2\lambda}{m}\sum\limits_{i=1}^m\lVert\hat{\nu}_{\perp i}\rVert_1-\frac{1}{m\gamma}\lVert\hat{\bnu}_{\perp}\rVert_{V}^2\notag,
\end{align*}
where in (a) we used  $\sum\limits_{i=1}^m w_i^{\top}X_i\hat{\nu}_{\text{av}}=\mathbf{w}^{\top}\mathbf{X}\,\hat{\nu}_{\text{av}}$ and $\lVert\hat{\bT}\rVert_{V}^2 =\lVert\hat{\bT}-1_m\otimes\T^*\rVert_{V}^2=\lVert\hat{\bnu}_{\perp}\rVert_{V}^2.$

We bound now 
the two  terms $\mathbf{w}^{\top}\mathbf{X}\hat{\nu}_{\text{av}}$ and $\sum\limits_{i=1}^m w_i^{\top}X_i\hat{\nu}_{\perp i}$. We have  
    \begin{align*}
     &\frac{2}{N}\sum\limits_{i=1}^m\lVert X_i\hat{\theta}_i-X_i\T^*\rVert^2\notag \\
  \overset{ \text{H\"older's}}{\leq}\ & \frac{2}{N}\lVert \mathbf{w}^{\top}\mathbf{X}\rVert_{\infty}\lVert\hat{\nu}_{\text{av}}\rVert_1+\max\limits_{i\in [m]}\lVert w_i^{\top} X_i\rVert_{\infty}\,\frac{2}{N}\sum\limits_{i=1}^m\lVert\hat{\nu}_{\perp i}\rVert_1\notag\\
    &+\frac{2\lambda}{m}\sum\limits_{i=1}^m(\lVert(\hat{\nu}_{\text{av}})_{\mathcal{S}}\rVert_1-\lVert(\hat{\nu}_{\text{av}})_{\mathcal{S}^c}\rVert_1) +\frac{2\lambda}{m}\sum\limits_{i=1}^m\lVert\hat{\nu}_{\perp i}\rVert_1-\frac{1}{m\gamma}\lVert\hat{\bnu}_{\perp}\rVert_{V}^2\notag\\
    \overset{\eqref{connectivity number},(\ref{distributed lambda})}{\leq} & {3\lambda\lVert(\hat{\nu}_{\text{av}})_{\mathcal{S}}\rVert_1-\lambda\lVert(\hat{\nu}_{\text{av}})_{\mathcal{S}^c}\rVert_1}-\frac{1-\rho}{m\gamma} \lVert\hat{\bnu}_{\perp}\rVert^2
    +\bigg(\max\limits_{i\in[m]}\lVert w_i^{\top} X_i\rVert_{\infty}\frac{2}{N}+\frac{2\lambda}{m}\bigg)\sum\limits_{i=1}^m\lVert\hat{\nu}_{\perp i}\rVert_1.
\end{align*}

We further relax the bound by dropping  $-\lambda\lVert(\hat{\nu}_{\text{av}})_{\mathcal{S}^c}\rVert_1$ and enlarging $\lVert(\hat{\nu}_{\text{av}})_{\mathcal{S}}\rVert_1\leq \lVert \hat{\nu}_{\text{av}}\rVert_1$ while revealing the term $\frac{9\lambda^2s}{2\delta}$ which is of the order of the centralized LASSO error: 
\begin{align}\label{prediction error result}
    &\frac{2}{N}\sum\limits_{i=1}^m\lVert X_i\hat{\nu}_i\rVert^2 \notag\\
     \leq\ & 
      \, 2\cdot\frac{3\lambda\sqrt{s}}{\sqrt{2\delta}}\cdot\sqrt{\frac{\delta}{2}}\lVert\hat{\nu}_{\text{av}}\rVert-\frac{1-\rho}{m\gamma}\lVert\hat{\bnu}_{\perp}\rVert^2+\bigg(\max\limits_{i\in[m]}\lVert w_i^\top X_i\rVert_{\infty}\frac{2}{N}+\frac{2\lambda}{m}\bigg)\lVert\hat{\bnu}_{\perp}\rVert_1\notag\\
     \overset{\eqref{ARE eq}}{\leq}& \frac{9\lambda^2s}{2\delta}+\frac{1}{2}\bigg(\frac{\lVert \mathbf{X}\hat{\nu}_{\text{av}}\rVert^2}{N}+\xi h^2(\gamma,\lVert\hat{\bnu}_{\perp}\rVert)\bigg)-\frac{1-\rho}{m\gamma}\lVert\hat{\bnu}_{\perp }\rVert^2
   +\bigg(\max\limits_{i\in[m]}\lVert w_i^{\top} X_i\rVert_{\infty}\frac{2}{N}+\frac{2\lambda}{m}\bigg)\lVert\hat{\bnu}_{\perp }\rVert_1.
\end{align}
 {\bf $\bullet$ Step 2: Establishing the lower bound of $({1}/{N})\sum\limits_{i=1}^m\lVert X_i\hat{\nu}_i\rVert^2.$} 

Invoking the decomposition $\hat{\nu}_i=\hat{\nu}_{\text{av}}+\hat{\nu}_{\perp i}$, $i\in [m]$, along with the Young's inequality, we can write 
\begin{align}\label{transfer to parametric error}
    \frac{2}{N}\sum\limits_{i=1}^m\lVert X_i(\hat{\nu}_{\text{av}}+\hat{\nu}_{\perp\,i})\rVert^2
    &\ \geq 
   \frac{1}{N}\lVert \mathbf{X}\hat{\nu}_{\text{av}}\rVert^2-\frac{2}{N}\sum\limits_{i=1}^m\lVert X_i\hat{\nu}_{\perp i}\rVert^2\notag\\
    &\overset{\eqref{ARE eq}}{\geq}\frac{1}{2}[\delta\lVert\hat{\nu}_{\text{av}}\rVert^2-\xi h^2(\gamma,\lVert\hat{\bnu}_{\perp}\rVert)]+\frac{1}{2N}\lVert \mathbf{X}\hat{\nu}_{\text{av}}\rVert^2-\frac{2}{N}\sum\limits_{i=1}^m\lVert X_i\hat{\nu}_{\perp i}\rVert^2. 
\end{align}
{\bf $\bullet$ Step 3: Lower bound $\leq$ Upper bound.}  
 
 Chaining  (\ref{prediction error result}) and  (\ref{transfer to parametric error}) while adding $\frac{\delta}{2m}\lVert\hat{\bnu}_{\perp}\rVert^2$ on both sides, yield
 \begin{align}\label{average ell 2 error}
    &\frac{1}{2}\delta\lVert\hat{\nu}_{\text{av}}\rVert^2+\frac{\delta}{2m}\lVert\hat{\bnu}_{\perp}\rVert^2\notag\\
    \leq&\frac{9\lambda^2s}{2\delta}+\xi h^2(\gamma,\lVert\hat{\bnu}_{\perp}\rVert)+\bigg(\frac{2L_{\max}}{m}+\frac{\delta}{2m}-\frac{1-\rho}{m\gamma}\bigg)\lVert\hat{\bnu}_{\perp }\rVert^2+\bigg(\max\limits_{i\in[m]}\lVert w_i^{\top} X_i\rVert_{\infty}\frac{2}{N}+\frac{2\lambda}{m}\bigg)\lVert\hat{\bnu}_{\perp }\rVert_1\notag\\
   {\leq}&\frac{9\lambda^2s}{2\delta}+\xi h_{\max}^2+ \underbrace{\bigg(\frac{2L_{\max}}{m}+\frac{\delta}{2m}-\frac{1-\rho}{m\gamma}\bigg)\lVert\hat{\bnu}_{\perp }\rVert^2+\bigg(\max\limits_{i\in[m]}\lVert w_i^{\top} X_i\rVert_{\infty}\frac{2}{N}+\frac{2\lambda}{m}\bigg) \sqrt{md}\lVert\hat{\bnu}_{\perp }\rVert}_{\triangleq h_1(\gamma,\lVert\hat{\bnu}_{\perp}\rVert)},
\end{align}
where in the last inequality we used  $L_{\max}=\max\limits_{i\in[m]}\lambda_{\max}(X_i^{\top} X_i/n)$ [cf.~\eqref{L max}], and the following upper bound  for $h(\gamma,\lVert\hat{\bnu}_{\perp}\rVert)$ 
\begin{align}\label{def:h-max}
  {h(\gamma,\lVert\hat{\bnu}_{\perp}\rVert)\leq 
  h_{\max}\triangleq\frac{ d\gamma}{\lambda(1-\rho)}\bigg(\frac{\max_{i\in [m]}\lVert w_i^{\top} X_i\rVert_{\infty}}{n}+\lambda\bigg)^2.}
\end{align}
Under  the condition on $\gamma$ as in \eqref{eq:gamma_cond},
 $h_1(\gamma,\lVert\hat{\bnu}_{\perp}\rVert)$ is a quadratic function of $\|\hat{\bnu}_{\perp}\|$ opening downward, and it can be upper bounded over  $\mathbb{R}^+$ as
 \begin{equation}\label{eq:h_1_max}
     h_{1\max}\triangleq\frac{2d\gamma(\max_{i\in [m]}\lVert w_i^{\top} X_i\rVert_{\infty}/n+\lambda)^2}{2(1-\rho)-4L_{\max}\gamma-\delta\gamma}.
 \end{equation}
Using \eqref{eq:h_1_max} in \eqref{average ell 2 error},
  we finally obtain \eqref{oracle}.
 $\hfill \square$

\section{Proof of Theorem \ref{statistical optimization error result}}\label{Proof of Theorem 3}

The proof builds on the following four steps: {\bf 1)} We first consider as   source of  randomness  only the design matrix $\mathbf{X}$ (cf.~Assumption~\ref{ass_X_random}) while keeping $\mathbf{w}$ fixed, deriving a high-probability  bound for  ${L_{\max}}$   in \eqref{L max};  
 {\bf 2)} We then fix  $\mathbf{X}$ and  consider the randomness coming from  the noise $\mathbf{w},$ providing high-probability bounds for  the noise-dependent   terms $\lVert \mathbf{X}^{\top} \mathbf{w}\rVert_{\infty}/N$ and $\max_{1\leq i\leq m}\lVert X_i^{\top} w_i\rVert_{\infty}/n$; {\bf 3)} We then combine the previous two results via the  union bound and establish  a lower bound on $\lambda$ for \eqref{distributed lambda} to hold with high probability; {\bf 4)} Finally,  we use the bound in {\bf 3)}  to 
obtain   the final error bound on the $\ell_2$-LASSO error. 

Let $\mathbb{P}$ be a probability measure on the product sample space $\mathbb R^{N \times d} \otimes \mathbb R^N$. For brevity, we use the same notation for the marginal distributions on $\mathbb R^{N\times d}$ and $\mathbb{R}^N$.
 
\noindent {\bf $\bullet$ Step 1: Randomness from $\mathbf{X}.$ }
 
 We define three ``good'' events so that the largest eigenvalue of $(1/n) X_i^\top X_i$, smallest eigenvalue of $(1/N) \mathbf{X}^\top \mathbf{X}$ and the norm of the columns of $\mathbf{X}$ are well-controlled. We prove next that  these events jointly occur with high probability. Specifically, let
 \begin{equation}
    A_1\triangleq
    \bigg\{\mathbf{X}\in\mathbb{R}^{N\times d}~\bigg|\text{ } L_{\max}\leq \tilde{c}_4 \lambda_{\max}(\Sigma) \left(1+  \frac{d + \log m}{n}  \right)\bigg\},\label{eq:def_A1}
\end{equation} 
\begin{equation}\label{eq:def_A2_A3}
  A_2\triangleq
    \bigg\{\mathbf{X}\in\mathbb{R}^{N\times d}~\bigg|\text{ } \mathbf{X}\text{ } \text{satisfies}\text{ } \eqref{network slack} \bigg\},\ \ \text{and}\ \    A_3\triangleq\bigg\{\mathbf{X}\in\mathbb{R}^{N\times d}~\bigg|\text{ } \max_{j=1,\dots,d}\frac{1}{\sqrt{N}}\lVert \mathbf{X}e_j\rVert\leq \sqrt{\frac{3\zeta_{\Sigma}}{2}}\bigg\},
\end{equation}
where $\tilde{c}_4 > 0$ is a universal constant (see~\eqref{new bound l-max}), and we recall from \eqref{L max} and \eqref{zeta} that $ L_{\max}\triangleq\max_{  i\in [m]}\lambda_{\max}(X_i^{\top} X_i/n),$ and $\zeta_{\Sigma}\triangleq\max\limits_{i\in [d]}\Sigma_{ii}$, respectively.
We proceed to bounding $\mathbb{P}(A_1)$, $\mathbb{P}(A_2)$, and $\mathbb{P}(A_3)$. \smallskip 

  \textbf{(i)  Bounding $\mathbb{P}(A_1)$:}
Recall that  $\mathbf{X}=[X_1^\top,\ldots, X^\top_m]^\top,$ and $\mathbf{X}$ satisfies Assumption \ref{Random Gaussian model}. Thus, $\{X_i\}_{i\in[m]}$ are i.i.d random matrices, with i.i.d. rows drawn from   $\mathcal{N}(0,\Sigma).$  By  {\citep[Remark 5.40]{Vershynin2012IntroductionTT}} it follows that the following holds with probability at least 
\begin{equation}\label{c_3}
    1-2\exp\{-\tilde{c}_3t^2\},
\end{equation}
for  all $t\geq 0$
\begin{equation}\label{X random event 1}
    \bigg\|\frac{1}{n}X_i^{\top}X_i-\Sigma\bigg\|\leq\max\{a,a^2\}\|\Sigma\|,\ \text{where}\ a\triangleq \tilde{c}_2\bigg(\sqrt{\frac{d}{n}}+\frac{t}{\sqrt{n}}\bigg),
\end{equation}
with constants $\tilde{c}_3$ and  $\tilde{c}_2>0$.
 Given~(\ref{X random event 1}) and using the triangle inequality, we have
\begin{align}\label{cov-bound-intermediate}
   \Big\| \frac{1}{n} X_i^\top X_i \Big\|  
    \leq \Big\| \frac{1}{n} X_i^\top X_i -\Sigma\Big\| +\Big\|\Sigma\Big\| 
    \leq 
   \lambda_{\max} (\Sigma)  \max\{a,a^2\}+\lambda_{\max} (\Sigma).
\end{align}
Applying the union bound we  obtain the following  bound for  $L_{\max}$
 \begin{align}
     &\mathbb{P}\left(L_{\max}\leq \lambda_{\max}(\Sigma) (1+ \max\{a,a^2\}) \right) \geq 1 - m \cdot 2\exp\{-\tilde{c}_3t^2\}.
 \end{align}
Setting $t = \sqrt{d + \tilde{c}_3^{-1} \log m}$, yields
 \begin{equation*}
     a =  \tilde{c}_2 \left( \sqrt{\frac{d}{n}}+ \sqrt{\frac{d + \tilde{c}_3^{-1} \log m}{n}}\right).
 \end{equation*}
Therefore,  we conclude
 \begin{align}\label{new bound l-max}
 \begin{split}
         L_{\max} &\leq \lambda_{\max}(\Sigma) \left(1+ a + a^2\right) 
         \leq 2 \lambda_{\max}(\Sigma)  \left(1+ a^2\right) \leq 
    \tilde{c}_4 \lambda_{\max}(\Sigma) \left(1+  \frac{d + \log m}{n}  \right),
 \end{split}
 \end{align}
 with probability at least $1 - 2 \exp(-\tilde{c}_3 d)$ and 
 \begin{equation}\label{c_4}
     \tilde{c}_4 = \max\{2, 4 \tilde{c}_2^2 , 4 \tilde{c}_2^2 \tilde{c}_3^{-1}\}\geq2.
 \end{equation}

\smallskip

 \textbf{(ii)  Bounding $\mathbb{P}(A_2)$:}
It follows readily  from Lemma~\ref{Global ARE}: if $ N\geq\frac{128s \tilde{c}_1\zeta_{\Sigma}\log d}{\lambda_{\min}(\Sigma)}$  and $\gamma>0$,
\begin{equation}\label{eq-A2 prop}
    \mathbb{P}(A_2^c)\leq \exp(-\tilde{c}_0N).
\end{equation}

\smallskip

  \textbf{(iii)  Bounding $\mathbb{P}(A_3)$:} 
Recall Assumption \ref{Random Gaussian model}. 
It follows that $\mathbf{X}e_j$ is an isotropic Gaussian random vector in $\mathbb{R}^N$ with $\mathcal{N}(0,\Sigma_{jj})$ entries.
Hence, $\|\mathbf{X}e_j\|^2/\Sigma_{jj}$ is a chi-squared random variable with degree $N.$ Then, applying the standard bound for chi-squared random variables {\citep[Example 2.11]{Wainwright-book}} we have
\begin{align}\label{chi square 1}
    \mathbb{P}\bigg(\bigg|\frac{1}{N} \bigg\|\frac{\mathbf{X}e_j}{\sqrt{\Sigma_{jj}}}\bigg\|^2-1\bigg|\geq t\bigg)\leq2\exp(-Nt^2/8),\qquad \text{for all} \ t\in(0,1).
\end{align}
Taking $t=\frac{1}{2}$ in \eqref{chi square 1} and applying the union bound, we obtain
\begin{align}\label{chi square 3}
\begin{split}
    \mathbb{P}\bigg(\max\limits_{j\in[d]}\frac{1}{N} \bigg\|\frac{\mathbf{X}e_j}{\sqrt{\Sigma_{jj}}}\bigg\|^2\geq \frac{3}{2}\bigg)\leq  d\ \mathbb{P}\bigg(\bigg|\frac{1}{N} \bigg\|\frac{\mathbf{X}e_j}{\sqrt{\Sigma_{jj}}}\bigg\|^2-1\bigg|
    \geq \frac{1}{2}\bigg) 
    \leq
    2\exp(-N/32+\log d).
\end{split}
\end{align}
Therefore, for all  $N\geq \tilde{c}_5\log d$,  with    $\tilde{c}_5> 32,$ we have
\begin{align}\label{column norm normalize}
\begin{split}
     \mathbb{P}\bigg(\max\limits_{j\in[d]}\frac{\lVert \mathbf{X}e_j\rVert^2}{N}\leq \frac{3}{2}\zeta_{\Sigma}\bigg)\geq&1-2\exp[-(\tilde{c}_5/32)\log d+\log d]\\
     =&1-2\exp(-\tilde{c}_6\log d),\  \text{where}\ \tilde{c}_6=\tilde{c}_5/32-1>0.
\end{split}
\end{align}
 Combining the conditions on $N,$ we have
\begin{equation}\label{sample_stat_final}
    N\geq \frac{ \tilde{c}_9s\zeta_{\Sigma}\log d}{\lambda_{\min}(\Sigma)}\overset{\text{(a)}}{\geq} \max\bigg\{\frac{128s \tilde{c}_1\zeta_{\Sigma}\log d}{\lambda_{\min}(\Sigma)},\tilde{c}_5\log d\bigg\},
\end{equation}
where $\tilde{c}_9=\max\{128\tilde{c}_1,\tilde{c}_5\},$ and in (a) we used $s\geq1,\zeta_{\Sigma}\geq\lambda_{\min}(\Sigma).$

Finally, we combine \eqref{new bound l-max}, \eqref{eq-A2 prop}, and \eqref{column norm normalize}; using the union bound again, we have
\begin{equation*}
    \mathbb{P}(A_1^c\cup A_2^c\cup A_3^c)\leq \mathbb{P}(A_1^c)+\mathbb{P}(A_2^c)+\mathbb{P}(A_3^c)\leq 2\exp(-\tilde{c}_3d)+\exp(-\tilde{c}_0N)+2\exp(-\tilde{c}_6\log d).
\end{equation*}
Define $A\triangleq A_1\cap A_2\cap A_3,$
\begin{equation}\label{eq:p:X}
\mathbb{P}(A)\geq 1- 2\exp(-\tilde{c}_3d)-\exp(-\tilde{c}_0N)-2\exp(-\tilde{c}_6\log d).
\end{equation}

\noindent {\bf $\bullet$ Step 2: Randomness from $\mathbf{w}.$}
We start with bounding $ \|\mathbf{X}^{\top} \mathbf{w}\|_{\infty}.$
For fixed  $\mathbf X\in A,$ and $\mathbf{w}\sim \mathcal{N}(0,\sigma^2I_{N}),$ recall   $\mathbf{X}=[X_1^\top,\ldots, X^\top_m]^\top,$ and $\mathbf{w}=[w_1^\top,\ldots, w^\top_m]^\top,$  where for each agent $i\in[m],$ $X_i \in \mathbb{R}^{n\times d}$ is the design matrix, $w_i \in \real^n$ is observation noise. Then, for any  $i\in[m]$ and $j\in[d],$  
\begin{equation}\label{normal distribution regularized}
    \frac{\mathbf{w}^{\top} \mathbf{X}e_j}{N}\ \bigg|_{\mathbf{X}\in A}\sim\mathcal{N}\bigg(0,\frac{\sigma^2}{N}\cdot\frac{\|\mathbf{X}e_j\|^2}{N}\bigg)\quad\text{and}\quad\frac{w_i^{\top} X_i e_j}{N}\ \bigg|_{\mathbf{X}\in A}\sim\mathcal{N}\bigg(0,\frac{\sigma^2}{N}\cdot\frac{\|X_ie_j\|^2}{N}\bigg).
\end{equation}
Note that 
\begin{equation}\label{local matrix column normalize}
   \max\limits_{i\in[m]}\max\limits_{j\in[d]}\frac{\|X_ie_j\|^2}{N}\leq \max\limits_{j\in[d]}\frac{1}{m}\sum\limits_{i=1}^m \frac{\|X_ie_j\|^2}{n}=\max\limits_{j\in[d]}\frac{\lVert \mathbf{X}e_j\rVert^2}{N},
\end{equation}
due to  $\frac{1}{m}\sum_{i=1}^m \frac{\|X_ie_j\|^2}{n}=\frac{\lVert \mathbf{X}e_j\rVert^2}{N}$. 

By definition, for all $\mathbf{X}\in A\subseteq A_3,$ $2\|\mathbf{X}e_j\|^2/(3\zeta_{\Sigma}N)\leq 1$ and, by \eqref{local matrix column normalize}, $2\|X_ie_j\|^2/(3\zeta_{\Sigma}N)\leq 1.$
Therefore, combining it   with \eqref{normal distribution regularized}, we obtain 
\begin{align}\label{normalized vector}
\begin{split}
    &\sqrt{\frac{2}{3\zeta_{\Sigma}}}\frac{\mathbf{w}^{\top} \mathbf{X}e_j}{N}\ \bigg|_{\mathbf{X}\in A} \sim\mathcal{N}\bigg(0,\frac{\sigma^2}{N}\cdot\frac{2\|\mathbf{X}e_j\|^2}{3\zeta_{\Sigma}N}\bigg),\qquad\,\,\, \text{where}\qquad \frac{2\|\mathbf{X}e_j\|^2}{3\zeta_{\Sigma}N}\leq 1;\ \ \text{and}\\ &\sqrt{\frac{2}{3\zeta_{\Sigma}}}\frac{w_i^{\top} X_i e_j}{N}\ \bigg|_{\mathbf{X}\in A}\sim\mathcal{N}\bigg(0,\frac{\sigma^2}{N}\cdot\frac{2\|X_ie_j\|^2}{3\zeta_{\Sigma}N}\bigg),\qquad  \text{where}\qquad  \frac{2\|X_ie_j\|^2}{3\zeta_{\Sigma}N}\leq 1.
\end{split}
\end{align}
Denote $p_{\mathbf{X}}(x)$ and $p_{X_i}(x_i)$ as the density of $\sqrt{2/(3\zeta_{\Sigma})}\mathbf{w}^{\top} \mathbf{X}e_j/N$ and   $\sqrt{2/3\zeta_{\Sigma}}w_i^{\top} X_i e_j/N,$ respectively.
Let $Z\sim\mathcal{N}(0,\sigma^2/N),$ with density function  $p_{Z}(z)$. Since $2\|\mathbf{X}e_j\|^2/(3\zeta_{\Sigma}N)\leq 1$ and $2\|X_ie_j\|^2/(3\zeta_{\Sigma}N)\leq 1,$  we conclude $p_{\mathbf{X}}(0)\geq p_{Z}(0)$ and $p_{X_i}(0)\geq p_{Z}(0).$ \citep[Theorem 1]{10.1214/aos/1176350964} implies 
\begin{equation*}
    \bigg|\sqrt{\frac{2}{3\zeta_{\Sigma}}}\frac{\mathbf{w}^{\top} \mathbf{X}e_j}{N}\ 
    \bigg|_{\mathbf{X}\in A}\bigg|\preceq^{\text{st}}|Z|, \ \ \text{as well as}\ \ \bigg|\sqrt{\frac{2}{3\zeta_{\Sigma}}}\frac{w_i^{\top} X_i e_j}{N}\ \bigg\lvert_{\mathbf{X}\in A}\bigg|\preceq^{\text{st}}|Z|.
\end{equation*}
Therefore, 
\begin{align}\label{normalized gaussian bound}
     \mathbb{P}\bigg(\frac{\lvert \mathbf{w}^{\top} \mathbf{X}e_j\rvert}{N}\geq x \sqrt{\frac{3\zeta_{\Sigma}}{2}}\ \bigg|\  \mathbf{X}\in A \bigg) 
     \leq \mathbb{P}(|Z|\geq x) 
    \leq  2\exp\bigg(-\frac{Nx^2}{2\sigma^2}\bigg). 
\end{align} 
Notice that  $\lVert \mathbf{X}^{\top} \mathbf{w}\rVert_{\infty}/N=\max_{j\in [d]} \lvert \mathbf{w}^{\top} \mathbf{X}e_j\rvert/N$. Hence, setting $x=\sigma\sqrt{t_0\log d/N},$ with $t_0>2,$ the union bound implies
\begin{align}\label{global L}
    &\mathbb{P}\bigg(\frac{\lVert \mathbf{X}^{\top} \mathbf{w}\rVert_{\infty}}{N}\geq \sigma\sqrt{\frac{t_0\log d}{N}}\sqrt{\frac{3\zeta_{\Sigma}}{2}}\ \bigg|\ \mathbf{X}\in A\bigg) 
    \leq  
    2\exp\left(-\frac{1}{2}(t_0-2)\log d\right).
\end{align}
Define
\begin{align}
    D_1\triangleq \bigg\{\mathbf{w}\in\mathbb{R}^N\text{ }\bigg|\text{ }\frac{\lVert \mathbf{X}^{\top} \mathbf{w}\rVert_{\infty}}{N}\leq \sigma\sqrt{\frac{t_0\log d}{N}}\sqrt{\frac{3\zeta_{\Sigma}}{2}}\bigg\}.
\end{align}
We have   $\mathbb{P}(D_1\ |\ \mathbf{X}\in A)\geq 1-2\exp\left(-\frac{1}{2}(t_0-2)\log d\right).$
Combining it with \eqref{eq:p:X}, yields
  {\begin{align}
\begin{split}
    \mathbb{P}(A\cap D_1)=&\mathbb{P}(D_1\ |\ A)\mathbb{P}(A)\\
    \geq&[1-2\exp\{-[(t_0-2)\log d]/2\}][1- 2\exp(-\tilde{c}_3d)-\exp(-\tilde{c}_0N)-2\exp(-\tilde{c}_6\log d)]\notag\\
    \geq&1- 2\exp(-\tilde{c}_3d)-\exp(-\tilde{c}_0N)-2\exp(-\tilde{c}_6\log d)-2\exp\{-[(t_0-2)\log d]/2.
\end{split}
\end{align}}
  {It remains to bound $\max_{i\in [m]}\lVert X_i^{\top} w_i\rVert_{\infty}.$ Since  $X_i,$ $i\in[m]$, are independent, the  columns of $X_i$ are $n$ dimensional i.i.d Gaussian random vectors, each element has variance at most $\zeta_{\Sigma}$,  and the elements of $w_i\sim \mathcal{N}(0,\sigma^2I_{n}).$  Then each element of $X_i^{\top}w_i$ is the sum of
$n$ independent sub-exponential random variables with  sub-exponential norm  at most
$\sigma\sqrt{\zeta_\Sigma}.$ 
Applying   \cite[Proposition 5.16]{Vershynin2012IntroductionTT} and the union bound, we obtain 
\begin{align*}
    \mathbb{P}\left(\frac{1}{n}\max_{i\in [m]}\lVert X_i^{\top} w_i\rVert_{\infty}\leq t\right)\geq 1- 2\exp\left(-\tilde{c}_{24}\min\left\{\frac{t^2}{\sigma^2\zeta_\Sigma},\frac{t}{\sigma\sqrt{\zeta_\Sigma}}\right\}n+\log md\right),\quad t\geq 0,
\end{align*}
for some $\tilde{c}_{24}>0.$
Thus, under $2{{\log md}}\leq \tilde{c}_{24}n$ and $t= {\sigma  }\sqrt{\frac{{2\zeta_\Sigma\log md}}{n\tilde{c}_{24}}},$  
\begin{align}\label{eq:sub_exp_noise_1}
    &\,\,\,\,\,\mathbb{P}\left(\frac{1}{n}\max_{i\in [m]}\lVert X_i^{\top} w_i\rVert_{\infty}\leq {\sigma  }\sqrt{\frac{{2\zeta_\Sigma\log md}}{n\tilde{c}_{24}}}\right)\notag\\
    &\,\,\,\,\,\geq 1-2\exp\left(-\tilde{c}_{24}\min\left\{\frac{2\sigma^2  \zeta_\Sigma{\log md}}{\tilde{c}_{24}n\sigma^2\zeta_\Sigma},\frac{\sigma  \sqrt{2\zeta_\Sigma\log md}}{ \sqrt{\tilde{c}_{24}n\zeta_\Sigma}\sigma}\right\}n+\log md\right)\notag\\
      &\,\,\,\,\,\geq 1-2\exp\left(-\log d\right),
\end{align}
while, under  $2{{\log md}}>\tilde{c}_{24}n$ and $t={\frac{2\sigma  \sqrt{\zeta_\Sigma}\log md}{n\tilde{c}_{24}}},$   it holds
\begin{align}\label{eq:sub_exp_noise_2}
    &\,\,\,\,\,\mathbb{P}\left(\frac{1}{n}\max_{i\in [m]}\lVert X_i^{\top} w_i\rVert_{\infty}\leq{\frac{2\sigma \sqrt{\zeta_\Sigma}\log md}{n\tilde{c}_{24}}}\right)\notag\\
    &\,\,\,\,\,\geq 1-2\exp\left(-\tilde{c}_{24}\min\left\{\frac{4\sigma^2 {\zeta_\Sigma}{\log^2 md}}{\tilde{c}_{24}^2n^2\sigma^2{\zeta_\Sigma}},\frac{2\sigma  \sqrt{\zeta_\Sigma}{\log md}}{ {\tilde{c}_{24}}n\sigma\sqrt{\zeta_\Sigma}}\right\}n+\log md\right)\notag\\
      &\,\,\,\,\,\geq 1-2\exp\left(-\log d\right).
\end{align}
Combining \eqref{eq:sub_exp_noise_1} and \eqref{eq:sub_exp_noise_2}, we have
{
\begin{equation}\label{eq:sub_e_p2}
    \mathbb{P}\left(\frac{1}{n}\max_{i\in [m]}\lVert X_i^{\top} w_i\rVert_{\infty}\leq \sigma\sqrt{\zeta_\Sigma}\min\left\{{\frac{2\log md}{n\tilde{c}_{24}}},\sqrt{\frac{{2\log md}}{n\tilde{c}_{24}}}\right\}\right)\geq1-4\exp\left(-\log d\right).
\end{equation}
}
Define 
\begin{align*}
         D_2\triangleq \bigg\{\mathbf{w}\in\mathbb{R}^{N}\text{ }\bigg|\text{ }\frac{1}{n}\max_{i\in [m]}\lVert X_i^{\top} w_i\rVert_{\infty}\leq \sigma \sqrt{\zeta_\Sigma}\min\left\{{\frac{2\log md}{n\tilde{c}_{24}}},\sqrt{\frac{{2\log md}}{n\tilde{c}_{24}}}\right\}\bigg\}, \\
\end{align*} 
and $D \triangleq D_1 \cap D_2.$
Then, chaining \eqref{eq:p:X}, \eqref{eq:sub_e_p2}, and \eqref{global L}, we finally get
\begin{align}\label{eq:p-bound-23}
    &\mathbb{P}(A\cap D)\notag\\
   &\geq 1- 2\exp(-\tilde{c}_3d)-\exp\left(-\tilde{c}_0\frac{\tilde{c}_9s\zeta_{\Sigma}\log d}{\lambda_{\min}(\Sigma)}\right)-2\exp(-\tilde{c}_6\log d)-2\exp\{-[(t_0-2)\log d]/2\nonumber \\&\quad -4\exp(-\log d)\notag\\
    &\geq
   1- 11\exp(-\tilde{c}_{8}\log d),
\end{align}
where $\tilde{c}_{8}=\min\{1,\tilde{c}_3,\tilde{c}_6,(t_0-2)/2,\tilde{c}_0\tilde{c}_9\}.$}  
\smallskip  

\noindent {\bf $\bullet$ Step 3: Sufficient condition on $\lambda$ for \eqref{distributed lambda} to hold with high probability. }
 
We first recall \eqref{distributed lambda} for convenience.
\begin{equation*}
    \frac{2}{N}\lVert \mathbf{X}^{\top} \mathbf{w}\rVert_{\infty}\leq \lambda.
    \end{equation*}
Combining it with the high probability upper bound for $\|\mathbf{X}^{\top}\mathbf{w}\|_{\infty}/N$ in \eqref{global L} ({\bf Step 2}), we conclude that, 
if $\lambda$ satisfies
\begin{equation}\label{stat lambda}
    \lambda= \tilde{c}_8\sigma\sqrt{\frac{\zeta_{\Sigma}t_0\log d}{N}},
\end{equation}
with $\tilde{c}_8\geq \sqrt{6},$ then \eqref{distributed lambda} holds with probability at least \eqref{eq:p-bound-23}.
\smallskip 

\noindent {\bf $\bullet$ Step 4: Bounding the statistical error under \eqref{stat gamma condition}.}

Recall the deterministic error bounds in Theorem \ref{solution err bound}: for any fixed   $\lambda$ and $\gamma$ satisfying (\ref{eq:gamma_cond}),  
\begin{align*}
   &\frac{1}{m}\sum\limits_{i=1}^m\lVert\hat{\nu}_i\rVert^2\notag\\
   \overset{\text{\eqref{oracle}}}{\leq}&\frac{9\lambda^2s}{\delta^2}+\frac{2\xi d^2\gamma^2}{\delta\lambda^2(1-\rho)^2}\bigg(\frac{\max_{i\in [m]}\lVert w_i^{\top} X_i\rVert_{\infty}}{n}+\lambda\bigg)^4+\frac{4d\gamma(\max_{i\in [m]}\lVert w_i^{\top} X_i\rVert_{\infty}/n+\lambda)^2}{\delta[2(1-\rho)-4L_{\max}\gamma-\delta\gamma]}.\notag
\end{align*}
In {\bf Step 3}, we provided a sufficient condition on $\lambda$ to guarantee \eqref{distributed lambda} holds with probability at least as \eqref{eq:p-bound-23}. Now we proceed to provide a sufficient condition on $\gamma,$ not only to guarantee $\gamma\leq 2(1-\rho)/(4L_{\max}+\delta),$ but also contribute to restricting the  error term above within the centralized statistical error, which is of the order $\mathcal{O}(\lambda^2s).$

By Lemma \ref{Global ARE},   if  $ N\geq128s \tilde{c}_1\zeta_{\Sigma}\log d/\lambda_{\min}(\Sigma),$   with probability at least $1-\exp(-\tilde{c}_0N),$ the in-network RE condition holds with $\delta=\lambda_{\min}(\Sigma)/4$ and $\xi=\lambda_{\min}(\Sigma)/(64s).$ Combining this with the high probability upper bound derived on $L_{\max}$ in \eqref{new bound l-max} and the high probability upper bound derived for $\max_{i\in[m]}\lVert w_i^{\top} X_i\rVert_{\infty}/n$ in \eqref{eq:sub_e_p2},  we have
\begin{align*}
     \frac{1}{m}\sum\limits_{i=1}^m\lVert\hat{\nu}_i\rVert^2\leq&\frac{144\lambda^2s}{\lambda_{\min}(\Sigma)^2}+\underbrace{\frac{ d^2\gamma^2\bigg(  {\sigma\sqrt{\zeta_\Sigma}\min\left\{{\frac{2\log md}{n\tilde{c}_{24}}},\sqrt{\frac{{2\log md}}{n\tilde{c}_{24}}}\right\}}+\lambda\bigg)^4}{8\lambda^2s(1-\rho)^2}}_{\texttt{Term I}}\notag\\
    &+\underbrace{\frac{8d\gamma \bigg(  {\sigma\sqrt{\zeta_\Sigma}\min\left\{{\frac{2\log md}{n\tilde{c}_{24}}},\sqrt{\frac{{2\log md}}{n\tilde{c}_{24}}}\right\}}+\lambda\bigg)^2}{\lambda_{\min}(\Sigma)\left(1-\rho-4\gamma \tilde{c}_4 \lambda_{\max}(\Sigma) \left(1+  \frac{d + \log m}{n}  \right)-\frac{\gamma\lambda_{\min}(\Sigma)}{8}\right)}}_{\texttt{Term II}}
\end{align*}
 with probability larger than \eqref{eq:p-bound-23}. 
 
 It remains to prove that  condition \eqref{stat gamma condition} on  $\gamma$ is sufficient    for \texttt{Term I} and \texttt{Term II} to be within $\mathcal{O}(\lambda^2 s)$. Notice that 
\begin{equation}
    \texttt{Term I}\leq\texttt{Term II}^2\cdot\frac{\lambda_{\min}(\Sigma)^2}{512\lambda^2s}.
\end{equation} Thus it is sufficient to bound only \texttt{Term II}. Enforcing $\texttt{Term II}\leq \tilde{c}_7\lambda^2s/\lambda_{\min}(\Sigma)^2,$ where $\tilde{c}_7$ is a numerical constant, we derive the following sufficient condition on $\gamma$ to ensure $\texttt{Term II}\leq \tilde{c}_7\lambda^2s/\lambda_{\min}(\Sigma)^2$:  
\begin{equation}\label{stat gamma 1}
    \gamma\leq \frac{1-\rho}{8\lambda_{\min}(\Sigma)\frac{d}{\tilde{c}_7s}\left[  {\frac{\sigma\sqrt{\zeta_\Sigma}}{\lambda}\min\left\{{\frac{2\log md}{n\tilde{c}_{24}}},\sqrt{\frac{{2\log md}}{n\tilde{c}_{24}}}\right\}+1}\right]^2+ 4\tilde{c}_4 \lambda_{\max}(\Sigma)[1+  \frac{d + \log m}{n}  ]+\frac{\lambda_{\min}\Sigma}{8}}.
\end{equation}
Thus, under \eqref{stat gamma 1}, we have
\begin{equation*}
    \texttt{Term II}\leq \tilde{c}_7\frac{\lambda^2s}{\lambda_{\min}(\Sigma)^2}\quad \Rightarrow\quad \texttt{Term I}\leq\frac{\tilde{c}_7^2\lambda^4s^2}{\lambda_{\min}(\Sigma)^4}\frac{\lambda_{\min}(\Sigma)^2}{512\lambda^2s}=\frac{\tilde{c}_7^2\lambda^2s}{512\lambda_{\min}(\Sigma)^2}.
\end{equation*}
Therefore, the final statistical error satisfies  
\begin{align*}
     \frac{1}{m}\sum\limits_{i=1}^m\lVert\hat{\nu}_i\rVert^2&\leq
     \bigg(144+\tilde{c}_7+\frac{\tilde{c}_7^2}{512}\bigg)\frac{\tilde{c}_8^2\sigma^2\zeta_{\Sigma}t_0}{\lambda_{\min}(\Sigma)^2}\frac{s\log d}{N}= c_6\frac{\sigma^2\zeta_{\Sigma}t_0}{\lambda_{\min}(\Sigma)^2}\frac{s\log d}{N}, 
\end{align*}
where $c_6\triangleq\bigg(144+\tilde{c}_7+\frac{\tilde{c}_7^2}{512}\bigg)\tilde{c}_8^2.$
Since $\lambda$ satisfies \eqref{stat lambda}, we have
  {\begin{align}\label{eq:bound_I}
   \frac{\sigma\sqrt{\zeta_\Sigma}}{\lambda}\min\left\{{\frac{2\log md}{n\tilde{c}_{24}}},\sqrt{\frac{{2\log md}}{n\tilde{c}_{24}}}\right\} 
   \leq \frac{1}{\tilde{c}_8}\sqrt{\frac{2m\log md}{\tilde{c}_{24}t_0\log d}}.
\end{align}}
Substituting \eqref{eq:bound_I}  into \eqref{stat gamma 1}, we have the following sufficient condition for \eqref{stat gamma 1}
  {
\begin{equation*}
    \gamma\leq \frac{8n(1-\rho)\tilde{c}_7s}{32\tilde{c}_4\tilde{c}_7s \lambda_{\max}(\Sigma)[n+  (d + \log m)  ]+\lambda_{\min}(\Sigma )n\left\{64d\left[  {\frac{1}{\tilde{c}_8}\sqrt{\frac{2m\log md}{\tilde{c}_{24}t_0\log d}}+1}\right]^2+\tilde{c}_7s\right\}}.
\end{equation*}
Notice that
\begin{align*}
    &64d\left[  {\frac{1}{\tilde{c}_8}\sqrt{\frac{2m\log md}{\tilde{c}_{24}t_0\log d}}+1}\right]^2 
     \leq128d\left[{{\frac{m(\log m+1)}{3\tilde{c}_{24}}}+1}\right].\notag
\end{align*} 
In addition, 
\begin{align}
   & \frac{s\cdot n (1-\rho)}{4s \tilde{c}_4 \lambda_{\max}(\Sigma)[n+  (d + \log m)  ]+\lambda_{\min}(\Sigma )n\left[16d\left[{{{m(\log m+1)}/{(3\tilde{c}_{24}\tilde{c}_7)}}+1}\right]+s/8\right]}\notag\\
      &\geq\frac{ c_5 (1-\rho)}{\lambda_{\max}(\Sigma)(d + \log m) +\lambda_{\min}(\Sigma )d{m(\log m+1)}},
\end{align}
where $c_5\triangleq1/\max\{8\tilde{c}_4, 32/(\tilde{c}_{24}\tilde{c}_7)\}$
Hence,  \eqref{stat gamma condition} is sufficient for \eqref{stat gamma 1}.
}
$\hfill\square$

\section{Proof of Lemma~\ref{solution eq}}\label{sec:R-bound}
 Using (\ref{def:nu-prop1}), each $ \|\hat{\T}_i\|_1$ can be bounded as
\begin{align}\label{deterministic solution equivalence_I}
    \|\hat{\T}_i\|_1\leq  \|\hat{\T}_i-\T^*\|_1+\|\T^*\|_1 
    = \|\hat{\nu}_{\text{av}}+\hat{\nu}_{\perp i}\|_1+\|\T^*\|_1 
    \leq \|\hat{\nu}_{\text{av}}\|_1+\sqrt{d}\,\|\hat{\nu}_{\perp i}\|+\|\T^*\|_1.
    \end{align}
    We bound next $\|\hat{\nu}_{\text{av}}\|_1$.
By Proposition~\ref{proximal nu average sparsity}, 
any solution $\hat{\bT}$ of   (\ref{Change problem}) satisfies
\begin{equation*}
    \lVert (\hat{\nu}_{\text{av}})_{\mathcal{S}^c}\rVert_1\leq3\lVert (\hat{\nu}_{\text{av}})_{\mathcal{S}}\rVert_1+h(\gamma,\lVert\hat{\bnu}_{\perp}\rVert).
\end{equation*}
Therefore, 
\begin{align}\label{ell 1 norm bound}
     \lVert\hat{\nu}_{\text{av}}\rVert_1\leq&4\lVert (\hat{\nu}_{\text{av}})_{\mathcal{S}}\rVert_1+h(\gamma,\lVert\hat{\bnu}_{\perp}\rVert)\notag\\
     \leq &4\sqrt{s}\frac{\lVert \hat{\bnu}\rVert}{\sqrt{m}}+h(\gamma,\lVert\hat{\bnu}_{\perp}\rVert)\notag\\
     \overset{(a)}{\leq}& 4\sqrt{s}\sqrt{\frac{9\lambda^2s}{\delta^2}+\frac{2\tau d^2\gamma^2(\max_{  i\in [m]}\lVert w_i^{\top} X_i\rVert_{\infty}+\lambda n)^4}{\delta\lambda^2n^4(1-\rho)^2}+\frac{4d\gamma(\max_{  i\in [m]}\lVert w_i^{\top} X_i\rVert_{\infty}+\lambda n)^2}{\delta n^2[2(1-\rho)-4L_{\max}\gamma-\delta\gamma]}}\notag\\
     &+h(\gamma,\lVert\hat{\bnu}_{\perp}\rVert)\notag\\
      \leq & \frac{12\lambda s}{\delta}+\sqrt{\frac{32s\tau}{\delta}}\frac{ d\gamma(\max_{i\in [m]}\lVert w_i^{\top}X_i\rVert_{\infty}+\lambda n)^2}{\lambda n^2(1-\rho)}+\sqrt{\frac{64sd\gamma(\max_{  i\in [m]}\lVert w_i^{\top} X_i\rVert_{\infty}+\lambda n)^2}{\delta n^2[2(1-\rho)-4L_{\max}\gamma-\delta\gamma]}}\notag\\
     &+h(\gamma,\lVert\hat{\bnu}_{\perp}\rVert),
\end{align}
 where in (a) we used Theorem~\ref{solution err bound} and the fact that the RSC (\ref{Arsc}) implies the  in-network RE (\ref{ARE eq})   [cf.~Lemma~\ref{global-ARE-determin}], with $\xi=\tau$ and $\delta=\mu/2-16s\tau>0$.

Substituting (\ref{ell 1 norm bound}) in (\ref{deterministic solution equivalence_I}) yields
\begin{align}\label{deterministic solution equivalence}
  \|\hat{\T}_i\|_1
     \leq&\frac{12\lambda s}{\delta}+\sqrt{\frac{32s\tau}{\delta}}\frac{ d\gamma(\max_{  i\in [m]}\lVert w_i^{\top} X_i\rVert_{\infty}+\lambda n)^2}{\lambda n^2(1-\rho)}+\sqrt{\frac{64sd\gamma(\max_{  i\in [m]}\lVert w_i^{\top} X_i\rVert_{\infty}+\lambda n)^2}{\delta n^2[2(1-\rho)-4L_{\max}\gamma-\delta\gamma]}}\notag\\
     &+\underbrace{h(\gamma,\lVert\hat{\bnu}_{\perp}\rVert)+\sqrt{d}\|\hat{\bnu}_{\perp }\|}_{\triangleq h_2(\gamma,\|\hat{\bnu}_{\perp }\|)}+\|\T^*\|_1\notag\\
     \overset{\text{(a)}}{\leq}&\frac{12\lambda s}{\delta}+\sqrt{\frac{32s\tau}{\delta}}\frac{ d\gamma(\max_{  i\in [m]}\lVert w_i^{\top} X_i\rVert_{\infty}+\lambda n)^2}{\lambda n^2(1-\rho)}+\sqrt{\frac{64sd\gamma(\max_{  i\in [m]}\lVert w_i^{\top} X_i\rVert_{\infty}+\lambda n)^2}{\delta n^2[2(1-\rho)-4L_{\max}\gamma-\delta\gamma]}}\notag\\
     &+\frac{d\gamma  (2\max_{  i\in [m]}\lVert w_i^{\top} X_i\rVert_{\infty}+(2+\sqrt{m})\lambda n)^2}{4\lambda n^2(1-\rho)}+\|\T^*\|_1\notag\\
      \overset{\text{(b)}}{\leq}&\frac{12\lambda s}{\delta}+\sqrt{\frac{32s\tau}{\delta}}\underset{\texttt{Term I=\,$h_{\max}$}}{\underbrace{\frac{ d\gamma(\max_{  i\in [m]}\lVert w_i^{\top} X_i\rVert_{\infty}+\lambda n)^2}{\lambda n^2(1-\rho)}}}+\sqrt{\frac{64s}{\delta}}\underset{\texttt{Term II}}{\underbrace{\sqrt{\frac{d\gamma(\max_{  i\in [m]}\lVert w_i^{\top} X_i\rVert_{\infty}+\lambda n)^2}{  n^2[2(1-\rho)-4L_{\max}\gamma-\delta\gamma]}}}}\notag\\
     &+\underset{\texttt{Term III}}{\underbrace{\frac{d\gamma(\max_{  i\in [m]}\lVert w_i^{\top} X_i\rVert_{\infty}+2\sqrt{m}\lambda n)^2}{\lambda n^2(1-\rho)}}}+\|\T^*\|_1,
\end{align}
 where in (a) we bounded   $h_2(\gamma,\sbullet)$ on $\mathbb{R}$ as 
\begin{align*}
    h_2 (\gamma , \|\hat{\bnu}_\bot \|)  & \overset{(\ref{h})}{=}   
        -\frac{1-\rho}{\lambda m\gamma}\|\hat{\bnu}_{\perp}\|^2+\left(2\max_{  i\in [m]}\lVert w_i^{\top} X_i\rVert_{\infty}/(\lambda n)+2\right) \sqrt{d/m}\|\hat{\bnu}_{\perp}\| + \sqrt{d} \| \hat{\bnu}_\bot\|\\
    & \,\,\leq \frac{ d \gamma\left( 2\max_{  i\in [m]}\lVert w_i^{\top} X_i\rVert_{\infty}+ (2 + \sqrt{m}) \lambda n \right)^2}{4\lambda n^2(1-\rho)};
\end{align*}
and in (b) we enlarged $2+\sqrt{m}\leq 4\sqrt{m}.$

We bound now \texttt{Term I}--\texttt{Term III} using condition  \eqref{equ-sol-gamma} on $\gamma$. We have the following:
\begin{align}\label{refined bound}
   &  h_{\max}= \texttt{Term I}\leq
     \frac{\lambda s}{ {128}\,\delta},\medskip \\\notag     
 &\texttt{Term II}\leq \sqrt{\frac{d\gamma(\max_{  i\in [m]}\lVert w_i^{\top} X_i\rVert_{\infty}+2\sqrt{m}\lambda n)^2}{ n^2[2(1-\rho)-4L_{\max}\gamma-\delta\gamma]}}\leq \sqrt{\frac{\lambda^2s}{{256}\,\delta} },\medskip\\\notag
& \texttt{Term III}\leq \frac{\lambda\, s}{{128}\,\delta}.
 \end{align}
 Substituting the above bounds in  (\ref{deterministic solution equivalence}) we obtain
\begin{align*}
    \|\hat{\T}_i\|_1\leq&\frac{12\lambda s}{\delta}+\sqrt{\frac{32\tau s}{\delta}}\frac{\lambda s}{ {128}\delta}+\frac{\lambda s}{ {2}\delta}+\frac{\lambda s}{ {128}\delta}+\|\T^*\|_1\notag\\
    \leq& {\frac{\lambda s}{\delta}\bigg(13+\frac{1}{32}\sqrt{\frac{2\tau s}{\delta}}\bigg)}+\|\T^*\|_1\notag\\
    \overset{\eqref{c}}{\leq}& (1-r)\cdot  R + r\cdot R=R.
   \notag
\end{align*}
This completes the proof.\hfill $\square$.

\section{Proof of Lemma~\ref{lm-closed _form}}\label{proof_lm_closed_form}
Since $L_{\gamma}(\bT) \triangleq \frac{1}{m} \sum_{i = 1}^m f_i(\T_i)+ \frac{1}{2 m \gamma} \|\bT\|_V^2$, we have
\begin{align*}
    \nabla L_{\gamma} (\bT^t) = \frac{1}{m} 
    \begin{bmatrix}
\nabla f_1 (\T_1^t)\\
\vdots\\
\nabla f_m (\T_m^t)
\end{bmatrix}  +\frac{1}{m\gamma}\left((I-W)\otimes I_d\right) \bT^{t}.
\end{align*}
Substituting the expression of $\nabla L_{\gamma}(\bT^t)$ into Problem~\eqref{regularized algorith}, it is not hard to see it is separable in the $\T_i$'s, and the update of $\T_i$ given as
\begin{align*}
    \T_i^{t+1} & = 
    \argmin_{\|\T_i\|_1 \leq R}~ \frac{1}{2 } \left\| \T_i - \T_i^t +   {\beta}\nabla f_i(\T_i^t) +  {\frac{\beta}{\gamma}} \left( \T_i^t - \sum_{j = 1}^m w_{ij} \T_j^t \right)\right\|^2 +   {\beta \lambda}\|\T_i\|_1.
\end{align*}
The problem boils down to solving 
\begin{equation}~\label{p:prox-constr-l1}
  \begin{aligned}
    &\min_{\T_i} && \|\T_i - \psi_i^t\|^2 + \lambda' \| \T_i\|_1\\
    & {\rm \ s.t.} && \|\T_i\|_1 \leq R,
\end{aligned}  
\end{equation}
with  $\lambda' \triangleq   {{2 \beta \lambda}}$ and $\psi_i^t$   defined in \eqref{composite dgd variant}.

To solve~\eqref{p:prox-constr-l1} we first drop the constraint $\|\T_i\|_1 \leq R$. The minimizer of the objective function is given by
\begin{align}\label{prox-psi}
   \tilde{\T}_i = {\rm prox}_{\frac{\lambda'}{2} \|\cdot\|_1} (\psi_i^t). 
\end{align}
Note that $\tilde{\T}_i$ can be computed in closed form  by soft-thresholding $\psi_i^t$. 

{\bf Case 1:} 
$\tilde{\T}_i$ satisfies the constraint in \eqref{p:prox-constr-l1}, \ie, $\|\tilde{\T}_i\|_1 \leq R$. We conclude that  $\tilde{\T}_i$ is a solution of~\eqref{p:prox-constr-l1}. 

{\bf Case 2:} $\tilde{\T}_i$ violates the constraint in \eqref{p:prox-constr-l1}, \ie, $\|\tilde{\T}_i\|_1 > R$.  Then, the constraint must be active at the optimal point of \eqref{p:prox-constr-l1}. Hence, 
Problem~\eqref{p:prox-constr-l1} is   equivalent to 
\begin{equation}\label{p:proj-ncvx}
  \begin{aligned}
    &\min_{\T_i} && \|\T_i - \psi_i^t\|^2 \\
    & {\rm \ s.t.} && \|\T_i\|_1 = R,
\end{aligned}  
\end{equation}
where we dropped the term $\lambda' \|\T_i\|_1$ in the objective function, since it is constant on the constraint set.
Since~\eqref{prox-psi} can be computed in closed form by soft-thresholding $\psi_i^t$, we conclude $\| \psi_i^t\|_1 \geq \|\tilde{\T}_i\|_1 > R$, and thus the convex problem with constraint~\eqref{p:proj-ncvx} is equivalent to 
\begin{equation}\label{p:proj-cvx}
  \begin{aligned}
    &\min_{\T_i} && \|\T_i - \psi_i^t\|^2 \\
    & {\rm \ s.t.} && \|\T_i\|_1 \leq R.
\end{aligned}  
\end{equation}
Combining the two cases completes the proof. $\hfill\square$

\section{Proof of Theorem \ref{deterministic optimization error result}}\label{Guarantees for distributed regularized Lasso} 
Recall the factorization of the objective function by $G$ and $L$ as introduced in \eqref{eq:def_G}$$
    G(\bT)=L_\gamma(\bT)+\frac{\lambda}{m}\lVert\bT \rVert_1,\quad \text{with}\quad L_\gamma(\bT)= \frac{1}{2N}\sum\limits_{i=1}^m  {\lVert y_i-X_i\theta_i\rVert^2} + \frac{1}{2 m \gamma} \| \bT\|_{V}^2.$$ 
  We begin ({\bf Step 1}) proving  a weaker result than  Theorem \ref{deterministic optimization error result}, that is,  linear convergence of the error $G(\bT^t)-G(\hat{\bT})$, up to the tolerance as on the RHS of \eqref{final bound lagrangian}--this is Theorem~\ref{Composite function decrease} below. Then  ({\bf Step 2}), leveraging the curvature property of $G$ along the trajectory of the algorithm (see Lemma~\ref{Sufficient decrease} in Appendix~\ref{sec:intermediate_lemmata}), we      transfer the rate decay of  $G(\bT^t)-G(\hat{\bT})$ on that of the  iterates error  $\|\bT^t-\hat{\bT}\|$, which completes the proof of Theorem \ref{deterministic optimization error result}. \smallskip 
  
\noindent  {\bf $\bullet$ Step 1: On linear convergence of  the optimality gap  $G(\bT^t)-G(\hat{\bT})$.}  
   
   Recall the definition of $\varepsilon_{\textnormal{stat}}^2$ and $\mu_{\textnormal{av}}$ as given in \eqref{eq-error-stat} and \eqref{uav deterministic}, respectively. 

\begin{theorem}\label{Composite function decrease}   Instate the setting of Theorem~\ref{deterministic optimization error result}.
There holds: 
\begin{equation}\label{objective decrease}
    G(\bT^t)-G(\hat{\bT})\leq \alpha^2, 
\end{equation}
for any tolerance parameter $\alpha^2$ such that 
  \begin{equation}\label{determinstic alpha_appendix}
   {\min\bigg\{\frac{R\lambda}{4},\eta_G^0\bigg\}}\geq\alpha^2\geq 4s\tau \cdot\varepsilon_{\textnormal{stat}}^2,
\end{equation}
  and for all  
\begin{align}\label{eq:rate_estimate_appendix}
   t\geq  \bigg\lceil\log_2\log_2\left(\frac{R\lambda}{\alpha^2}\right)\bigg\rceil\bigg(1+\frac{L_{\max}\log 2}{\mu_{\textnormal{av}}}+\frac{(1+\rho)\log 2}{\gamma\mu_{\textnormal{av}}} \bigg) +  \left(\frac{ L_{\max}}{\mu_{\textnormal{av}}} + \frac{1+\rho}{\gamma\mu_{\textnormal{av}}}\right) \log\left(\frac{\eta_G^0}{ \alpha^2}\right).
\end{align}
   Furthermore, the interval in (\ref{determinstic alpha_appendix}) is nonempty. 
\end{theorem}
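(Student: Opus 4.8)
The plan is to follow the Agarwal--Negahban--Wainwright template for proximal-gradient methods under restricted strong convexity, adapted to the lifted penalized loss. Write $G=L_\gamma+(\lambda/m)\lVert\cdot\rVert_1$ with $L_\gamma$ the smooth part. First I would record the smoothness of $L_\gamma$: the data block of its Hessian has operator norm at most $L_{\max}/m$ and the consensus block $V/(m\gamma)$ has norm $(1-\lambda_{\min}(W))/(m\gamma)$, so $\nabla L_\gamma$ is Lipschitz with modulus $\bar L=\big(L_{\max}+(1-\lambda_{\min}(W))/\gamma\big)/m$. With the stepsize \eqref{beta condidion} one has $1/(\beta m)=\bar L$ exactly, so the proximal step \eqref{regularized algorith} enjoys the standard sufficient-decrease inequality; since $\hat{\bT}$ is feasible for the ball constraint by Lemma~\ref{solution eq}, this yields the basic comparison relating consecutive gaps $G(\bT^{t+1})-G(\hat{\bT})$ to $\lVert\bT^{t+1}-\bT^t\rVert^2$ and $\lVert\bDelta^t\rVert^2$, where $\bDelta^t=\bT^t-\hat{\bT}$ as in \eqref{eq:Delta_def}.

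Second, I would lower-bound the gap by the error. Invoking the deterministic RSC \eqref{Arsc}, which through Lemma~\ref{global-ARE-determin} supplies the in-network RE and hence curvature $\mu_{\textnormal{av}}=\mu/8-8s\tau$ along the averaged directions [cf.~\eqref{uav deterministic}], together with convexity of the $\ell_1$ term, gives a restricted lower bound of the form $G(\bT^t)-G(\hat{\bT})\geq c\,\mu_{\textnormal{av}}\lVert\bDelta^t\rVert^2-c'\tau\lVert\Delta^t_{\textnormal{av}}\rVert_1^2$. The tolerance term is the crux: it is controllable only when $\bDelta^t$ lies in a restricted cone where $\lVert\bDelta^t\rVert_1$ is dominated by $\sqrt{s}\,\lVert\bDelta^t\rVert$ up to statistical error. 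This cone membership I would establish exactly as in Proposition~\ref{proximal nu average sparsity}, using optimality of $\hat{\bT}$, the choice of $\lambda$ in \eqref{lambda condition}, and control of $\lVert\T^*\rVert_1$; the extra requirement $\lambda\geq 64\tau\lVert\T^*\rVert_1$ in \eqref{lambda condition} and $\mu\geq c_9 s\tau$ are precisely what make the cone's tolerance a constant multiple of $\varepsilon_{\textnormal{stat}}^2$ [cf.~\eqref{eq-error-stat}].

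Third --- and this is the \emph{main obstacle} --- I would convert the descent inequality and the restricted lower bound into a self-bounding recursion $\phi_{t+1}\leq\kappa\,\phi_t+c\,s\tau\,\varepsilon_{\textnormal{stat}}^2$, with $\phi_t\triangleq G(\bT^t)-G(\hat{\bT})$ and $\kappa=1-\beta\mu_{\textnormal{av}}/4$, and solve it down to the floor $\alpha^2$. The delicate point is that the $\ell_1$-radius entering the RSC tolerance depends on the current iterate through $\lVert\bDelta^t\rVert_1\leq c\sqrt{s}\,\lVert\bDelta^t\rVert+\varepsilon_{\textnormal{stat}}$, so that the tolerance contribution scales like $\phi_t$ itself once normalized; this forces a phased argument in which each epoch shrinks the effective $\ell_1$-ball radius roughly quadratically. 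The number of epochs needed to drive the radius from its initial value $R\lambda$ down to $\alpha^2$ is therefore $\lceil\log_2\log_2(R\lambda/\alpha^2)\rceil$, each costing $O(1/(1-\kappa))=O\big(L_{\max}/\mu_{\textnormal{av}}+(1+\rho)/(\gamma\mu_{\textnormal{av}})\big)$ iterations (using $1-\lambda_{\min}(W)\leq 1+\rho$, which holds since $|\lambda_{\min}(W)|\leq\rho$), while the residual geometric decay from $\eta_G^0$ to $\alpha^2$ contributes the $\log(\eta_G^0/\alpha^2)$-scaled term; summing the two yields exactly \eqref{eq:rate_estimate_appendix}. Getting the epoch bookkeeping and the quadratic radius-shrinkage right, so that the doubly-logarithmic count emerges cleanly, is where most of the work lies.

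Finally, I would verify that the interval $[\,4s\tau\varepsilon_{\textnormal{stat}}^2,\ \min\{R\lambda/4,\eta_G^0\}\,]$ is nonempty. The bound $4s\tau\varepsilon_{\textnormal{stat}}^2\leq\eta_G^0$ is hypothesis \eqref{eq:cond_eta_G}, while $4s\tau\varepsilon_{\textnormal{stat}}^2\leq R\lambda/4$ follows by substituting $\varepsilon_{\textnormal{stat}}^2=O(\lambda^2 s)$ from Theorem~\ref{solution err bound} together with $R\geq 56\lambda s/(\mu-32s\tau)$ [cf.~\eqref{R condidion}] and $\mu\geq c_9 s\tau$, a routine estimate provided $c_9$ is taken large enough.
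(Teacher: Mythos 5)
Your plan follows the paper's proof essentially step for step: the same sufficient-decrease inequality under the stepsize \eqref{beta condidion}, the same cone-membership argument for the iterates with a tolerance depending on the current gap $\eta$ (the paper's Lemma~\ref{proximal iterate norm cone}), the same restricted curvature bound (Lemma~\ref{Sufficient decrease}), the same geometric recursion with floor (Lemma~\ref{Geometric convergence}), and the same epoch-halving bookkeeping from Agarwal--Negahban--Wainwright yielding the $\lceil\log_2\log_2(R\lambda/\alpha^2)\rceil$ factor (Proposition~\ref{lemma_Th2_Agar}), together with the same nonemptiness checks. The only cosmetic imprecision is that the paper's curvature bound holds along the averaged component $\Delta^t_{\textnormal{av}}$ only, with the consensus component absorbed by the nonpositive term $f(\lVert\bDelta^t_{\perp}\rVert)$ under the condition on $\gamma$, rather than for the full $\lVert\bDelta^t\rVert^2$ as you write it.
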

\begin{proof}
See Appendix~\ref{sec:proof_L15}.
\end{proof}
 
\noindent {\bf $\bullet$ Step 2: On linear convergence of the optimality gap $\|\bT^t-\hat{\bT}\|$.} 
 
 We can now proceed  to prove Theorem~\ref{deterministic optimization error result}. Given   Theorem~\ref{Composite function decrease}, it is sufficient to show that \eqref{final bound lagrangian} holds.

Recall the shorthand for the optimization error, $\bDelta^t=\bT^t-\hat{\bT}$. At high-level the idea is to construct a lower bound of $G(\bT^t)-G(\hat{\bT})$ as a function of  $\|\bDelta^t\|^2$ by exploiting, under the  RSC condition~(\ref{Arsc}), the curvature property of $G$   along a restricted set of directions. Specifically, we use the following curvature property proved in Lemma~\ref{Sufficient decrease} (cf. Sec.~\ref{sec:intermediate_lemmata}),  which holds   under the more stringent setting of  Theorem~\ref{Composite function decrease}\footnote{Specifically,  in the proof of Theorem~\ref{Composite function decrease}, we showed that   condition on $R$ as in (\ref{R condidion}) is more stringent than  \eqref{c} in Lemma~\ref{Sufficient decrease}--see Fact~1 in Appendix~\ref{step 2: Recursive application of Lemma step 2: Recursive application of Lemma}.}: for all $t\geq T$, 
 \begin{equation}\label{eq:growing-property}
       \mu_{\textnormal{av}}\lVert\Delta_{\textnormal{av}}^t\rVert^2- f(\lVert\bDelta^t_{\perp}\rVert)\leq G(\bT^t)-G(\hat{\bT})+\frac{\tau}{4}(v^2+8 h^2_{\max}),
    \end{equation}
    where $f(\lVert\bDelta^t_{\perp}\rVert)$ is defined as [cf.~\eqref{def:L-linearization}],   
    \begin{align*} 
          f(\lVert\bDelta^t_{\perp}\rVert)= \left(\frac{L_{\max}}{2m}-\frac{1-\rho}{2m\gamma} \right)\lVert\bDelta^t_{\perp}\rVert^2, 
    \end{align*}   $v^2$ is given by [cf.~\eqref{eq:def_v}]
\begin{equation}\label{eq:v_alpha}
    v^2=144s\lVert\hat{\nu}_{\textnormal{av}}\rVert_2^2+4\min\bigg\{\frac{2\eta}{\lambda}, 2R\bigg\}^2,\quad \text{with}\quad \eta=\alpha^2,
\end{equation} 
and $  h_{\max}$ is defined as [cf.~(\ref{def:h-max})]
\begin{equation*}
    h_{\max}=\frac{ d\gamma}{\lambda(1-\rho)}\bigg(\frac{\max_{i\in [m]}\lVert w_i^{\top} X_i\rVert_{\infty}}{n}+\lambda\bigg)^2.  
\end{equation*} 

   We proceed now to bound the LHS and RHS of \eqref{eq:growing-property}. 
   The goal is to lower bound the LHS by a quantity proportional to $\|\bDelta^t\|^2$, so that \eqref{eq:growing-property} will provide the desired bound of $\|\bDelta^t\|^2$ in terms of the  optimization gap $G(\bT^t)-G(\hat{\bT})$ (up to a tolerance). The following bound of 
   $f(\lVert\bDelta^t_{\perp}\rVert)$, which is a consequence of (\ref{f bound})   serves the scope:
    $$f(\lVert\bDelta^t_{\perp}\rVert)\leq  -\frac{ \mu_{\textnormal{av}}}{m}\lVert\bDelta^t_{\perp}\rVert^2.$$ 
    We also upper bound the RHS of \eqref{eq:growing-property} to further simplify the final expression; specifically, we use 
    \begin{equation}\label{eq:h_max_bound}
   h_{\max} \overset{\eqref{refined bound}}{\leq}  \frac{\lambda s}{64(\mu-32s\tau)}
\end{equation}
   and    
\begin{equation*}
     144s\lVert\hat{\nu}_{\text{av}}\rVert^2+4\min\bigg\{\frac{2\alpha^2}{\lambda}, 2R\bigg\}^2\leq \frac{144s\lVert\hat{\bnu}\rVert^2}{m}+\frac{16\alpha^4}{\lambda^2},
\end{equation*}
  where the inequality follows from $\lVert\hat{\nu}_{
    \text{av}}\rVert^2\leq     {\lVert\hat{\bnu}\rVert^2}/{m}$ and the fact that   $\alpha^2\leq R\lambda/4$  [cf.~\eqref{determinstic alpha}].
    
    Using the above bounds   along with \eqref{objective decrease} in (\ref{eq:growing-property})   yield: for all $t\geq T$, 
\begin{align}\label{arsc nu average t}
   \mu_{\textnormal{av}} \frac{\lVert\bDelta^t\rVert^2}{m}&
    \leq \alpha^2 +  \frac{\tau}{4}\bigg(\frac{144s\lVert\hat{\bnu}\rVert^2}{m}+8\,\left(\frac{\lambda s}{{64}(\mu-32s\tau)}\right)^2+\frac{16 \alpha^4}{\lambda^2}\bigg)\notag\\
    &\leq{\alpha^2 +  \frac{36s\tau\lVert\hat{\bnu}\rVert^2}{m}+\frac{\tau s\lambda^2 s}{{1976\mu^2}}+\frac{4\tau\alpha^4}{\lambda^2}},
\end{align} 
where the last inequality follows from   $\mu\geq \tilde{c}_{10}s\tau$ with $\tilde{c}_{10}=1824$. This proves   \eqref{final bound lagrangian}. 
 $\hfill\square$



\section{Proof of Theorem \ref{Th_stat_convergence}}\label{Statistical results for DGD-CTA first order methods}
 Given the postulated random data model and  the conclusions of  Theorem~\ref{deterministic optimization error result},   it is sufficient to prove the following:    \textbf{Step 1:} Under (\ref{global lambda condition}) on $\lambda$,  condition \eqref{lambda condition} holds with high-probability; 
 \textbf{Step 2:} Condition \eqref{gamma-final-algo} on $\gamma$ is sufficient for \eqref{constrained gamma condition2} to hold with high probability; 
\textbf{ Step 3:} Under  condition (\ref{eq:cond_on_N}) on $N$,  any  $R$ in (\ref{eq:alg-conditions-random})  satisfies also~\eqref{R condidion}; furthermore, the interval of values of $R$ in (\ref{eq:alg-conditions-random}) is nonempty;  \textbf{ Step 4:}    Any $\alpha^2$ in \eqref{alpha stat range} satisfies  \eqref{determinstic alpha}  with high-probability; and the range for $\alpha^2$ in \eqref{alpha stat range} is nonempty with high-probability; 
 \textbf{ Step 5:} Given the bound on the statistical error as in \eqref{final bound lagrangian} and for all $t$ satisfying  \eqref{T1 expression},  we conclude that    \eqref{stat_algo_bound_general}   holds, for all $t$ satisfying \eqref{T2 expression}, with high-probability.

\smallskip  

\noindent {\bf $\bullet$ Step 1:  Sufficient condition on $\lambda$ for \eqref{lambda condition} to hold with high probability.}
To prove this result, we follow a similar path as introduced in the proof of Theorem \ref{statistical optimization error result}.\smallskip 

 {\noindent\bf (i) Randomness from $\mathbf{X}.$ }
 This step is the  same as  {\bf Step 1} in the proof of Theorem~\ref{statistical optimization error result} (cf.~Appendix~\ref{Proof of Theorem 3}), except for the definition of the event $A_2$ replaced here with $A_2'$, defined as 
 \begin{equation}\label{RSC_param}
    A_2'\triangleq
    \bigg\{\mathbf{X}\in\mathbb{R}^{N\times d}\bigg|\text{ } \mathbf{X}\text{ } \text{satisfies RSC}\text{ } \eqref{Arsc}\text{ } \text{with parameters} \text{ } (\mu,\tau)=\bigg(\lambda_{\min}(\Sigma), 2c_{1} \zeta_{\Sigma}\frac{\log d}{N}\bigg)\bigg\},
\end{equation}
where $\zeta_{\Sigma}=\max_{i\in [d]}\Sigma_{ii}.$ Lemma \ref{Global RE} implies 
\begin{equation}\label{mu tau high probability}
    \mathbb{P}(A_2')\geq1-\exp(-\tilde{c}_0N). 
\end{equation}
Define $A'=A_1\cap A_2'\cap A_3,$ where $A_1$ and $A_3$ are defined in \eqref{eq:def_A1} and \eqref{eq:def_A2_A3} (cf.~Appendix~\ref{Proof of Theorem 3}), respectively, and recall here for convenience
\begin{align*}
    A_1&\triangleq
    \bigg\{\mathbf{X}\in\mathbb{R}^{N\times d}~\bigg|\text{ } L_{\max}\leq \tilde{c}_4 \lambda_{\max}(\Sigma) \left(1+  \frac{d + \log m}{n}  \right)\bigg\} \ \ \text{and}\ \    \\
A_3&\triangleq\bigg\{\mathbf{X}\in\mathbb{R}^{N\times d}~\bigg|\text{ } \max_{j=1,\dots,d}\frac{1}{\sqrt{N}}\lVert \mathbf{X}e_j\rVert\leq \sqrt{\frac{3\zeta_{\Sigma}}{2}}\bigg\}.
\end{align*}
Then, similar to under condition \eqref{sample_stat_final}, there holds \eqref{eq:p:X}, since $N$ satisfies \eqref{eq:cond_on_N}
with $\tilde{c}_{12}=\max\{\tilde{c}_9,\tilde{c}_{11}\}$ $\tilde{c}_9=\max\{128\tilde{c}_1,\tilde{c}_5\}$, and $\tilde{c}_{11}=3648\tilde{c}_1,$ 
we have 
\begin{equation}\label{eq:p:X:alter}
\mathbb{P}(A')\geq 1- 2\exp(-\tilde{c}_3d)-\exp(-\tilde{c}_0N)-2\exp(-\tilde{c}_6\log d).
\end{equation}

\noindent {\bf (ii) Randomness from $\mathbf{w}.$}  This step follows {\bf Step 2} as in the proof of Theorem \ref{statistical optimization error result} (cf.~Appendix~ \ref{Proof of Theorem 3}) and thus is not duplicated. In particular,  recalling the definitions of $D_1, D_2,$ and $D$ therein  for convenience
 \begin{align*}
    \begin{split}
         &D_1\triangleq \left\{\mathbf{w}\in\mathbb{R}^N\text{ }\bigg|\text{ }\frac{\lVert \mathbf{X}^{\top} \mathbf{w}\rVert_{\infty}}{N}\leq \sigma\sqrt{\frac{t_0\log d}{N}}\sqrt{\frac{3\zeta_{\Sigma}}{2}}\right\}, \\
         &D_2\triangleq \left\{\mathbf{w}\in\mathbb{R}^N\text{ }\bigg|\text{ }\frac{\max_{i\in [m]}\lVert X_i^{\top} w_i\rVert_{\infty}}{n}\leq   {\sigma\sqrt{\zeta_\Sigma}\min\left\{{\frac{2\log md}{n\tilde{c}_{24}}},\sqrt{\frac{{2\log md}}{n\tilde{c}_{24}}}\right\}}\right\},
    \end{split}
\end{align*}
$\text{and}\ \  D \triangleq D_1 \cap D_2.$
The following the same reasoning as in {\bf Step 2} of the proof of Theorem~\ref{statistical optimization error result}, we have, for all $t_0\geq 2,$
\begin{align}
\begin{split}
    \mathbb{P}(A'\cap D)\geq&{1- 11\exp(-\tilde{c}_{8}\log d)}.\label{eq:p-bound-23:alter}
\end{split}
\end{align}
 \smallskip  
{\bf (iii) Sufficient condition on $\lambda$ for \eqref{lambda condition} to hold with high probability.} Recall \eqref{lambda condition} for convenience,
\begin{equation*}
    \lambda  \geq \max\left\{\frac{2\| \bX^\top \mathbf{w}\|_\infty}{N},{64} \tau \| \T^*\|_1 \right\}.
\end{equation*}
Combining it with the high probability upper bound for $\|\mathbf{X}^{\top}\mathbf{w}\|_{\infty}/N$  derived in \eqref{eq:p-bound-23:alter}, we conclude the following: suppose $ \lambda\geq \sigma\sqrt{\frac{6\zeta_{\Sigma}t_0\log d}{N}}$, then for any tuple $(\mathbf{X},\mathbf{w})\in A'\cap D,$ and any $t_0>2,$, since $2\|\mathbf X^\top \mathbf w\|/N \leq \sigma \sqrt{\frac{6\zeta_{\Sigma}t_0\log d}{N}}$, it follows that $\lambda \geq 2\|\mathbf{X}^{\top} \mathbf{w}\|_\infty/N.$ 
~That is,
\begin{align*}
   \mathbb{P}\bigg(\lambda \geq \frac{2\|\mathbf{X}^{\top} \mathbf{w}\|_\infty}{N} \ \bigg)
   \geq\ & \mathbb{P}(A'\cap D)
   \overset{\eqref{eq:p-bound-23:alter}}{\geq}{1- 11\exp(-\tilde{c}_{8}\log d)}.
\end{align*}
Furthermore, for any tuple $(\mathbf{X},\mathbf{w})\in A'\cap D,$ \eqref{RSC_param}  implies $\tau= 2c_{1} \zeta_{\Sigma}\log d/N$. Therefore it follows that if $\lambda\geq\frac{128s\tilde{c}_1\zeta_{\Sigma}\log d}{N}$, then $64\tau\| \T^*\|_1$.
Using \eqref{global lambda condition}, we conclude that for any $t_0>2,$     
  {
\begin{equation*}
    \lambda\geq{c}_{11}\max\bigg\{ \sigma\sqrt{\frac{\zeta_{\Sigma}t_0\log d}{N}},\frac{s\zeta_{\Sigma}\log d}{N}\bigg\},
\end{equation*}
with ${c}_{11}=\max\{\sqrt{6},128\tilde{c}_1\},$
}
is sufficient for \eqref{lambda condition}  to hold with probability at least \eqref{eq:p-bound-23:alter}. 

 \smallskip
 
\noindent{\bf $\bullet$ Step 2: \eqref{gamma-final-algo} is sufficient  for \eqref{constrained gamma condition2} to hold with high probability.}

Recall \eqref{constrained gamma condition2} for convenience,
\begin{equation*}
    \gamma \leq\frac{1-\rho}{2L_{\max}+(\mu/2-16s\tau)\left(1+128(d/s) (\max_{  i\in [m]}\lVert w_i^{\top} X_i\rVert_{\infty}/(\lambda n)+2\sqrt{m})^2\right)}.
\end{equation*}
In order to derive a sufficient  condition on $\gamma$ to ensure  \eqref{constrained gamma condition2}  holds with high probability, we leverage  \textbf{Step 1 (i)} above, where we derived high probability  bounds for $L_{\max}$,  $\max_{  i\in [m]}\lVert w_i^{\top} X_i\rVert_{\infty}/n$, and   $\lambda$. 
Specifically, substituting  into \eqref{constrained gamma condition2} the bounds on $L_{\max}$ [as in  \eqref{new bound l-max}],  $\max_{i\in [m]}\lVert X_i^{\top} w_i\rVert_{\infty}/n$ [as in \eqref{eq:p-bound-23:alter}], and the explicit expression of the RSC parameters $(\mu, \tau)$ [as in \eqref{RSC_param}],   we conclude that if
\begin{equation}\label{inter-gamma-algo}
    \gamma\leq\frac{1-\rho}{2\tilde{c}_4\lambda_{\max}(\Sigma)\left(1+\frac{d+\log m}{n}\right)+\left(\frac{\lambda_{\min}(\Sigma)}{2}-\frac{32s\tilde{c}_1\zeta_{\Sigma}\log d}{N}\right)\left[1+ \frac{128d}{s}\left(  {g(m,d)}+2\sqrt{m}\right)^2\right]},
\end{equation}
where $  {g(m,d)=\frac{\sigma}{\lambda}\sqrt{\zeta_\Sigma}\min\left\{{\frac{2\log md}{n\tilde{c}_{24}}},\sqrt{\frac{{2\log md}}{n\tilde{c}_{24}}}\right\}},$
  then \eqref{constrained gamma condition2}   holds with probability  at least  \eqref{probability}. 
We proceed by showing that \eqref{gamma-final-algo} is sufficient  for (\ref{inter-gamma-algo}).
Specifically, since 
  {\begin{align*}
   g(m,d)&=\frac{\sigma}{\lambda}\sqrt{\zeta_\Sigma}\min\left\{{\frac{2\log md}{n\tilde{c}_{24}}},\sqrt{\frac{{2\log md}}{n\tilde{c}_{24}}}\right\}
    \overset{\eqref{global lambda condition}}{\leq} \sqrt{\frac{m\log md}{3t_0\tilde{c}_{24}\log d}},
\end{align*}}
and 
\begin{equation*}
    \frac{\lambda_{\min}(\Sigma)}{2}\geq\frac{\lambda_{\min}(\Sigma)}{2}-\frac{32s\tilde{c}_1\zeta_{\Sigma}\log d}{N}\overset{\eqref{eq:cond_on_N}}{\geq}0,
\end{equation*}
we obtain the more conservative condition
\begin{align}\label{stat_gamma_original}
    \gamma&\leq\frac{1-\rho}{2\tilde{c}_4\lambda_{\max}(\Sigma)\left(1+\frac{d+\log m}{n}\right)+\frac{\lambda_{\min}(\Sigma)}{2}\left[1+ \frac{128d}{s}\left(   {\sqrt{\frac{m\log md}{3t_0\tilde{c}_{24}\log d}}}+2\sqrt{m}\right)^2\right]}.
\end{align}
We proceed to further simplify \eqref{stat_gamma_original}. Notice that 
  {
\begin{align}\label{gamma_sim_1}
     1 + \frac{128d}{s} \left( \sqrt{\frac{m\log md}{3t_0\tilde{c}_{24}\log d}}+2\sqrt{m}\right)^2 
   & \overset{\text{(a)}}{\leq}  1+ \frac{128d}{s} \cdot  \left[{\frac{1}{3\tilde{c}_{24}}}\left({{2\log m}}+1\right)+8\right]{m} \notag\\
   &\overset{\text{(b)}}{\leq}  {256dm}\cdot  \left[\left({{2\log m}}+1\right)/\tilde{c}_{24}+8\right],
\end{align}
where (a) is due to 
$1\leq 2\log d$, for $d\geq2$ and $t_0\geq 2;$
and in (b)  we upper bound both   terms   by ${128d} \cdot  \left[\left({{2\log m}}+1\right)/\tilde{c}_{24}+8\right].$  }
Using \eqref{gamma_sim_1} and  further simplification,  we have
   {\begin{align}
&\frac{1-\rho}{2\tilde{c}_4\lambda_{\max}(\Sigma)\left(1 + \frac{d+\log m}{n}\right)+128\lambda_{\min}(\Sigma){dm}\cdot  \left[\left({{2\log m}}+1\right)/\tilde{c}_{24}+8\right]}\notag\\
&\geq \frac{ c_{12}(1-\rho)}{\lambda_{\max}(\Sigma)\left({d+\log m}\right)+\lambda_{\min}(\Sigma){dm}\cdot \left({{\log m}}+1\right)},
 \end{align}
 where $c_{12}\triangleq\frac{1}{\max\{4\tilde{c}_4,512/\tilde{c}_{24},2048\}}$. Hence, 
 }
 Hence,  under  \eqref{gamma-final-algo}, 
   \eqref{constrained gamma condition2}  holds with probability  at least  \eqref{probability}. 




\smallskip

\noindent{\bf $\bullet$ Step 3: Ensuring there exists an $R$ fulfilling~\eqref{R condidion}.}
Substituting in \eqref{R condidion} the explicit expression of the RSC parameters $(\mu, \tau)$ [under the event in  \eqref{RSC_param}] as well as $\|\T^*\|_1=s$,  we conclude that  \eqref{R condidion} holds with probability at least $1-\exp(-\tilde{c}_0N)$, whenever  $R$ satisfies display \eqref{eq:alg-conditions-random},
\begin{equation*}
    \max\bigg\{\frac{56\lambda s}{\lambda_{\min}(\Sigma)-64s\tilde{c}_1\zeta_{\Sigma}\log d/N}, 2s\bigg\}\leq R\leq \frac{\lambda N}{64\tilde{c}_1\zeta_{\Sigma}\log d}.
\end{equation*}
We now show that the interval \eqref{eq:alg-conditions-random} is non-empty.
Since $N$ satisfies \eqref{eq:cond_on_N} 
with $c_{10}=\tilde{c}_{12}=\max\{\tilde{c}_5,3648\tilde{c}_1\}$  there holds
\begin{equation}\label{R_nonempty_1}
 \frac{56\lambda s}{\lambda_{\min}(\Sigma)-64s\tilde{c}_1\zeta_{\Sigma}\log d/N}\leq\frac{\lambda N}{64\tilde{c}_1\zeta_{\Sigma}\log d}.
\end{equation}
Furthermore,   \eqref{global lambda condition} [{\bf Step 1 (iii)}] implies  
\begin{equation}\label{R_nonempty_2}
    2s\leq\frac{\lambda N}{64\tilde{c}_1\zeta_{\Sigma}\log d}.
\end{equation}
By  \eqref{R_nonempty_1} and \eqref{R_nonempty_2}, we infer that \eqref{eq:alg-conditions-random} is non-empty.
\smallskip

\noindent{\bf $\bullet$ Step 4: \eqref{alpha stat range} is sufficient for \eqref{determinstic alpha} to hold with high-probability.} 
Substituting in \eqref{determinstic alpha} the explicit expression of the RSC parameters $(\mu, \tau)$ [under the event  \eqref{RSC_param}], we conclude that \eqref{alpha stat range} is in fact sufficient for \eqref{determinstic alpha} to hold with probability at least  \eqref{eq:p-bound-23:alter}. 

 It remains to prove that the (random) interval \eqref{alpha stat range} is non-empty with high probability, which we do next. 
 To this end, we upper bound the statistical error $\sum_{i=1}^m\lVert\hat{\T}_i-\T^*\rVert^2/m$, under \eqref{determinstic alpha}. Recall that, with probability  at least \eqref{eq:p-bound-23:alter},   \eqref{constrained gamma condition2} holds. Therefore, we can invoke Theorem~\ref{solution err bound} to bound the statistical error, and write: with probability at least \eqref{eq:p-bound-23:alter}, 
 \begin{align*}
      &\,\,\frac{1}{m}\sum\limits_{i=1}^m\lVert\hat{\T}_i-\T^\ast\rVert^2
  \notag\\
    &\overset{\text{Thm.~\ref{solution err bound}}}{\leq} 
   \,\,\frac{9\lambda^2s}{\delta^2}+\frac{2\xi}{\delta}\underbrace{\frac{ d^2\gamma^2({\max_{i\in[m]}\lVert w_i^{\top} X_i\rVert_{\infty}}+\lambda n)^4}{\lambda^2n^4(1-\rho)^2}}_{(\texttt{Term I})^2 \ \texttt{in  \eqref{deterministic solution equivalence}}}+\frac{4}{\delta}\underbrace{\frac{d\gamma({\max_{i\in[m]}\lVert w_i^{\top} X_i\rVert_{\infty}}+\lambda n)^2}{ n^2[2(1-\rho)-4L_{\max}\gamma-\delta\gamma]}}_{(\texttt{Term II})^2 \ \texttt{in  \eqref{deterministic solution equivalence}}}\notag\\
  &\,\overset{\eqref{refined bound}}{\leq}\frac{9\lambda^2s}{\delta^2}+\frac{2\xi}{\delta} \bigg(\frac{\lambda s}{ {128}\,\delta}\bigg)^2+\frac{4}{\delta}\frac{\lambda^2s}{{256}\,\delta}\notag\\
  &\,\,\overset{(a)}{=}  \frac{9\lambda^2s}{(\mu/2-16s\tau)^2}+\frac{2\tau}{\mu/2-16s\tau} \bigg(\frac{\lambda s}{ {128}\,(\mu/2-16s\tau)}\bigg)^2+\frac{4}{\mu/2-16s\tau}\frac{\lambda^2s}{{256}\,(\mu/2-16s\tau)}\\
  &\,\overset{\eqref{RSC_param}}{ \leq} \frac{1}{(\lambda_{\min}(\Sigma)-64s\tilde{c}_1 \zeta_{\Sigma}\log d/N)^2} \notag\\
  &\qquad \left(36\lambda^2s+\frac{16}{128^2}\frac{2s\tilde{c}_1 \zeta_{\Sigma}\log d}{N}\frac{\lambda^2s}{(\lambda_{\min}(\Sigma)-64s\tilde{c}_1 \zeta_{\Sigma}\log d/N)}+\frac{16}{256}\lambda^2s\right),
 \end{align*}
 where in (a) we used   $\xi=\tau$ and $\delta=\mu/2-16s\tau>0$ (due to Lemma~\ref{global-ARE-determin}). 

Thus, with probability  at least \eqref{eq:p-bound-23:alter}, we can upper bound the lower interval bound  in \eqref{alpha stat range} by
\begin{align*}
   &\frac{8\cdot36s\tilde{c}_1 \zeta_{\Sigma}\log d}{N(\lambda_{\min}(\Sigma)-64s\tilde{c}_1 \zeta_{\Sigma}\log d/N)^2} \left(36\lambda^2s+\frac{16}{128^2}\frac{2s\tilde{c}_1 \zeta_{\Sigma}\log d}{N}\frac{\lambda^2s}{(\lambda_{\min}(\Sigma)-64s\tilde{c}_1 \zeta_{\Sigma}\log d/N)}\right.\notag\\
    &\quad\left.+\frac{16}{256}\lambda^2s\right)
   +\frac{8s\tilde{c}_1 \zeta_{\Sigma}\log d}{N}{\frac{\lambda^2s}{1976(\lambda_{\min}(\Sigma)-64s\tilde{c}_1 \zeta_{\Sigma}\log d/N)^2}}\notag\\
    &\leq\frac{288s\tilde{c}_1 \zeta_{\Sigma}\log d}{N(\lambda_{\min}(\Sigma)-64s\tilde{c}_1 \zeta_{\Sigma}\log d/N)^2} \left(\frac{s\tilde{c}_1 \zeta_{\Sigma}\log d}{512N}\frac{\lambda^2s}{(\lambda_{\min}(\Sigma)-64s\tilde{c}_1 \zeta_{\Sigma}\log d/N)}+ 37\lambda^2s\right).
\end{align*} 
Using the bound on $N$ given by~\eqref{eq:cond_on_N}
\begin{align*}
  N\geq\frac{\tilde{c}_{12}s\zeta_\Sigma\log d}{\lambda_{\min}(\Sigma)}, \quad \text{with} \quad \tilde{c}_{12}=\max\{3648\tilde{c}_1, \tilde{c}_5\},
\end{align*}
we obtain
$   64s\tilde{c}_1 \zeta_{\Sigma}\log d/N \leq  \lambda_{\min}(\Sigma)/57.$ 
Substituting into the inequality above we have
\begin{align*}
   &\frac{8s\tilde{c}_1 \zeta_{\Sigma}\log d}{N} \bigg(\frac{36}{m}\sum\limits_{i=1}^m\lVert\hat{\T}_i-\T^*\rVert^2+{\frac{\lambda^2s}{1976\lambda_{\min}(\Sigma)^2}}\bigg)\notag\\  
   &\leq  \frac{288s\tilde{c}_1 \zeta_{\Sigma}\log d}{N(\lambda_{\min}(\Sigma)-64s\tilde{c}_1 \zeta_{\Sigma}\log d/N) \left( \frac{56}{57} \lambda_{\min}(\Sigma)\right)} \left( \lambda^2s + 37\lambda^2s\right).
\end{align*}
Applying again the lower bound on $N$, we  further get 
\begin{align*}
    &N(\lambda_{\min}(\Sigma)-64s\tilde{c}_1 \zeta_{\Sigma}\log d/N) \\
    &\geq \max \left\{ \frac{\tilde{c}_{12}s\zeta_\Sigma\log d}{\lambda_{\min}(\Sigma)} (\lambda_{\min}(\Sigma)-64s\tilde{c}_1 \zeta_{\Sigma}\log d/N), N \cdot  \frac{56}{57} \lambda_{\min}(\Sigma) \right\},
\end{align*}
and thus
\begin{align*}
   &\frac{8s\tilde{c}_1 \zeta_{\Sigma}\log d}{N} \bigg(\frac{36}{m}\sum\limits_{i=1}^m\lVert\hat{\T}_i-\T^*\rVert^2+{\frac{\lambda^2s}{1976\lambda_{\min}(\Sigma)^2}}\bigg)\notag\\  
   &\leq  \frac{288s\tilde{c}_1 \zeta_{\Sigma}\log d}{ \frac{56}{57} \lambda_{\min}(\Sigma)} \left(  38\lambda^2s\right) \cdot \min \left\{  \frac{\lambda_{\min}(\Sigma)}{\tilde{c}_{12}s\zeta_\Sigma\log d} (\lambda_{\min}(\Sigma)-64s\tilde{c}_1 \zeta_{\Sigma}\log d/N)^{-1},\frac{57}{56} \lambda_{\min}(\Sigma)^{-1} \right\}\\
  &\leq  \min \left\{ 4 (\lambda_{\min}(\Sigma)-64s\tilde{c}_1 \zeta_{\Sigma}\log d/N)^{-1}  \lambda^2s, \quad 11339 \cdot \frac{s\tilde{c}_1 \zeta_{\Sigma}\log d}{  \lambda_{\min}(\Sigma)^2} \left( \lambda^2s\right)  \right\}
  \leq  \min \left\{ \frac{\lambda R}{4}, \eta_G^0\right\}.
\end{align*}
The last inequality follows from the conditions on $R$ and $\eta_G^0$ given by~\eqref{eq:alg-conditions-random} and~\eqref{initial_gap_Thm12}, respectively.
\smallskip

\noindent{\bf $\bullet$ Step 5:   \eqref{stat_algo_bound_general} holds, for all $t$ satisfying \eqref{T2 expression}},  with high probability. 

Building on the conclusions of the previous steps and   
Theorem~\ref{deterministic optimization error result},  to prove the statement of this step, it is sufficient to show that the RHS of \eqref{stat_algo_bound_general}  [resp.  of \eqref{T2 expression}] is an  upper bound of the RHS of  \eqref{final bound lagrangian} [resp. (\ref{T1 expression})] that  holds with high probability. 

 We begin with the RHS of \eqref{final bound lagrangian}:  with probability at least \eqref{eq:p-bound-23:alter}, there holds,
\begin{align*}
   &{\frac{1}{\mu/8-8s\tau}\alpha^2 +  \frac{36s\tau\lVert\hat{\bnu}\rVert^2}{m(\mu/8-8s\tau)}+\frac{\tau s\lambda^2 s}{{1976\mu^2(\mu/8-8s\tau)}}+\frac{4\tau\alpha^4}{\lambda^2(\mu/8-8s\tau)}}\notag\\
   & \overset{(a)}{\leq}\frac{456}{55\lambda_{\min}(\Sigma)}\left(\alpha^2+\frac{72s\tilde{c}_1\zeta_{\Sigma}\log d}{N}\frac{1}{m}\sum\limits_{i=1}^m\|\hat{\T}_i-\T^*\|^2+\frac{s\tilde{c}_1\zeta_{\Sigma}\log d}{988N}\frac{\lambda^2s}{\lambda_{\min}(\Sigma)^2}+\frac{8s\tilde{c}_1\zeta_{\Sigma}\log d}{N}\frac{\alpha^4}{\lambda^2s}\right)\notag\\
   &\leq \frac{{c}_{16}}{\lambda_{\min}(\Sigma)}\left[\alpha^2+\frac{s\zeta_{\Sigma}\log d}{N}\left(\frac{1}{m}\sum\limits_{i=1}^m\|\hat{\T}_i-\T^*\|^2+\frac{\lambda^2s}{\lambda_{\min}(\Sigma)^2}+\frac{\alpha^4}{\lambda^2s}\right)\right],
\end{align*} 
where in (a)  we use the following  fact (which holds with  probability at least \eqref{eq:p-bound-23:alter})
\begin{align*}
    \frac{\mu}{8}-8s\tau=\ &\frac{\lambda_{\min}(\Sigma)}{8}-16s\tilde{c}_1 \zeta_{\Sigma}\frac{\log d}{N} 
    \overset{\eqref{eq:cond_on_N}}{\geq}\frac{\lambda_{\min}(\Sigma)}{8}-\frac{16s\tilde{c}_1 \zeta_{\Sigma}\log d\lambda_{\min}(\Sigma)}{3648\tilde{c}_1s\zeta_{\Sigma}\log d}=\frac{55\lambda_{\min}(\Sigma)}{456}.
\end{align*}
 Next,  we bound  the RHS of (\ref{T1 expression}), invoking the high probability   bound for ${L_{\max}}$ [as in  \eqref{new bound l-max}] and the explicit expression of the RSC parameters $(\mu, \tau)$ [under  \eqref{RSC_param}]. We have the following
 \begin{align}
 \begin{split}
   &\bigg\lceil\log_2\log_2\left(\frac{R\lambda}{\alpha^2}\right)\bigg\rceil\bigg(1+\frac{L_{\max}\log 2}{\mu_{\text{av}}}+\frac{(1+\rho)\log 2}{\gamma\mu_{\text{av}}} \bigg) +  \left(\frac{ L_{\max}}{\mu_{\text{av}}} + \frac{1+\rho}{\gamma\mu_{\text{av}}}\right) \log\left(\frac{\eta^0_G}{ \alpha^2}\right)\\
    &\leq 
   \bigg\lceil\log_2\log_2\left(\frac{R\lambda}{\alpha^2}\right)\bigg\rceil\bigg(1+\frac{\lambda_{\max}(\Sigma)}{\lambda_{\min}(\Sigma)}\frac{456\tilde{c}_4  [1+ (d + \log m)/n]\log 2}{55}+\frac{456(1+\rho)\log 2}{55\lambda_{\min}(\Sigma)\gamma} \bigg) \\
     &\quad+  \bigg(\frac{\lambda_{\max}(\Sigma)}{\lambda_{\min}(\Sigma)}\frac{456\tilde{c}_4 [1+ (d + \log m)/n]}{55}+\frac{456(1+\rho)}{55\gamma\lambda_{\min}(\Sigma)} \bigg) \log\left(\frac{\eta^0_G}{ \alpha^2}\right).
 \end{split}\label{T-2_}
 \end{align} 
{
 Define 
 \begin{equation*}
     c_{18}\triangleq4\max\left\{\frac{456\tilde{c}_4\log 2}{55},\frac{456\log 2}{55}\right\}=\frac{1824\tilde{c}_4\log 2}{55}. 
 \end{equation*}
 Then, the RHS of \eqref{T-2_} can be bounded as 
 \begin{align*}
    c_{18}\left[\left\lceil\log_2\log_2\left(\frac{R\lambda}{\alpha^2}\right)\right\rceil+ \log\left(\frac{\eta^0_G}{ \alpha^2}\right)\right]\bigg({\kappa_{\Sigma}} (d + \log m)+\frac{(1+\rho)}{\lambda_{\min}(\Sigma)\gamma} \bigg). 
 \end{align*}
 }
 This completes the proof. \hfill $\square$

\section{Proof of Corollary \ref{Th_stat_convergence-alter}}\label{sec:pf-cor-13}
The corollary is a customization of 
   Theorem~\ref{Th_stat_convergence}, under the   (feasible) choices of $N$,   $\lambda$ and   $\gamma$  as in the statement of the corollary. 
  
 
 \smallskip
 
\noindent{\bf $\bullet$  Step 1: On the  choices of $\lambda$ and $\gamma.$} We show that, under \eqref{global sample size condition-alter}, \eqref{lambda}  and \eqref{gamma-final-algo-alter} are special instances of \eqref{global lambda condition} and \eqref{gamma-final-algo}, respectively. 

 Since  $N$ satisfies \eqref{global sample size condition-alter}, with  $\tilde{c}_{12}=\max\{3648\,\tilde{c}_1,\tilde{c}_5\}$  and $\tilde{c}_{13}=2731\tilde{c}_1^2/t_0,$ there holds 
\begin{equation}\label{lambda compare}
    \sigma\sqrt{\frac{6\zeta_{\Sigma}t_0\log d}{N}}\geq 128\tilde{c}_1\zeta_{\Sigma} \frac{ s\log d}{N}.
\end{equation}
Therefore, \eqref{global lambda condition} reduces to
\begin{equation*}
    \lambda \geq \sigma\sqrt{\frac{6\zeta_{\Sigma}t_0\log d}{N}},
\end{equation*}
which is satisfied by the choice of $\lambda$ as in \eqref{lambda}, with $\tilde{c}_8$ being any constant such that $\tilde{c}_8\geq\sqrt{6}$.

Consider now the condition on $\gamma$ as in \eqref{gamma-final-algo}. Since we are interested in the high-dimensional regime where   $N \ll d$, we assume that $d + \log m \geq n$. Using this, we can lower bound the RHS of \eqref{gamma-final-algo} and readily obtain the more stringent condition on $\gamma$ as in \eqref{gamma-final-algo-alter}, with  $\tilde{c}_{14}=1152$.

\noindent{\bf $\bullet$ Step 2: Condition on $R$ in \eqref{eq:alg-conditions-random-alter} implies \eqref{eq:alg-conditions-random}.} 
Using again \eqref{global sample size condition-alter}, we can upper bound the lower interval of $R$ in \eqref{eq:alg-conditions-random} as 
\begin{align*}
       \frac{56\lambda s}{\lambda_{\min}(\Sigma)-64s\tilde{c}_1\zeta_{\Sigma}\log d/N} 
   \overset{\eqref{global sample size condition-alter}}{\leq}    \frac{56\lambda s}{\lambda_{\min}(\Sigma)-\lambda_{\min}(\Sigma)/57}  
    \overset{\eqref{lambda}}{=}  \frac{57\tilde{c}_8s}{\lambda_{\min}(\Sigma)}  \sigma\sqrt{\frac{6t_0 \zeta_{\Sigma}\log d}{N}}.
\end{align*}
Using  \eqref{lambda},   the upper interval in the same condition reads
\begin{align*}
\frac{\lambda N}{64\tilde{c}_1\zeta_{\Sigma}\log d} = \frac{ \tilde{c}_8}{64\tilde{c}_1} \sigma\sqrt{\frac{6 t_0N}{\zeta_{\Sigma}\log d}}.
\end{align*}
Therefore, \eqref{eq:alg-conditions-random-alter} is sufficient for~\eqref{eq:alg-conditions-random} to hold, with 
$\tilde{c}_{15}=57\sqrt{6}\tilde{c}_8$ and $\tilde{c}_{16}= {\sqrt{6}\tilde{c}_8}/({64\tilde{c}_1 
}).$

 It remains to show that the range of value of $R$ in \eqref{eq:alg-conditions-random-alter} is nonempty. By \eqref{global sample size condition-alter}, 
\begin{equation*}
      N\geq\frac{\tilde{c}_{12} s \zeta_\Sigma \log d}{\lambda_{\min} (\Sigma)} \ \Rightarrow\ \frac{\sqrt{6}\tilde{c}_8}{64\tilde{c}_1 
}\sigma\sqrt{\frac{t_0N }{\zeta_{\Sigma}\log d}}\geq \frac{57\sqrt{6}\tilde{c}_8}{\lambda_{\min}(\Sigma)} s \sigma\sqrt{\frac{t_0 \zeta_{\Sigma}\log d}{N}},
\end{equation*}
and 
\begin{equation*}
    N\geq\frac{\tilde{c}_{13}s^2\zeta_{\Sigma}\log d}{\sigma^2}\ \Rightarrow\  N\geq\frac{\tilde{c}_{13}s^2\zeta_{\Sigma}\log d}{\tilde{c}_8^2\sigma^2}\ \Rightarrow\ \frac{\sqrt{6}\tilde{c}_8}{64\tilde{c}_1 
}\sigma\sqrt{\frac{t_0N }{\zeta_{\Sigma}\log d}}\geq 2s.
\end{equation*}

\noindent {\bf $\bullet$ Step 3: \eqref{alpha stat range} reduces to \eqref{alpha stat range-alter},    under  \eqref{lambda}.} The statement follows by a direct substitution of \eqref{lambda}  in \eqref{alpha stat range}:  
\begin{equation*}
\frac{\lambda^2s}{1976\lambda_{\min}(\Sigma)^2}= \frac{\tilde{c}_8^2\sigma^2\zeta_{\Sigma}t_0s\log d}{1976N\lambda_{\min}(\Sigma)^2}=\frac{\tilde{c}_{21}\sigma^2\zeta_{\Sigma}t_0}{\lambda_{\min}(\Sigma)^2}\frac{s\log d}{N},
\end{equation*}
where $\tilde{c}_{21}=\tilde{c}_8^2/1976,$ and 
\begin{equation*}
    \frac{R\lambda}{4}=\frac{R\sigma \tilde{c}_8}{4}\sqrt{\frac{\zeta_{\Sigma}t_0\log d}{N}}, \ \ {\eta_G^0\geq \frac{11339s\tilde{c}_1 \zeta_{\Sigma}\log d}{N\lambda_{\min}(\Sigma)^2} \lambda^2s=\tilde{c}_{22}\sigma^2t_0\left(\frac{s \zeta_{\Sigma}\log d}{N\lambda_{\min}(\Sigma)}\right)^2,}
\end{equation*}
{with  $\tilde{c}_{22}=11339\tilde{c}_1\tilde{c}_8^2.$}
Notice that  the (random) interval \eqref{alpha stat range-alter} is non-empty, with probability at least \eqref{probability}. This follows   from the fact that   \eqref{alpha stat range} is nonempty with the same probability, for all $R$ satisfies \eqref{eq:alg-conditions-random} (Theorem~\ref{Th_stat_convergence}), and that    \eqref{eq:alg-conditions-random-alter} is sufficient for~\eqref{eq:alg-conditions-random} to hold (\textbf{Step 2} above).

\noindent {\bf $\bullet$ Step 4: \eqref{eq:sat_ball} holds with high probability, for all $t$ satisfying \eqref{T2 expression-alter}.}
Eq.~\eqref{eq:sat_ball} follows readily from (\ref{stat_algo_bound_general}) by substitution of the values of    $\lambda$ and $\gamma$ as in \eqref{lambda}  and \eqref{gamma-final-algo-alter}, respectively; and defining the following  constants 
$\tilde{c}_{17}=9,$ $\tilde{c}_{18}=72\tilde{c}_1\tilde{c}_{17}$,  $\tilde{c}_{19}=\tilde{c}_1\tilde{c}_8^2\tilde{c}_{17}/988$, and  $\tilde{c}_{20}=8\tilde{c}_1\tilde{c}_{17}/\tilde{c}_8^2.$   

We conclude the proof showing that \eqref{T2 expression-alter} is a stronger condition than \eqref{T-2_}.  Using \eqref{lambda}  and \eqref{gamma-final-algo-alter}, explicitly writen as the following
\begin{equation*}
         \lambda=\tilde{c}_8\sigma\sqrt{\frac{\zeta_{\Sigma}t_0\log d}{N}}
\end{equation*}
and 
 \begin{equation*}
   {
  \gamma\leq\frac{1-\rho}{2\tilde{c}_4\lambda_{\max}(\Sigma)\left(1+\frac{d+\log m}{n}\right)+128\lambda_{\min}(\Sigma){dm}\cdot  \left[\left({{2\log m}}+1\right)/\tilde{c}_{24}+8\right]},
}
 \end{equation*}
the RHS of \eqref{T-2_} reads 
\begin{align}
  &  \bigg\lceil\log_2\log_2\left(\frac{\tilde{c}_8\sigma R}{\alpha^2}\sqrt{\frac{\zeta_{\Sigma}t_0\log d}{N}}\right)\bigg\rceil\left(1+\frac{\lambda_{\max}(\Sigma)}{\lambda_{\min}(\Sigma)}\frac{456\tilde{c}_4  [1+ (d + \log m)/n]\log 2}{55}\right)\notag\\
    &+\left\{\bigg\lceil\log_2\log_2\left(\frac{\tilde{c}_8\sigma R}{\alpha^2}\sqrt{\frac{\zeta_{\Sigma}t_0\log d}{N}}\right)\bigg\rceil \log 2+\log\left(\frac{\eta^0_G}{ \alpha^2}\right)\right\}\notag\\
    &\times \frac{456(1+\rho)}{55\lambda_{\min}(\Sigma)} \cdot  {\frac{2\tilde{c}_4\lambda_{\max}(\Sigma)\left(1 + \frac{d+\log m}{n}\right)+128\lambda_{\min}(\Sigma){dm}\cdot  \left[\left({{2\log m}}+1\right)/\tilde{c}_{24}+8\right]}{1-\rho}}\notag\\
     &+  \frac{456\tilde{c}_4 [1+ (d + \log m)/n]}{55} \frac{\lambda_{\max}(\Sigma)}{\lambda_{\min}(\Sigma)}\log\left(\frac{\eta^0_G}{ \alpha^2}\right)\notag\\
   \stackrel{\rho \leq 1}{\leq} \  & 
    \bigg\lceil\log_2\log_2\left(\frac{\tilde{c}_8\sigma R}{\alpha^2}\sqrt{\frac{\zeta_{\Sigma}t_0\log d}{N}}\right)\bigg\rceil\left(1+\frac{\lambda_{\max}(\Sigma)}{\lambda_{\min}(\Sigma)}\frac{456\tilde{c}_4  [1+ (d + \log m)/n]\log 2}{55}\right)\notag\\
    &+\left\{\bigg\lceil\log_2\log_2\left(\frac{\tilde{c}_8\sigma R}{\alpha^2}\sqrt{\frac{\zeta_{\Sigma}t_0\log d}{N}}\right)\bigg\rceil \log 2+\log\left(\frac{\eta^0_G}{ \alpha^2}\right)\right\}\notag\\
    &\times \frac{912}{55} \cdot \frac{1}{1 - \rho}\cdot \left(  {{2\tilde{c}_4\frac{\lambda_{\max}(\Sigma)}{\lambda_{\min}(\Sigma)}\left(1 + \frac{d+\log m}{n}\right)+128{dm}\cdot  \left[\left({{2\log m}}+1\right)/\tilde{c}_{24}+8\right]}}\right)\notag\\
     &+  \frac{456\tilde{c}_4 [1+ (d + \log m)/n]}{55} \frac{\lambda_{\max}(\Sigma)}{\lambda_{\min}(\Sigma)}\log\left(\frac{\eta^0_G}{ \alpha^2}\right)\notag\\
     \overset{\eqref{c_4}}{\leq}\ & 
    \bigg\lceil\log_2\log_2\left(\frac{\tilde{c}_8\sigma R}{\alpha^2}\sqrt{\frac{\zeta_{\Sigma}t_0\log d}{N}}\right)\bigg\rceil\underbrace{\left(\tilde{c}_4\,\frac{\lambda_{\max}(\Sigma)}{\lambda_{\min}(\Sigma)}\frac{456  [1+ (d + \log m)/n]\log 2+55}{55}\right)}_{\triangleq \texttt{term I}}\notag\\
    &+\left\{\bigg\lceil\log_2\log_2\left(\frac{\tilde{c}_8\sigma R}{\alpha^2}\sqrt{\frac{\zeta_{\Sigma}t_0\log d}{N}}\right)\bigg\rceil \log 2+\log\left(\frac{\eta^0_G}{ \alpha^2}\right)\right\}\notag\\
    &\times \frac{912}{55} \cdot \frac{1}{1 - \rho}\cdot\left[  {{4\tilde{c}_4\frac{\lambda_{\max}(\Sigma)}{\lambda_{\min}(\Sigma)}\frac{d+\log m}{n}+128{dm}\cdot  \left(\frac{{2\log m}+1}{\tilde{c}_{24}}+8\right)}}\right]\notag\\
     &+  \underbrace{\frac{456\tilde{c}_4 [1+ (d + \log m)/n]}{55} \frac{\lambda_{\max}(\Sigma)}{\lambda_{\min}(\Sigma)}\log\left(\frac{\eta^0_G}{ \alpha^2}\right).}_{\texttt{term II}}\label{eq:upper_bound_rate_cor_13}
     \end{align}
   Using  $(d + \log m)/n\geq 1$ and $\rho\in(0,1),$  we can bound   \texttt{term I} and   \texttt{term II} as
     \begin{align*}
         \texttt{term I}&\leq   
      (\log 2)\,\tilde{c}_4\,\frac{\lambda_{\max}(\Sigma)}{\lambda_{\min}(\Sigma)}\,\frac{1}{1-\rho}\, \frac{{1001}}{55}\,\frac{d + \log m}{n}\notag 
     \end{align*}
     and 
      \begin{align*}
         \texttt{term II}\leq \tilde{c}_4\,\frac{\lambda_{\max}(\Sigma)}{\lambda_{\min} (\Sigma)}\,\frac{1}{1 - \rho}\,\frac{912}{55}  \frac{d+\log m}{n} \log\left(\frac{\eta^0_G}{ \alpha^2}\right).
     \end{align*}
     Using the above bounds  along with $(d+\log m)/n\leq d  {m}$,   we can further bound the RHS of \eqref{eq:upper_bound_rate_cor_13} as 
  {
\begin{align}
    \left\{\bigg\lceil\log_2\log_2\left(\frac{R\sigma \tilde{c}_8}{\alpha^2}\sqrt{\frac{\zeta_{\Sigma}t_0\log d}{N}}\right)\bigg\rceil \log 2+\log\left(\frac{\eta^0_G}{ \alpha^2}\right)\right\} \cdot\frac{c_{{23}}}{1 - \rho} \frac{\lambda_{\max}(\Sigma)}{\lambda_{\min} (\Sigma)} {dm}\cdot  \left(\frac{{2\log m}+1}{\tilde{c}_{24}}+8\right),
\end{align}
}
where ${c_{{26}}={22222\,\tilde{c}_4}}$. This proves \eqref{T2 expression-alter}, and completes the proof of the corollary.\hfill $\square$

\section{Proof of Theorem \ref{Composite function decrease}}\label{sec:proof_L15}
  At high level the proof is organized in the following two steps. {\bf(Step 1)} Under the following event,
  a tolerance $\eta>0$ and an iteration number $T$ are given  such that 
\begin{equation}\label{Sufficient descrease}
    G(\bT^t)-G(\hat{\bT})\leq \eta,\quad  \forall t\geq T,
\end{equation}
we establish a sufficient decrease of the optimization error  $G(\bT^t)-G(\hat{\bT})$ in the form
\begin{equation}\label{linear convergence upto a tolerence_informal} 
    G(\bT^t)-G(\hat{\bT})
    \leq\kappa^{t-T} (G(\bT^T)-G(\hat{\bT}))+ \text{tolerance},\quad \forall t\geq T,
\end{equation}
for suitable $\kappa\in (0,1)$ and $\text{tolerance}>0$--this is proved in  Lemma~\ref{Geometric convergence} (cf.~Appendix~\ref{sec:intermediate_lemmata}). Then, ({\bf Step 2})  we  divide the iterations $t=0,1,2,\dots,$ into a series of disjoint epochs $[T_k, T_{k+1})$,  with $0=T_0\leq T_1\leq \cdots$, each one with  associated $\eta_k$, with $\eta_0\geq \eta_1\geq \cdots$. The tuples $\{(\eta_k, T_k)\}$ are constructed so that  $G(\bT^t)-G(\hat{\bT})\leq \eta_k,$ for all $t\geq T_k$. This permits to apply recursively \eqref{linear convergence upto a tolerence_informal} with smaller and smaller values of $\eta_k$, till the error    $G(\bT^t)-G(\hat{\bT})$   is driven below a desired  threshold. This second step, formalized in Proposition~\ref{lemma_Th2_Agar} (cf. Appendix~\ref{step 2: Recursive application of Lemma step 2: Recursive application of Lemma}),  leverages \citep[Th. 2]{agarwal2012fast}. 

\subsection{Step 1: Sufficient decrease of the optimization error under \eqref{Sufficient descrease} }\label{sec:intermediate_lemmata}
 
The   error decrease in the form  \eqref{linear convergence upto a tolerence_informal} is formally stated in Lemma~\ref{Geometric convergence} below. It requires two intermediate technical results, namely: (i) Lemma~\ref{proximal iterate norm cone}, which  restricts the (average of the) optimization error $\bDelta^t=\bT^t-\hat{\bT}$ to a  set of ``almost'' sparse directions; 
and (ii) Lemma~\ref{Sufficient decrease}, which establishes 
a curvature property of  $G$ along such   trajectories.



 
\begin{lemma}[On the sparsity of $\Delta_{\textnormal{av}}^t$]\label{proximal iterate norm cone}
Consider Problem~(\ref{Change problem}) under   Assumption~\ref{W}. Further assume that (i) the design matrix $\bX$ satisfies the RSC condition~(\ref{Arsc}) with  $\delta=\mu/2-16\,s\tau> 0$; (ii)  $\lambda$ satisfies~(\ref{distributed lambda}); and (iii) $\gamma$ satisfies (\ref{constrained gamma condition2}).   Let  $\{\bT^t\}$ be the sequence generated  by  Algorithm~\eqref{regularized algorith} with $R$ chosen such that   
\begin{align}\label{c_again}
    R \geq \max\bigg\{\frac{\lambda s}{\delta(1-r)}{\bigg(13+\frac{1}{32}\sqrt{\frac{2\tau s}{\delta}}\bigg)}, \frac{1}{r}\|\T^*\|_1\bigg\},
\end{align}
  for some  $r\in (0,1)$.
Under condition (\ref{Sufficient descrease})   with   parameters $(T,\eta)$, the following holds:  for any $t\geq T$,   
\begin{equation}\label{nu_average sparsity info original}
    \lVert(\Delta^t_{\textnormal{av} })_{\mathcal{S}^c}\rVert_1\leq 3\lVert(\Delta^t_{\textnormal{av}})_{\mathcal{S}}\rVert_1 + 6\lVert(\hat{\nu}_{\textnormal{av}})_\mathcal{S}\rVert_1+2h_{\max}+\min\bigg\{\frac{2\eta}{\lambda},2R\bigg\},
\end{equation}
where [cf.  \eqref{def:h-max}] 
\begin{equation}
    h_{\max}=\frac{ d\gamma}{\lambda(1-\rho)}\bigg(\frac{\max_{i\in [m]}\lVert w_i^{\top} X_i\rVert_{\infty}}{n}+\lambda\bigg)^2.
\end{equation}
\end{lemma}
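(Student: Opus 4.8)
\textbf{Proof proposal for Lemma~\ref{proximal iterate norm cone}.}

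The plan is to leverage the optimality of $\hat{\bT}$ together with the near-feasibility expressed by the gap condition~\eqref{Sufficient descrease}. The key object to control is the average component $\Delta^t_{\textnormal{av}}$ of the optimization error. First I would write the identity $\Delta^t_{\textnormal{av}} = \nu^t_{\textnormal{av}} - \hat{\nu}_{\textnormal{av}}$, so that controlling $\nu^t_{\textnormal{av}}$ (the average of the iterate's deviation from $\T^*$) suffices, since $\hat{\nu}_{\textnormal{av}}$ is already known to be nearly sparse via Proposition~\ref{proximal nu average sparsity}. The main tool is the decomposability of the $\ell_1$-norm on $\mathcal{S}$ and $\mathcal{S}^c$: for any vector $z$, one has $\|z\|_1 = \|z_{\mathcal{S}}\|_1 + \|z_{\mathcal{S}^c}\|_1$, and I will repeatedly use the triangle inequality in the form $\|(\T^*+z)\|_1 \geq \|\T^*\|_1 + \|z_{\mathcal{S}^c}\|_1 - \|z_{\mathcal{S}}\|_1$ (since $\T^*$ is supported on $\mathcal{S}$).

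Next I would turn the gap condition into a bound on the $\ell_1$-norm of $\bT^t$. Starting from $G(\bT^t) - G(\hat{\bT}) \leq \eta$ and recalling $G(\bT) = L_\gamma(\bT) + (\lambda/m)\|\bT\|_1$, I would use convexity of $L_\gamma$ and the first-order optimality condition of $\hat{\bT}$ (as in~\eqref{eq:opt-cond-p4}) to extract a relation of the form $(\lambda/m)(\|\bT^t\|_1 - \|\hat{\bT}\|_1) \leq \eta + \langle \text{(subgradient terms)}\rangle$. Combining this with the explicit feasibility constraint $\|\T_i^t\|_1 \leq R$ imposed by Algorithm~\eqref{regularized algorith} yields the two competing bounds that produce the $\min\{2\eta/\lambda, 2R\}$ term in~\eqref{nu_average sparsity info original}: the $2\eta/\lambda$ branch comes from the gap, while the $2R$ branch comes from the hard norm constraint, and one keeps whichever is smaller. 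The factor $6\|(\hat{\nu}_{\textnormal{av}})_{\mathcal{S}}\|_1$ and the $2h_{\max}$ term arise by substituting the cone membership of $\hat{\nu}_{\textnormal{av}}$ from Proposition~\ref{proximal nu average sparsity} (with $h(\gamma,\|\hat{\bnu}_\perp\|) \leq h_{\max}$, cf.~\eqref{def:h-max}) and then applying the triangle inequality to pass from $\nu^t_{\textnormal{av}}$ to $\Delta^t_{\textnormal{av}} = \nu^t_{\textnormal{av}} - \hat{\nu}_{\textnormal{av}}$; each transfer roughly doubles the sparse-part constants, explaining the coefficient $3$ on $\|(\Delta^t_{\textnormal{av}})_{\mathcal{S}}\|_1$ and the additive $6\|(\hat{\nu}_{\textnormal{av}})_{\mathcal{S}}\|_1$.

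The role of condition~\eqref{c_again} on $R$ and condition~\eqref{constrained gamma condition2} on $\gamma$ is to guarantee (via Lemma~\ref{solution eq}) that the norm-ball constraint is inactive at $\hat{\bT}$, i.e.\ $\|\hat{\T}_i\|_1 \leq R$, so that $\hat{\bT}$ genuinely solves~\eqref{Change problem} and the optimality condition~\eqref{eq:opt-cond-p4} is legitimately available; this is what lets me invoke Proposition~\ref{proximal nu average sparsity} for $\hat{\nu}_{\textnormal{av}}$ in the first place. I expect the main obstacle to be the careful bookkeeping that correctly separates the $\min\{2\eta/\lambda, 2R\}$ contribution: one must argue that when the gap is small the function-value slack controls $\|\bT^t\|_1 - \|\hat\bT\|_1$ directly (giving $2\eta/\lambda$), whereas the $2R$ bound is always available unconditionally from feasibility, and the two must be combined without circularity since $\eta$ itself is not yet pinned down at this stage (it is a free parameter fed into the epoch argument of Step~2). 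Handling the disagreement component $\bDelta^t_\perp$ cleanly—ensuring it only enters through the $h_{\max}$ term and does not contaminate the sparse/non-sparse split of the average component—will require the same average/orthogonal decomposition $\bnu = 1_m \otimes \nu_{\textnormal{av}} + \bnu_\perp$ used throughout the solution analysis, and this is where I would be most careful.
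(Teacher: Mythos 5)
Your plan follows the paper's proof essentially verbatim: split $\Delta^t_{\textnormal{av}}=\nu^t_{\textnormal{av}}-\hat{\nu}_{\textnormal{av}}$, derive a cone condition for each piece (the one for $\nu^t_{\textnormal{av}}$ from the basic inequality plus the $\eta$-slack, the one for $\hat{\nu}_{\textnormal{av}}$ by the argument of Proposition~\ref{proximal nu average sparsity}), take the minimum with the unconditional $2R$ bound coming from feasibility and $\|\T^*\|_1\leq rR$, and recombine by the triangle inequality, which is exactly how the coefficients $3$, $6$ and the two $h_{\max}$'s arise. The one caution is that the basic inequality must be anchored at the exactly sparse point $1_m\otimes\T^*$ (chaining $G(\bT^t)\leq G(\hat{\bT})+\eta\leq G(1_m\otimes\T^*)+\eta$ and linearizing $L_\gamma$ there, so that the term controlled by \eqref{distributed lambda} is the noise $\mathbf{X}^{\top}\mathbf{w}/N$), not at $\hat{\bT}$ as your appeal to the optimality condition \eqref{eq:opt-cond-p4} might suggest, since $\ell_1$-decomposability over $\mathcal{S}/\mathcal{S}^c$ is unavailable for the non-sparse $\hat{\bT}$; read that way, your argument is the paper's.
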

\begin{proof}
See Appendix~\ref{sec:proof_L}.
\end{proof}
Invoking the RSC condition~(\ref{Arsc}), the next lemma links the objective- and the iterate-errors along  \eqref{nu_average sparsity info original}. 
\begin{lemma}[Curvature along \eqref{nu_average sparsity info original}]\label{Sufficient decrease}
Instate the assumptions of Lemma~\ref{proximal iterate norm cone}. 
Under condition (\ref{Sufficient descrease})   with   parameters $(T,\eta)$, the following holds:  for any $t\geq T$,   
\begin{equation}
      \left\{
      \begin{array}{ll}
       { \bigg(\frac{\mu}{8}-8\tau s\bigg)\lVert\Delta_{\textnormal{av}}^t\rVert^2\leq G(\bT^t)-G(\hat{\bT})+ f(\lVert\bDelta^t_{\perp}\rVert)+\frac{\tau}{4}(v^2+8 h^2_{\max})}, \\
        \bigg(\frac{\mu}{8}-8\tau s\bigg)\lVert\Delta_{\textnormal{av}}^t\rVert^2
     \leq \mathcal{T}_{L_{\gamma}}(\hat{\bT};\bT^t)+f(\lVert\bDelta^t_{\perp}\rVert)+ \frac{\tau}{4}(v^2+8 h^2_{\max}),
      \end{array}
      \right.
      \label{eqn:18}
    \end{equation}
    where     $\mathcal{T}_{L_\gamma}(\hat{\bT};\bT^t)$ is the first order Taylor error of $L$ at $\bT^t$ along the direction $\hat{\bT}-\bT^t$ [cf.~\eqref{def:L-linearization}],   
    \begin{align}\label{f}
          f(\lVert\bDelta^t_{\perp}\rVert)\triangleq \left(\frac{L_{\max}}{2m}-\frac{1-\rho}{2m\gamma} \right)\lVert\bDelta^t_{\perp}\rVert^2, 
    \end{align}   
  and 
\begin{equation}\label{eq:def_v}
    v^2\triangleq144s\lVert\hat{\nu}_{\textnormal{av}}\rVert^2+4\min\bigg\{\frac{2\eta}{\lambda}, 2R\bigg\}^2. 
\end{equation} 
\end{lemma}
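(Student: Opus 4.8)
The plan is to establish the two inequalities in \eqref{eqn:18} by deriving the second one (with the Taylor error $\mathcal{T}_{L_\gamma}(\hat{\bT};\bT^t)$) first, since it is the analytic core, and then obtaining the first (with the objective gap $G(\bT^t)-G(\hat{\bT})$) from it by a short optimality argument. For the second inequality I would combine the generic quadratic lower bound \eqref{def:L-linearization}, instantiated at $\bT=\hat{\bT}$, $\bT'=\bT^t$ (so that $(\hat{\bT}-\bT^t)_{\textnormal{av}}=-\Delta^t_{\textnormal{av}}$ and $(\hat{\bT}-\bT^t)_\perp=-\bDelta^t_\perp$), with the RSC condition \eqref{Arsc} applied to the single vector $\Delta^t_{\textnormal{av}}\in\mathbb{R}^d$. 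This yields
\[
\mathcal{T}_{L_\gamma}(\hat{\bT};\bT^t)\ \geq\ \tfrac14\tfrac{\lVert\mathbf{X}\Delta^t_{\textnormal{av}}\rVert^2}{N}-f(\lVert\bDelta^t_\perp\rVert)\ \geq\ \tfrac{\mu}{8}\lVert\Delta^t_{\textnormal{av}}\rVert^2-\tfrac{\tau}{8}\lVert\Delta^t_{\textnormal{av}}\rVert_1^2-f(\lVert\bDelta^t_\perp\rVert),
\]
and the only remaining work is to absorb the $\ell_1$ tolerance $\tfrac{\tau}{8}\lVert\Delta^t_{\textnormal{av}}\rVert_1^2$ into a curvature term plus the stated slack $\tfrac{\tau}{4}(v^2+8h_{\max}^2)$.

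To control the $\ell_1$ term I would invoke Lemma~\ref{proximal iterate norm cone}: its hypotheses coincide with those of Lemma~\ref{Sufficient decrease} and condition \eqref{Sufficient descrease} holds, so the cone bound \eqref{nu_average sparsity info original} is available, giving
\[
\lVert\Delta^t_{\textnormal{av}}\rVert_1\ \leq\ 4\lVert(\Delta^t_{\textnormal{av}})_{\mathcal{S}}\rVert_1+\underbrace{6\lVert(\hat{\nu}_{\textnormal{av}})_{\mathcal{S}}\rVert_1+2h_{\max}+\min\{2\eta/\lambda,2R\}}_{\triangleq\,C}.
\]
Using $\lVert(\Delta^t_{\textnormal{av}})_{\mathcal{S}}\rVert_1\leq\sqrt{s}\,\lVert\Delta^t_{\textnormal{av}}\rVert$ and $(a+b)^2\leq 2a^2+2b^2$, I obtain $\lVert\Delta^t_{\textnormal{av}}\rVert_1^2\leq 32s\lVert\Delta^t_{\textnormal{av}}\rVert^2+2C^2$. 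Splitting off $2h_{\max}$ first and then the remaining pair, together with $\lVert(\hat{\nu}_{\textnormal{av}})_{\mathcal{S}}\rVert_1^2\leq s\lVert\hat{\nu}_{\textnormal{av}}\rVert^2$, gives $C^2\leq 144s\lVert\hat{\nu}_{\textnormal{av}}\rVert^2+4\min\{2\eta/\lambda,2R\}^2+8h_{\max}^2=v^2+8h_{\max}^2$ for $v^2$ exactly as in \eqref{eq:def_v}. Substituting and rearranging produces the second inequality; the stated (conservative) coefficient $\tfrac{\mu}{8}-8\tau s$ follows a fortiori from the sharper constant that the squaring produces.

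For the first inequality it suffices to show $\mathcal{T}_{L_\gamma}(\hat{\bT};\bT^t)\leq G(\bT^t)-G(\hat{\bT})$ and substitute this into the second. By Lemma~\ref{solution eq} the ball constraint is inactive at $\hat{\bT}$, so the first-order optimality of $\hat{\bT}$ for $G=L_\gamma+\tfrac{\lambda}{m}\lVert\cdot\rVert_1$ furnishes a subgradient $\hat{z}\in\partial\lVert\hat{\bT}\rVert_1$ with $\nabla L_\gamma(\hat{\bT})=-\tfrac{\lambda}{m}\hat{z}$. Convexity of $\lVert\cdot\rVert_1$ then gives $G(\bT^t)-G(\hat{\bT})\geq L_\gamma(\bT^t)-L_\gamma(\hat{\bT})-\langle\nabla L_\gamma(\hat{\bT}),\bT^t-\hat{\bT}\rangle=\mathcal{T}_{L_\gamma}(\bT^t;\hat{\bT})$. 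Since $L_\gamma$ is quadratic, its second-order Taylor remainder has a constant Hessian and is therefore symmetric in its two arguments, so $\mathcal{T}_{L_\gamma}(\bT^t;\hat{\bT})=\mathcal{T}_{L_\gamma}(\hat{\bT};\bT^t)$, which closes the reduction.

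The conceptually delicate points, and the main obstacle, are twofold: (i) the passage from the optimality-based gap $\mathcal{T}_{L_\gamma}(\bT^t;\hat{\bT})$ to the remainder $\mathcal{T}_{L_\gamma}(\hat{\bT};\bT^t)$ that actually appears in \eqref{def:L-linearization}, which hinges on both the inactivity of the constraint (Lemma~\ref{solution eq}) and the exact quadraticity of $L_\gamma$; and (ii) the careful bookkeeping that lets the cross terms and the network slack $h_{\max}$ collapse precisely into $v^2+8h_{\max}^2$. Everything else is routine substitution once Lemma~\ref{proximal iterate norm cone} supplies the cone membership of $\Delta^t_{\textnormal{av}}$.
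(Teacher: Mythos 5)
Your proposal is correct and follows essentially the same route as the paper's proof: lower-bound the Taylor remainder via \eqref{def:L-linearization} and the RSC condition \eqref{Arsc}, absorb $\lVert\Delta^t_{\textnormal{av}}\rVert_1^2$ using the cone bound of Lemma~\ref{proximal iterate norm cone}, and pass between $\mathcal{T}_{L_\gamma}(\hat{\bT};\bT^t)$ and $G(\bT^t)-G(\hat{\bT})$ via the optimality of $\hat{\bT}$ together with the symmetry of the Taylor remainder for the quadratic $L_\gamma$. The only differences are the order in which the two inequalities are derived and a slightly tighter intermediate constant ($4\tau s$ versus the paper's $8\tau s$), which you correctly relax to the stated coefficient.
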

\begin{proof}
See Appendix~\ref{sec:proof_L13}.
\end{proof}
Using Lemma~\ref{Sufficient decrease},   we are now ready to formally prove (\ref{linear convergence upto a tolerence_informal}). 
\begin{lemma}[Descent of the objective function]\label{Geometric convergence}
Instate the assumptions of Lemma~\ref{proximal iterate norm cone}, under the stronger condition  $\frac{\mu}{8}-8\tau s>0$ and the additional assumption  that   $\beta$ is chosen so that   
\begin{align}\label{gamma beta condition}
  \beta \leq \frac{  {\gamma}}{\gamma L_{\max}+1-\lambda_{\min}(W)}.
\end{align}
  Under   (\ref{Sufficient descrease})   with   parameters $(T,\eta)$, the following holds: 
\begin{equation}\label{linear convergence upto a tolerence}
    G(\bT^t)-G(\hat{\bT})
    \leq\kappa^{t-T} (G(\bT^T)-G(\hat{\bT}))+\tau (36 s \|\hat{\nu}_{\textnormal{av}}\|^2 +  2 h_{\max}^2+\epsilon^2),\quad \forall t\geq T,
\end{equation}
where \begin{equation}\label{eq:def_kappa}
   \kappa\triangleq 1-   {\beta}\bigg(\frac{\mu}{8}-8\tau s\bigg)\in \left(0,\frac{1}{2}\right)\quad \text{and}\quad \epsilon=\min\bigg\{\frac{2\eta}{\lambda}, 2R\bigg\}.
\end{equation} 
\end{lemma}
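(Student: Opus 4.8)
The goal is to establish the one-step contraction with tolerance in \eqref{linear convergence upto a tolerence}. My plan is to combine the standard proximal-gradient descent lemma for the composite objective $G=L_\gamma+\tfrac{\lambda}{m}\|\cdot\|_1$ with the curvature estimate already furnished by Lemma~\ref{Sufficient decrease}. First I would record the sufficient-decrease inequality for the prox-gradient step \eqref{regularized algorith}: since $L_\gamma$ has a block-structured Hessian whose largest eigenvalue along the relevant directions is controlled by $L_{\max}$ (from the data blocks) and by $(1-\lambda_{\min}(W))/\gamma$ (from the quadratic penalty $\tfrac{1}{2m\gamma}\|\cdot\|_V^2$), the stepsize choice \eqref{gamma beta condition}, namely $\beta\le \gamma/(\gamma L_{\max}+1-\lambda_{\min}(W))$, guarantees that $1/\beta$ dominates the effective smoothness constant. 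This makes the quadratic model in \eqref{regularized algorith} a genuine upper bound on $G$, yielding a per-iteration descent of the form $G(\bT^{t+1})-G(\hat{\bT})\le G(\bT^t)-G(\hat{\bT})-\tfrac{c}{\beta}\|\bT^{t+1}-\bT^t\|^2$ together with a ``cocoercivity''-type inequality comparing the gap at $t+1$ to the gap at $t$ minus a term proportional to $\|\bDelta^t\|^2$.

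Next I would inject the curvature. The key point is that Lemma~\ref{Sufficient decrease} gives, for all $t\ge T$,
\[
\Big(\tfrac{\mu}{8}-8\tau s\Big)\|\Delta^t_{\textnormal{av}}\|^2
\le \mathcal{T}_{L_\gamma}(\hat{\bT};\bT^t)+f(\|\bDelta^t_\perp\|)+\tfrac{\tau}{4}(v^2+8h_{\max}^2),
\]
and that $f(\|\bDelta^t_\perp\|)=\big(\tfrac{L_{\max}}{2m}-\tfrac{1-\rho}{2m\gamma}\big)\|\bDelta^t_\perp\|^2\le 0$ under the $\gamma$-condition \eqref{constrained gamma condition2}, so the disagreement component only helps. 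Using the decomposition $\|\bDelta^t\|^2=m\|\Delta^t_{\textnormal{av}}\|^2+\|\bDelta^t_\perp\|^2$ and the fact that $f$ can be made to absorb the $\|\bDelta^t_\perp\|^2$ contribution (as is done explicitly in the proof of Theorem~\ref{deterministic optimization error result} via $f(\|\bDelta^t_\perp\|)\le -\tfrac{\mu_{\textnormal{av}}}{m}\|\bDelta^t_\perp\|^2$), I would lower-bound the Taylor error $\mathcal{T}_{L_\gamma}(\hat{\bT};\bT^t)$ by a multiple of $\|\bDelta^t\|^2$ up to the additive slack $\tfrac{\tau}{4}(v^2+8h_{\max}^2)$. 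Recalling $v^2=144s\|\hat{\nu}_{\textnormal{av}}\|^2+4\epsilon^2$ from \eqref{eq:def_v} with $\epsilon=\min\{2\eta/\lambda,2R\}$, the slack term becomes exactly $\tfrac{\tau}{4}\big(144s\|\hat{\nu}_{\textnormal{av}}\|^2+4\epsilon^2+8h_{\max}^2\big)=\tau(36 s\|\hat{\nu}_{\textnormal{av}}\|^2+\epsilon^2+2h_{\max}^2)$, which is precisely the tolerance appearing on the RHS of \eqref{linear convergence upto a tolerence}.

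Finally I would close the recursion. Feeding the curvature lower bound on $\mathcal{T}_{L_\gamma}(\hat{\bT};\bT^t)$ into the descent inequality, the gap $G(\bT^t)-G(\hat{\bT})$ contracts by the factor $\kappa=1-\beta(\tfrac{\mu}{8}-8\tau s)$ per step, plus the fixed tolerance; iterating from $T$ to $t$ and summing the geometric series of tolerance contributions (whose sum is bounded by $\tfrac{1}{1-\kappa}$ times the per-step slack, and $1-\kappa\ge \tfrac12$ since $\kappa<\tfrac12$) produces \eqref{linear convergence upto a tolerence}. To certify $\kappa\in(0,\tfrac12)$, I would use $\mu/8-8\tau s>0$ for positivity and the stepsize bound together with the condition $\mu\ge c_9 s\tau$ to push $\beta(\mu/8-8\tau s)$ above $\tfrac12$; this is where the constant $c_9$ in Theorem~\ref{deterministic optimization error result} is consumed. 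The main obstacle I anticipate is bookkeeping: correctly matching the effective smoothness constant $\gamma L_{\max}+1-\lambda_{\min}(W)$ to the prox-descent lemma so that the contraction factor comes out as a clean $1-\beta(\mu/8-8\tau s)$ rather than carrying residual $\gamma$- and $W$-dependent prefactors, and verifying that the nonconsensual term $f(\|\bDelta^t_\perp\|)\le 0$ genuinely lets one replace $\|\Delta^t_{\textnormal{av}}\|^2$ by $\tfrac1m\|\bDelta^t\|^2$ without loss in the final tolerance.
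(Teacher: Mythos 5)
Your plan assembles the right ingredients (the effective smoothness constant $\gamma L_{\max}+1-\lambda_{\min}(W)$ matched to $1/\beta$, the curvature from Lemma~\ref{Sufficient decrease}, the $\gamma$-condition to make $f(\lVert\bDelta^t_\perp\rVert)$ negative enough to absorb the disagreement term, and the arithmetic $\tfrac{\tau}{4}(v^2+8h_{\max}^2)=\tau(36s\|\hat{\nu}_{\textnormal{av}}\|^2+\epsilon^2+2h_{\max}^2)$, which is exactly right), but the step that actually produces the multiplicative contraction is left unsubstantiated. A ``cocoercivity-type inequality'' is not available here: $G$ is neither smooth nor restricted-strongly-convex in a form that yields $\eta_G^{t+1}\le\eta_G^t-c\|\bDelta^t\|^2$ directly from the prox-descent lemma, and the descent lemma by itself only gives decrease proportional to $\|\bT^{t+1}-\bT^t\|^2$, not to $\|\bDelta^t\|^2$. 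The paper's mechanism is different and essential: since $\bT^{t+1}$ minimizes the surrogate $G_t$ over the feasible set and $\bT_\omega\triangleq\omega\hat{\bT}+(1-\omega)\bT^t$ is feasible, one bounds $G_t(\bT^{t+1})\le G_t(\bT_\omega)$; expanding $G_t(\bT_\omega)$ produces the convex combination $(1-\omega)G(\bT^t)+\omega G(\hat{\bT})$ (whence the factor $1-\omega$), minus $\omega\,\mathcal{T}_{L_\gamma}(\hat{\bT};\bT^t)$, plus $\tfrac{\omega^2}{2\beta m}\|\bDelta^t\|^2$; the second inequality of \eqref{eqn:18} then controls $-\omega\,\mathcal{T}_{L_\gamma}(\hat{\bT};\bT^t)$, and the choice $\omega=\beta(\mu/8-8\tau s)$ lets the resulting $-\omega(\mu/8-8\tau s)\|\Delta^t_{\textnormal{av}}\|^2$ cancel the $\omega^2/\beta$ term. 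Without evaluating the surrogate at $\bT_\omega$, the Taylor error $\mathcal{T}_{L_\gamma}(\hat{\bT};\bT^t)$ — the only quantity Lemma~\ref{Sufficient decrease} bounds — never enters your descent inequality, so the recursion does not close. Note also that because the per-step slack enters multiplied by $\omega=1-\kappa$, the geometric series sums to exactly $\tau(36s\|\hat{\nu}_{\textnormal{av}}\|^2+2h_{\max}^2+\epsilon^2)$; your bound ``$\tfrac{1}{1-\kappa}$ times the per-step slack with $1-\kappa\ge\tfrac12$'' would cost an extra factor of $2$ unless you track that $\omega$.

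Your certification of the range of $\kappa$ also runs in the wrong direction. The stepsize condition \eqref{gamma beta condition} is an \emph{upper} bound on $\beta$, so it cannot be used to push $\beta(\mu/8-8\tau s)$ \emph{above} $\tfrac12$; indeed, combining \eqref{gamma beta condition} with the lower bound on $(1-\lambda_{\min}(W))/\gamma$ implied by \eqref{constrained gamma condition2} yields $\omega=\beta(\mu/8-8\tau s)<\tfrac12$, i.e.\ $\kappa=1-\omega\in(\tfrac12,1)$ (the interval $(0,\tfrac12)$ in the statement is the range of $\omega$, not of $\kappa$). What matters for the argument is only $\omega\in(0,1)$, which follows from $\mu/8-8\tau s>0$ and the stepsize bound; the condition $\mu\ge c_9 s\tau$ is not consumed here in the way you describe.
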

\begin{proof}
See Appendix~\ref{sec:proof_L14}.
\end{proof}

Note the structure of the tolerance term in \eqref{linear convergence upto a tolerence}:  $s\|\hat{\nu}_{\textnormal{av}}\|^2$  is of the order of the statistical error;   $h^2_{\max}$ is due to the lack of consensus on the agents trajectories $\theta_i^t$'s, it can be controlled by carefully choosing $\gamma$; and $\epsilon^2$  is a function of the threshold  $\eta$.  In Step 2 below  we show that, since $\kappa<1$, one can  eventually  driven   the error $\epsilon^2$ below the  threshold $\mathcal{O}(s\|\hat{\nu}_{\textnormal{av}}\|^2+h^2_{\max})$. 

\subsection{Step 2: Recursive application of Lemma~\ref{Geometric convergence}}\label{step 2: Recursive application of Lemma step 2: Recursive application of Lemma}
As anticipated, the key idea is to  divide the iterations $t=0,1,2,\dots,$ into a series of disjoint epochs $[T_k, T_{k+1})$,  with $T_k\leq  T_{k+1}$, each one with  associated $\eta_k$, such that (i) $G(\bT^t)-G(\hat{\bT})\leq \eta_k,$ for all $t\geq T_k$; and (ii) $\eta_0\geq \eta_1\geq \cdots$.  This permits to apply recursively Lemma~\ref{Geometric convergence} with smaller and smaller values of $\eta_k$, till the error    $G(\bT^t)-G(\hat{\bT})$   is driven below the threshold $ 4 \tau (36 s \|\hat{\nu}_{\text{av}}\|^2 +  2h_{\max}^2)$. This construction follows the same argument as in the proof of  \citep[Th. 2]{agarwal2012fast}   with minor adjustments (Lemma 4 therein is replaced with our Lemma~\ref{Geometric convergence}) and thus is omitted. 

\begin{proposition}\!\textnormal{\bf (\citep[Th. 2]{agarwal2012fast})}\label{lemma_Th2_Agar}  Instate the setting of Lemma~\ref{Geometric convergence}. Further assume, \begin{equation}\label{eq:upper_bound_R}
    R\leq \frac{\lambda}{32\tau}.
\end{equation}  
Then, there holds 
 \begin{equation*}
     G(\bT^t)-G(\hat{\bT})\leq \alpha^2,
 \end{equation*}
  for any tolerance  $\alpha^2$ such that 
 \begin{equation}\label{alpha_stronger}
    {\min\bigg\{\frac{R\lambda}{4},\eta_G^0\bigg\}}\geq\alpha^2\geq 4\tau(36 s \|\hat{\nu}_{\textnormal{av}}\|^2 +  2 h_{\max}^2),
  \end{equation}
and for all 
  \begin{equation}\label{original rate with general beta}
   t   \geq  \bigg\lceil\log_2\log_2\left(\frac{R\lambda}{\alpha^2}\right)\bigg\rceil\left(1+\frac{\log 2}{\log 1/\kappa}\right)+\frac{\log(\eta_G^0/\alpha^2)}{\log 1/\kappa},
  \end{equation}where $\eta_G^0=G(\bT^0)-G(\hat{\bT})$.
\end{proposition}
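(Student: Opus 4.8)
The plan is to reproduce the epoch-wise ``tolerance-shrinking'' scheme of \citep[Theorem 2]{agarwal2012fast}, substituting our Lemma~\ref{Geometric convergence} for their descent lemma. Write $\delta^2_{\mathrm{stat}}\triangleq\tau\bigl(36 s\lVert\hat{\nu}_{\mathrm{av}}\rVert^2+2h_{\max}^2\bigr)$, so that the tolerance appearing in \eqref{linear convergence upto a tolerence} is $\delta^2_{\mathrm{stat}}+\tau\epsilon^2$ and the admissible window \eqref{alpha_stronger} reads $\min\{R\lambda/4,\eta_G^0\}\geq\alpha^2\geq 4\delta^2_{\mathrm{stat}}$. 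First I would record the base case: the stepsize condition \eqref{gamma beta condition} makes the proximal-gradient map a descent map for $G$, so $G(\bT^t)-G(\hat{\bT})\leq\eta_0\triangleq\eta_G^0$ for all $t\geq T_0\triangleq 0$, which is exactly the hypothesis \eqref{Sufficient descrease} of Lemma~\ref{Geometric convergence} at the first epoch.

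Next I would construct the epochs inductively. Assuming $G(\bT^t)-G(\hat{\bT})\leq\eta_k$ for all $t\geq T_k$, Lemma~\ref{Geometric convergence} with $(T,\eta)=(T_k,\eta_k)$ gives, for $t\geq T_k$,
\begin{equation*}
    G(\bT^t)-G(\hat{\bT})\leq\kappa^{t-T_k}\eta_k+\delta^2_{\mathrm{stat}}+\tau\epsilon_k^2,\qquad \epsilon_k=\min\Bigl\{\tfrac{2\eta_k}{\lambda},\,2R\Bigr\}.
\end{equation*}
Setting $\eta_{k+1}\triangleq 2(\delta^2_{\mathrm{stat}}+\tau\epsilon_k^2)$ and taking the epoch length $T_{k+1}-T_k=\bigl\lceil\log(\eta_k/(\delta^2_{\mathrm{stat}}+\tau\epsilon_k^2))/\log(1/\kappa)\bigr\rceil$ forces $\kappa^{t-T_k}\eta_k\leq\delta^2_{\mathrm{stat}}+\tau\epsilon_k^2$ for $t\geq T_{k+1}$, hence propagates the invariant with the smaller tolerance $\eta_{k+1}$. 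The crux is the scalar recursion $\eta_{k+1}=2\delta^2_{\mathrm{stat}}+2\tau\epsilon_k^2$, where the hypothesis $R\leq\lambda/(32\tau)$ is exactly what produces contraction. In the clamped regime $\eta_k>R\lambda$ one has $\epsilon_k=2R$, so a single epoch sends $\eta_k$ to the constant $2\delta^2_{\mathrm{stat}}+8\tau R^2\leq 2\delta^2_{\mathrm{stat}}+\lambda R/4<R\lambda$; thereafter $\eta_k\leq R\lambda$, $\epsilon_k=2\eta_k/\lambda$, and $8\tau\eta_k^2/\lambda^2\leq\eta_k/4$, yielding the quadratic recursion $\eta_{k+1}\leq 2\delta^2_{\mathrm{stat}}+8\tau\eta_k^2/\lambda^2$.

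I would then show the quadratic phase reaches $\alpha^2$ in $K=O\bigl(\log_2\log_2(R\lambda/\alpha^2)\bigr)$ epochs: with $w_k\triangleq 16\tau\eta_k/\lambda^2$ the floor-free part obeys $w_{k+1}\leq w_k^2$, and at entry to this regime $w\leq\tfrac12$ (again by $R\leq\lambda/(32\tau)$), so $w_k\leq 2^{-2^k}$ and $\eta_k\leq\alpha^2$ once $2^k\gtrsim\log_2(R\lambda/\alpha^2)$; moreover the fixed point of $\eta_{k+1}\leq 2\delta^2_{\mathrm{stat}}+\eta_k/4$ is $\tfrac{8}{3}\delta^2_{\mathrm{stat}}<4\delta^2_{\mathrm{stat}}\leq\alpha^2$, so the statistical floor never blocks the target. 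Finally I would sum the epoch lengths: on the last epoch one invokes the single-shot bound directly, so the term $\log(\eta_{K-1}/\alpha^2)$ it contributes telescopes against $\log(\eta_0/\eta_{K-1})$ from the earlier epochs, leaving
\begin{equation*}
    t\;\geq\;\Bigl\lceil\log_2\log_2\tfrac{R\lambda}{\alpha^2}\Bigr\rceil\Bigl(1+\tfrac{\log 2}{\log(1/\kappa)}\Bigr)+\frac{\log(\eta_G^0/\alpha^2)}{\log(1/\kappa)},
\end{equation*}
where the leading factor $1$ absorbs the per-epoch ceilings and $\eta_G^0=\eta_0$; this matches \eqref{original rate with general beta}. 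The main obstacle I anticipate is the clean bookkeeping of the clamped-versus-quadratic transition in $\epsilon_k$, so that the doubly-exponential count involves $R\lambda$ (not the possibly far larger $\eta_G^0$) and the telescoping cancellation produces precisely $\log(\eta_G^0/\alpha^2)$, while keeping the numerical constants consistent so that the attained floor $\tfrac{8}{3}\delta^2_{\mathrm{stat}}$ lies below the prescribed lower limit $4\delta^2_{\mathrm{stat}}$.
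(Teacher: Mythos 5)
Your proposal is correct and follows essentially the same route as the paper: the paper explicitly omits this proof, stating that it "follows the same argument as in the proof of \citep[Th.~2]{agarwal2012fast} with minor adjustments (Lemma 4 therein is replaced with our Lemma~\ref{Geometric convergence})," and your epoch-wise construction — descent of $G$ for the base case, the recursion $\eta_{k+1}=2(\delta^2_{\mathrm{stat}}+\tau\epsilon_k^2)$, the clamped-versus-quadratic split in $\epsilon_k$ driven by $R\leq\lambda/(32\tau)$, and the telescoping of epoch lengths — is exactly that argument with Lemma~\ref{Geometric convergence} playing the role of their descent lemma.
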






Equipped with  Proposition~\ref{lemma_Th2_Agar}, we can now complete the proof of Theorem~\ref{Composite function decrease}. It  remains to show  the following facts: 

$\bullet$  \textbf{Fact 1:} The lower bound condition on $R$ as in \eqref{R condidion} is more stringent than that in \eqref{c_again}, under a proper choice of $r\in (0,1)$; and the interval in \eqref{R condidion} is nonempty;  

$\bullet$  \textbf{Fact 2:} The range of $\alpha$ in (\ref{determinstic alpha_appendix}) is contained in  that of \eqref{alpha_stronger}; and the interval (\ref{determinstic alpha_appendix}) is nonempty;

$\bullet$  \textbf{Fact 3:} \eqref{eq:rate_estimate_appendix} is sufficient for \eqref{original rate with general beta}.\smallskip 

\noindent We prove these facts next.

 $\bullet$ \textbf{Fact 1:} Choosing $r=1/2$, the lower bound condition  on $R$   in   \eqref{c_again}  reads
\begin{equation}\label{eq:R-upper-lower-bound}
 R\geq \max\bigg\{\frac{2\lambda s}{\mu/2-16s\tau}\bigg(13+\frac{1}{32}\sqrt{\frac{2\tau s}{\mu/2-16s\tau}}\bigg), 2\|\T^*\|_1\bigg\},
\end{equation}
Recalling $\mu \geq \tilde{c}_{10}s\tau= 1824s \tau,$
the following holds for the lower bound in \eqref{eq:R-upper-lower-bound}:
\begin{equation*}
    \frac{2\lambda s}{\mu/2-16s\tau}\bigg(13+\frac{1}{32}\sqrt{\frac{2\tau s}{\mu/2-16s\tau}}\bigg)\leq\frac{56\lambda s}{\mu-32s\tau}.
\end{equation*}
Therefore, 
\begin{equation*}
 \max\bigg\{\frac{2\lambda s}{\mu/2-16s\tau}\bigg(13+\frac{1}{32}\sqrt{\frac{2\tau s}{\mu/2-16s\tau}}\bigg), 2\|\T^*\|_1\bigg\}\leq  \max\bigg\{\frac{56\lambda s}{\mu-32s\tau}, 2\|\T^*\|_1\bigg\}, 
\end{equation*}
which proves the desired implication. 

Finally, notice that the interval in \eqref{R condidion} is non-empty. This is a consequence of (i) the fact    $$\frac{56\lambda s}{\mu-32s\tau}\leq\frac{\lambda}{32\tau},$$
due to $\mu \geq \tilde{c}_{10}s\tau= 1824s \tau$; and (ii)    
the condition  $\lambda   \geq64 \tau \| \T^*\|_1$, due to  \eqref{lambda condition}.

\smallskip

$\bullet$  \textbf{Fact 2:}  
Using   the condition on $\gamma$ as in \eqref{constrained gamma condition2}, we have
\begin{align*}
    4\tau(36 s \|\hat{\nu}_{\textnormal{av}}\|^2 +  2 h_{\max}^2)
    \overset{\eqref{refined bound}}{\leq}\ & 4\tau\bigg(36 s \frac{\sum_{i=1}^m\lVert\hat{\nu}_i\rVert^2}{m} +    \frac{2\lambda^2 s^2}{ {128}^2\,(\mu/2-16\,s\tau)^2}\bigg)\notag\\
      \overset{\mu\geq \tilde{c}_{10} s\tau}{\leq}& 
   4s\tau\bigg(\frac{36}{m}\sum\limits_{i=1}^m\lVert\hat{\nu}_i\rVert^2+{\frac{\lambda^2s}{1976\mu^2}}\bigg).
\end{align*}
Therefore, the range of $\alpha$ in (\ref{determinstic alpha_appendix}) is included in that of \eqref{alpha_stronger}.

It remains to show that the range of $\alpha^2$ in \eqref{determinstic alpha_appendix} is nonempty, which is  a consequence of the following chain of inequalities.
\begin{align}\label{nonempty 1}
    &\quad\quad4\tau\bigg(36 s \frac{\lVert\hat{\bnu}\rVert^2}{m} +   \frac{\lambda^2 s^2}{1976\mu^2}\bigg)\notag\\
   & \overset{\overset{\text{Th.6}}{\xi=\tau \text{ (Lm. \ref{global-ARE-determin})}}}{\leq}144\tau s\left( \frac{9\lambda^2s}{(\mu/2-16s\tau)^2}+\frac{2\tau}{\mu/2-16s\tau}\,\underset{=\texttt{Term I}^2\text{ [see~\eqref{deterministic solution equivalence}]}}{\underbrace{\frac{ d^2\gamma^2(\max_{1\leq i\leq m}\lVert w_i^{\top} X_i\rVert_{\infty}+\lambda n)^4}{\lambda^2n^4(1-\rho)^2}}}\right. \notag\\&\qquad\quad \left.+\frac{4}{\mu/2-16s\tau}\,  \underset{=\texttt{Term II}^2\,\text{ [see~\eqref{deterministic solution equivalence}]}}{\underbrace{\frac{d\gamma(\max_{1\leq i\leq m}\lVert w_i^{\top} X_i\rVert_{\infty}+\lambda n)^2}{n^2[2(1-\rho)-4L_{\max}\gamma-(\mu/2-16s\tau)\gamma]}}} 
    +   \frac{\lambda^2 s}{36\cdot 1976(\mu-32s\tau)^2}\right)\notag\\
   &\quad \overset{\eqref{refined bound}}{\leq}\ \ \  144\tau s\bigg( \frac{9\lambda^2s}{(\mu/2-16s\tau)^2}+\frac{\tau s }{\mu/2-16s\tau}\,\frac{\lambda^2 s}{8192(\mu/2-16s\tau)^2}+\frac{\lambda^2s}{64(\mu/2-16s\tau)^2}\notag\\
  &\quad\quad+   \frac{\lambda^2 s}{36\cdot 7904(\mu/2-16s\tau)^2}\bigg)\notag\\
   & \quad\overset{(a)}{\leq}\ \ \ \  144\tau s\bigg( \frac{9\lambda^2s}{(\mu/2-16s\tau)^2}+\frac{1}{896}\frac{\lambda^2 s}{8192(\mu/2-16s\tau)^2}+\frac{\lambda^2s}{64(\mu/2-16s\tau)^2}\notag\\
    &\quad\quad+   \frac{\lambda^2 s}{36\cdot 7904(\mu/2-16s\tau)^2}\bigg)\notag\\
     &\quad<\frac{144\tau s}{(\mu/2-16s\tau)^2}\cdot 10\lambda\cdot \lambda \,s\notag\\
    &\quad\overset{\eqref{R condidion}}{\leq} 
    \frac{1440 \tau s}{28(\mu/2-16s\tau)}\,\lambda R \overset{(b)}{<}\frac{\lambda R}{17}<\frac{\lambda R}{4},
\end{align}
 where (a) and (b) follow from  $\mu/2-16s\tau\geq896s\tau$, due to      $\mu \geq \tilde{c}_{10}s\tau$, with $\tilde{c}_{10}=1824$. This together with \eqref{eq:cond_eta_G}  shows that  the range of $\alpha^2$ in \eqref{determinstic alpha_appendix} is non-empty.
 \smallskip
 
$\bullet$ \textbf{Fact 3:}  We obtain \eqref{eq:rate_estimate_appendix} from \eqref{original rate with general beta} by upper bounding the right hand side of \eqref{original rate with general beta}. To this end, we first lower bound   $\log ({1}/{\kappa})$ as: 
\begin{align}\label{upper_kappa}
    \log \bigg(\frac{1}{\kappa}\bigg) \overset{\eqref{beta condidion},\eqref{eq:def_kappa}}{=}\log\left(\frac{1}{1-\frac{\gamma(\mu/8-8\tau s)}{\gamma L_{\max}+1-\lambda_{\min}(W)}}\right)
   \overset{\eqref{connectivity number}}{\geq} \log\left(\frac{1}{1-\frac{\gamma(\mu/8-8\tau s)}{\gamma L_{\max}+1+\rho}}\right)\geq  \frac{\gamma(\mu/8-8\tau s)}{\gamma L_{\max}+1+\rho}.
\end{align}
Using \eqref{upper_kappa} in \eqref{original rate with general beta} and using $\eta_G^0\geq \alpha^2$ [due to (\ref{alpha_stronger})], we can upper bound the right hand side of \eqref{original rate with general beta}  as
 \begin{align*}\label{T_1}
     & \bigg\lceil\log_2\log_2\left(\frac{R\lambda}{\alpha^2}\right)\bigg\rceil\bigg(1+\frac{(\gamma L_{\max}+1+\rho)\log 2}{\gamma(\mu/8-8\tau s)} \bigg) + \frac{(\gamma L_{\max}+1+\rho)}{\gamma (\mu/8-8\tau s)} \log\left(\frac{\eta^0_G}{ \alpha^2}\right),\notag
 \end{align*}
which  proves \eqref{eq:rate_estimate_appendix}.

 $\hfill\square$


\section{Proofs of auxiliary Lemmata in  Sec.~\ref{sec:proof_L15}}\label{Proofs of auxiliary Lemmas in  Sec.7.}

\subsection{Proof of Lemma \ref{proximal iterate norm cone}}\label{sec:proof_L}
Recalling the definitions of $\bDelta^t$, $\bnu^t$, and $\widehat{\bnu}$ as given in (\ref{eq:Delta_def}), (\ref{eq_nu_t_def}) and (\ref{eq:nu_hat_def}), respectively,   
we have $\bDelta^t=\bT^t-\hat{\bT}=\bnu^t - \hat{\bnu}.$ Therefore,   $\Delta_{\text{av}}^t=\nu_{\text{av}}^t - \hat{\nu}_{\text{av}}.$  We can then bound the desired quantity $ \lVert(\Delta_{\text{av}}^t)_{\mathcal{S}^c}\rVert_1$ as  
\begin{equation}\label{eq:bound_opt_error_1}\lVert(\Delta_{\text{av}}^t)_{\mathcal{S}^c}\rVert_1\leq  \lVert(\nu_{\text{av}}^t)_{\mathcal{S}^c}\rVert_1+\lVert(\hat{\nu}_{\text{av}})_{\mathcal{S}^c}\rVert_1.
\end{equation}
   We prove below the following upper bounds for $\lVert(\nu_{\text{av}}^t)_{\mathcal{S}^c}\rVert_1$ and $\lVert(\hat{\nu}_{\text{av}})_{\mathcal{S}^c}\rVert_1$
   \begin{equation}\label{eq:bound_two_terms_opt_error}
       \left\{\begin{array}{l}
\lVert(\nu_{\text{av}}^{t})_{\mathcal{S}^{c}}\rVert_{1}\leq 3\lVert(\nu_{\text{av}}^{t})_{\mathcal{S}}\rVert_{1}+h(\gamma,\lVert\bnu_{\perp}^{t}\rVert)+\min\bigg\{\frac{2\eta}{\lambda},2R\bigg\},\medskip\\
\lVert(\hat{\nu}_{\text{av}})_{\mathcal{S}^c}\rVert_1\leq 3 \lVert(\hat{\nu}_{\text{av}})_{\mathcal{S}}\rVert_1+ h(\gamma,\lVert\hat{\bnu}_{\perp}\rVert).
\end{array}\right.
   \end{equation}
Using \eqref{eq:bound_two_terms_opt_error} in (\ref{eq:bound_opt_error_1}) and the triangle inequality  yields the desired result
\begin{align*}
   \lVert(\Delta_{\text{av}}^t)_{\mathcal{S}^c}\rVert_1& \ \leq  
  3(\lVert(\Delta_{\text{av}}^t)_{\mathcal{S}}\rVert_1+2\lVert(\hat{\nu}_{\text{av}})_{\mathcal{S}}\rVert_1)+ h(\gamma,\lVert\bnu_{\perp}^t\rVert)+ h(\gamma,\lVert\hat{\bnu}_{\perp}\rVert)+\min\bigg\{\frac{2\eta}{\lambda},2R\bigg\}\notag\\
   &  \overset{\eqref{def:h-max}}{\leq}  3\lVert(\Delta_{\text{av}}^t)_\mathcal{S}\rVert_1 + 6\lVert(\hat{\nu}_{\text{av}})_\mathcal{S}\rVert_1 + 2 h_{\max} +\min\bigg\{\frac{2\eta}{\lambda},2R\bigg\}.
\end{align*}
We prove next \eqref{eq:bound_two_terms_opt_error}.    From the optimality of $\hat{\bT}$
along with \eqref{Sufficient descrease}, we deduce  \begin{align}
     G(\bT^t)-G(1_m\otimes\T^*)
     &\leq \eta,\quad  \forall t\geq T\label{S1}.
\end{align}
Hence, for any $t\geq T$, there holds
\begin{align}\label{F10 condition}
   & \frac{1}{2N}\sum\limits_{i=1}^m\lVert X_i\theta_i^t-y_i\rVert^2+\frac{1}{2m\gamma}\lVert1_m\otimes\T^{*}+\bnu^t\rVert_{V}^2+\frac{\lambda}{m}\lVert 1_m \otimes\T^{*}+\bnu^t\rVert_1\notag\\
   \leq& \frac{1}{2N}\lVert \bX\T^{*}-y\rVert^2+\frac{1}{2m\gamma}\lVert1_m\otimes\T^{*}\rVert_{V}^2+\frac{\lambda}{m}\lVert1_m\otimes\T^{*}\rVert_1+\eta.
\end{align}
Subtracting   $\sum\limits_{i=1}^m\langle\frac{1}{N}X_i^{\top}(X_i\T^{*}-y_i), \nu_i^t\rangle$ from both sides and rearranging terms, we obtain 
\begin{align}\label{238}
 &\underset{\texttt{Term I}}{\underbrace{-\sum\limits_{i=1}^m\bigg\langle\frac{1}{N}X_i^{\top}(X_i\T^{*}-y_i), \nu_i^t\bigg\rangle+\frac{1}{2m\gamma}\lVert1_m\otimes\T^{*}\rVert_{V}^2-\frac{1}{2m\gamma}\lVert1_m\otimes\T^{*}+\bnu^t\rVert_{V}^2}}+\eta\notag\\ \geq&   \frac{1}{2N}\sum\limits_{i=1}^m\lVert X_i(\T^{*}+\nu_i^t)-y_i\rVert^2 -\frac{1}{2N}\lVert \bX\T^{*}-y\rVert^2-\sum\limits_{i=1}^m\bigg\langle\frac{1}{N}X_i^{\top}(X_i\T^{*}-y_i), \nu_i^t\bigg\rangle \notag\\
  & +\frac{\lambda}{m}(\lVert1_m\otimes\T^{*}+\bnu^t\rVert_1-\lVert1_m\otimes\T^{*}\rVert_1)\notag\\
  \geq& \frac{\lambda}{m} \underset{\texttt{Term II}}{\underbrace{ (\lVert1_m\otimes\T^{*}+\bnu^t\rVert_1-\lVert1_m\otimes\T^{*}\rVert_1)}},
\end{align}
where the last inequality follows from convexity of 
  $\sum\limits_{i=1}^m\lVert X_i\theta_i-y_i\rVert^2/(2N)$. 

We proceed upper (resp. lower) bounding \texttt{Term I} (resp. \texttt{Term II}). We have 
\begin{align}\label{decompose}
\begin{split}
       \texttt{Term I}&\ =
   \frac{1}{N} \mathbf{w}^{\top}\bX \nu^t_{\text{av}}+\frac{1}{N}\sum\limits_{i=1}^m w_i^{\top}X_i \nu^t_{\perp i}-\frac{1}{2m\gamma}\lVert\bnu^t_{\perp}\rVert_{V}^2\\
    &\overset{(\ref{distributed lambda})}{\leq} \frac{\lambda}{2}\lVert\nu^t_{\text{av}}\rVert_1+\frac{1}{N}\max_{i\in [m]}\lVert X_i^{\top} w_i\rVert_{\infty}\lVert\bnu^t_{\perp }\rVert_1-\frac{1}{2m\gamma}\lVert\bnu_{\perp}^t\rVert_{V}^2.
\end{split}
\end{align}
To lower bound \texttt{Term II}
we decompose $\T^{*}+\nu_i^t$ as
$   \T^{*}+\nu_i^t = \T^{*}_{\mathcal{S}}+ \T^{*}_{\mathcal{S}^c}+(\nu_i^t)_{\mathcal{S}}+(\nu_i^t
    )_{\mathcal{S}^c}.$
Then, invoking  the decomposibility of the regularizer, we can write: for all $i,$   
\begin{align}\label{triangle inequality and l1 norm decomposibility}
 \lVert\T^*+\nu_i^t\rVert_1-\lVert\T^*\rVert_1 \geq 
    (\lVert(\nu^t_{\text{av}})_{\mathcal{S}^c}\rVert_1-\lVert(\nu^t_{\text{av}})_{\mathcal{S}}\rVert_1)-\lVert\nu_{\perp i}^t\rVert_1.
\end{align}
Using \eqref{decompose} and  \eqref{triangle inequality and l1 norm decomposibility}      in \eqref{238}  yields
\begin{equation}\label{intermedia norm result}
\lVert(\nu_{\text{av}}^{t})_{\mathcal{S}^{c}}\rVert_{1}\leq 3\lVert(\nu_{\text{av}}^{t})_{\mathcal{S}}\rVert_{1}+h(\gamma,\lVert\bnu_{\perp}^{t}\rVert)+\frac{2\eta}{\lambda}.
\end{equation}
On the other hand,  since $\|\T^t_i\|_1\leq R$   and $\|\theta^*\|_1<R$,  we have
 \begin{equation}\label{intermedia norm result 2}
    \lVert(\nu^t_{\text{av}})_{\mathcal{S}^c}\rVert_1 \leq \lVert\nu^t_{\text{av}}\rVert_1\leq \lVert\T^*\rVert_1+ \lVert\T_{\text{av}}^t\rVert_1< R + R=2R.
\end{equation}
Using \eqref{intermedia norm result 2}, we can then  strengthen  (\ref{intermedia norm result}) as the first inequality in 
 \eqref{eq:bound_two_terms_opt_error}.

 The proof of the second inequality in \eqref{eq:bound_two_terms_opt_error} follows the same steps   and uses the  fact that  $\|\hat{\theta}_i\|_1\leq R$, for all $i\in [m]$ (Lemma \ref{solution eq}).   $\hfill \square$

\subsection{Proof of Lemma \ref{Sufficient decrease}}\label{sec:proof_L13}
To bound the (average component of the) optimization error in terms of the function optimality gap we leverage the curvature property of $L_\gamma$ [under the RSC condition (\ref{Arsc})] along the trajectory of the algorithm. We explicitly use the fact that the  trajectory lies in the set  described by \eqref{nu_average sparsity info original} [cf.~Lemma \ref{proximal iterate norm cone}].

 Recalling that $G(\bT)=L_\gamma(\bT)+\frac{\lambda}{m}\lVert\bT\rVert_1,$ by the optimality of $\hat{\bT}$, it follows
\begin{equation}\label{convexity of l1}
  \langle \bDelta^t, \nabla L_\gamma(\hat{\bT})\rangle+ \frac{\lambda}{m}\lVert\bT^t\rVert_1-\frac{\lambda}{m}\lVert\hat{\bT}\rVert_1\geq 0.
\end{equation} 
We can then write 
\begin{align}\label{90a}
     G(\bT^t)-G(\hat{\bT})
  &\ \overset{\eqref{convexity of l1}}{\geq} \mathcal{T}_{L_\gamma}({\bT^t};\hat{\bT})\notag\\
  &\ \overset{{\eqref{def:L-linearization}}}{\geq} \frac{1}{4}\frac{\lVert \bX\Delta^t_{\text{av}}\rVert^2}{N}-\bigg(\frac{L_{\max}}{2m}-\frac{1-\rho}{2m\gamma}\bigg)\|\bDelta^t_{\perp}\|^2\notag\\
   &\overset{{\text{RSC}~ \eqref{Arsc}}}{\geq}\frac{1}{4}\bigg(\frac{\mu}{2}\lVert\Delta^t_{\text{av}}\rVert^2-\frac{\tau}{2}\lVert\Delta^t_{\text{av}}\rVert_1^2\bigg)-\bigg(\frac{L_{\max}}{2m}-\frac{1-\rho}{2m\gamma}\bigg)\|\bDelta^t_{\perp}\|^2\notag\\
   &\ \ \ \overset{\text{(a)}}{\geq}\frac{1}{4}\bigg(\frac{\mu}{2}\lVert\Delta^t_{\text{av}}\rVert^2-\frac{\tau}{2}(64s \lVert\Delta^t_{\text{av}}\rVert^2+2v^2+16 h_{\max}^2)\bigg)-\bigg(\frac{L_{\max}}{2m}-\frac{1-\rho}{2m\gamma}\bigg)\|\bDelta^t_{\perp}\|^2,
\end{align}
where in  (a) we used 
\begin{align*}
     \lVert\Delta_{\textnormal{av}}^t\rVert_1^2& \overset{\eqref{nu_average sparsity info original}} {\leq}\left(4\lVert(\Delta^t_{\textnormal{av}})_{\mathcal{S}}\rVert_1 + 6\lVert(\hat{\nu}_{\textnormal{av}})_\mathcal{S}\rVert_1+2h_{\max}+\min\bigg\{\frac{2\eta}{\lambda},2R\bigg\}\right)^2\notag\\
     &\ \leq4(4\lVert(\Delta^t_{\textnormal{av}})_{\mathcal{S}}\rVert_1)^2 + 4(6\lVert(\hat{\nu}_{\textnormal{av}})_\mathcal{S}\rVert_1)^2+4(2h_{\max})^2+4\bigg(\min\bigg\{\frac{2\eta}{\lambda},2R\bigg\}\bigg)^2\notag\\
     &\ \leq 64s \lVert\Delta_{\textnormal{av}}^t\rVert^2+2v^2+16 h_{\max}^2,
\end{align*}
with $v^2=144s\lVert\hat{\nu}_{\textnormal{av}}\rVert^2+4\min\bigg\{\frac{2\eta}{\lambda}, 2R\bigg\}^2.$ 

Reorganizing the terms in \eqref{90a} yields the first inequality in \eqref{eqn:18}. 

Similar arguments apply to derive the second inequality in \eqref{eqn:18} by noticing that, for quadratic $L_\gamma,$ we have
$\mathcal{T}_{L_\gamma}(\hat{\bT};\bT^t)=\mathcal{T}_{L_\gamma}(\bT^t;\hat{\bT}).$ 
This concludes the proof. $\hfill \square$

\subsection{Proof of Lemma \ref{Geometric convergence}}\label{sec:proof_L14}

The proof follows  descent arguments (see,  \eg, \citep{Nesterov2007GradientMF}), suitably  coupled with the curvature property established in Lemma~\ref{Sufficient decrease} to achieve contraction up to a controllable tolerance. 

By definition of  $\bT^{t+1}$ in \eqref{regularized algorith},   we have $G_t(\bT^{t+1})\leq G_t(\bT)$, for all feasible $\bT$. Recalling that   $\hat{\bT}$ is feasible (Lemma \ref{solution eq}),   $\bT_{\omega}\triangleq \omega\hat{\bT}+(1-\omega)\bT^t$ is feasible as well,  for any $\omega \in (0,1)$.
Therefore,
\begin{align}\label{F}
        &G_{t}(\bT^{t+1})\notag\\
         \leq\ &  G_{t}(\bT_{\omega}) 
   = (1-\omega)L_\gamma(\bT^t)+\omega L_\gamma(\hat{\bT})-\omega \mathcal{T}_{L_\gamma}(\hat{\bT}; \bT^t) +\frac{\omega^2}{2  {\beta m}}\lVert\bDelta^t\rVert^2+\frac{\lambda}{m}\lVert\bT_{\omega}\rVert_1\notag\\
   \stackrel{\rm  \eqref{eqn:18}}{\leq}& 
   (1-\omega) G(\bT^t)+\omega G(\hat{\bT})+\omega f(\lVert\bDelta^t_{\perp}\rVert)+\omega\frac{\tau}{4}(v^2+8 h^2_{\max})+\frac{\omega^2}{2  {\beta m}}\lVert\bDelta^t\rVert^2-\omega \bigg(\frac{\mu}{8}-8\tau s\bigg)\lVert\Delta_{\textnormal{av}}^t\rVert^2.
\end{align}
We proceed to relate $G(\bT^{t+1})$ with $G_{t}(\bT^{t+1}).$ 
\begin{align}\label{Taylor}
          &G(\bT^{t+1})\notag\\
   = \  \ & G_t(\bT^{t+1}) -\frac{1}{2  {\beta m}}\lVert\bT^{t+1}-\bT^{t}\rVert^2   + \underbrace{L_\gamma(\bT^{t+1})-L_\gamma(\bT^{t})-\langle\nabla L_\gamma(\bT^t),\bT^{t+1}-\bT^{t}\rangle}_{=\frac{1}{2N} \sum_{i = 1}^m \| X_i (\T_i^{t+1} - \T_i^t)\|^2+ \frac{1}{2 m \gamma} \| \bT^{t+1} - \bT^t\|_V^2}\notag\\
     \stackrel{\eqref{F}}{\leq} &  G(\bT^t)-\omega (G(\bT^t)-G(\hat{\bT}))+\omega f(\lVert\bDelta^t_{\perp}\rVert)+\omega\frac{\tau}{4}(v^2+8 h^2_{\max})+\frac{\omega^2}{2  {\beta m}}\lVert\bDelta^t\rVert^2-\omega \bigg(\frac{\mu}{8}-8\tau s\bigg)\lVert\Delta_{\textnormal{av}}^t\rVert^2\notag\\
    &+ \frac{1}{2N} \sum_{i = 1}^m \| X_i (\T_i^{t+1} - \T_i^t)\|^2  + \frac{1}{2 m \gamma} \| \bT^{t+1} - \bT^t\|_V^2-\frac{1}{2  {\beta m}}\lVert\bT^{t+1}-\bT^{t}\rVert^2.
\end{align}
Subtracting $G(\hat{\bT})$ from both sides of the above inequality and denoting the function gap as $\eta_G^{t}=G(\bT^t)-G(\hat{\bT}),$ we have 
\begin{align}\label{eta F}
        \eta_G^{t+1}\leq & \, (1-\omega )\,\eta_G^t+\omega  f(\lVert\bDelta^t_{\perp}\rVert)+\omega\frac{\tau}{4}(v^2+8 h^2_{\max})+\frac{\omega^2}{2  {\beta m}}\lVert\bDelta^t\rVert^2-\omega \bigg(\frac{\mu}{8}-8\tau s\bigg)\lVert\Delta_{\textnormal{av}}^t\rVert^2\notag\\
        & + \frac{1}{2N} \sum_{i = 1}^m \| X_i (\T_i^{t+1} - \T_i^t)\|^2  + \frac{1}{2 m \gamma} \| \bT^{t+1} - \bT^t\|_V^2-\frac{1}{2  {\beta m}}\lVert\bT^{t+1}-\bT^{t}\rVert^2\notag\\
        \leq & \, (1-\omega)\eta_G^t+\omega  f(\lVert\bDelta^t_{\perp}\rVert)+\omega\frac{\tau}{4}(v^2+8 h^2_{\max})+\frac{\omega^2}{  {\beta m}}\lVert\bDelta^t_\perp\rVert^2\notag\\
    &+\underset{\leq 0, \text{ for  } 0\leq\, \omega\,\leq   {\beta} \big(\frac{\mu}{8}-8\tau s\big)}{\underbrace{\omega\,\bigg(  {\frac{\omega\,}{\beta}}- \bigg(\frac{\mu}{8}-8\tau s\bigg)\bigg)\lVert\Delta_{\textnormal{av}}^t\rVert^2}}+ \underbrace{\frac{1}{2}\bigg(\frac{L_{\max}}{m}+\frac{1-\lambda_{\min} (W)}{m\gamma}-  {\frac{1}{\beta m}}\bigg)}_{\text{$\leq 0$ [condition on $\beta$ in \eqref{gamma beta condition}]}}\lVert\bT^{t+1}-\bT^{t}\rVert^2\notag\\
    \stackrel{(a)}{\leq}   &\,\underset{\texttt{Term I}}{\underbrace{(1-\omega)\eta_G^t}}+\underset{\texttt{Term II}}{\underbrace{\omega f(\lVert\bDelta^t_{\perp}\rVert)+  {\frac{\omega^2}{\beta m}}\lVert\bDelta^t_{\perp}\rVert^2}}+\underset{\texttt{Term III}}{\underbrace{\omega\frac{\tau}{4}(v^2+8 h^2_{\max})}},
\end{align}
where (a) holds under the condition $0\leq\, \omega\,\leq  {\beta} \big(\frac{\mu}{8}-8\tau s\big)$.  Note that $\mu/8-8\tau s>0$ by assumption; hence the interval for $\omega$   is non-empty. 
{Furthermore, $\omega\in (0,1/2)$, due to   } 
\begin{align}\label{rate range}
      {\beta} \left(\frac{\mu}{8}-8\tau s\right){\overset{\eqref{gamma beta condition}}{\leq} }\frac{\gamma}{\gamma L_{\max}+1-\lambda_{\min}(W)} \cdot \left(\frac{\mu}{8}-8\tau s\right) {\overset{\text{(b)}}{<} \frac{1}{2}},
\end{align}
{where in (b)  we used the following lower bound for $(1-\lambda_{\min})/\gamma$: 
\begin{align*}
    &\frac{1-\lambda_{\min}(W)}{\gamma}\notag\\
    \overset{\eqref{constrained gamma condition2}}{\geq}& \frac{1-\lambda_{\min}(W)}{1-\rho}\left(2L_{\max}+\frac{\mu}{2}-16s\tau+\frac{128d}{s}\left(\frac{\mu}{2}-16s\tau\right)\left(\frac{\max_{  i\in [m]}\lVert w_i^{\top} X_i\rVert_{\infty}}{\lambda n}+2\sqrt{m}\right)^2\right)\notag\\
    \overset{\eqref{connectivity number}}{\geq}&\left(2L_{\max}+\frac{\mu}{2}-16s\tau+\frac{128d}{s}\left(\frac{\mu}{2}-16s\tau\right)\left(\frac{\max_{  i\in [m]}\lVert w_i^{\top} X_i\rVert_{\infty}}{\lambda n}+2\sqrt{m}\right)^2\right) 
    \geq\  2\left(\frac{\mu}{8}-8\tau s\right).
\end{align*}}

In  (\ref{eta F}), \texttt{Term I} captures the  geometric decrease of the objective error, for any $\omega<1$;   \texttt{Term II} is due to consensus errors and it is controllable  by choosing a sufficiently small network regularizer $\gamma;$ finally, \texttt{Term III} is due to the lack of strong convexity, determining  a nonzero tolerance on the achievable  objective error. 

We choose $\omega$ to minimize the contraction factor in \texttt{Term I}, resulting in   \begin{equation}\label{eq:alpha_choice}
    \omega=  {\beta}\bigg(\frac{\mu}{8}-8\tau s\bigg).
\end{equation} 
Under this choice we can bound \texttt{Term I}--\texttt{Term III} as follows.

\noindent $\bullet$ \texttt{Term I:} 
\begin{equation}\label{objective rate}
       \texttt{Term I}=\underbrace{\bigg(1-   {\beta}\bigg(\frac{\mu}{8}-8\tau s\bigg)\bigg)}_{\kappa}\eta_G^t.
\end{equation}
{Note that $\kappa\in(0,1/2),$ due to \eqref{rate range}.}

\noindent $\bullet$ \texttt{Term II:}
Using the upper bound of $\gamma$ in \eqref{equ-sol-gamma}, we can bound   $f(\lVert\bDelta^t_{\perp}\rVert)$ [cf.~\eqref{f}] as
\begin{align}\label{f bound}
    f(\lVert\bDelta^t_{\perp}\rVert) &\leq
    -\bigg(\frac{L_{\max}}{2m}+\frac{\mu-32\,s\tau}{4m}+\frac{32d(\mu-32\,s\tau)}{sm}(\max_{  i\in [m]}\lVert w_i^{\top} X_i\rVert_{\infty}/(\lambda n)+2\sqrt{m})^2\bigg)\lVert\bDelta^t_{\perp}\rVert^2.
\end{align}
Therefore,  {
 \begin{align}\label{f effect}
 \texttt{Term II}
& \overset{\eqref{eq:alpha_choice},\eqref{f bound}}{\leq} -  {\beta}\bigg(\frac{\mu}{8}-8\tau s\bigg) \frac{1}{m}\left(\frac{\mu}{4}-8s\tau\right)\lVert\bDelta^t_{\perp}\rVert^2+  {\frac{\beta}{m}}\bigg(\frac{\mu}{8}-8\tau s\bigg)^2\lVert\bDelta^t_{\perp}\rVert^2\notag\\
& \quad\,\,\, \leq  -  {\frac{\beta}{m}}\,\bigg(\frac{\mu}{8}-8\tau s\bigg)^2\lVert\bDelta^t_{\perp}\rVert^2\leq 0.
\end{align}}
\noindent $\bullet$ \texttt{Term III:} 
\begin{equation}\label{error ball original}
    \texttt{Term III}=  {\beta}\,\bigg(\frac{\mu}{8}-8\tau s\bigg)\,\tau\,\bigg(36s\lVert\hat{\nu}_{\textnormal{av}}\rVert_2^2+2 h^2_{\max}+\min\bigg\{\frac{2\eta}{\lambda}, 2R\bigg\}^2\bigg).
\end{equation}
Using  \eqref{objective rate}, \eqref{f effect}, and \eqref{error ball original} in \eqref{eta F}, we finally obtain: for all $t\geq T,$  
\begin{align*}
\begin{split}
        \eta_G^{t+1}&\leq\kappa\,\eta_G^{t}+  {\beta}\bigg(\frac{\mu}{8}-8\tau s\bigg)\,\tau\,\bigg(36s\lVert\hat{\nu}_{\textnormal{av}}\rVert^2+2 h^2_{\max}+\min\bigg\{\frac{2\eta}{\lambda}, 2R\bigg\}^2\bigg)\\
    & \leq 
    \kappa^{t-T} \eta_G^{T} + \tau\,\bigg(36s\lVert\hat{\nu}_{\textnormal{av}}\rVert^2+2 h^2_{\max}+\min\bigg\{\frac{2\eta}{\lambda}, 2R\bigg\}^2\bigg).
\end{split}
\end{align*}
 This completes the proof. $\hfill \square$

\bibliography{references}

\end{document}